\pgfplotsset{compat=newest}
\theoremstyle{plain}
\newtheorem{theorem}{Theorem}[section]
\newtheorem{prop}{Proposition}[section]
\newtheorem{lemma}{Lemma}[section]
\newtheorem{cor}{Corollary}[section]
\newtheorem{assumption}{Assumption}[section]
\theoremstyle{remark}
\newtheorem{example}{Example}[section]
\newcommand{\E}{\mathbb{E}}
\newcommand{\cP}{{\mathcal P}}
\newcommand{\cD}{{\mathcal D}}
\newcommand{\va}{{\boldsymbol a}}
\newcommand{\vA}{{\boldsymbol A}}
\newcommand{\vb}{{\boldsymbol b}}
\newcommand{\vB}{{\boldsymbol B}}
\newcommand{\vE}{{\boldsymbol E}}
\newcommand{\ve}{{\boldsymbol e}}
\newcommand{\vf}{{\boldsymbol f}}
\newcommand{\vI}{{\boldsymbol I}}
\newcommand{\vM}{{\boldsymbol M}}
\newcommand{\vP}{{\boldsymbol P}}
\newcommand{\vw}{{\boldsymbol w}}
\newcommand{\vW}{{\boldsymbol W}}
\newcommand{\vx}{{\boldsymbol x}}
\newcommand{\vX}{{\boldsymbol X}}
\newcommand{\vy}{{\boldsymbol y}}
\newcommand{\vtheta}{{\boldsymbol \theta}}
\newcommand{\vlambda}{{\boldsymbol \lambda}}
\newcommand{\vzero}{{\boldsymbol 0}}
\newcommand{\vone}{{\boldsymbol 1}}
\let\E\undefined
\newcommand{\C}{\mathbb C}
\newcommand{\R}{\mathbb R}
\newcommand{\E}{\mathbb E}
\newcommand{\N}{\mathbb N}
\DeclarePairedDelimiter{\bracket}{[}{]}
\DeclarePairedDelimiter{\paren}{(}{)}
\newcommand{\vect}{\mathrm{Vec}}
\newcommand{\kron}{\otimes}
\newcommand{\hatSigma}{\widehat{\Sigma}}
\newcommand{\WKQ}{{\vW^{KQ}}}
\newcommand{\WPV}{{\vW^{PV}}}
\newcommand{\haty}{\widehat{\vy}}
\newcommand{\hatys}{\widehat{y}}
\title{On Mesa-Optimization in Autoregressively Trained Transformers: Emergence and Capability}
\author{%
  Chenyu Zheng$^{1,2}$, Wei Huang$^3$, Rongzhen Wang$^{1,2}$, Guoqiang Wu$^4$,\\ 
  \textbf{Jun Zhu$^5$, Chongxuan Li$^{1,2}$}\thanks{Correspondence to Chongxuan Li.} \\
  $^1$ Gaoling School of Artificial Intelligence, Renmin University of China \\
  $^2$ Beijing Key Laboratory of Big Data Management and Analysis Methods \\
  $^3$ RIKEN AIP \
  $^4$ School of Software, Shandong University \\
  $^5$ Dept. of Comp. Sci. \& Tech., BNRist Center,
  THU-Bosch ML Center, Tsinghua University \\
  \texttt{\{cyzheng,wangrz,chongxuanli\}@ruc.edu.cn;} \texttt{wei.huang.vr@riken.jp;}\\
  \texttt{guoqiangwu@sdu.edu.cn;}
  \texttt{dcszj@mail.tsinghua.edu.cn}
}
\begin{document}

\maketitle

\setcounter{tocdepth}{-1}

\begin{abstract}

Autoregressively trained transformers have brought a profound revolution to the world, especially with their in-context learning (ICL) ability to address downstream tasks. 
Recently, several studies suggest that transformers learn a mesa-optimizer during autoregressive (AR) pretraining to implement ICL. Namely, the forward pass of the trained transformer is equivalent to optimizing an inner objective function in-context.
However, whether the practical non-convex training dynamics will converge to the ideal mesa-optimizer is still unclear.
Towards filling this gap, we investigate the non-convex dynamics of a one-layer linear causal self-attention model autoregressively trained by gradient flow, where the sequences are generated by an AR process $\vx_{t+1} = \vW \vx_t$. First, under a certain condition of data distribution, we prove that \textit{an autoregressively trained transformer learns $\vW$ by implementing one step of gradient descent to minimize an ordinary least squares (OLS) problem in-context. It then applies the learned $\widehat{\vW}$ for next-token prediction}, thereby verifying the mesa-optimization hypothesis. Next, under the same data conditions, we explore the capability limitations of the obtained mesa-optimizer. We show that a stronger assumption related to the moments of data is the sufficient and necessary condition that the learned mesa-optimizer recovers the distribution. Besides, we conduct exploratory analyses beyond the first data condition and prove that generally, the trained transformer will not perform vanilla gradient descent for the OLS problem. Finally, our simulation results verify the theoretical results, and the code is available at \emph{\href{https://github.com/ML-GSAI/MesaOpt-AR-Transformer}{https://github.com/ML-GSAI/MesaOpt-AR-Transformer}}.

\end{abstract}

\section{Introduction}
\label{sec: intro}

Foundation models based on transformers~\cite{DBLP:conf/nips/transformer} have revolutionized the AI community in lots of fields, such as language modeling~\cite{DBLP:conf/nips/gpt3,DBLP:journals/corr/gpt4,DBLP:journals/corr/llama,chowdhery2023palm,team2023gemini}, computer vision~\cite{ramesh2021DALLE,lee2022autoregressive,tian2024visual,DBLP:conf/icml/imagegpt} and multi-modal learning~\cite{DBLP:journals/corr/gpt4v,DBLP:conf/nips/visionllm,DBLP:conf/nips/LLAVA,DBLP:journals/corr/anygpt,DBLP:journals/corr/gemini15}. The crux behind these large models is a very simple yet profound strategy named \textit{ autoregressive (AR) pretraining}, which encourages transformers to predict the next token when a context is given. In terms of the trained transformers, one of their most intriguing properties is the \textit{in-context learning (ICL)} ability~\cite{DBLP:conf/nips/gpt3}, which allows them to adapt their computation and perform downstream tasks based on the information (e.g. examples) provided in the context without any updates to their parameters. However, the reason underlying the emergence of ICL ability is still poorly understood.

Recently, we are aware that some preliminary studies~\cite{DBLP:journals/corr/mesa-opt,DBLP:journals/corr/taiji-AR} have attempted to understand the ICL ability from the AR training and connected its mechanisms to a popular hypothesis named \textit{mesa-optimization}~\cite{DBLP:journals/corr/Hub-mesa-opt}, which suggests that transformers learn some algorithms during the AR pretraining. In other words, the inference process of the trained transformers is equivalent to optimizing some inner objective functions on the in-context data. 

Concretely, the seminal work~\cite{DBLP:journals/corr/mesa-opt} constructs a theoretical example where a single linear causally-masked self-attention layer with manually set parameters can predict the next token using one-step gradient-descent learning for an ordinary least squares (OLS) problem over the historical context. Moreover, they conduct numerous empirical studies to establish a close connection between autoregressively trained transformers and gradient-based mesa-optimization algorithms. Built upon the setting of~\cite{DBLP:journals/corr/mesa-opt}, recent work~\cite{DBLP:journals/corr/taiji-AR} precisely characterizes the pretraining loss landscape of the one-layer linear transformer trained on a simple first-order AR process with a fixed full-one initial token. As a result, they find that the optimally-trained transformer recovers the theoretical construction in~\cite{DBLP:journals/corr/mesa-opt}.
However, their results rely on imposing the diagonal structure on the parameter matrices of the transformer and do not discuss whether the practical non-convex dynamics can converge to the ideal global minima. Besides, it is still unclear about the impact of data distribution on the trained transformer, which has been proven to be important in practice~\cite{DBLP:conf/nips/distribution-ICL-22,DBLP:journals/corr/ICLdiscrete,DBLP:journals/corr/ICL-OOD-2023,DBLP:conf/emnlp/MinLHALHZ22-emnlp} and theory~\cite{mahdavi2024revisiting}.

In this paper, we take a further step toward understanding the mesa-optimization in autoregressively trained transformers. Specially, without an explicit diagonal structure assumption, we analyze the non-convex dynamics of a one-layer linear transformer trained by gradient flow on a controllable first-order AR process, and try to answer the following questions rigorously:
\begin{enumerate}
\item \textit{When do mesa-optimization algorithms emerge in autoregressively trained transformers?}
\item \textit{What is the capability limitation of the mesa-optimizer if it does emerge?}
\end{enumerate}

Our first main contribution is to characterize a sufficient condition (Assumption~\ref{ass: emergence of mesa-optimizer}) on the data distribution for a mesa-optimizer to emerge in the autoregressively trained transformer, in Section~\ref{sec: transformer is a mesa-optimizer}. We note that any initial token $\vx_1$ whose coordinates $x_{1i}$ are i.i.d. random variables with zero mean and finite moments satisfy this condition, including normal distribution $\mathcal{N}(\vzero_d, \sigma^2 \vI_d)$. Under this assumption, the non-convex dynamics will exactly converge to the theoretical construction in~\cite{DBLP:journals/corr/mesa-opt} without any explicit structural assumption in~\cite{DBLP:journals/corr/taiji-AR}, resulting in the trained transformer implementing one step of gradient descent for the minimization of an OLS problem in-context.

Our second main contribution is to characterize the capability limitation of the obtained mesa-optimizer under Assumption~\ref{ass: emergence of mesa-optimizer} in Section~\ref{sec: Learnability of LT}. We characterize a stronger assumption (Assumption~\ref{ass: mesa optimizer succeed}) related to the moment of data distribution as the necessary and sufficient condition that the mesa-optimizer recovers the true distribution (a.k.a. predict the next token correctly).  Unfortunately, we find that the mesa-optimizer can not recover the data distribution when the initial token is sampled from the standard normal distribution, which suggests that ICL by AR pretraining~\cite{DBLP:journals/corr/mesa-opt,DBLP:journals/corr/taiji-AR} is different from ICL by few-shot learning pretraining~\cite{zhang2024ICL,DBLP:conf/icml/Osw-ICL,DBLP:conf/nips/Ahn-23-ICL,DBLP:journals/corr/Ahn-linearatten,DBLP:journals/corr/tengyu-ICL,DBLP:journals/corr/jingfeng-ICL,DBLP:journals/corr/huang-ICL,mahdavi2024revisiting} (see details in Section~\ref{sec: related work}), a setup attracting attention from many theorists recently. We think ICL in the setting of AR pretraining needs more attention from the theoretical community. 

In Section~\ref{sec: go beyond}, we further study the convergence of the training dynamics when Assumption~\ref{ass: emergence of mesa-optimizer} does not hold anymore by adopting the setting in~\cite{DBLP:journals/corr/taiji-AR}. In this case, as a complement of~\cite{DBLP:journals/corr/taiji-AR},
we prove that under a similar but weaker structural assumption, training dynamics will converge to the theoretical construction in~\cite{DBLP:journals/corr/mesa-opt} and the trained transformer implements exact gradient-based mesa-optimization. However, we prove that without any structural assumption, the trained transformer will not perform vanilla gradient descent for the OLS problem in general. Finally, we conduct simulations to validate our theoretical findings in Section~\ref{sec: simulation}.

\section{Additional related work}
\label{sec: related work}

\subsection{Mesa-optimization in ICL for few-shot linear regression}

In addition to AR pretraining, much more empirical~\cite{DBLP:conf/nips/gargICL,DBLP:conf/icml/Osw-ICL,DBLP:conf/acl/mesa-ICL-ACL,DBLP:conf/iclr/AkyurekSA0Z23ICL} and theoretical studies~\cite{zhang2024ICL,DBLP:conf/nips/Ahn-23-ICL,DBLP:journals/corr/Ahn-linearatten,DBLP:journals/corr/tengyu-ICL,DBLP:journals/corr/jingfeng-ICL,DBLP:journals/corr/Rongge,DBLP:journals/corr/huang-ICL} have given evidence to the mesa-optimization hypothesis when transformers are trained to solve few-shot linear regression problem with the labeled training instances in the context. On the experimental side, for example, the seminal empirical works by~\cite{DBLP:conf/nips/gargICL,DBLP:conf/iclr/TransBayesian} considers ICL for linear regression, where they find the ICL performance of trained transformers is close to the OLS. On the theoretical side, considering a one-layer linear transformer,~\cite{DBLP:conf/nips/Ahn-23-ICL,zhang2024ICL} prove that the global minima of the population training loss is equivalent to one step of preconditioned gradient descent. Notably,~\cite{zhang2024ICL} further proves that the training dynamics do converge to the global minima, and the obtained mesa-optimizer solves the linear problem. For multi-layer attention models, recent works suggest that they can perform efficient high-order optimization algorithms such as Newton's method~\cite{DBLP:journals/corr/Fu2023high,DBLP:journals/corr/Giannou23newton,DBLP:journals/corr/Rongge}. Unfortunately, the pretraining goal of these studies is different from the AR training. Therefore, it is still unclear whether these findings can be transferred to transformers autoregressively trained on sequential data.

\subsection{Other explanations for ICL}

In addition to the mesa-optimization hypothesis, there are other explanations for the emergence of ICL.~\cite{DBLP:conf/iclr/XieRL022,DBLP:journals/corr/wang-topic,DBLP:conf/nips/WiesLS23,DBLP:conf/icml/LiLR23-topic} explain ICL as inference over an implicitly learned topic model.~\cite{DBLP:conf/icml/LiIPO23ICLstability} connects ICL to multi-task learning and establishes generalization bounds using the algorithmic stability technique.~\cite{DBLP:journals/corr/christos-AR} and~\cite{li2024AR} study the implicit bias of the next(last)-token classification loss when each token is sampled from a finite vocabulary.
Specially,~\cite{li2024AR} proves that self-attention with gradient descent learns an automaton that generates the next token by hard retrieval and soft
composition.
~\cite{DBLP:journals/corr/han-ICL-kernel} explains ICL as kernel regression.
These results are not directly comparable to ours because we study the ICL of a one-layer linear transformer with AR pretraining on the first-order AR process.

\section{Problem setup}

\paragraph{Elementary notations.} We define $[n] = \{1,2,\dots,n\}$. We use lowercase, lowercase boldface, and uppercase boldface letters to denote scalars, vectors, and matrices, respectively. For a vector $\va$, we denote its $i$-th element as $a_i$. For a matrix $\vA$, we use $\vA_{k:}$, $\vA_{:k}$ and $A_{ij}$ to denote its $k$-th row, $k$-th column and $(i,j)$-th element, respectively. For a vector $\va$ (matrix $\vA$), we use $\va^*$ ($\vA^*$) to denote its conjugate transpose. Similarly, we use $\overline{\va}$ and $\overline{\vA}$ to denote their element-wise conjugate. We denote the $n$-dimensional identity matrix by $\vI_n$. We denote the one vector of size $n$ by $\vone_n$. In addition, we denote the zero vector of size $n$ and the zero matrix of size $m\times n$ by $\vzero_{n}$ and $\vzero_{m\times n}$, respectively. We use $\kron$ and $\odot$ to denote the Kronecker product and the Hadamard product, respectively. Besides, we denote $\vect(\cdot)$ the vectorization operator in column-wise order.

\subsection{Data distribution}

We consider a first-order AR process as the underlying data distribution, similar to recent works on AR pretraining~\cite{DBLP:journals/corr/mesa-opt,DBLP:journals/corr/taiji-AR}. Concretely, to generate a sequence $(\vx_1, \dots, \vx_T) \in \C^{d\times T}$, we first randomly sample a unitary matrix $\vW \in \C^{d \times d}$ uniformly from a candidate set $\cP_\vW = \{\mathrm{diag}(\lambda_1,\dots, \lambda_d) \mid \abs{\lambda_i} = 1, \forall i \in [d]\}$ and the initial data point $\vx_1$ from a controllable distribution $\cD_{\vx_1}$ to be specified later, then the subsequent elements are generated according to the rule $\vx_{t+1} = \vW\vx_t$ for $t \in [T-1]$. For convenience, we denote the vector $(\lambda_1,\dots, \lambda_d)^\top$ by $\vlambda$. We note that the structural assumption on $\vW$ is standard in the literature on learning problems involving matrices~\cite{DBLP:conf/nips/AroraCHL19,DBLP:journals/corr/taiji-AR}. In addition, it is a natural extension of the recent studies on ICL for linear problems~\cite{zhang2024ICL,DBLP:conf/nips/Ahn-23-ICL,DBLP:journals/corr/tengyu-ICL,DBLP:journals/corr/jingfeng-ICL,DBLP:journals/corr/huang-ICL}, where they focus on $y = \vw^\top \vx = \vone_d^\top \mathrm{diag}(\vw) \vx$. Furthermore, adopting this new data model is a necessary approach to investigate AR pretraining because the regression dataset in the previous ICL theory is not suitable since each token $\vx_i$ does not have a relation with the context. In this paper, we mainly investigate the impact of the initial distribution $\cD_{\vx_1}$ on the convergence result of the AR pretraining.

\subsection{Model details}

\paragraph{Linear causal self-attention layer.} Before introducing the specified transformer module we will analyze in this paper, we first recall the definition of the standard causally-masked self-attention layer~\cite{DBLP:conf/nips/transformer}, whose parameters $\vtheta$ includes a query matrix $\vW^Q \in \R^{d_k \times d_e}$, a key matrix $\vW^K \in \R^{d_k \times d_e}$, a value matrix $\vW^V \in \R^{d_v \times d_e}$, a projection matrix $\vW^P \in \R^{d_e \times d_v}$ and a normalization factor $\rho_t > 0$. At time step $t$, let $\vE_t  = (\ve_1, \dots, \ve_t) \in \R^{d_e \times t}$ be the tokens embedded from the prompt sequence $\vP_t = (\vx_1, \dots, \vx_t)\in \C^{d \times t}$, the causal self-attention layer will output
\begin{align*}
\vf_{t} (\vE_t ;\vtheta) = \ve_t + \vW^P \vW^V \vE_t \cdot \mathrm{softmax} \left( \frac{ (\vW^K \vE_t)^* \vW^Q \ve_t}{\rho_t} \right) \in \C^{d_e},
\end{align*}
where the $\mathrm{softmax}$ operator is applied to each column of the input matrix.
Similar to recent theoretical works on ICL with few-shot pretraining~\cite{zhang2024ICL,DBLP:conf/icml/Osw-ICL,DBLP:journals/corr/mesa-opt,DBLP:journals/corr/taiji-AR,DBLP:conf/nips/Ahn-23-ICL,DBLP:journals/corr/Ahn-linearatten,DBLP:journals/corr/tengyu-ICL,DBLP:journals/corr/jingfeng-ICL}, we consider the linear attention module in this work, which modifies the standard causal self-attention by dropping the $\mathrm{softmax}$ operator.  Reparameterizing $\WKQ = \vW^{K*} \vW^Q$ and $\WPV = \vW^P \vW^V$, we have $\vtheta = (\WKQ, \WPV)$, and the output can be rewritten as:
\begin{align*}
\vf_{t} (\vE_t ;\vtheta) = \ve_t + \WPV \vE_t \cdot  \frac{ \vE_t^* \WKQ \ve_t}{\rho_t}.
\end{align*}
Though it is called linear attention, we note that the output $\vf_{t} (\vE_t ;\vtheta)$ is non-linear w.r.t. the input tokens $\vE_t$ due to the existence of the $\vE_t\vE_t^*$. In terms of the normalization factor $\rho_t$, like existing works~\cite{zhang2024ICL,DBLP:conf/nips/Ahn-23-ICL}, we take it to be $t-1$ because
each element in $\vE_t\vE_t^*$ is a Hermitian inner product of two vectors of size $t$.

\paragraph{Embeddings.} We adopt the natural embedding strategy used in recent studies on AR learning~\cite{DBLP:journals/corr/mesa-opt,DBLP:journals/corr/taiji-AR}. Given a sequence $\vP_t = (\vx_1, \dots, \vx_t)$, the $i$-th token is defined as $\ve_i = (\vzero_d^\top, \vx_i^\top, \vx_{i-1}^\top)^\top \in \C^{3d}$, thus the corresponding embedding matrix $\vE_t$ can be formally written as:
\begin{align*}
\vE_t = (\ve_1, \dots, \ve_t)
= \begin{pmatrix}
\vzero_d & \vzero_d & \cdots & \vzero_d \\
\vx_1 & \vx_2 & \cdots & \vx_t \\
\vx_0 & \vx_1 & \cdots & \vx_{t-1}
\end{pmatrix} \in \C^{3d \times t},
\end{align*}
where $\vx_0 = \vzero_d$ as a complement. This embedding strategy is a natural extension of the existing theoretical works about ICL for linear regression~\cite{DBLP:journals/corr/huang-ICL,zhang2024ICL,DBLP:conf/icml/Osw-ICL,DBLP:conf/nips/Ahn-23-ICL,DBLP:journals/corr/tengyu-ICL,DBLP:journals/corr/jingfeng-ICL}. The main difference is that the latter only focus on predicting the last query token while we need to predict each historical token.  We note that practical transformers do learn similar token construction in the first softmax attention layer (e.g., see Fig. 4B in~\cite{DBLP:journals/corr/mesa-opt}).

\paragraph{Next-token prediction.} Receiving the prompt $\vP_t = (\vx_1, \dots, \vx_t)$, the network's prediction for the next element $\vx_{t+1}$ will be the first $d$ coordinates of the output $\vf_{t} (\vE_t ;\vtheta)$, aligning with the setup adopted in~\cite{DBLP:journals/corr/mesa-opt,DBLP:journals/corr/taiji-AR}. Namely, we have
\begin{align*}
\haty_t(\vE_t ;\vtheta) = \bracket{ \vf_{t} (\vE_t ;\vtheta) }_{1:d} \in \C^d.
\end{align*}
Henceforth, we will omit the dependence on $\vE_t$ and $\vtheta$, and use $\haty_t$ if it is not ambiguous. Since only the first $d$ rows are extracted from the output by the attention layer, the prediction $\haty_t$ just depends on some parts of $\WPV$ and $\WKQ$. Concretely, we denote that
\begin{align*}
\WPV = \begin{pmatrix}
\vW^{PV}_{11} & \vW^{PV}_{12} & \vW^{PV}_{13} \\
\vW^{PV}_{21} & \vW^{PV}_{22} & \vW^{PV}_{23} \\
\vW^{PV}_{31} & \vW^{PV}_{32} & \vW^{PV}_{33}
\end{pmatrix}, 
\WKQ = \begin{pmatrix}
\vW^{KQ}_{11} & \vW^{KQ}_{12} & \vW^{KQ}_{13} \\
\vW^{KQ}_{21} & \vW^{KQ}_{22} & \vW^{KQ}_{23} \\
\vW^{KQ}_{31} & \vW^{KQ}_{32} & \vW^{KQ}_{33}
\end{pmatrix},
\end{align*}
where $\vW^{PV}_{ij},\vW^{KQ}_{ij} \in \R^{d \times d}$ for all $i,j \in [3]$. Then the $\haty_t$ can be written as
\begin{equation}
\label{eqn: y_t}
\haty_t = \begin{pmatrix}
\vW^{PV}_{12} & \vW^{PV}_{13}
\end{pmatrix} \frac{\vE^\vx_t \vE_t^{\vx*}}{\rho_t} 
\begin{pmatrix}
\vW^{KQ}_{22} & \vW^{KQ}_{23} \\
\vW^{KQ}_{32} & \vW^{KQ}_{33}
\end{pmatrix} \ve_t^\vx.
\end{equation}
Here $\vE_t^\vx = (\ve_1^\vx, \dots, \ve_t^\vx) \in \C^{2d \times t}$ denotes the last $2d$ rows of the $\vE_t$, where $\ve_i^\vx = (\vx_i^\top, \vx_{i-1}^\top)^\top$. Therefore, we only need to analyze the selected parameters in Eq.~\ref{eqn: y_t} during the training dynamics. The derivation of Eq.~\ref{eqn: y_t} can be found in Appendix~\ref{proof: y_t}.

\subsection{Training procedure}

\paragraph{Loss function.} To train the transformer model over the next-token prediction task, we focus on minimizing the following population loss:
\begin{equation}
\label{eqn: population loss}
L(\vtheta) = \sum_{t = 2}^{T-1} L_t(\vtheta) = \sum_{t = 2}^{T-1} \E_{\vx_1, \vW} \bracket*{\frac{1}{2}  \norm{\haty_t - \vx_{t+1}}_2^2},
\end{equation}
where the expectation is taken with respect to the start point $\vx_1$ and the transition matrix $\vW$. Henceforth, we will suppress the subscripts of the expectation for simplicity. The population loss is a standard objective in the optimization studies~\cite{zhang2024ICL,shah2023learningDDPM}, and this objective has been used in recent works on AR modeling~\cite{DBLP:journals/corr/mesa-opt,DBLP:journals/corr/taiji-AR}. The summation starts from $t=2$ because we do not have any information to predict $\vx_2$ given only $\vx_1$.

\paragraph{Initialization strategy.} We adopt the following diagonal initialization strategy, and similar settings have been used in recent works on ICL for linear problem~\cite{zhang2024ICL,DBLP:journals/corr/huang-ICL,DBLP:journals/corr/taiji-AR}.

\begin{assumption}[Diagonal initialization]
\label{ass: Initialization}
At the initial time $\tau = 0$, we assume that
\begin{align*}
\WKQ(0) = \begin{pmatrix}
\vzero_{d\times d} & \vzero_{d\times d} & \vzero_{d\times d} \\
\vzero_{d\times d} & \color{red}{\vzero_{d\times d}} & \color{red}{\vzero_{d\times d}} \\
\vzero_{d\times d} & \color{red}{a_0\vI_d} & \color{red}{\vzero_{d\times d}}
\end{pmatrix},
\WPV(0) = \begin{pmatrix}
\vzero_{d\times d} & \color{red}{b_0\vI_d} & \color{red}{\vzero_{d\times d}} \\
\vzero_{d\times d} & \vzero_{d\times d} & \vzero_{d\times d} \\
\vzero_{d\times d} & \vzero_{d\times d} & \vzero_{d\times d}
\end{pmatrix},
\end{align*}
where the red submatrices are related to the $\hatys_t$ and changed during the training process.
\end{assumption}

The most related paper~\cite{DBLP:journals/corr/taiji-AR} considers a stronger diagonal structure than ours, and it only investigates the loss landscape. Our results deepened the understanding of AR transformers by considering practical training dynamics. We think this assumption might be inevitable for a tractable analysis and leave theory for standard (Gaussian) initialization to future work. In Section~\ref{sec: simulation}, we also conduct experiments under standard initialization, which further supports the rationality of Assumption~\ref{ass: Initialization}.

\textbf{Optimization algorithm.} We utilize the gradient flow to minimize the learning objective in Eq.~\ref{eqn: population loss}, which is equivalent to the gradient descent with infinitesimal step size and governed by the ordinary differential equation (ODE) 
$\frac{\mathrm{d}}{\mathrm d\tau} \vtheta = - \nabla L(\vtheta)$.

\section{Main results}

In this section, we present the main theoretical results of this paper. First, in Section~\ref{sec: transformer is a mesa-optimizer}, we prove that when $\cD_{\vx_1}$ satisfies some certain condition (Assumption~\ref{ass: emergence of mesa-optimizer}), the trained transformer implements one step of gradient descent for the minimization of an OLS problem, which validates the rationality of the mesa-optimization hypothesis~\cite{DBLP:journals/corr/mesa-opt}. Next, in Section~\ref{sec: Learnability of LT}, we further explore the capability limitation of the obtained mesa-optimizer under Assumption~\ref{ass: emergence of mesa-optimizer}, where we characterize a stronger assumption (Assumption~\ref{ass: mesa optimizer succeed}) as the necessary and sufficient condition that the mesa-optimizer recovers the true distribution. Finally, we go beyond Assumption~\ref{ass: emergence of mesa-optimizer}, where the exploratory analysis proves that the trained transformer will generally not perform vanilla gradient descent for the OLS problem.

\subsection{Trained transformer is a mesa-optimizer}
\label{sec: transformer is a mesa-optimizer}

In this subsection, we show that under a certain assumption of $\cD_{\vx_1}$, the trained one-layer linear transformer will converge to the mesa-optimizer~\cite{DBLP:journals/corr/mesa-opt,DBLP:journals/corr/taiji-AR}. Namely, it will perform one step of gradient descent for the minimization of an OLS problem about the received prompt. The sufficient condition of the distribution $\cD_{\vx_1}$ can be summarized as follows.

\begin{assumption}[Sufficient condition for the emergence of mesa-optimizer]
\label{ass: emergence of mesa-optimizer}
We assume that the distribution $\cD_{\vx_1}$ of the initial token $\vx_1 \in \R^d$ satisfies $\E_{\vx_1 \sim \cD_{\vx_1}} [x_{1i_1} x_{1i_2}^{r_2} \cdots x_{1i_n}^{r_n}] = 0$ for any subset $\{i_1, \dots, i_n \mid n \leq 4\}$ of $[d]$, and $r_2, \dots r_n \in \N$. In addition, we assume that $\kappa_1 = \E[x_{1j}^4]$, $\kappa_2 = \E[x_{1j}^6]$ and $\kappa_3 = \sum_{r \ne j} \E[x_{1j}^2 x_{1r}^4]$ are finite constant for any $j \in [d]$.
\end{assumption}

Finding Assumption 4.1 is non-trivial since we need to derive the training dynamics first. The key intuition of this assumption is to keep the gradient of the non-diagonal elements of $\vW^{KQ}_{32}$ and $\vW^{PV}_{12}$ as zero, thus they can keep diagonal structure during the training. We note that any random vectors $\vx_1$ whose coordinates $x_{1i}$ are i.i.d. random variables with zero mean and finite moments of order 2, 4, and 6 satisfy this assumption. For example, it includes the normal distribution $\mathcal{N}(\vzero_d, \sigma^2 \vI_d)$, which is a common setting in the learning theory field~\cite{shah2023learningDDPM,he2022information,schmidt2018adversarially,zheng2024toward,DBLP:conf/icml/ZhengWBCLZ23}. Under this assumption, the final fixed point found by the gradient flow can be characterized as the following theorem.

\begin{theorem}[Convergence of the gradient flow, proof in Section~\ref{sec: Proof skeleton}]
\label{thm: convergence of GF}
Consider the gradient flow of the one-layer linear transformer (see Eq.~\ref{eqn: y_t}) over the population AR pretraining loss (see Eq.~\ref{eqn: population loss}). Suppose the initialization satisfies Assumption~\ref{ass: Initialization}, and the initial token's distribution $\cD_{\vx_1}$ satisfies Assumption~\ref{ass: emergence of mesa-optimizer}, then the gradient flow converges to
\begin{align*}
\begin{pmatrix}
\widetilde{\vW^{KQ}_{22}} & \widetilde{\vW^{KQ}_{23}} \\
\widetilde{\vW^{KQ}_{32}} & \widetilde{\vW^{KQ}_{33}}
\end{pmatrix} = 
\begin{pmatrix}
 \vzero_{d\times d} & \vzero_{d\times d} \\
 \widetilde{a} \vI_d & \vzero_{d\times d}
\end{pmatrix},
\begin{pmatrix}
\widetilde{\vW^{PV}_{12}} & \widetilde{\vW^{PV}_{13}}
\end{pmatrix} = 
\begin{pmatrix}
 \widetilde{b} \vI_d & \vzero_{d\times d}
\end{pmatrix}.
\end{align*}
Though different initialization $(a_0, b_0)$ lead to different $(\widetilde{a}, \widetilde{b})$, the solutions' product $\widetilde{a} \widetilde{b}$ satisfies
\begin{align*}
\widetilde{a} \widetilde{b} = \frac{\kappa_1 }{\kappa_2 + {\frac{\kappa_3}{T-2}\sum_{t=2}^{T-1} \frac{1}{t-1}}}.
\end{align*}
\end{theorem}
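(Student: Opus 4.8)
The plan is to collapse the full gradient flow onto a two-parameter invariant manifold, solve the resulting scalar ODE using a conserved quantity, and then evaluate the limiting product by an explicit moment computation.

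\textbf{Reduction to two scalars.} I would start from the closed form in Eq.~\ref{eqn: y_t}. Writing $\vE_t^\vx \vE_t^{\vx*} = \left(\begin{smallmatrix}\vA_t & \vB_t\\ \vB_t^* & \vC_t\end{smallmatrix}\right)$ with $\vA_t=\sum_{i=1}^t \vx_i\vx_i^*$, and using $\vx_i=\vW\vx_{i-1}$ and $\vx_0=\vzero_d$, one gets $\vC_t=\vA_{t-1}$ and $\vB_t=\vW\vA_{t-1}$; with query $\ve_t^\vx=(\vx_t^\top,\vx_{t-1}^\top)^\top$ the prediction $\haty_t$ is an explicit bilinear form in the six active blocks $\vW^{KQ}_{22},\vW^{KQ}_{23},\vW^{KQ}_{32},\vW^{KQ}_{33},\vW^{PV}_{12},\vW^{PV}_{13}$. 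The central claim is that the set on which only $\vW^{KQ}_{32}=a\vI_d$ and $\vW^{PV}_{12}=b\vI_d$ are nonzero is invariant under the flow. To prove this I would evaluate $\nabla L$ restricted to this set and show (i) the gradients of the four blocks that should stay zero vanish there, and (ii) the gradients of $\vW^{KQ}_{32}$ and $\vW^{PV}_{12}$ are themselves scalar multiples of $\vI_d$, i.e. their off-diagonal entries have vanishing gradient. This is exactly where Assumption~\ref{ass: emergence of mesa-optimizer} enters: expanding $\nabla L$ produces expectations of degree-$\le 6$ monomials in the coordinates of $\vx_1$ times powers of the $\lambda_j$, and the vanishing cross-moments (every monomial in which some coordinate appears to the first power integrates to zero) kill precisely the off-block and off-diagonal contributions. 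Starting from Assumption~\ref{ass: Initialization}, the flow then stays on this manifold for all time.

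\textbf{The scalar ODE.} On the manifold the prediction collapses to $\haty_t=\frac{ab}{t-1}\vB_t\vx_t=\frac{ab}{t-1}\vW\vA_{t-1}\vx_t$, while the target is $\vx_{t+1}=\vW\vx_t$. Since $\vW$ is unitary, with $c:=ab$ and $\vM_t:=\frac{1}{t-1}\vA_{t-1}$ (Hermitian) the per-step loss is $L_t=\frac12\E\norm{(c\,\vM_t-\vI_d)\vx_t}_2^2$, so $L$ depends on $(a,b)$ only through $c$, and expanding the square gives a scalar quadratic $L=\ell(c)=\frac12(Pc^2-2Qc+R)$ with $P=\sum_t\E[\vx_t^*\vM_t^2\vx_t]$, $Q=\sum_t\E[\vx_t^*\vM_t\vx_t]$, $R=\sum_t\E\norm{\vx_t}_2^2$. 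The flow reads $\dot a=-b\,\ell'(c)$, $\dot b=-a\,\ell'(c)$, whence $\frac{d}{d\tau}(a^2-b^2)=0$ and $\dot c=-(a^2+b^2)\ell'(c)$. Thus $a^2-b^2\equiv a_0^2-b_0^2$ is conserved; since $P>0$ the equation $\ell'(c)=0$ has the unique root $c^\star=Q/P$, so $c$ moves monotonically toward $c^\star$ and, together with the conserved difference, $(a,b)$ converges to the unique point of the hyperbola $ab=c^\star$ compatible with the initialization. This gives the stated fixed-point structure with $\widetilde a\,\widetilde b=Q/P$.

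\textbf{Evaluating $Q/P$.} It remains to compute $Q$ and $P$ using $x_{ij}=\lambda_j^{i-1}x_{1j}$, with $\vlambda$ i.i.d. uniform on the unit circle (so $\E_\vlambda[\lambda_j^p]=\vone\{p=0\}$) and $\vx_1$ satisfying Assumption~\ref{ass: emergence of mesa-optimizer}. For $Q$, writing $\vx_t^*\vA_{t-1}\vx_t=\sum_{j,k}\bar\lambda_j^{t-1}\lambda_k^{t-1}x_{1j}^2 x_{1k}^2\,\sigma_{jk}^{(t-1)}$ with $\sigma_{jk}^{(t-1)}=\sum_{m=0}^{t-2}(\lambda_j\bar\lambda_k)^m$, the diagonal $j=k$ terms contribute $d\kappa_1(t-1)$, while every off-diagonal term leaves a strictly positive power of some $\lambda_k$ and averages to zero; hence $Q=(T-2)d\kappa_1$. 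For $P$, the analogous degree-six expansion of $\vx_t^*\vA_{t-1}^2\vx_t$ over triples $(j,k,l)$ survives the $\vlambda$-average only for $j=k=l$ (giving $d\kappa_2(t-1)^2$) and for $j=k\ne l$ (giving, via the relabeling identity $\sum_{a\ne b}\E[x_{1a}^4 x_{1b}^2]=\sum_{a\ne b}\E[x_{1a}^2 x_{1b}^4]=d\kappa_3$, a contribution $d\kappa_3(t-1)$); all remaining coincidence patterns force an out-of-range power of $\lambda$ and vanish. Dividing by $(t-1)^2$ and summing yields $P=(T-2)d\kappa_2+d\kappa_3\sum_{t=2}^{T-1}\frac{1}{t-1}$, so $\widetilde a\,\widetilde b=Q/P=\kappa_1/(\kappa_2+\frac{\kappa_3}{T-2}\sum_{t=2}^{T-1}\frac{1}{t-1})$, as claimed.

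\textbf{Main obstacle.} The genuinely hard step is establishing invariance of the two-parameter manifold, namely both the block-sparsity and the scalar (diagonal) structure, under the full non-convex flow; this is the step that truly consumes Assumption~\ref{ass: emergence of mesa-optimizer} and requires organizing the many degree-$\le 6$ moment-and-$\lambda$-average terms so that the unwanted gradient entries cancel. Once invariance is in hand, the scalar ODE analysis and the evaluations of $Q$ and $P$ are comparatively mechanical.
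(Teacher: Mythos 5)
Your proposal is correct in substance, and its load-bearing step coincides with the paper's: both hinge on the moment computation showing that, under Assumption~\ref{ass: emergence of mesa-optimizer}, the full gradient evaluated at any point of the set $\{\vW^{KQ}_{32}=a\vI_d,\ \vW^{PV}_{12}=b\vI_d,\ \text{all other active blocks}=\vzero_{d\times d}\}$ vanishes except on those two diagonals, where it is a scalar multiple of $\vI_d$ (this is exactly the paper's Lemma~\ref{lemma: dynamical system isotropic}, your ``invariance'' step, and it is indeed the hard part). Where you genuinely diverge is downstream. First, you obtain the planar ODE coefficients by restricting the loss to the manifold and computing the two moments $Q=\sum_t\E[\vx_t^*\vM_t\vx_t]$ and $P=\sum_t\E[\vx_t^*\vM_t^2\vx_t]$ of your $\vM_t$, whereas the paper reads the same coefficients directly off the gradient computation; your values $Q=(T-2)d\kappa_1$ and $P=(T-2)d\kappa_2+d\kappa_3\sum_{t=2}^{T-1}\frac{1}{t-1}$ are correct and reproduce the paper's ODE up to an immaterial factor $1/d$ coming from the induced metric ($\Vert\vI_d\Vert_F^2=d$), a time rescaling you silently drop. (In your $P$ computation the surviving off-diagonal coincidence pattern is the one pairing the two factors containing $\vx_t$, not ``$j=k\ne l$'' as written, but both patterns carry the same moment, so the total is unaffected.) Second, for convergence the paper introduces the surrogate objective $\widetilde\ell(a,b)=\frac{1}{2c_1}(c_2-c_1ab)^2$ and a Polyak-{\L}ojasiewicz inequality (Lemmas~\ref{lemma: Surrogate objective function isotropic} and~\ref{lemma: PL}), while you use conservation of $a^2-b^2$ plus monotonicity of $c=ab$ toward the unique root of $\ell'$. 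Your route buys something the paper's does not: the conserved quantity pins down \emph{which} point of the hyperbola $ab=c^\star$ is reached from a given $(a_0,b_0)$, making the theorem's remark that different initializations give different $(\widetilde a,\widetilde b)$ with the same product explicit rather than implicit.

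One caveat, which you share with the paper rather than introduce: neither argument covers the degenerate initializations with $a_0+b_0=0$. The line $a=-b$ is invariant (since $\frac{\mathrm{d}}{\mathrm{d}\tau}(a+b)\propto-(a+b)\ell'(c)$), and on it $c=-a^2\le 0<c^\star$, so $\dot c>0$ forces $a^2\to 0$ and the flow converges to the saddle at the origin, not to the hyperbola; the paper's PL constant $2c_1(a^2+b^2)$ degenerates there for the same reason. For $a_0\ne-b_0$ your argument does close, but you should say why the dynamics cannot stall: either $a_0^2\ne b_0^2$, in which case $a^2+b^2\ge\abs{a_0^2-b_0^2}>0$ along the flow by conservation, or $a_0=b_0\ne 0$, in which case $ab\ge\min(a_0^2,c^\star)>0$ by monotonicity. ``Moves monotonically toward $c^\star$'' alone does not imply convergence to it, so this non-stalling step deserves an explicit sentence.
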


As far as we know, Theorem~\ref{thm: convergence of GF} is the first theoretical result for the training dynamics and the mesa-optimization hypothesis of autoregressive transformers. The technical challenge compared to existing ICL theory for regression~\cite{zhang2024ICL} mainly has two parts. First, our data model breaks the independence between data at different times, which causes difficulty in decomposing and estimating the gradient terms. Second, we modify the embedding strategy (more dimensions), scale the attention model (much more parameters), and change the loss function (more terms) to perform the full AR pertaining. All these parts are not well studied in the literature and make the gradients more complicated.

Theorem~\ref{thm: convergence of GF} is also a non-trivial extension of recent work~\cite{DBLP:journals/corr/taiji-AR}, which characterizes the global minima of the AR modeling loss when imposing the diagonal structure on all parameter matrices during the training and fixing $\vx_1$ as $\vone_d$. In comparison, Theorem~\ref{thm: convergence of GF} does not depend on the special structure, and further investigates when the mesa-optimizer emerges in practical non-convex optimization.

We highlight that the limiting solution found by the gradient flow shares the same structure with the careful construction in~\cite{DBLP:journals/corr/mesa-opt}, though the pretraining loss is non-convex. Therefore, our result theoretically validates the rationality of the mesa-optimization hypothesis~\cite{DBLP:journals/corr/mesa-opt} in the AR pretraining setting, which can be formally presented as the following corollary.

\begin{cor}[Trained transformer as a mesa-optimizer, proof in Appendix~\ref{proof: cor Trained transformer as a mesa-optimizer}]
\label{cor: Trained transformer as a mesa-optimizer}
We suppose that the same precondition of Theorem~\ref{thm: convergence of GF} holds. When predicting the $(t+1)$-th token, the trained transformer obtains $\widehat{\vW}$ by implementing one step of gradient descent for the OLS problem $L_{\mathrm{OLS},t}(\vW) = \frac{1}{2} \sum_{i = 1}^{t-1} \Vert \vx_{i+1} - \vW\vx_{i}\Vert^2$, starting from the initialization $\vW = \vzero_{d\times d}$ with a step size $\frac{\widetilde{a} \widetilde{b}}{t-1}$.
\end{cor}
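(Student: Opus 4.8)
The plan is to substitute the limiting parameters from Theorem~\ref{thm: convergence of GF} directly into the prediction formula Eq.~\ref{eqn: y_t}, and then match the resulting expression term-by-term against the one-step gradient-descent update of the OLS objective. This is essentially a direct computation, so the work is mostly careful block-matrix bookkeeping; the one conceptual point is recognizing which empirical correlation block the converged parameters select.

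First I would expand the product $\vE_t^\vx \vE_t^{\vx*}$ into its four $d\times d$ blocks. Since the top $d$ rows of $\vE_t^\vx$ are $(\vx_1,\dots,\vx_t)$ and the bottom $d$ rows are $(\vx_0,\dots,\vx_{t-1})$ with $\vx_0 = \vzero_d$, the blocks are the autocorrelation $\vA = \sum_{i=1}^t \vx_i\vx_i^*$, the cross-correlation $\vB = \sum_{i=1}^t \vx_i\vx_{i-1}^*$, its conjugate transpose $\vB^*$, and the lagged autocorrelation $\vC = \sum_{i=1}^{t-1}\vx_i\vx_i^*$.

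Next I would plug in the converged values, namely $(\widetilde{\vW^{PV}_{12}},\widetilde{\vW^{PV}_{13}}) = (\widetilde{b}\,\vI_d, \vzero_{d\times d})$ and the $2\times 2$ block $\left(\begin{smallmatrix}\vzero & \vzero \\ \widetilde{a}\vI_d & \vzero\end{smallmatrix}\right)$, and read the formula right to left. The $\WKQ$ block applied to $\ve_t^\vx = (\vx_t^\top,\vx_{t-1}^\top)^\top$ collapses it to $(\vzero_d^\top, \widetilde{a}\vx_t^\top)^\top$, so multiplying by $\tfrac{1}{\rho_t}\vE_t^\vx\vE_t^{\vx*}$ keeps only the second column-block, and extracting the first block via $(\widetilde{b}\,\vI_d,\vzero)$ isolates exactly the cross-correlation $\vB$. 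Using $\rho_t = t-1$ and the convention $\vx_0 = \vzero_d$ to drop the vanishing $i=1$ term and reindex, this gives
\[
\haty_t \;=\; \frac{\widetilde{a}\,\widetilde{b}}{t-1}\,\vB\,\vx_t \;=\; \frac{\widetilde{a}\,\widetilde{b}}{t-1}\left(\sum_{i=1}^{t-1}\vx_{i+1}\vx_i^*\right)\vx_t .
\]

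Finally I would compute the one-step GD update for $L_{\mathrm{OLS},t}$. Its gradient is $\nabla_\vW L_{\mathrm{OLS},t}(\vW) = -\sum_{i=1}^{t-1}(\vx_{i+1}-\vW\vx_i)\vx_i^*$, so a single step from $\vW = \vzero_{d\times d}$ with step size $\tfrac{\widetilde{a}\,\widetilde{b}}{t-1}$ yields $\widehat{\vW} = \tfrac{\widetilde{a}\,\widetilde{b}}{t-1}\sum_{i=1}^{t-1}\vx_{i+1}\vx_i^*$, which is precisely the prefactor of $\vx_t$ above. Hence $\haty_t = \widehat{\vW}\vx_t$, establishing the mesa-optimization claim. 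There is no genuine obstacle beyond tracking the block indices and the $\vx_0 = \vzero_d$ convention correctly; the only substantive observation is that the converged off-diagonal structure of $\WKQ$ is exactly what picks out the lag-one cross-correlation block, whose scaled form equals the OLS gradient evaluated at zero.
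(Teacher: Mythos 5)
Your proposal is correct and follows essentially the same route as the paper's own proof: substitute the converged parameters from Theorem~\ref{thm: convergence of GF} into Eq.~\ref{eqn: y_t}, simplify using $\vx_0 = \vzero_d$ and $\rho_t = t-1$ to obtain $\haty_t = \frac{\widetilde{a}\widetilde{b}}{t-1}\bigl(\sum_{i=1}^{t-1}\vx_{i+1}\vx_i^*\bigr)\vx_t$, and identify the prefactor with one gradient-descent step on $L_{\mathrm{OLS},t}$ from $\vW = \vzero_{d\times d}$. The only cosmetic difference is that you organize $\vE_t^\vx\vE_t^{\vx*}$ into its four correlation blocks and pick out the cross-correlation block, whereas the paper expands it as a sum of rank-one outer products $\sum_i \ve_i^\vx\ve_i^{\vx*}$; the computations are identical.
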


\subsection{Capability limitation of the mesa-optimizer}
\label{sec: Learnability of LT}

Built upon the findings in Theorem~\ref{ass: emergence of mesa-optimizer}, a simple calculation (details in Appendix~\ref{proof: cor Trained transformer as a mesa-optimizer}) shows that the prediction of the obtained mesa-optimizer given a new test prompt of length $T_{te}$ is
\begin{align}
\label{eqn: mesa output}
\widehat{\vy}_{T_{te} } = \vW \paren*{\widetilde{a} \widetilde{b} \frac{\sum_{i=1}^{T_{te} - 1} \vx_{i} \vx^*_i}{T_{te} - 1}} \vx_{T_{te}}.
\end{align}
It is natural to ask the question: where is the capability limitation of the obtained mesa-optimizer, and what data distribution can the trained transformer learn? Therefore, in this subsection, we study under what assumption of the initial token's distribution $\cD_{\vx_1}$, the one step of gradient descent performed by the trained transformer can exactly recover the underlying data distribution. First, leveraging the result from Eq.~\ref{eqn: mesa output}, we present a negative result, which proves that not all $\cD_{\vx_1}$ satisfies Assumption~\ref{ass: emergence of mesa-optimizer} can be recovered by the trained linear transformer.

\begin{prop}[AR process with normal distributed initial token can not be learned, proof in Appendix~\ref{proof: prop normal failure}]
\label{thm: normal failure}

Let $\cD_{\vx_1}$ be the multivariate normal distribution $\mathcal{N}(\vzero_d, \sigma^2 \vI_d)$ with any $\sigma^2 > 0$, then the "simple" AR process can not be recovered by the trained transformer even in the ideal case with long training context. Formally, when the training sequence length $T_{tr}$ is large enough, for any test context length $T_{te}$ and dimension $j\in [d]$, the prediction from the trained transformer satisfies
\begin{align*}
E_{x_1,W}[\frac{(\widehat{y}_{T_{te}})_j}{(Wx_{T_{te}})_j}] \to \frac{1}{5}.
\end{align*}
Therefore, the prediction $\widehat{\vy}_{T_{te} }$ will not converges to the true next token $\vW \vx_{T_{te}}$.
\end{prop}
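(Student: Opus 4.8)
The plan is to substitute the limiting solution of Theorem~\ref{thm: convergence of GF} into the mesa prediction Eq.~\ref{eqn: mesa output} and reduce the stated expectation to an elementary moment computation, exploiting that $\vW = \mathrm{diag}(\lambda_1,\dots,\lambda_d)$ commutes with everything and that each $\lambda_j$ is uniform on the unit circle. Note first that the Gaussian $\mathcal{N}(\vzero_d,\sigma^2\vI_d)$ satisfies Assumption~\ref{ass: emergence of mesa-optimizer}, so Theorem~\ref{thm: convergence of GF} applies. I would then use the factorization $\vx_i = \vW^{i-1}\vx_1$, so that $(\vx_i)_k = \lambda_k^{i-1}(x_1)_k$ and the empirical second-moment matrix $\sum_{i=1}^{T_{te}-1}\vx_i\vx_i^*$ has $(k,l)$ entry $(x_1)_k(x_1)_l\sum_{i=0}^{T_{te}-2}(\lambda_k\overline{\lambda_l})^i$ (using that $\vx_1$ is real). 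Tracking the $j$-th coordinate of Eq.~\ref{eqn: mesa output}, both $(\widehat{y}_{T_{te}})_j$ and $(\vW\vx_{T_{te}})_j = \lambda_j^{T_{te}}(x_1)_j$ carry a common factor $(x_1)_j$, which cancels in the ratio and leaves
\begin{equation*}
\frac{(\widehat{y}_{T_{te}})_j}{(\vW\vx_{T_{te}})_j} = \frac{\widetilde{a}\,\widetilde{b}}{T_{te}-1} \sum_{l=1}^{d} (x_1)_l^2\, \lambda_j^{-(T_{te}-1)}\, \lambda_l^{T_{te}-1} \sum_{i=0}^{T_{te}-2} (\lambda_j\overline{\lambda_l})^i,
\end{equation*}
an expression ready for taking expectations.

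Next I would split the sum over $l$ into the diagonal term $l=j$ and the off-diagonal terms $l\ne j$. For $l=j$ the inner geometric sum collapses to $T_{te}-1$ because $\abs{\lambda_j}=1$, so this term equals exactly $\widetilde{a}\,\widetilde{b}\,(x_1)_j^2$, with $\vx_1$-expectation $\widetilde{a}\,\widetilde{b}\,\sigma^2$. For $l\ne j$, I would use independence of $\vx_1$ and $\vW$ to separate $\E[(x_1)_l^2]=\sigma^2$ from the $\vW$-expectation of $\lambda_j^{-(T_{te}-1)}\lambda_l^{T_{te}-1}\sum_{i=0}^{T_{te}-2}(\lambda_j\overline{\lambda_l})^i$. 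Setting $\mu=\lambda_j\overline{\lambda_l}$, the identity $\overline{\mu}^{\,T_{te}-1}\sum_{i=0}^{T_{te}-2}\mu^i=\sum_{m=1}^{T_{te}-1}\overline{\mu}^{\,m}$ rewrites this factor as a sum of strictly positive powers of $\overline{\mu}$; since $\lambda_j,\lambda_l$ are i.i.d.\ uniform on the unit circle, $\mu$ is again uniform on the circle, so $\E[\overline{\mu}^{\,m}]=0$ for every $m\ge 1$ and the entire off-diagonal contribution vanishes. Hence $\E_{\vx_1,\vW}\bracket*{(\widehat{y}_{T_{te}})_j/(\vW\vx_{T_{te}})_j}=\widetilde{a}\,\widetilde{b}\,\sigma^2$ exactly, for every $T_{te}$ and every $j$.

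Finally I would compute the limit of $\widetilde{a}\,\widetilde{b}$ as $T_{tr}\to\infty$ using Theorem~\ref{thm: convergence of GF} together with the Gaussian moments $\kappa_1=3\sigma^4$, $\kappa_2=15\sigma^6$, and $\kappa_3=3(d-1)\sigma^6$. The harmonic correction $\frac{\kappa_3}{T_{tr}-2}\sum_{t=2}^{T_{tr}-1}\frac{1}{t-1}=O\paren*{\frac{\ln T_{tr}}{T_{tr}}}$ tends to $0$, so $\widetilde{a}\,\widetilde{b}\to\kappa_1/\kappa_2=1/(5\sigma^2)$ and therefore $\widetilde{a}\,\widetilde{b}\,\sigma^2\to 1/5$. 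Since this limit differs from $1$, the prediction $\widehat{\vy}_{T_{te}}$ cannot converge to $\vW\vx_{T_{te}}$, which is the claim. In fact the argument yields the exact finite-length value $\widetilde{a}\,\widetilde{b}\,\sigma^2 = \paren*{5 + \frac{d-1}{T_{tr}-2}\sum_{t=2}^{T_{tr}-1}\frac{1}{t-1}}^{-1}$, which converges to $1/5$.

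The main obstacle is the off-diagonal $\vW$-expectation: the clean cancellation hinges on the geometric-sum identity collapsing the conjugated product $\lambda_j^{-(T_{te}-1)}\lambda_l^{T_{te}-1}\sum_i(\lambda_j\overline{\lambda_l})^i$ into a pure polynomial in $\overline{\mu}$ with no constant term, and on $\lambda_j\overline{\lambda_l}$ being uniform on the circle. I would also verify that the coincidence locus $\mu=1$ (i.e.\ $\lambda_j=\lambda_l$) has measure zero and that the ratio is integrable---after the $(x_1)_j$ cancellation only second moments of $\vx_1$ times bounded phases remain---so that dominated convergence legitimizes passing the $T_{tr}\to\infty$ limit through the expectation.
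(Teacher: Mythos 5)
Your proposal is correct and follows essentially the same route as the paper's own proof: substitute the limit $\widetilde{a}\widetilde{b}$ from Theorem~\ref{thm: convergence of GF} into Eq.~\ref{eqn: mesa output}, cancel the common factor $x_{1j}$ in the ratio, observe that all off-diagonal ($l\ne j$) terms vanish in the $\vW$-expectation because nonzero powers of uniform unit-circle phases have zero mean, and conclude $\E[\cdot]=\widetilde{a}\widetilde{b}\sigma^2\to\kappa_1/\kappa_2\,\sigma^2=1/5$ using the Gaussian moments $\kappa_1=3\sigma^4$, $\kappa_2=15\sigma^6$. Your extra touches (the substitution $\mu=\lambda_j\overline{\lambda_l}$, the exact finite-$T_{tr}$ value, and the integrability/measure-zero remarks) are sound refinements of the same argument rather than a different approach.
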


Proposition~\ref{thm: normal failure} suggests that ICL by AR pretraining~\cite{DBLP:journals/corr/mesa-opt,DBLP:journals/corr/taiji-AR} is different from ICL by few-shot pretraining~\cite{zhang2024ICL,DBLP:conf/icml/Osw-ICL,DBLP:conf/nips/Ahn-23-ICL,DBLP:journals/corr/Ahn-linearatten,DBLP:journals/corr/tengyu-ICL,DBLP:journals/corr/jingfeng-ICL,DBLP:journals/corr/huang-ICL,mahdavi2024revisiting}, which attracts much more attention from the theoretical community. In the latter setting, recent works~\cite{zhang2024ICL,DBLP:conf/nips/Ahn-23-ICL} proves that one step of gradient descent implemented by the trained transformer can in-context learn the linear regression problem with input sampled from $\mathcal{N}(\vzero_d, \sigma^2 \vI_d)$. However, in the AR learning setting, the trained linear transformer fails.

This negative result shows that one-step GD learned by the AR transformer can not recover the distribution, but this can be solved by more complex models. Even for more complex data ($\vW$ is not diagonal),~\cite{DBLP:journals/corr/mesa-opt} has empirically verified that multi-layer linear attention can perform multi-step gradient descent to learn the data distribution. Therefore, to address the issue, future works are suggested to study more complex architecture such as softmax attention~\cite{DBLP:journals/corr/huang-ICL}, multi-head attention~\cite{DBLP:journals/corr/CuiMultiAtten,DBLP:journals/corr/chenMultiAtten,DBLP:journals/corr/DeoraMultiAtten}, deeper attention layers~\cite{DBLP:journals/corr/Jason2024causal,DBLP:journals/corr/mesa-opt}, transformer block~\cite{DBLP:journals/corr/zhang2024transformerblock,DBLP:journals/corr/hongkang-ICL-2024,DBLP:journals/corr/taiji-nonlinear-ICL}, and so on. Future theory considering more complex AR transformers can adopt the same data model and token embeddings in this paper, and try to use a similar proof technique to derive the training dynamics.

Proposition~\ref{thm: normal failure} implies that if we want the trained transformer to recover the data distribution by performing one step of gradient descent, a stronger condition of $\cD_{\vx_1}$ is needed. Under Assumption~\ref{ass: emergence of mesa-optimizer}, the following sufficient and necessary condition related to the moment of $\cD_{\vx_1}$ is derived from Eq.~\ref{eqn: mesa output} by letting $\widehat{\vy}_{T_{te} }$ converges to $\vW \vx_{T_{te}}$ when context length $T_{tr}$ and $T_{te}$ are large enough.

\begin{assumption}[Condition for success of mesa-optimizer]
\label{ass: mesa optimizer succeed}
Based on Assumption~\ref{ass: emergence of mesa-optimizer}, we further suppose that $\frac{\kappa_1}{\kappa_2} \frac{\sum_{i=1}^{T_{te} - 1} \vx_{i} \vx^*_i}{T_{te} - 1} \vx_{T_{te}} \to \vx_{T_{te}}$ for any $\vx_1$ and $\vW$, when $T_{te}$ is large enough.
\end{assumption}

Assumption~\ref{ass: mesa optimizer succeed} is strong and shows the poor capability of the trained one-layer linear transformer because common distribution (e.g. Gaussian distribution, Gamma distribution, Poisson distribution, etc) always fails to satisfy this condition. Besides, it is a sufficient and necessary condition for the mesa-optimizer to succeed when the distribution $\cD_{\vx_1}$ has satisfied Assumption~\ref{ass: emergence of mesa-optimizer}, thus can not be improved in this case. We construct the following example that satisfies Assumption~\ref{ass: mesa optimizer succeed}.

\begin{example}[sparse vector]
\label{example}
If the random vector $\vx_1 \in R^d$ is uniformly sampled from the candidate set of size $2d$ $\{\pm (c, 0, \dots, 0)^\top, \pm (0, c, \dots, 0)^\top, \pm (0, \dots, 0, c)^\top\}$ for any fixed $c \in \R$, then the distribution $\cD_{\vx_1}$ satisfies Assumption~\ref{ass: mesa optimizer succeed}. The derivation can be found in Appendix~\ref{proof: example}.
\end{example}

For completeness, we formally summarize the following distribution learning guarantee for the trained transformer under Assumption~\ref{ass: Initialization} and~\ref{ass: emergence of mesa-optimizer}.

\begin{theorem}[Trained transformer succeed to learn the distribution satisfies Assumption~\ref{ass: mesa optimizer succeed}, proof in Appendix~\ref{proof: thm mesa optimizer succeed}]
\label{thm: mesa optimizer succeed}

Suppose that Assumption~\ref{ass: Initialization} and~\ref{ass: emergence of mesa-optimizer} holds, then Assumption~\ref{ass: mesa optimizer succeed} is the sufficient and necessary condition for the trained transformer to learn the AR process. Formally, when the training sequence length $T_{tr}$ and test context length $T_{te}$ are large enough, the prediction from the trained transformer satisfies
\begin{align*}
\widehat{\vy}_{T_{te} } \to  \vW \vx_{T_{te}},\quad T_{tr}, T_{te} \to +\infty.
\end{align*}
\end{theorem}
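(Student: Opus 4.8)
\section*{Proof proposal for Theorem~\ref{thm: mesa optimizer succeed}}

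The plan is to leverage the two results already in hand: the convergence of the gradient flow (Theorem~\ref{thm: convergence of GF}) fixes the limiting parameters through the scalar product $\widetilde{a}\widetilde{b}$, and the closed-form prediction in Eq.~\ref{eqn: mesa output} expresses the test-time output entirely in terms of $\widetilde{a}\widetilde{b}$ and the empirical second-moment matrix of the test context. Consequently the whole statement reduces to analyzing the two limits $T_{tr}\to\infty$ and $T_{te}\to\infty$ in one explicit formula, so the proof is essentially a substitution argument rather than a new dynamical analysis.

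First I would isolate the dependence on the training length. By Theorem~\ref{thm: convergence of GF},
\[
\widetilde{a}\widetilde{b} = \frac{\kappa_1}{\kappa_2 + \frac{\kappa_3}{T_{tr}-2}\sum_{t=2}^{T_{tr}-1}\frac{1}{t-1}}.
\]
The inner sum is the harmonic number $H_{T_{tr}-2}$, so the correction term equals $\frac{\kappa_3 H_{T_{tr}-2}}{T_{tr}-2} = O\!\paren*{\frac{\log T_{tr}}{T_{tr}}}\to 0$, whence $\widetilde{a}\widetilde{b}\to\frac{\kappa_1}{\kappa_2}$ as $T_{tr}\to\infty$. Substituting this limit into Eq.~\ref{eqn: mesa output} gives
\[
\widehat{\vy}_{T_{te}} \longrightarrow \vW\paren*{\frac{\kappa_1}{\kappa_2}\frac{\sum_{i=1}^{T_{te}-1}\vx_i\vx_i^*}{T_{te}-1}}\vx_{T_{te}}, \qquad T_{tr}\to\infty .
\]
The bracketed operator acting on $\vx_{T_{te}}$ is exactly the expression appearing in Assumption~\ref{ass: mesa optimizer succeed}. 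For the sufficiency direction, invoking Assumption~\ref{ass: mesa optimizer succeed} yields $\frac{\kappa_1}{\kappa_2}\frac{\sum_{i=1}^{T_{te}-1}\vx_i\vx_i^*}{T_{te}-1}\vx_{T_{te}}\to\vx_{T_{te}}$ as $T_{te}\to\infty$, so the right-hand side converges to $\vW\vx_{T_{te}}$ and the trained transformer recovers the next token. For the necessity direction I would run the same identity in reverse: because every $\vW\in\cP_\vW$ is unitary and hence invertible, the convergence $\widehat{\vy}_{T_{te}}\to\vW\vx_{T_{te}}$ is equivalent to $\vW^{-1}\widehat{\vy}_{T_{te}}\to\vx_{T_{te}}$, and applying $\vW^{-1}$ to the limiting prediction cancels the leading $\vW$ and leaves precisely the limit demanded by Assumption~\ref{ass: mesa optimizer succeed}. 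Since the learning hypothesis is required for every $\vx_1$ and every $\vW$, this recovers the condition for every $\vx_1,\vW$, so Assumption~\ref{ass: mesa optimizer succeed} is indeed necessary.

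The main obstacle is not any single computation but the rigorous handling of the joint limit $T_{tr}, T_{te}\to\infty$: Eq.~\ref{eqn: mesa output} couples the training length (through $\widetilde{a}\widetilde{b}$) and the test length (through the empirical covariance), so I must justify taking the two limits in sequence—first driving $\widetilde{a}\widetilde{b}$ to $\kappa_1/\kappa_2$ uniformly in the test statistic, then letting $T_{te}\to\infty$—without the error terms interacting. This is controlled by a uniform bound: since $\vW$ is unitary we have $\norm{\vx_i}=\norm{\vx_1}$ for all $i$, so $\norm*{\frac{\sum_{i=1}^{T_{te}-1}\vx_i\vx_i^*}{T_{te}-1}}\le\norm{\vx_1}^2$ and the gap between $\widehat{\vy}_{T_{te}}$ and its $T_{tr}\to\infty$ limit is at most $\abs*{\widetilde{a}\widetilde{b}-\tfrac{\kappa_1}{\kappa_2}}\cdot\norm{\vx_1}^3$, which vanishes as $T_{tr}\to\infty$ uniformly in $T_{te}$. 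After this uniform estimate, the harmonic-sum bound closes the argument cleanly.
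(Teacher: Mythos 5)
Your proof is correct and follows essentially the same route as the paper's: take $T_{tr}\to\infty$ in Theorem~\ref{thm: convergence of GF} to get $\widetilde{a}\widetilde{b}\to\kappa_1/\kappa_2$, substitute into Eq.~\ref{eqn: mesa output}, and invoke Assumption~\ref{ass: mesa optimizer succeed} for the $T_{te}$ limit. You in fact go somewhat further than the paper's own appendix proof, which only writes out the sufficiency chain: your necessity argument (cancelling the unitary $\vW$ to recover exactly the condition in Assumption~\ref{ass: mesa optimizer succeed}) and your uniform bound $\norm{\vx_i}=\norm{\vx_1}$ justifying the order of the two limits are both sound and make explicit what the paper leaves implicit.
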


\subsection{Go beyond the Assumption~\ref{ass: emergence of mesa-optimizer}}
\label{sec: go beyond}

The behavior of the gradient flow under Assumption~\ref{ass: emergence of mesa-optimizer} has been clearly understood in Theorem~\ref{thm: convergence of GF}. The follow-up natural question is what solution will be found by the gradient flow when Assumption~\ref{ass: emergence of mesa-optimizer} does not hold. In this subsection, we conduct exploratory analyses by adopting the setting in~\cite{DBLP:journals/corr/taiji-AR}, where the initial token $\vx_1$ is fixed as $\vone_d$.

First, sharing the similar but weaker assumption of~\cite{DBLP:journals/corr/taiji-AR}, we impose $\vW^{KQ}_{32}$ and $\vW^{PV}_{12}$ to stay diagonal during training by masking the non-diagonal gradients, then the trained transformer will perform one step of gradient descent, as suggested by~\cite{DBLP:journals/corr/taiji-AR}. Formally, it can be written as follows.

\begin{theorem}[Trained transformer as mesa-optimizer with non-diagonal gradient masking, proof in Appendix~\ref{proof: thm convergence with masking}]
\label{thm: convergence with masking}

Suppose the initialization satisfies Assumption~\ref{ass: Initialization}, the initial token is fixed as $\vone_d$, and we clip non-diagonal gradients of $\vW^{KQ}_{32}$ and $\vW^{PV}_{12}$ during the training, then the gradient flow of the one-layer linear transformer over the population AR loss converges to
the same structure as the result in Theorem~\ref{thm: convergence of GF}, with
\begin{align*}
\widetilde{a} \widetilde{b} = \frac{1}{1 + \frac{d-1}{T-2} \sum_{t=2}^{T-1} \frac{1}{t-1} }.
\end{align*}
Therefore, the obtained transformer performs one step of gradient descent in this case.
\end{theorem}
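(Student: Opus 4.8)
The plan is to use the deterministic initial token to collapse the population loss into an explicit function of the two scalars that survive the masking, and then read off the fixed point of the induced gradient flow.

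\textbf{Step 1 (token moments).} First I would record the statistics of the sequence. With $\vx_1 = \vone_d$ and $\vW = \mathrm{diag}(\lambda_1,\dots,\lambda_d)$, the $t$-th token satisfies $x_{tj} = \lambda_j^{t-1}$, and since the $\lambda_j$ are i.i.d. uniform on the unit circle, $\E[\lambda_j^m]$ equals $1$ when $m=0$ and $0$ otherwise. This yields the two identities I will use throughout: for $s,t\ge 2$ one has $\E[x_{sj}\overline{x_{tk}}] = \delta_{jk}\delta_{st}$, and $\E[x_{sj}]=0$ for every $s\ge 2$. In words, every token after the first is mean-zero and the tokens are orthonormal in expectation across both coordinates and time.

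\textbf{Step 2 (structure preservation).} Next I would show that the manifold on which $\vW^{KQ}_{32}=a\vI_d$, $\vW^{PV}_{12}=b\vI_d$ and all other blocks appearing in Eq.~\ref{eqn: y_t} vanish is invariant under the masked gradient flow. The clipping directly annihilates the off-diagonal entries of $\vW^{KQ}_{32}$ and $\vW^{PV}_{12}$, and coordinate exchangeability of the law of $\vW$ then forces the surviving diagonals to stay equal, producing the scalars $a(\tau),b(\tau)$. For the remaining blocks $\vW^{KQ}_{22},\vW^{KQ}_{23},\vW^{KQ}_{33},\vW^{PV}_{13}$ I would check that their gradients vanish along this manifold using the identities of Step 1, following the same bookkeeping as the proof skeleton of Theorem~\ref{thm: convergence of GF}. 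This is the step I expect to be the main obstacle: because Assumption~\ref{ass: emergence of mesa-optimizer} fails for the deterministic $\vone_d$, the vanishing of the non-structured gradients can no longer be inherited from a zero-mean hypothesis and must be re-derived from the specific moments of $\vx_t=\vW^{t-1}\vone_d$ together with the masking, with particular care for the boundary term at $t=2$, where $\vx_{t-1}=\vx_1$ is the unique token with nonzero mean.

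\textbf{Step 3 (reduction and moment computation).} On this manifold Eq.~\ref{eqn: y_t} collapses to $\haty_t = \frac{ab}{t-1}\,\vu_t$ with $\vu_t = \vA_t\vB_t^*\vx_t = \sum_{i=2}^t \vx_i(\vx_{i-1}^*\vx_t)$, so the loss depends on $(a,b)$ only through $c=ab$. Writing $P_t=\E\|\vu_t\|^2$ and $Q_t=\E[\vx_{t+1}^*\vu_t]$ (real by symmetry), the population loss becomes
\begin{align*}
L = \tfrac12\sum_{t=2}^{T-1}\left[\frac{c^2}{(t-1)^2}P_t - \frac{2c}{t-1}Q_t + \E\|\vx_{t+1}\|_2^2\right].
\end{align*}
The core calculation is to evaluate $P_t$ and $Q_t$ with the identities of Step 1. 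Expanding $Q_t=\sum_{i=2}^t\E[(\vx_{t+1}^*\vx_i)(\vx_{i-1}^*\vx_t)]$ and retaining only the terms whose total $\lambda$-exponent vanishes gives $Q_t=d(t-1)$; the triple sum defining $P_t$ splits into an all-indices-equal contribution $d(t-1)^2$ and a two-indices-equal contribution $d(d-1)(t-1)$ (the latter surviving only when the two time indices coincide), so $P_t=d(t-1)(t+d-2)$.

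\textbf{Step 4 (fixed point).} Finally, gradient flow on $(a,b)$ conserves $a^2-b^2$ and obeys $\dot c=-(a^2+b^2)\,\partial_c L$, where $\partial_c L$ is affine and strictly increasing in $c$ (its slope $\sum_t P_t/(t-1)^2$ is positive). Hence $c$ converges monotonically to the unique minimizer $c^*=\big(\sum_t \tfrac{Q_t}{t-1}\big)\big/\big(\sum_t \tfrac{P_t}{(t-1)^2}\big)$, while the conservation of $a^2-b^2$ together with $a_0,b_0\neq 0$ excludes the degenerate limit $a=b=0$. Substituting $\frac{Q_t}{t-1}=d$ and $\frac{P_t}{(t-1)^2}=d\big(1+\tfrac{d-1}{t-1}\big)$ and summing over $t=2,\dots,T-1$ gives $\widetilde a\,\widetilde b = c^* = \frac{T-2}{(T-2)+(d-1)\sum_{t=2}^{T-1}\frac{1}{t-1}}$, which is the claimed value after dividing numerator and denominator by $T-2$.
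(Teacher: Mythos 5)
Your strategy is sound and it reaches the paper's answer, but it is organized differently from the paper's proof, and its crux is left as a sketch. The paper works \emph{gradient-first}: it recomputes, entry by entry (via the Kronecker/vectorization identities of Lemma~\ref{lemma: quadratic form}), the gradients of \emph{all} blocks at a point where $\vW^{KQ}_{32}=a\vI_d$, $\vW^{PV}_{12}=b\vI_d$ and the other blocks vanish; it finds that nonzero gradients occur only in $\vW^{KQ}_{32}$ and $\vW^{PV}_{12}$ (with genuinely nonzero off-diagonal entries, which is exactly what the masking removes), assembles the two-dimensional ODE $\frac{\mathrm{d}}{\mathrm{d}\tau}a=-ab^{2}c_{1}+bc_{2}$, $\frac{\mathrm{d}}{\mathrm{d}\tau}b=-a^{2}bc_{1}+ac_{2}$ with $c_{1}=(T-2)+(d-1)\sum_{t=2}^{T-1}\tfrac{1}{t-1}$, $c_{2}=T-2$, and concludes through the surrogate objective $\widetilde{\ell}(a,b)=\tfrac{1}{2c_{1}}(c_{2}-c_{1}ab)^{2}$ and the PL inequality of Lemma~\ref{lemma: PL}. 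You instead work \emph{loss-first}: granting invariance of the diagonal manifold, you collapse the loss to a quadratic in $c=ab$ with coefficients $Q_{t}=d(t-1)$ and $P_{t}=d(t-1)(t+d-2)$ — I checked both, and after summation they reproduce the paper's ratio, $c^{*}=\big(\sum_{t}Q_{t}/(t-1)\big)/\big(\sum_{t}P_{t}/(t-1)^{2}\big)=c_{2}/c_{1}$ — and you conclude via the conserved quantity $a^{2}-b^{2}$ together with the monotone scalar dynamics of $c$. The loss-first reduction is cleaner, and the conservation-law argument is a nice, more elementary substitute for the PL route.

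Three caveats. First, your Step~2 is exactly where the paper spends nearly all of its effort: proving that on the manifold the gradients of $\vW^{KQ}_{22},\vW^{KQ}_{23},\vW^{KQ}_{33},\vW^{PV}_{13}$ vanish, while the off-diagonal gradients of $\vW^{KQ}_{32},\vW^{PV}_{12}$ do \emph{not} vanish and must be absorbed by the clipping. You correctly identify what must be shown and why the zero-mean shortcut of Assumption~\ref{ass: emergence of mesa-optimizer} is unavailable for $\vx_1=\vone_d$, and the paper's steps one through six confirm these computations go through, but in your write-up this load-bearing step is deferred rather than executed. Second, a small imprecision: treating $a$ (resp.\ $b$) as one shared scalar makes your reduced gradient $d$ times the per-entry gradient that actually drives the transformer's flow; this only rescales time and leaves the limit unchanged, but as stated your ODE is not literally the model's gradient flow. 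Third, your claim that conservation of $a^{2}-b^{2}$ plus $a_{0},b_{0}\neq 0$ rules out the degenerate limit fails when $a_{0}=-b_{0}$: then $a^{2}-b^{2}=0$ and the flow converges to the saddle at the origin, not to $c^{*}$. To be fair, the paper's own argument shares this blind spot, since its PL ``constant'' $2c_{1}(a^{2}+b^{2})$ also degenerates at the origin, so this is a gap in both treatments rather than a defect specific to yours.
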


Theorem~\ref{thm: convergence with masking} can be seen as a complement and an extension of Proposition 2 in~\cite{DBLP:journals/corr/taiji-AR} from the perspective of optimization. We note that~\cite{DBLP:journals/corr/taiji-AR} assumes all the parameter matrices to be diagonal and only analyzes the global minima without considering the practical non-convex optimization process.

Next, we adopt some exploratory analyses for the gradient flow without additional non-diagonal gradient masking. The convergence result of the gradient flow can be asserted as the following proposition. The key intuition of its proof is that when the parameters matrices share the same structure as the result in Theorem~\ref{thm: convergence of GF}, the non-zero gradients of the non-diagonal elements of $\vW^{KQ}_{32}$ and $\vW^{PV}_{12}$ will occur. In addition, we note the result does not depend on Assumption~\ref{ass: Initialization}.

\begin{prop}[Trained transformer does not perform on step of gradient descent, proof in Appendix~\ref{proof: prop non-diag gradient}]
\label{prop: non-diag gradient}
The limiting point found by the gradient does not share the same structure as that in Theorem~\ref{thm: convergence of GF}, thus the trained transformer will not implement one step of vanilla 
 gradient descent for minimizing the OLS problem $\frac{1}{2} \sum_{i = 1}^{t-1} \Vert \vx_{i+1} - W\vx_{i}\Vert^2$.
\end{prop}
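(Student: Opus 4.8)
The plan is to prove the statement by showing that the parameter configuration identified in Theorem~\ref{thm: convergence of GF} fails to be a stationary point of the population loss once $\vx_1 = \vone_d$, so the gradient flow cannot terminate there. Since $\haty_t$ is bilinear in the blocks of $\WPV$ and $\WKQ$ and the loss is quadratic in $\haty_t$, the objective $L$ is a polynomial in the parameters with bounded coefficients (the data moments are finite because $\abs{\lambda_i}=1$ and $\vx_1$ is deterministic), so $\nabla L$ is continuous; consequently any limit of a convergent gradient-flow trajectory must be a critical point, since $\dot{\vtheta}=-\nabla L(\vtheta)$ would otherwise keep moving at speed bounded away from zero. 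It therefore suffices to exhibit a single coordinate of $\nabla L$ that is nonzero at every point carrying the Theorem~\ref{thm: convergence of GF} structure. I would target an off-diagonal entry of $\vW^{KQ}_{32}$; the argument for $\vW^{PV}_{12}$ is symmetric, and this choice does not invoke Assumption~\ref{ass: Initialization}.

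First I would substitute $\vW^{KQ}_{32} = a\vI_d$, $\vW^{PV}_{12} = b\vI_d$ with all remaining relevant blocks zero into Eq.~\ref{eqn: y_t}. Writing $\vM_t = \sum_{j=1}^{t-1}\vx_j\vx_j^*$ and using $\vx_i = \vW\vx_{i-1}$ together with $\vx_0=\vzero_d$, the cross block of $\vE_t^\vx \vE_t^{\vx*}$ collapses to $\vW\vM_t$, so the prediction reduces to the one-step gradient-descent form $\haty_t = \frac{ab}{t-1}\vW\vM_t\vx_t$. Because $\haty_t$ is linear in the block $\vW^{KQ}_{32}$ when the other blocks are held fixed, differentiating along an off-diagonal direction $(k,l)$ with $k\ne l$ gives the exact per-step expression
\begin{equation*}
\frac{\partial L_t}{\partial (\vW^{KQ}_{32})_{kl}} = \frac{b}{t-1}\,\E\!\left[\mathrm{Re}\!\left( (\vx_t)_l\, \bm{r}_t^{*} (\vW\vM_t)_{:k}\right)\right],\qquad \bm{r}_t := \haty_t - \vx_{t+1},
\end{equation*}
and the full gradient coordinate is the sum of these over $t=2,\dots,T-1$, with the expectation now taken over $\vW$ alone.

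Next I would evaluate this expectation with $\vx_1 = \vone_d$ fixed, so that $(\vx_j)_p = \lambda_p^{j-1}$ and every scalar inside the bracket becomes a monomial in the $\lambda_p$ and their conjugates. As the $\lambda_p$ are independent and uniform on the unit circle, a monomial $\prod_p \lambda_p^{n_p}$ survives the expectation exactly when every net exponent $n_p$ vanishes. Splitting $\bm{r}_t$ into its $\frac{ab}{t-1}\vW\vM_t\vx_t$ part and its $-\vx_{t+1}$ part and collecting, the surviving contributions are those in which index coincidences among $k$, $l$ and the inner summation indices force all net exponents to zero; for instance, from the $-\vx_{t+1}$ part the coincidence $m=l$ with inner index $j=1$ collapses the geometric sum $\sum_j (\lambda_l\overline{\lambda_k})^{j-1}$ to a degree-zero term and yields a nonzero contribution of $-1$, while the coincidence $m=k$ and the generic case $m\notin\{k,l\}$ leave a surviving power of $\lambda_l$ and hence vanish. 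This is precisely where the choice of token matters: under Assumption~\ref{ass: emergence of mesa-optimizer} the analogous cross terms carry mixed moments of $\vx_1$ that are zero by design (keeping $\vW^{KQ}_{32}$ and $\vW^{PV}_{12}$ diagonal), whereas with the deterministic all-ones token those moments equal one and the off-diagonal gradient is left nonzero.

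I expect the main obstacle to be the bookkeeping in this last step: enumerating all index coincidences ($m=k$, $m=l$, or $k,l$ matching an inner index), tracking the associated geometric sums $\sum_j (\lambda_m\overline{\lambda_k})^{\,\cdot}$, and accumulating the surviving monomials for both residual parts, then proving that the total—an explicit expression in $a,b,t,d$ that splits into an $ab$-dependent piece and the constant piece above—does not cancel at the value of $ab$ pinned down by the vanishing of the diagonal gradient. Ruling out this miraculous cancellation is the genuine content of the proof. Once nonvanishing of this single off-diagonal partial derivative is established, the stationarity argument closes the proposition: the limiting configuration must be critical, yet no point of the Theorem~\ref{thm: convergence of GF} structure is, so the trained transformer cannot realize the pure diagonal solution and hence does not implement vanilla one-step gradient descent for the OLS problem.
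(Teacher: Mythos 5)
Your overall strategy is the same as the paper's: show that no parameter point carrying the diagonal structure of Theorem~\ref{thm: convergence of GF} is a stationary point of the population loss when $\vx_1=\vone_d$, and then conclude that a convergent gradient-flow trajectory cannot terminate at such a point. The paper obtains the non-stationarity by citing Eq.~\ref{eqn: A gradient without clip} and Eq.~\ref{eqn: B_j gradient without clip}, which are computed (before clipping) in the proof of Theorem~\ref{thm: convergence with masking} by exactly the monomial-survival calculus over unit-circle eigenvalues that you outline; your per-step derivative formula and the surviving $-1$ contribution from the coincidence $m=l$, $j=1$ are both consistent with those equations.

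The genuine gap lies in the sentence ``the argument for $\vW^{PV}_{12}$ is symmetric'' --- it is not, and the asymmetry is precisely what would let you finish. At a diagonal point, the off-diagonal gradient of $\vW^{KQ}_{32}$ (your target) equals
\begin{align*}
\sum_{t=2}^{T-1}\left[\frac{ab^2}{(t-1)^2}\left(2(t-1)+d-2\right) - \frac{b}{t-1}\right],
\end{align*}
a difference of two terms that \emph{does} vanish at one special value of $ab$; your proof therefore hinges on showing that this value differs from the one forced by the vanishing of the diagonal gradients, which is the ``miraculous cancellation'' you flag and leave unresolved, so as written the proposal is not a complete proof. By contrast, Eq.~\ref{eqn: B_j gradient without clip} shows the off-diagonal gradient of $\vW^{PV}_{12}$ at a diagonal point is $a^2 b\sum_{t=2}^{T-1}\frac{1}{t-1}$ --- a single term with no subtraction, hence nonzero whenever $ab\neq 0$, with no cancellation analysis needed. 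The degenerate cases are then immediate: if $a=0$, $b\neq 0$, the diagonal gradient of $\vW^{KQ}_{32}$ is $-b(T-2)\neq 0$; if $b=0$, $a\neq 0$, the diagonal gradient of $\vW^{PV}_{12}$ is $-a(T-2)\neq 0$; and $a=b=0$ gives $ab=0$, which does not match Theorem~\ref{thm: convergence of GF}, where $\widetilde a\widetilde b>0$. So either complete the arithmetic ruling out your cancellation (provable, but an unnecessary detour), or switch the target coordinate to an off-diagonal entry of $\vW^{PV}_{12}$; the latter is what the paper's cited computation effectively provides.
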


To fully solve the problem and find the limiting point of the gradient flow in this case (or more generally, any case beyond Assumption~\ref{ass: Initialization} and~\ref{ass: emergence of mesa-optimizer}), one can not enjoy the diagonal structure of $\vW^{KQ}_{32}$ and $\vW^{PV}_{12}$ anymore. When $\vW^{KQ}_{32}$ and $\vW^{PV}_{12}$ are general dense matrices, computation of the gradient will be much more difficult than that in Proposition~\ref{prop: non-diag gradient}. Therefore, we leave the general rigorous result of convergence without Assumption~\ref{ass: Initialization} and~\ref{ass: emergence of mesa-optimizer} for future work.

We are aware that recent theoretical studies on ICL for linear regression have faced a similar problem.~\cite{zhang2024ICL,DBLP:conf/nips/Ahn-23-ICL,mahdavi2024revisiting} find that when the input's distribution does not satisfy Assumption~\ref{ass: emergence of mesa-optimizer} (e.g., $\mathcal{N}(\vzero_d, \Sigma)$), the trained transformer will implement one step of preconditioned gradient descent on for some inner objective function. We conjecture similar results will hold in the case of in-context AR learning. We will empirically verify this conjecture when $\vx_1$ is a full one vector, in Section~\ref{sec: simulation}.

\section{Proof skeleton}
\label{sec: Proof skeleton}


In this section, we outline the proof ideas of Theorem~\ref{thm: convergence of GF}, which is one of the core findings of this paper, and also a theoretical base of the more complex proofs of Theorem~\ref{thm: convergence with masking} and Proposition~\ref{prop: non-diag gradient}. The full proof of this Theorem is placed in Appendix~\ref{proof: thm convergence of GF}.

The first key step is to observe that each coordinate of prediction $\haty_t$ (Eq.~\ref{eqn: y_t}) can be written as the output of a quadratic function, which will greatly simplify the follow-up gradient operation.

\begin{lemma}[Simplification of $\hatys_{t,j}$, proof in Appendix~\ref{proof: lemma quadratic form}]
\label{lemma: quadratic form}
Each element of the network's prediction $\hatys_{t,j}$ ($j \in [d]$) can be expressed as the following.
\begin{equation*}
\hatys_{t,j} = \vB_j^\top \cdot \ve_{t}^{\vx\top} \kron \frac{\vE^\vx_t \vE_t^{\vx*}}{\rho_t} \cdot \vect(\vA) = \vect^\top(\vA)  \cdot \ve_{t}^{\vx} \kron \frac{\overline{\vE^\vx_t} \vE_t^{\vx\top}}{\rho_t} \cdot \vB_j,
\end{equation*}
where the $\vA$ and $\vB_j$ are defined as
\begin{align*}
\vA = \begin{pmatrix}
\va_1 & \dots & \va_{2d}
\end{pmatrix} = 
\begin{pmatrix}
\vW^{KQ}_{22} & \vW^{KQ}_{23}\\
{\vW^{KQ}_{32}} & \vW^{KQ}_{33}
\end{pmatrix}, \quad
\vB_j = \begin{pmatrix}
\vb_{j1} \\
\vb_{j2}
\end{pmatrix} =  \begin{pmatrix}
{\vW^{PV\top}_{12,j:}} \\
\vW^{PV\top}_{13,j:}
\end{pmatrix},
\end{align*}
with $\va_i \in \R^{2d}$ and $\vb_{j1}, \vb_{j2} \in \R^d$. 
\end{lemma}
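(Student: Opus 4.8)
The plan is to start from the explicit coordinate expression for the prediction in Eq.~\ref{eqn: y_t} and convert the resulting trilinear expression into a Kronecker-vectorization form by standard identities. First I would read off the $j$-th coordinate of $\haty_t$: extracting the $j$-th row of the leftmost factor $\begin{pmatrix} \vW^{PV}_{12} & \vW^{PV}_{13}\end{pmatrix}$ gives exactly $\vB_j^\top$ with $\vB_j$ as defined in the statement. Hence, abbreviating $\vM = \frac{\vE^\vx_t \vE_t^{\vx*}}{\rho_t}$, the scalar we must rewrite is $\hatys_{t,j} = \vB_j^\top \vM \vA \ve_t^\vx$, where $\vA$ collects the four $\WKQ$ blocks as in the statement.

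The core step is the column-wise ``vec trick''. Since $\vA\ve_t^\vx$ is already a vector, $\vect(\vA\ve_t^\vx) = \vA\ve_t^\vx$, and the identity $\vect(\vX\vY) = (\vY^\top \kron \vI)\vect(\vX)$ yields $\vA\ve_t^\vx = (\ve_t^{\vx\top} \kron \vI_{2d})\vect(\vA)$. Left-multiplying by $\vM$ and applying the mixed-product property (viewing $\vM$ as $\vI_1 \kron \vM$) gives $(\vI_1 \kron \vM)(\ve_t^{\vx\top} \kron \vI_{2d}) = \ve_t^{\vx\top} \kron \vM$, so that $\hatys_{t,j} = \vB_j^\top (\ve_t^{\vx\top} \kron \vM)\vect(\vA)$, which is the first claimed form.

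For the second form I would use that $\hatys_{t,j}$ is a scalar, hence equal to its own transpose. Transposing the first form and using $(\vX \kron \vY)^\top = \vX^\top \kron \vY^\top$ together with $\vM^\top = \frac{\overline{\vE^\vx_t}\,\vE_t^{\vx\top}}{\rho_t}$ (here $(\vE_t^{\vx*})^\top = \overline{\vE^\vx_t}$) produces $\hatys_{t,j} = \vect^\top(\vA)\,(\ve_t^\vx \kron \vM^\top)\,\vB_j$, which is precisely the desired expression.

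The manipulations are routine linear algebra, so there is no deep obstacle; the only point requiring care is the complex setting, where one must distinguish transpose from conjugate transpose and verify that the scalar-transpose step introduces $\vM^\top$ with \emph{conjugated} (not conjugate-transposed) factors $\overline{\vE^\vx_t}$, matching the statement. Because $\vA$ and $\vB_j$ are real parameter blocks, no conjugation acts on them, so this is the sole subtlety; I would double-check dimensions ($\ve_t^{\vx\top}\kron\vM$ is $2d\times 4d^2$, acting on $\vect(\vA)\in\C^{4d^2}$) to confirm consistency.
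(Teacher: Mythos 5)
Your proposal is correct, and it proves exactly the two claimed identities, but it gets there by a slightly different mechanism than the paper. The paper's proof is self-contained: it expands $\vA\ve_t^\vx = \sum_{i=1}^{2d} e_{t,i}^\vx \va_i$, writes the resulting scalar as a trace, cyclically permutes inside the trace, and then recognizes the stacked blocks $\bigl(\va_1\vB_j^\top; \dots; \va_{2d}\vB_j^\top\bigr)$ and $\bigl(e_{t,1}^\vx \vM, \dots, e_{t,2d}^\vx \vM\bigr)$ as $\vect(\vA)\vB_j^\top$ and $\ve_t^{\vx\top}\kron\vM$ respectively, before a final cyclic permutation; the second form is read off by transposing inside the trace. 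You instead invoke the standard identity $\vect(\vX\vY\vZ)=(\vZ^\top\kron\vX)\vect(\vY)$ (in your two-step form, via $\vect(\vA\ve_t^\vx)=(\ve_t^{\vx\top}\kron\vI_{2d})\vect(\vA)$ followed by the mixed-product property) and then transpose the scalar. The two routes are mathematically equivalent --- the vec identity you cite is essentially what the paper's trace expansion re-derives by hand, and indeed the paper itself uses that same identity, with a citation to Petersen and Pedersen, in the proof of the subsequent dynamics lemma. Your version is shorter and leans on known results; the paper's is longer but needs no external identity. Your handling of the one real subtlety --- that transposing the scalar produces $\vM^\top = \overline{\vE^\vx_t}\,\vE_t^{\vx\top}/\rho_t$ with element-wise conjugation of $\vE^\vx_t$ rather than a conjugate transpose, leaving $\vect(\vA)$ and $\vB_j$ untouched --- matches the statement exactly, so there is no gap.
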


Next, We calculate the gradient for the parameter matrices of the linear transformer and present the dynamical system result, which is the most complex part in the proof of Theorem~\ref{thm: convergence of GF}.

\begin{lemma}[dynamical system of gradient flow under Assumption~\ref{ass: emergence of mesa-optimizer}, proof in Appendix~\ref{proof: lemma dynamical system isotropic}]
\label{lemma: dynamical system isotropic}
Suppose that Assumption~\ref{ass: emergence of mesa-optimizer} holds, then the dynamical process of the parameters in the diagonal of $\vW^{KQ}_{32}$ and $\vW^{PV}_{12}$ satisfies
\begin{align*}
&  \frac{\mathrm{d}}{\mathrm d\tau} a = -  ab^2 \bracket*{ (T-2)\kappa_2 + \sum_{t=2}^{T-1}\frac{1}{t-1} \kappa_3 } + b (T-2)\kappa_1, \\
&  \frac{\mathrm{d}}{\mathrm d\tau} b = -  a^2b \bracket*{ (T-2)\kappa_2 + \sum_{t=2}^{T-1}\frac{1}{t-1} \kappa_3 } + a (T-2)\kappa_1,
\end{align*}
while the gradients for all other parameters were kept at zero during the training process.
\end{lemma}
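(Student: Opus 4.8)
The plan is to start from the bilinear representation of Lemma~\ref{lemma: quadratic form}, differentiate the population loss with respect to the two parameter groups $\vect(\vA)$ and $\vB_j$, prove that the diagonal structure fixed by Assumption~\ref{ass: Initialization} is invariant under the flow, and then read off the scalar ODEs governing the surviving diagonal entries. First I would differentiate $L_t(\vtheta)=\frac12\E\|\haty_t-\vx_{t+1}\|_2^2$. Because Lemma~\ref{lemma: quadratic form} expresses each coordinate $\hatys_{t,j}$ as a form that is \emph{bilinear} in $\vect(\vA)$ and $\vB_j$, the chain rule yields the gradient in $\vect(\vA)$ as an expectation of the prediction error $(\hatys_{t,j}-x_{t+1,j})$ against the cofactor $\ve_t^{\vx\top}\kron\frac{\vE_t^\vx\vE_t^{\vx*}}{\rho_t}\vB_j$, and symmetrically for the gradient in $\vB_j$; I would record these two families of gradient expectations explicitly before specializing.

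Next I would substitute the diagonal ansatz $\vW^{KQ}_{32}=a\vI_d$, $\vW^{PV}_{12}=b\vI_d$ with all other relevant blocks vanishing. Under this ansatz the prediction collapses to the scalar form $\haty_t=ab\,\vM_t^{12}\vx_t$, where $\vM_t^{12}=\frac{1}{t-1}\sum_{i=2}^{t}\vx_i\vx_{i-1}^*$ is the off-diagonal block of $\vE_t^\vx\vE_t^{\vx*}/\rho_t$. The crucial invariance claim is that every gradient entry lying outside the diagonals of $\vW^{KQ}_{32}$ and $\vW^{PV}_{12}$ vanishes when evaluated on this ansatz: each such entry is an expectation of a monomial in the coordinates of $\vx_1$ in which some coordinate appears to an odd power across at most four distinct indices, and the first part of Assumption~\ref{ass: emergence of mesa-optimizer} forces every such moment to zero. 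This certifies that the diagonal manifold is preserved by the gradient flow, so it suffices to follow one representative pair $(a,b)$, equal across coordinates thanks to the coordinate symmetry of both the initialization and the moment conditions.

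On this manifold I would expand $\partial L_t/\partial a$. Using $\hatys_{t,j}=ab\,(\vM_t^{12}\vx_t)_j$ and the homogeneity $\partial\hatys_{t,j}/\partial a=\hatys_{t,j}/a$, the squared-error gradient splits into a \emph{signal} piece $\E[\Re(\hatys_{t,j}\overline{x_{t+1,j}})]/a$, linear in $b$, and a \emph{self-interaction} piece $\E|\hatys_{t,j}|^2/a$, cubic in $(a,b)$. Writing $\vx_i=\vW^{i-1}\vx_1$ with $\vW=\mathrm{diag}(\lambda_1,\dots,\lambda_d)$ and $|\lambda_i|=1$, I would carry out $\E_{\vW}$ first, exploiting the character orthogonality $\E[\lambda_k^m\overline{\lambda_k}^n]=\mathbbm{1}[m=n]$ to collapse the index sums. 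In the signal piece only the coincident-coordinate term survives the phase average and produces $b(T-2)\kappa_1$ after summing over $t$. In the self-interaction piece the coincident case (both summed coordinates equal to $j$) leaves a free double index sum, giving $(T-2)\kappa_2$, whereas the distinct case (both equal to some $r\ne j$) is forced onto the diagonal of the index sum, giving $\sum_{t=2}^{T-1}\frac{1}{t-1}\kappa_3$; all remaining mixed index combinations are annihilated by the phase average. Collecting terms and negating for the gradient flow yields the stated equation for $a$, and the equation for $b$ follows from the $a\leftrightarrow b$ symmetry of the construction.

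The main obstacle is the combinatorial bookkeeping in the self-interaction piece: one must track the four index sums in $\E|\hatys_{t,j}|^2$, determine precisely which phase powers cancel under $\E_{\vW}$, and cleanly separate the coincident-coordinate contribution (whose unconstrained double sum supplies the $(T-2)\kappa_2$ factor) from the distinct-coordinate contribution (whose constrained sum supplies the $\frac{1}{t-1}\kappa_3$ factor), while confirming that no off-diagonal gradient is resurrected. The interplay between the normalization $\rho_t=t-1$ and the number of surviving index pairs, together with the complex modulus in the loss, is the delicate point; once the phase average is organized, Assumption~\ref{ass: emergence of mesa-optimizer} turns the surviving sums into the constants $\kappa_1,\kappa_2,\kappa_3$ and the residual algebra is routine.
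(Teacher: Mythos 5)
Your route is essentially the paper's own: both start from the bilinear form of Lemma~\ref{lemma: quadratic form}, split each gradient into a signal piece $\E[\Re{x_{t+1,j}^*\hatys_{t,j}}]$ and a self-interaction piece $\E[\hatys_{t,j}^*\hatys_{t,j}]$, and evaluate them by averaging the phases $\lambda_k$ and invoking the moment conditions, with the same coincident/distinct case analysis producing $\kappa_1,\kappa_2,\kappa_3$. However, your justification of the structural invariance --- the clause ``the gradients for all other parameters were kept at zero,'' which is half the content of the lemma --- has a genuine hole. You assert that every gradient entry outside the diagonals of $\vW^{KQ}_{32}$ and $\vW^{PV}_{12}$ is an expectation of a monomial in which some coordinate of $\vx_1$ appears to an odd power, so that Assumption~\ref{ass: emergence of mesa-optimizer} annihilates it. This is false for whole families of entries: for instance, the \emph{diagonal} entries of $\vW^{PV}_{13}$, and the diagonal entries of $\vW^{KQ}_{22}$, $\vW^{KQ}_{23}$, $\vW^{KQ}_{33}$, carry monomials of the form $x_{1j}^2 x_{1r}^2$ --- all even powers --- whose expectations are strictly positive. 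In the paper's computation these entries die only because the phase average over the random unitary $\vW$ never cancels (e.g.\ factors such as $\E[\lambda_j^{\,i-1-t}\lambda_r^{\,t-i}]$ vanish because the two exponents can never be zero simultaneously for admissible $i$), a fact that hinges on the exact time-index bookkeeping and on the distribution of $\vW$, not on $\cD_{\vx_1}$. You do deploy phase orthogonality for the surviving diagonal ODE, but not where it is indispensable; as written, your invariance step fails at precisely these entries.

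A second, smaller gap: you identify $\frac{\mathrm{d}}{\mathrm{d}\tau}a$ with $-\partial L_t/\partial a$ evaluated on the diagonal manifold. But gradient flow on the full parameter space restricted to an invariant manifold is not the gradient flow of the restricted loss: the scalar derivative of $L$ along $a\mapsto(a\vI_d,b\vI_d)$ equals the \emph{sum} of the $d$ identical diagonal entries of $\nabla_{\vA}L$, i.e.\ $d$ times the quantity that actually drives each entry. Your per-coordinate bookkeeping happens to yield the correct, factor-$d$-free constants, but only because the row sum $\sum_m \partial_{a_m}(\text{loss of coordinate }j)$ equals the column sum $\sum_{j'} \partial_{a_j}(\text{loss of coordinate }j')$ under the coordinate exchangeability supplied by the assumptions. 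The paper sidesteps this by computing the full matrix gradient entry by entry; note in its calculation the $\kappa_3$ contribution to the entry $(d+j,j)$ comes from loss coordinates $j'\ne j$, whereas in yours it comes from context coordinates $r\ne j$ inside the single coordinate $j$ --- equal in the end, but by a symmetry you never state. You should either compute per-entry gradients as the paper does, or make the exchangeability identity explicit; otherwise the constants in the ODEs are not actually pinned down by your argument.
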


Similar ODEs have occurred in existing studies, such as the deep linear networks~\cite{DBLP:journals/corr/deep-linear-net} and recent ICL for linear regression~\cite{zhang2024ICL}. Notably, these dynamics are the same as those of gradient flow on a non-convex objective function with clear global minima, which is summarized as the following.

\begin{lemma}[Surrogate objective function, proof in Appendix~\ref{proof: lemma Surrogate objective function isotropic}]
\label{lemma: Surrogate objective function isotropic}
Suppose that Assumption~\ref{ass: emergence of mesa-optimizer} holds and denote $(T-2)\kappa_2 + \sum_{t=2}^{T-1}\frac{1}{t-1} \kappa_3$ and $(T-2)\kappa_1$ by $c_1$ and $c_2$, respectively. Then, the dynamics in Lemma~\ref{lemma: dynamical system isotropic} are the same as those of gradient flow on the following objective function:
\begin{align*}
\widetilde{\ell}(a,b) = \frac{1}{2c_1} (c_2 - c_1 ab)^2,
\end{align*}
whose global minimums satisfy $ab = c_2/c_1$.
\end{lemma}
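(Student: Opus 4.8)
The plan is to prove the lemma by direct verification: I would show that the vector field on the right-hand side of the ODEs in Lemma~\ref{lemma: dynamical system isotropic} is exactly $-\nabla \widetilde{\ell}$, so the two flows coincide by the very definition of gradient flow, and then read off the minimizers from the manifestly nonnegative form of $\widetilde{\ell}$. The first step is purely bookkeeping: using the abbreviations $c_1 = (T-2)\kappa_2 + \sum_{t=2}^{T-1}\frac{1}{t-1}\kappa_3$ and $c_2 = (T-2)\kappa_1$ from the statement, I would regroup the dynamics of Lemma~\ref{lemma: dynamical system isotropic} into the factored form
\begin{align*}
\frac{\mathrm{d}}{\mathrm{d}\tau} a = b\,(c_2 - c_1 ab), \qquad \frac{\mathrm{d}}{\mathrm{d}\tau} b = a\,(c_2 - c_1 ab),
\end{align*}
which merely collects the common factor $(c_2 - c_1 ab)$ in each equation.

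Next I would compute $\nabla \widetilde{\ell}$ by the chain rule. Setting $u = c_2 - c_1 ab$ so that $\widetilde{\ell} = u^2/(2c_1)$, we have $\partial_a u = -c_1 b$ and $\partial_b u = -c_1 a$, whence
\begin{align*}
\frac{\partial \widetilde{\ell}}{\partial a} = \frac{u}{c_1}(-c_1 b) = -b\,u, \qquad \frac{\partial \widetilde{\ell}}{\partial b} = \frac{u}{c_1}(-c_1 a) = -a\,u.
\end{align*}
Thus $-\partial_a \widetilde{\ell} = b(c_2 - c_1 ab)$ and $-\partial_b \widetilde{\ell} = a(c_2 - c_1 ab)$, matching the two factored ODEs term by term. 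This shows the trajectories of Lemma~\ref{lemma: dynamical system isotropic} are precisely the gradient flow $\tfrac{\mathrm{d}}{\mathrm{d}\tau}(a,b) = -\nabla \widetilde{\ell}(a,b)$, which is the first assertion.

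For the global minima I would first note that $c_1 > 0$: for any nontrivial sequence length $T \geq 3$ one has $T-2 \geq 1$ and $\kappa_2 = \E[x_{1j}^6] > 0$, while $\kappa_3 = \sum_{r \neq j}\E[x_{1j}^2 x_{1r}^4] \geq 0$ and the harmonic sum is positive. Since $\widetilde{\ell}(a,b) = \tfrac{1}{2c_1}(c_2 - c_1 ab)^2 \geq 0$ with equality if and only if $c_2 - c_1 ab = 0$, the set of global minimizers is exactly $\{(a,b) : ab = c_2/c_1\}$, as claimed. Honestly, there is no real obstacle here: the substantive work lives in deriving the ODE system of Lemma~\ref{lemma: dynamical system isotropic}, which I am free to invoke. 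The only points demanding care are the sign convention in $-\nabla \widetilde{\ell}$ and confirming $c_1 > 0$, so that $\widetilde{\ell}$ is genuinely bounded below and its minimizing set is the zero level set $\{ab = c_2/c_1\}$ rather than a degenerate configuration.
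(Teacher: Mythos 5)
Your proof is correct and takes essentially the same approach as the paper's: a direct chain-rule computation showing that the right-hand side of the ODE system in Lemma~\ref{lemma: dynamical system isotropic} equals $-\nabla\widetilde{\ell}(a,b)$, followed by reading off the minimizing set $\{ab = c_2/c_1\}$ from the nonnegative quadratic form. Your explicit check that $c_1 > 0$ (so the zero level set is genuinely the set of global minima) is a small point the paper leaves implicit, but it is not a different argument.
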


Furthermore, We show that although the objective $\widetilde{\ell}(a,b)$ is non-convex, the Polyak-Łojasiewicz (PL) inequality~\cite{polyak1963gradient,DBLP:conf/pkdd/KarimiNS16} holds, which implies that gradient flow converges to the global minimum.

\begin{lemma}[Global convergence of gradient flow, proof in Appendix~\ref{proof: lemma PL}]
\label{lemma: PL}
Suppose that Assumption~\ref{ass: emergence of mesa-optimizer} holds, then $\widetilde{\ell}(a,b)$ is a non-convex function and satisfies the PL inequality as follows.
\begin{align*}
\abs{\frac{\partial}{\partial a} \widetilde{\ell}(a,b)}^2 + \abs{\frac{\partial}{\partial b} {\widetilde{\ell}(a,b)}}^2 \ge 2 c_1 (a^2 + b^2) \paren*{\widetilde{\ell}(a,b) - \min_{a,b} \widetilde{\ell}(a,b)}.
\end{align*}
Therefore, the gradient flow in Lemma~\ref{lemma: dynamical system isotropic} converges to the global minimum of $\widetilde{\ell}(a,b)$.
\end{lemma}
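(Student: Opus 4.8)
The plan is to prove the two analytic claims about $\widetilde{\ell}$ by direct computation, then combine the (point-dependent) PL inequality with a conservation law of the flow to obtain exponential convergence. The only genuinely delicate issue is that the effective PL coefficient $c_1(a^2+b^2)$ degenerates at the origin, which is itself a non-minimizing critical point; controlling this degeneracy is the crux.

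First I would differentiate. Since $\widetilde{\ell}(a,b) = \frac{1}{2c_1}(c_2 - c_1 ab)^2$, the chain rule gives $\frac{\partial}{\partial a}\widetilde{\ell} = -b(c_2 - c_1 ab)$ and $\frac{\partial}{\partial b}\widetilde{\ell} = -a(c_2 - c_1 ab)$, so
\[
\Big|\tfrac{\partial}{\partial a}\widetilde{\ell}\Big|^2 + \Big|\tfrac{\partial}{\partial b}\widetilde{\ell}\Big|^2 = (a^2+b^2)(c_2 - c_1 ab)^2.
\]
Because $c_1,c_2>0$ (they are positive combinations of the moments $\kappa_1,\kappa_2,\kappa_3$), one has $\widetilde{\ell}\ge 0$ with equality exactly when $ab=c_2/c_1$; hence $\min_{a,b}\widetilde{\ell}=0$ and $\widetilde{\ell}-\min\widetilde{\ell}=\frac{1}{2c_1}(c_2-c_1ab)^2$. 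Substituting into $2c_1(a^2+b^2)(\widetilde{\ell}-\min\widetilde{\ell})$ reproduces $(a^2+b^2)(c_2-c_1ab)^2$ exactly, so the stated PL inequality in fact holds \emph{with equality}, settling that half of the lemma immediately. For non-convexity I would exhibit a single point where the Hessian fails to be positive semidefinite: at the origin,
\[
H(0,0) = \begin{pmatrix} 0 & -c_2 \\ -c_2 & 0 \end{pmatrix}, \qquad \det H(0,0) = -c_2^2 < 0,
\]
so $H(0,0)$ is indefinite and $\widetilde{\ell}$ is non-convex.

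The harder part is deducing convergence, for which the bare PL inequality does not suffice because its coefficient $c_1(a^2+b^2)$ vanishes at the saddle at the origin. To prevent the trajectory from stalling there I would exploit a balancedness invariant of the flow: from the ODEs in Lemma~\ref{lemma: dynamical system isotropic}, $\frac{\mathrm d}{\mathrm d\tau}(a^2-b^2) = 2a\dot a - 2b\dot b = 0$, hence $a^2-b^2 \equiv a_0^2-b_0^2$ for all $\tau$. This yields the uniform lower bound $a^2+b^2 \ge |a_0^2-b_0^2| =: \mu$ along the whole trajectory, which is strictly positive whenever the initialization is unbalanced, $a_0^2\ne b_0^2$. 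In the balanced case $a_0=\pm b_0\ne 0$, the flow reduces to the scalar equation $\dot a = a(c_2-c_1a^2)$, which stays bounded away from $0$ and converges to $\pm\sqrt{c_2/c_1}$; only the excluded degenerate start $a_0=b_0=0$ fails.

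Finally, writing $g(\tau)=\widetilde{\ell}(a(\tau),b(\tau))-\min\widetilde{\ell}\ge 0$, along the gradient flow $\dot g = -\bigl\|\nabla\widetilde{\ell}\bigr\|^2 = -2c_1(a^2+b^2)\,g \le -2c_1\mu\,g$, so Grönwall's inequality gives $g(\tau)\le g(0)\,e^{-2c_1\mu\tau}\to 0$. Therefore $\widetilde{\ell}(a(\tau),b(\tau))\to 0$, i.e. $ab\to c_2/c_1$, which is exactly the global-minimum product $\widetilde a\,\widetilde b$ recorded in Theorem~\ref{thm: convergence of GF}. I expect the main obstacle to be precisely this vanishing of the PL factor at the origin: the PL inequality by itself is compatible with the trajectory getting trapped at the saddle, and it is the conservation law $a^2-b^2\equiv a_0^2-b_0^2$ that upgrades the point-dependent inequality into a genuine exponential contraction.
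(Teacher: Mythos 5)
Your two computational claims coincide with the paper's own proof. The paper likewise shows that the left-hand side of the PL inequality equals $(a^2+b^2)(c_2-c_1ab)^2 = 2c_1(a^2+b^2)\bigl(\widetilde{\ell}(a,b)-\min_{a,b}\widetilde{\ell}\bigr)$, i.e.\ the inequality holds with equality, and it also proves non-convexity via the Hessian (the paper computes $\det\nabla^2\widetilde{\ell}=(c_2-c_1ab)(3c_1ab-c_2)<0$ on an open region, whereas you evaluate at the single point $(0,0)$; both are valid). Where you genuinely depart from the paper is the final convergence claim. The paper's proof stops after the PL computation and simply asserts convergence, leaving the degeneracy of the coefficient $2c_1(a^2+b^2)$ at the origin unaddressed; you correctly identify this as the crux and repair it with the conservation law $a^2-b^2\equiv a_0^2-b_0^2$ (which indeed follows from $a\dot a = b\dot b = -c_1a^2b^2+c_2ab$) combined with Gr\"onwall's inequality. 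For unbalanced initializations $a_0^2\neq b_0^2$, and for $a_0=b_0\neq 0$, your argument is complete and strictly more rigorous than what the paper provides.

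However, your treatment of the balanced case $a_0=-b_0\neq 0$ is wrong. Since $\frac{\mathrm{d}}{\mathrm{d}\tau}(a+b)=(a+b)(c_2-c_1ab)$, the set $\{a+b=0\}$ is invariant under the flow, so along such a trajectory $b\equiv -a$ and the correct scalar reduction is $\dot a = -ab^2c_1+bc_2 = -a(c_1a^2+c_2)$, not $\dot a = a(c_2-c_1a^2)$. This drives $a$ monotonically to $0$, so the trajectory converges to the saddle at the origin and $ab\to 0\neq c_2/c_1$: the conclusion of the lemma actually \emph{fails} for this one-parameter family of initializations. This is not a repairable gap in your argument but a counterexample to the unqualified statement "the gradient flow converges to the global minimum" --- a defect the paper's own proof shares by omission, since it never discusses which initializations the PL inequality can handle. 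A fully correct version of your proof would state the conclusion for $a_0b_0$ in the basin guaranteed by the invariant, e.g.\ for all $(a_0,b_0)$ with $a_0^2\neq b_0^2$ or $a_0=b_0\neq 0$, and explicitly exclude the measure-zero set $\{a_0=-b_0\}$.
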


Finally, Theorem~\ref{thm: convergence of GF} can be proved by directly applying the above lemmas.

\section{Simulation results}
\label{sec: simulation}
In this section, we conduct simulations to verify and generalize our theoretical results. In terms of the train set, we generate 10$k$ sequences with $T_{tr} = 100$ and $d = 5$. In addition, we
generate another test set with 10$k$ sequences of the same shape. We train for 200
epochs with vanilla gradient descent, with different diagonal initialization of $(a_0, b_0)$ by $(0.1,0.1)$, $(0.5,1.5)$, $(2,2)$. The detailed configurations (e.g., step size) and results of different experiments can be found in Appendix~\ref{sec: Additional experimental details and results}.

\textbf{Initial token sampled from $\mathcal{N}(\vzero_d, \sigma^2 \vI_d)$.} We conduct simulations with $\sigma = 0.5,1,2$ respectively. With any initialization of $(a_0, b_0)$, simulations show that $ab$ converges to ${\kappa_1}/{\kappa_2} = {1}/{5\sigma^2}$, and $\haty_{T_{te}-1}$ converges to $\vx_{T_{te}}/5$ in expectation, which verifies Theorem~\ref{thm: convergence of GF} and Proposition~\ref{thm: normal failure}, respectively. In the main paper, we present the convergence results with $\sigma = 0.5$ in Fig.~\ref{fig: sim gaussian ab} and~\ref{fig: sim gaussian ratio}. We also verify our theory in the small-context scenarios ($T_{tr}=5$), which is placed in Fig.~\ref{figures: T=5} in Appendix~\ref{sec: Additional results for Gaussian initial token}.

\textbf{Initial token sampled from Example~\ref{example}.} We conduct simulations with scale $c = 0.5,1,2$ respectively. With any initialization of $(a_0, b_0)$, simulations show that $ab$ converges to ${\kappa_1}/{\kappa_2} = {1}/{c^2}$ (see details in Appendix~\ref{proof: example}), and $\haty_{T_{te}-1}$ converges to the truth $\vx_{T_{te}}$, which verifies Theorem~\ref{thm: convergence of GF} and Theorem~\ref{thm: mesa optimizer succeed}, respectively. In the main paper, we present the results with $c = 0.5$ in Fig.~\ref{fig: sim sparse ab} and~\ref{fig: sim sparse mse}.

\textbf{Initial token fixed as $\vone_d$.} We conduct experiment with $\vx_1 = \vone_d$.The results Fig.~\ref{figures: simulations additional one} in Appendix~\ref{sec: Additional results for full-one initial token} show that  $\vW^{KQ}_{32}$ and $\vW^{PV}_{12}$ converge to dense matrices with strong diagonals, and other matrices converge to $\vzero_{d\times d}$, which means that the trained transformer performs somewhat preconditioned gradient descent. The detailed derivation is placed in Appendix~\ref{sec: Additional results for full-one initial token}.

\textbf{Go beyond the diagonal initialization.} Finally, in order to extend our theory, we repeat experiments under Gaussian initialization with different variance $(\sigma_w=0.001,0.01,0.1)$. The results of Gaussian start points and sparse start points (Example~\ref{example}) can be found in Fig.~\ref{figures: gaussian start gaussian init} of Appendix~\ref{sec: Additional results for Gaussian initial token} and Fig.~\ref{figures: ex4 gaussian init} of Appendix~\ref{sec: Additional results for Sparse initial token}, respectively. As a result, though the convergence results of parameters are not the same as those under diagonal initialization, they keep the same diagonal structure, which can be understood as GD with adaptive learning rate in different dimensions. In addition, the test results (ratio or MSE loss) under the standard Gaussian initialization are the same as those under diagonal initialization, which further verifies the capability limitation of the trained transformers. To sum up, these experimental results demonstrate that our theoretical results have a certain representativeness, which further supports the rationality of the diagonal initialization.

\begin{figure}[t]
\centering

\subfloat[Gaussian with $\sigma=0.5$, dynamics of $ab$.]{
\includegraphics[width=0.45\columnwidth,height=0.3\columnwidth]{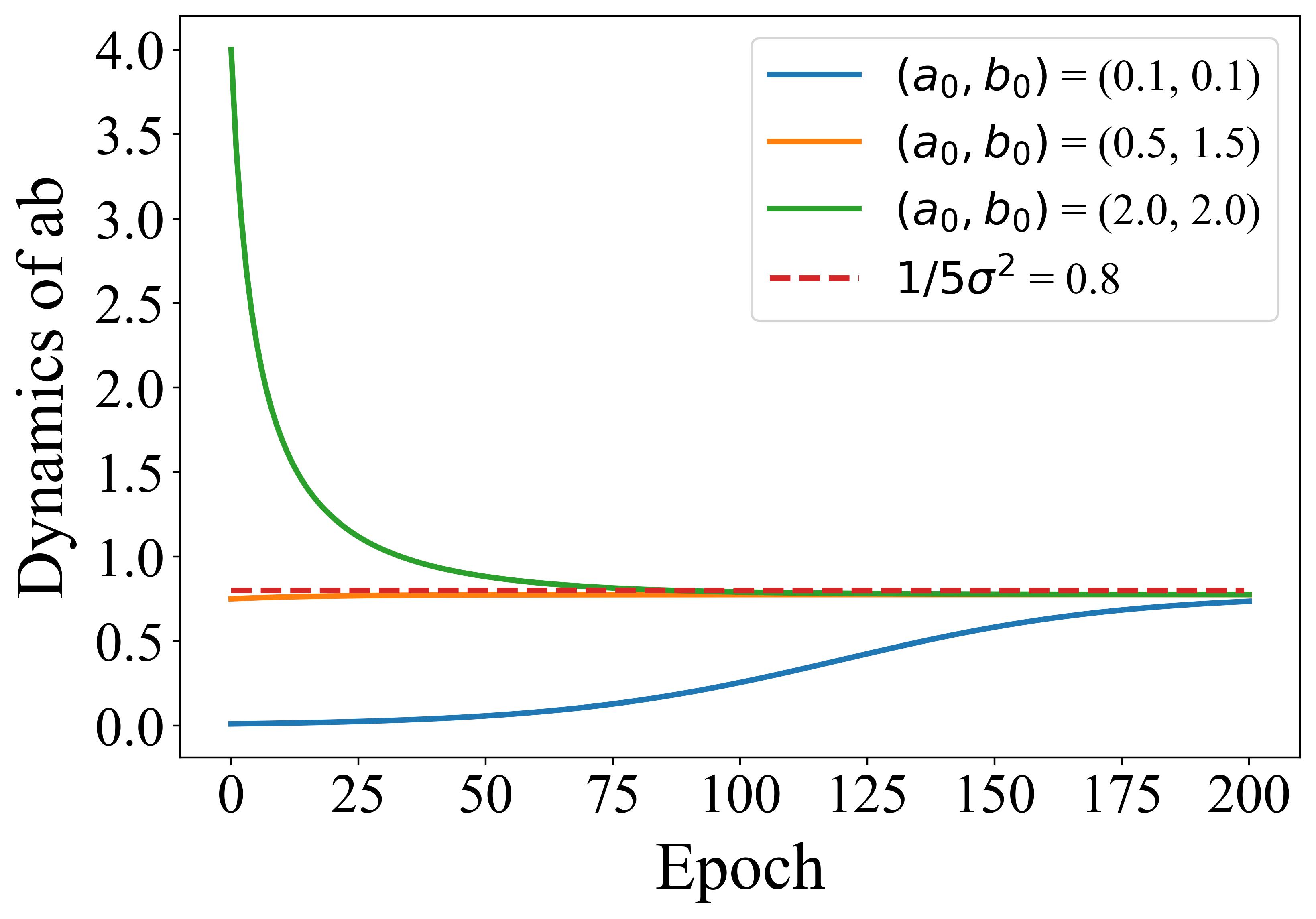}
\label{fig: sim gaussian ab}
}%
\subfloat[Gaussian with $\sigma=0.5$, ratio of $\haty_{T_{te}-1} / \vx_{T_{te}}$.]{
\includegraphics[width=0.45\columnwidth,height=0.3\columnwidth]{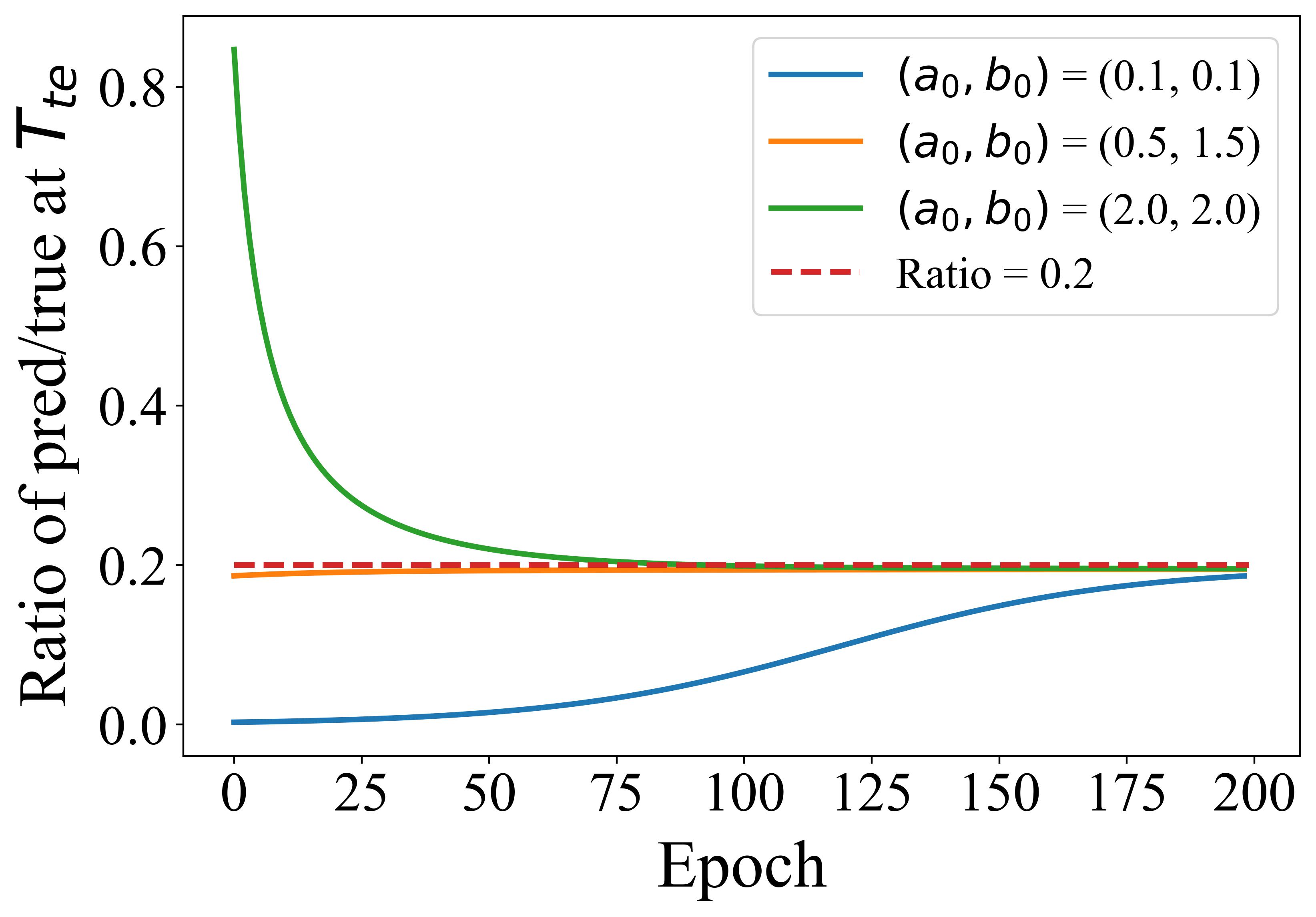}
\label{fig: sim gaussian ratio}
}%

\vskip 0.5ex

\subfloat[Example~\ref{example} with $c=0.5$, dynamics of $ab$.]{
\includegraphics[width=0.45\columnwidth,height=0.3\columnwidth]{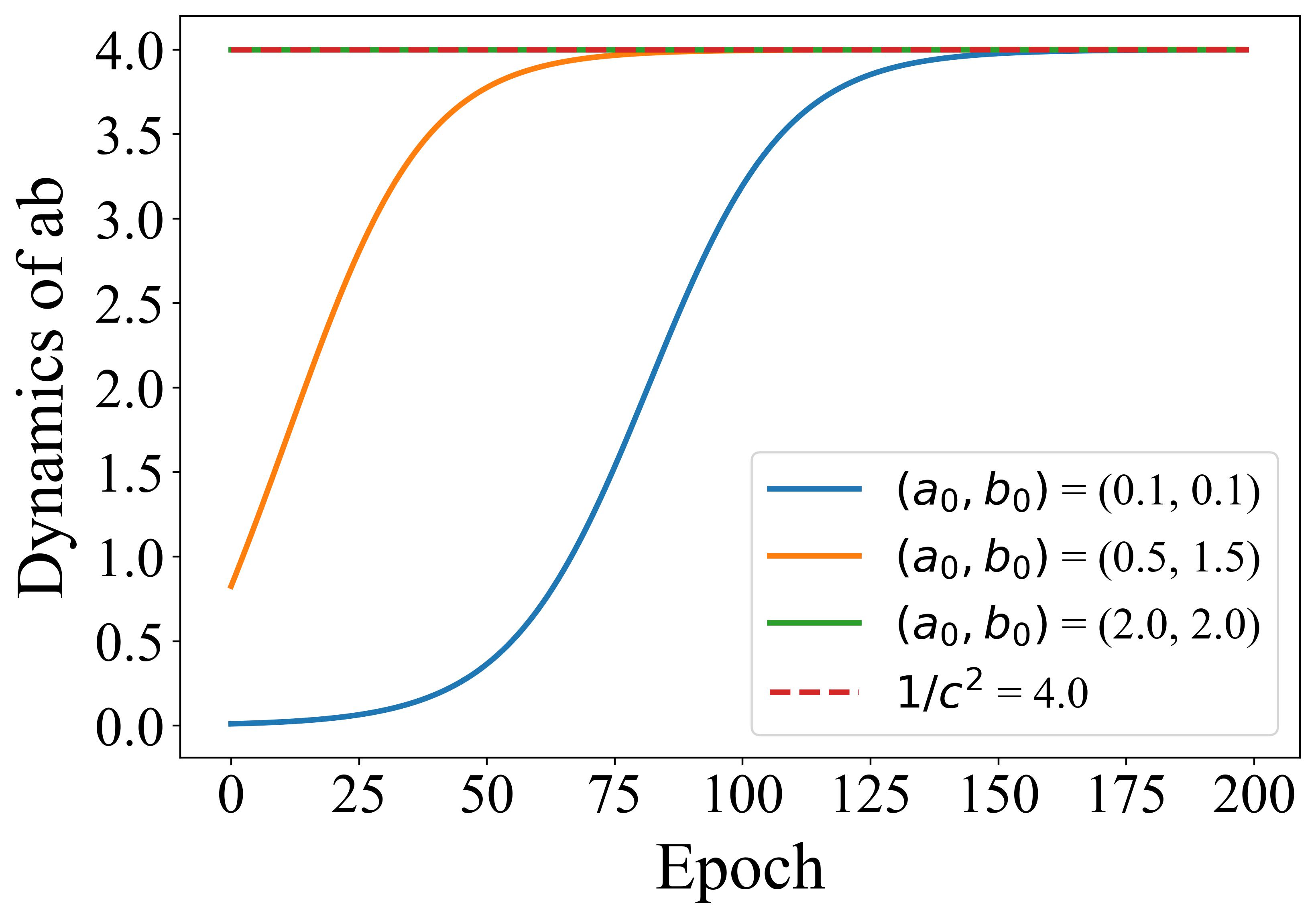}
\label{fig: sim sparse ab}
}%
\subfloat[Example~\ref{example} with $c=0.5$, $\Vert \haty_{T_{te}-1} - \vx_{T_{te}}\Vert_2^2$.]{
\includegraphics[width=0.45\columnwidth,height=0.3\columnwidth]{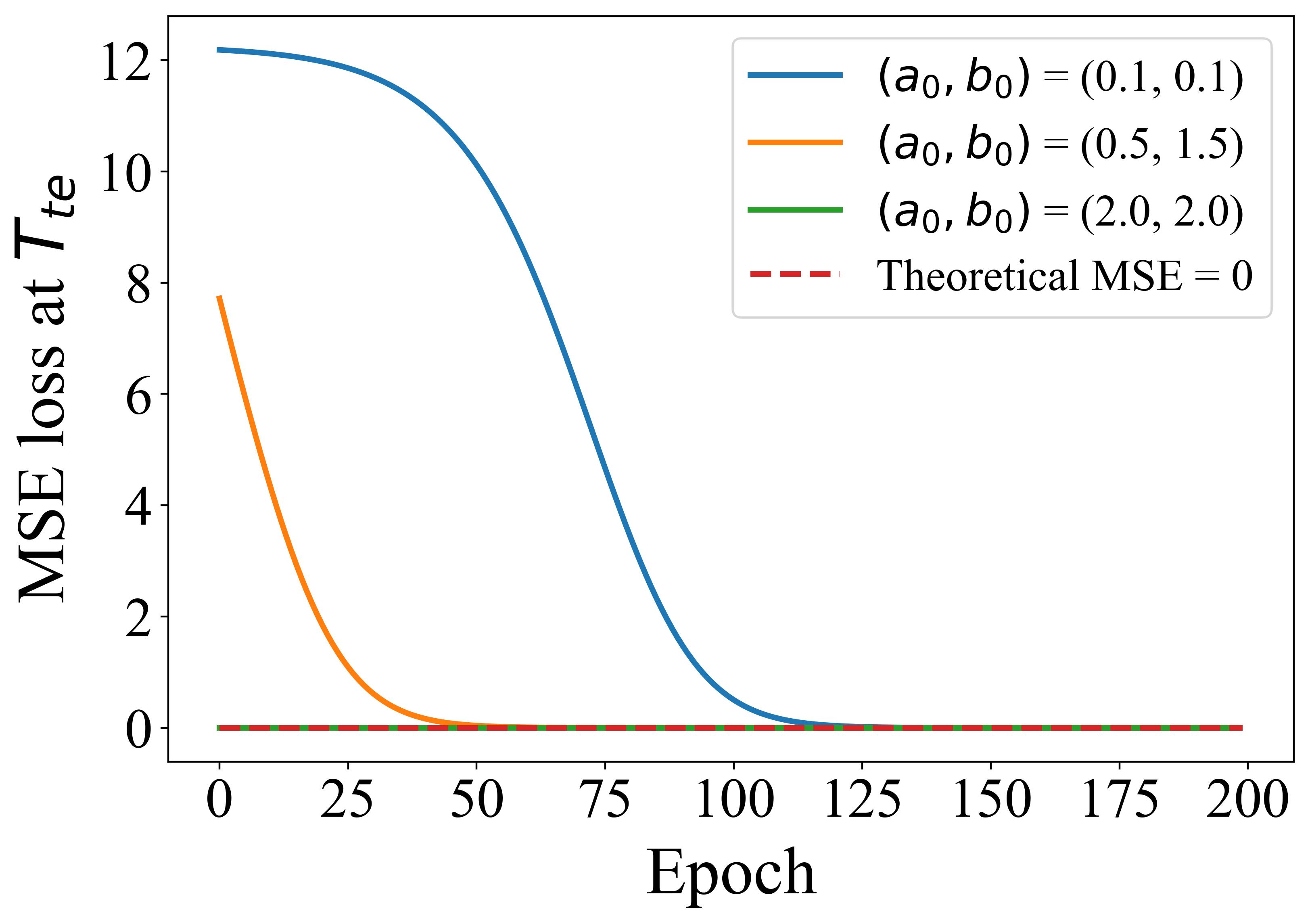}
\label{fig: sim sparse mse}
}%

\caption{Simulations results on Gaussian and Example~\ref{example} show that the convergence of $ab$ satisfies Theorem~\ref{thm: convergence of GF}. In addition, the trained transformer can recover the sequence with the initial token from Example~\ref{example}, but fails to recover the Gaussian initial token, which verifies Theorem~\ref{thm: mesa optimizer succeed} and Proposition~\ref{thm: normal failure}, respectively.}

\label{figures: simulations}
\end{figure}

\section{Conclusion and Discussion}
\label{sec: conclusion}

In this paper, we towards understanding the the mechanisms underlying the ICL by analyzing the mesa-optimization hypothesis. To achieve this goal, we investigate the non-convex dynamics of a one-layer linear transformer autoregressively trained by gradient flow on a controllable AR process. First, we find a sufficient condition (Assumption~\ref{ass: emergence of mesa-optimizer}) for the emergence of mesa-optimizer. Second, we explore the capability of the mesa-optimizer, where we find a sufficient and necessary condition (Assumption~\ref{ass: mesa optimizer succeed}) that the trained transformer recovers the true distribution. Third, we analyze the case where Assumption~\ref{ass: emergence of mesa-optimizer} does not hold, and find that the trained transformer will not perform vanilla gradient descent in general. Finally, our simulation results verify the theoretical results.

\textbf{Limitations and social impact.} First, our theory only focuses on the one-layer linear transformer, thus whether the results hold when more complex models are adopted is still unclear. We believe that our analysis can give insight to those cases. Second, the general case where Assumption~\ref{ass: Initialization} and~\ref{ass: emergence of mesa-optimizer} does not hold is not fully addressed in this paper due to technical difficulties. Future work can consider that setting based on our theoretical and empirical findings.
Finally, this is mainly theoretical work and we do not see a direct social impact of our theory. 

\section{Acknowledgement}

This work was supported by Beijing Natural Science Foundation (L247030); NSF of China (Nos. 62076145, 62206159); Beijing Nova Program (No. 20230484416); Major Innovation \& Planning Interdisciplinary Platform for the ``Double-First Class" Initiative, Renmin University of China; the Fundamental Research Funds for the Central Universities, and the Research Funds of Renmin University of China (22XNKJ13); the Natural Science Foundation of Shandong Province (Nos. ZR2022QF117), the Fundamental Research Funds of Shandong University. The work was partially done at the Engineering Research Center of Next-Generation Intelligent Search and Recommendation, Ministry of Education.  G. Wu was also sponsored by the TaiShan Scholars Program.

\bibliographystyle{unsrt}  
\bibliography{references}

\newpage


\begin{appendices}

\renewcommand{\contentsname}{Contents of Appendix}
\tableofcontents

\addtocontents{toc}{\protect\setcounter{tocdepth}{3}} 

\newpage

\section{Proofs}
\label{sec: proof}

\subsection{Proof of eq.~(\ref{eqn: y_t})}
\label{proof: y_t}

\begin{proof}
We first calculate the output by causal linear attention layer as the following:
\begin{align*}
&\vf_{t} (\vE_t ;\vtheta) \\
&= \ve_t + \frac{1}{\rho_t} \WPV \vE_t \vE_t^* \WKQ \ve_t \\
&= \ve_t + \frac{1}{\rho_t} \WPV \paren*{ \sum_{i=1}^t \ve_i  \ve_i^*} \WKQ \ve_t \\
&= \ve_t + \frac{1}{\rho_t} \sum_{i=1}^t \paren*{ \WPV\ve_i \cdot \paren*{ \vW^{KQ\top} \ve_i}^* }\ve_t \\
&= \ve_t + \frac{1}{\rho_t} \sum_{i=1}^t \paren*{
\begin{pmatrix}
\vW^{PV}_{11} & \vW^{PV}_{12} & \vW^{PV}_{13} \\
\vW^{PV}_{21} & \vW^{PV}_{22} & \vW^{PV}_{23} \\
\vW^{PV}_{31} & \vW^{PV}_{32} & \vW^{PV}_{33}
\end{pmatrix} \ve_i \cdot
\paren*{
\begin{pmatrix}
\vW^{KQ}_{11} & \vW^{KQ}_{12} & \vW^{KQ}_{13} \\
\vW^{KQ}_{21} & \vW^{KQ}_{22} & \vW^{KQ}_{23} \\
\vW^{KQ}_{31} & \vW^{KQ}_{32} & \vW^{KQ}_{33}
\end{pmatrix}^\top \ve_i}^*}\ve_t \\
&= \begin{pmatrix}
\vzero_d \\
\vx_t \\
\vx_{t-1}
\end{pmatrix}
+ \frac{1}{\rho_t} \sum_{i=1}^t \paren*{
\begin{pmatrix}
\vW^{PV}_{12}\vx_{i} +  \vW^{PV}_{13} \vx_{i-1} \\
\times \\
\times
\end{pmatrix}  \cdot 
\begin{pmatrix}
\times \\
\vW^{KQ\top}_{22} \vx_i + \vW^{KQ\top}_{32} \vx_{i-1} \\
\vW^{KQ\top}_{23} \vx_i + \vW^{KQ\top}_{33} \vx_{i-1}
\end{pmatrix}^* } 
\begin{pmatrix}
\vzero_d \\
\vx_t \\
\vx_{t-1}
\end{pmatrix},
\end{align*}
where $\times$s are the elements that will not contribute to the final $\haty_t$. A further simple computation shows that
\begin{align*}
\haty_t &= \vzero_d + \frac{1}{\rho_t} \sum_{i=1}^t (\vW^{PV}_{12}\vx_{i} +  \vW^{PV}_{13} \vx_{i-1})  (\vW^{KQ\top}_{22} \vx_i + \vW^{KQ\top}_{32} \vx_{i-1})^* \vx_t \\
&\quad + \frac{1}{\rho_t} \sum_{i=1}^t (\vW^{PV}_{12}\vx_{i} +  \vW^{PV}_{13} \vx_{i-1})  (\vW^{KQ\top}_{23} \vx_i + \vW^{KQ\top}_{33} \vx_{i-1})^* \vx_{t-1} \\
& = \frac{1}{\rho_t} \sum_{i=1}^t \paren*{
\begin{pmatrix}
\vW^{PV}_{12}\vx_{i} +  \vW^{PV}_{13} \vx_{i-1}
\end{pmatrix} 
\begin{pmatrix}
\vW^{KQ\top}_{22} \vx_i + \vW^{KQ\top}_{32} \vx_{i-1} \\
\vW^{KQ\top}_{23} \vx_i + \vW^{KQ\top}_{33} \vx_{i-1}
\end{pmatrix}^* } 
\begin{pmatrix}
\vx_t \\
\vx_{t-1}
\end{pmatrix} \\
&= \frac{1}{\rho_t} \sum_{i=1}^t \paren*{
\begin{pmatrix}
\vW^{PV}_{12} & \vW^{PV}_{13}
\end{pmatrix}
\begin{pmatrix}
\vx_i \\
\vx_{i-1}
\end{pmatrix} \cdot
\paren*{
\begin{pmatrix}
\vW^{KQ}_{22} & \vW^{KQ}_{23} \\
\vW^{KQ}_{32} & \vW^{KQ}_{33}
\end{pmatrix}^\top 
\begin{pmatrix}
\vx_i \\
\vx_{i-1}
\end{pmatrix} }^* }
\begin{pmatrix}
\vx_t \\
\vx_{t-1}
\end{pmatrix} \\
&= \frac{1}{\rho_t} \sum_{i=1}^t \paren*{
\begin{pmatrix}
\vW^{PV}_{12} & \vW^{PV}_{13}
\end{pmatrix} \ve_i^\vx \cdot
\paren*{
\begin{pmatrix}
\vW^{KQ}_{22} & \vW^{KQ}_{23} \\
\vW^{KQ}_{32} & \vW^{KQ}_{33}
\end{pmatrix}^\top \ve_i^\vx}^*}
\ve_t^\vx \\
&= \frac{1}{\rho_t} 
\begin{pmatrix}
\vW^{PV}_{12} & \vW^{PV}_{13}
\end{pmatrix} \paren*{ \sum_{i=1}^t \ve_i^\vx \ve_i^{\vx*}}
\begin{pmatrix}
\vW^{KQ}_{22} & \vW^{KQ}_{23} \\
\vW^{KQ}_{32} & \vW^{KQ}_{33}
\end{pmatrix}
\ve_t^\vx \\
&= \begin{pmatrix}
\vW^{PV}_{12} & \vW^{PV}_{13}
\end{pmatrix} \frac{\vE^\vx_t \vE_t^{\vx*}}{\rho_t} 
\begin{pmatrix}
\vW^{KQ}_{22} & \vW^{KQ}_{23} \\
\vW^{KQ}_{32} & \vW^{KQ}_{33}
\end{pmatrix} \ve_t^\vx \in \C^d,
\end{align*}
which completes the proof.
\end{proof}

\subsection{Proof of Theorem~\ref{thm: convergence of GF}}
\label{proof: thm convergence of GF}

\subsubsection{Proof of Lemma~\ref{lemma: quadratic form}}
\label{proof: lemma quadratic form}

For the reader's convenience, we restate the lemma as the following.
\begin{lemma}
Each element of the network's prediction $\hatys_{t,j}$ ($j \in [d]$) can be expressed as the following.
\begin{equation*}
\hatys_{t,j} = \vB_j^\top \cdot \ve_{t}^{\vx\top} \kron \frac{\vE^\vx_t \vE_t^{\vx*}}{\rho_t} \cdot \vect(\vA) = \vect^\top(\vA)  \cdot \ve_{t}^{\vx} \kron \frac{\overline{\vE^\vx_t} \vE_t^{\vx\top}}{\rho_t} \cdot \vB_j,
\end{equation*}
where the $\vA$ and $\vB_j$ are defined as
\begin{align*}
\vA = \begin{pmatrix}
\va_1 & \dots & \va_{2d}
\end{pmatrix} = 
\begin{pmatrix}
\vW^{KQ}_{22} & \vW^{KQ}_{23}\\
{\vW^{KQ}_{32}} & \vW^{KQ}_{33}
\end{pmatrix}, \quad
\vB_j = \begin{pmatrix}
\vb_{j1} \\
\vb_{j2}
\end{pmatrix} =  \begin{pmatrix}
{\vW^{PV\top}_{12,j:}} \\
\vW^{PV\top}_{13,j:}
\end{pmatrix},
\end{align*}
with $\va_i \in \R^{2d}$ and $\vb_{j1}, \vb_{j2} \in \R^d$. 
\end{lemma}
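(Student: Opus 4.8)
The plan is to recognize the right-hand side of Eq.~\ref{eqn: y_t} as a single triple matrix product and then invoke the standard vectorization identity $\vect(\vP\vX\vQ) = (\vQ^\top \kron \vP)\vect(\vX)$. First I would abbreviate $\vM = \frac{\vE_t^\vx \vE_t^{\vx*}}{\rho_t} \in \C^{2d\times 2d}$ and $\vC = \begin{pmatrix} \vW^{PV}_{12} & \vW^{PV}_{13}\end{pmatrix} \in \R^{d\times 2d}$, so that Eq.~\ref{eqn: y_t} becomes $\haty_t = \vC\,\vM\,\vA\,\ve_t^\vx$. Taking the $j$-th coordinate amounts to selecting the $j$-th row of $\vC$, which by the definition of $\vB_j$ is precisely $\vB_j^\top$; therefore $\hatys_{t,j} = \vB_j^\top \vM \vA \ve_t^\vx$.

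For the first equality, I would view the column vector $\vM\vA\ve_t^\vx$ as $\vect(\vM\vA\ve_t^\vx)$ and apply the vectorization identity with $\vP = \vM$, $\vX = \vA$, and $\vQ = \ve_t^\vx$, obtaining $\vM\vA\ve_t^\vx = (\ve_t^{\vx\top}\kron \vM)\vect(\vA)$. Left-multiplying by $\vB_j^\top$ then gives $\hatys_{t,j} = \vB_j^\top (\ve_t^{\vx\top}\kron \vM)\vect(\vA)$, which is the first displayed expression once $\vM$ is spelled out as $\frac{\vE_t^\vx \vE_t^{\vx*}}{\rho_t}$.

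For the second equality, I would use that $\hatys_{t,j}$ is a scalar and hence equals its own transpose, so $\hatys_{t,j} = \vect^\top(\vA)\,(\ve_t^{\vx\top}\kron \vM)^\top \vB_j$. Applying the Kronecker transpose rule $(\vP\kron\vQ)^\top = \vP^\top \kron \vQ^\top$ converts this into $\vect^\top(\vA)\,(\ve_t^\vx \kron \vM^\top)\vB_j$. The step demanding the most care is the complex bookkeeping: I must observe that $\vA$ and $\vB_j$ are real, since they are assembled from the real parameter blocks of $\WKQ$ and $\WPV$, so ordinary transposition (not conjugate transposition) of the scalar is valid; and that $\vM$ is Hermitian, because $\vM^* = \frac{(\vE_t^{\vx*})^*(\vE_t^\vx)^*}{\rho_t} = \frac{\vE_t^\vx\vE_t^{\vx*}}{\rho_t} = \vM$, whence $\vM^\top = \overline{\vM} = \frac{\overline{\vE_t^\vx}\,\vE_t^{\vx\top}}{\rho_t}$. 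Substituting this last identity yields exactly the second displayed expression and completes the proof.
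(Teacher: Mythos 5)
Your proof is correct, and it reaches the paper's two identities by a cleaner route than the paper's own derivation. Both arguments begin identically, reducing Eq.~\ref{eqn: y_t} to the scalar triple product $\hatys_{t,j} = \vB_j^\top \vM \vA \ve_t^\vx$ with $\vM = \vE^\vx_t\vE_t^{\vx*}/\rho_t$, using that the $j$-th row of $\begin{pmatrix}\vW^{PV}_{12} & \vW^{PV}_{13}\end{pmatrix}$ is $\vB_j^\top$. From there the paper re-derives the Kronecker structure by hand: it expands $\vA\ve_t^\vx = \sum_{i=1}^{2d} e_{t,i}^\vx\,\va_i$, converts each summand to a trace, applies the cyclic property, and reassembles the $2d$ blocks into $\ve_t^{\vx\top}\kron\vM$. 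You instead note that the column vector $\vM\vA\ve_t^\vx$ is its own vectorization and invoke the standard identity $\vect(\vP\vX\vQ) = (\vQ^\top\kron\vP)\vect(\vX)$ with $\vQ = \ve_t^\vx$; this is precisely the fact that the paper's trace computation proves in this special case, and the paper itself cites the same identity (Section 10.2 of the Matrix Cookbook) when it reuses it in the proof of Lemma~\ref{lemma: dynamical system isotropic}, so your shortcut is fully consistent with the paper's toolkit and makes the structure of the lemma more transparent. The second equality is handled the same way in both proofs: a scalar equals its transpose, combined with $(\ve_t^{\vx\top}\kron\vM)^\top = \ve_t^\vx\kron\vM^\top$ and $\vM^\top = \overline{\vE^\vx_t}\vE_t^{\vx\top}/\rho_t$. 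Two of your cautionary remarks are superfluous, though harmless: plain transposition (as opposed to conjugate transposition) of a complex scalar or of a matrix product never introduces conjugates, so the realness of $\vA$ and $\vB_j$ is not needed for that step, and $\vM^\top = \overline{\vE^\vx_t}\vE_t^{\vx\top}/\rho_t$ follows directly from $(\vE^\vx_t\vE_t^{\vx*})^\top = (\vE_t^{\vx*})^\top(\vE^\vx_t)^\top = \overline{\vE^\vx_t}\vE_t^{\vx\top}$ without appealing to the Hermitian symmetry of $\vM$.
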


\begin{proof}
 
Based on the result in Eq.~\ref{eqn: y_t}, we can write
\begin{align*}
\hatys_{t,j} &= \begin{pmatrix}
\vW^{PV}_{12} & \vW^{PV}_{13}
\end{pmatrix}_{j:}
\frac{\vE^\vx_t \vE_t^{\vx*}}{\rho_t} 
\begin{pmatrix}
\vW^{KQ}_{22} & \vW^{KQ}_{23} \\
\vW^{KQ}_{32} & \vW^{KQ}_{33}
\end{pmatrix} \ve_t^\vx \\
&= \vB_j^\top \frac{\vE^\vx_t \vE_t^{\vx*}}{\rho_t} 
\begin{pmatrix}
\va_1 & \dots & \va_{2d}
\end{pmatrix} \ve_t^\vx \\
&= \sum_{i=1}^{2d} e_{t,i}^\vx \vB_j^\top \frac{\vE^\vx_t \vE_t^{\vx*}}{\rho_t} \va_i \\
&= \sum_{i=1}^{2d} e_{t,i}^\vx \tr\paren*{\vB_j^\top \frac{\vE^\vx_t \vE_t^{\vx*}}{\rho_t} \va_i} \\
&= \sum_{i=1}^{2d} e_{t,i}^\vx \tr\paren*{\va_i \vB_j^\top \frac{\vE^\vx_t \vE_t^{\vx*}}{\rho_t}} \\
&= \sum_{i=1}^{2d} \tr\paren*{\va_i \vB_j^\top \cdot e_{t,i}^\vx \frac{\vE^\vx_t \vE_t^{\vx*}}{\rho_t}} \\
&= \tr\bracket*{ 
\begin{pmatrix}
\va_1 \vB_j^\top  \\
\vdots \\
\va_{2d} \vB_j^\top
\end{pmatrix} \cdot
\begin{pmatrix}
e_{t,1}^\vx \frac{\vE^\vx_t \vE_t^{\vx*}}{\rho_t} & \cdots & e_{t,2d}^\vx \frac{\vE^\vx_t \vE_t^{\vx*}}{\rho_t}
\end{pmatrix} } \\
&= \tr\paren*{ \vect(\vA) \vB_j^\top \cdot \ve_{t}^{\vx\top} \kron \frac{\vE^\vx_t \vE_t^{\vx*}}{\rho_t} } \\
&= \tr\paren*{ \vB_j^\top \cdot \ve_{t}^{\vx\top} \kron \frac{\vE^\vx_t \vE_t^{\vx*}}{\rho_t} \cdot \vect(\vA)} = \vB_j^\top \cdot \ve_{t}^{\vx\top} \kron \frac{\vE^\vx_t \vE_t^{\vx*}}{\rho_t} \cdot \vect(\vA) \\
&= \tr\paren*{ \vect^\top(\vA)  \cdot \ve_{t}^{\vx} \kron \frac{\overline{\vE^\vx_t} \vE_t^{\vx\top}}{\rho_t} \cdot \vB_j} = \vect^\top(\vA)  \cdot \ve_{t}^{\vx} \kron \frac{\overline{\vE^\vx_t} \vE_t^{\vx\top}}{\rho_t} \cdot \vB_j,
\end{align*}
which finishes the proof.
\end{proof}

\subsubsection{Proof of Lemma~\ref{lemma: dynamical system isotropic}}
\label{proof: lemma dynamical system isotropic}

For the reader's convenience, we restate the lemma as the following.

\begin{lemma}
Suppose that Assumption~\ref{ass: emergence of mesa-optimizer} holds, then the dynamical process of the parameters in the diagonal of $\vW^{KQ}_{32}$ and $\vW^{PV}_{12}$ satisfies
\begin{align*}
&  \frac{\mathrm{d}}{\mathrm d\tau} a = -  ab^2 \bracket*{ (T-2)\kappa_2 + \sum_{t=2}^{T-1}\frac{1}{t-1} \kappa_3 } + b (T-2)\kappa_1, \\
&  \frac{\mathrm{d}}{\mathrm d\tau} b = -  a^2b \bracket*{ (T-2)\kappa_2 + \sum_{t=2}^{T-1}\frac{1}{t-1} \kappa_3 } + a (T-2)\kappa_1,
\end{align*}
while the gradients for all other parameters were kept at zero during the training process.
\end{lemma}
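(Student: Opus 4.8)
The plan is to differentiate the population loss $L(\vtheta)=\sum_{t=2}^{T-1}\E[\frac12\norm{\haty_t-\vx_{t+1}}_2^2]$ block by block, restrict the computation to the diagonal ansatz $\vW^{KQ}_{32}=\mathrm{diag}(a_1,\dots,a_d)$, $\vW^{PV}_{12}=\mathrm{diag}(b_1,\dots,b_d)$ with all other relevant blocks zero (which holds at $\tau=0$ by Assumption~\ref{ass: Initialization}), and then show both that this ansatz is invariant under the flow and that the surviving scalar equations are the stated ODEs. The starting point is Lemma~\ref{lemma: quadratic form}: writing $\hatys_{t,j}=\vB_j^\top(\ve_t^{\vx\top}\kron\frac{\vE_t^\vx\vE_t^{\vx*}}{\rho_t})\vect(\vA)$ makes every partial derivative explicit, since $\hatys_{t,j}$ is bilinear in $(\vect(\vA),\vB_j)$; the chain rule then gives $\partial_{\vtheta}L_t=\E[\sum_j\mathrm{Re}((\hatys_{t,j}-x_{t+1,j})\overline{\partial_\vtheta\hatys_{t,j}})]$ with $\partial_\vtheta\hatys_{t,j}$ read directly off the bilinear form.

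First I would substitute the AR structure. Since $\vW=\mathrm{diag}(\vlambda)$ with $\abs{\lambda_k}=1$ and $\vx_i=\vW^{i-1}\vx_1$, every coordinate factorizes as $x_{i,k}=\lambda_k^{i-1}x_{1,k}$, so under the ansatz $\hatys_{t,m}=\frac{b_m}{t-1}\sum_{i=2}^t x_{i,m}\sum_k a_k\bar x_{i-1,k}x_{t,k}$ (with $\rho_t=t-1$), and each gradient entry becomes a sum of monomials of degree up to six in $\vx_1$, weighted by powers of the $\lambda_k$. The key simplification is to take the expectation over $\vW$ first: because the phases are independent and uniform, $\E_\vW[\prod_k\lambda_k^{m_k}]$ equals $1$ when every exponent $m_k$ vanishes and $0$ otherwise, which forces the time and coordinate indices to pair up. I would then carry out the resulting case analysis on coordinate coincidences: monomials in a single coordinate survive the phase average for all pairs of time indices (so the $\frac{1}{(t-1)^2}$ normalization meets $(t-1)^2$ surviving terms and leaves an $O(1)$ contribution), whereas monomials in two distinct coordinates survive only on the ``diagonal'' of the double time-sum (so $(t-1)$ terms remain and leave a $\frac{1}{t-1}$ weight).

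Second I would evaluate the remaining expectation over $\vx_1$ under Assumption~\ref{ass: emergence of mesa-optimizer}. The moment conditions kill every monomial containing a coordinate to the first power, which immediately forces all off-diagonal entries of the $\vA$- and $\vB_j$-gradients, as well as the gradients of the zero blocks $\vW^{KQ}_{22},\vW^{KQ}_{23},\vW^{KQ}_{33},\vW^{PV}_{13}$, to vanish; this establishes invariance of the ansatz, so the flow reduces to the scalars $a=a_j$, $b=b_j$. The surviving single-coordinate terms produce $\kappa_2=\E[x_{1j}^6]$ in the quadratic-in-parameters piece and $\kappa_1=\E[x_{1j}^4]$ in the residual piece, while the two-coordinate terms assemble into $\kappa_3=\sum_{r\ne j}\E[x_{1j}^2x_{1r}^4]$ carrying the $\frac{1}{t-1}$ weight. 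Collecting these, one finds $\partial_b L_t=a^2b(\kappa_2+\frac{\kappa_3}{t-1})-a\kappa_1$, with $\partial_a L_t$ obtained analogously; summing over $t=2,\dots,T-1$ (which supplies the factors $(T-2)$ and $\sum_{t}\frac{1}{t-1}$) and negating gives exactly the two stated equations.

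The hard part will be the second step: organizing the joint expectation over $\vW$ and $\vx_1$. The prediction already contains a double time-sum of quartic data monomials, so the squared-loss gradient reaches sixth-order moments, and one must track carefully which coordinate coincidences survive the phase averaging before Assumption~\ref{ass: emergence of mesa-optimizer} is applied. The delicate point is isolating the coefficient $(T-2)\kappa_2+\sum_{t=2}^{T-1}\frac{1}{t-1}\kappa_3$: the $\kappa_2$ contribution comes from all-equal-coordinate monomials that survive for every pair of time indices (hence no residual $\frac{1}{t-1}$), while the $\kappa_3$ contribution comes from two-distinct-coordinate monomials that survive only when the two time indices coincide (hence the $\frac{1}{t-1}$). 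Keeping this decomposition straight, together with verifying that the off-diagonal and zero-block gradients really do vanish termwise, is where essentially all the work lies; the differentiation and the $a\leftrightarrow b$ symmetry of the final system are comparatively routine.
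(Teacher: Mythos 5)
Your proposal is correct and follows essentially the same route as the paper's proof: both start from the bilinear form of Lemma~\ref{lemma: quadratic form}, exploit independence to factor the expectation into the phase average over $\vW$ (where $\E[\prod_k \lambda_k^{m_k}]$ vanishes unless all exponents cancel, forcing time/coordinate pairing) and the moments of $\vx_1$ (where Assumption~\ref{ass: emergence of mesa-optimizer} kills monomials with a first-power coordinate), carry out the same case analysis separating the $(t-1)^2$-fold single-coordinate terms (yielding $\kappa_2$) from the $(t-1)$-fold two-distinct-coordinate terms (yielding the $\kappa_3/(t-1)$ weight), and close with the same invariance-by-induction argument that reduces the flow to the two scalar ODEs. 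The only cosmetic difference is that you allow per-coordinate diagonal entries $\mathrm{diag}(a_1,\dots,a_d)$, $\mathrm{diag}(b_1,\dots,b_d)$ before symmetrizing, whereas the paper works directly with the scalar-identity ansatz $a\vI_d$, $b\vI_d$ preserved from the initialization.
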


\begin{proof}
The population loss $L(\vtheta)$ in Eq.~\ref{eqn: population loss} can be rewritten as
\begin{align*}
L(\vtheta) 
&= \sum_{t = 2}^{T-1} L_t(\vtheta)
= \sum_{t = 2}^{T-1} \E \bracket*{\frac{1}{2} \sum_{j=1}^{d} \abs{\hatys_{t,j} - x_{t+1,j}}^2} 
= \sum_{t = 2}^{T-1} \sum_{j=1}^{d} \E \bracket*{\frac{1}{2}  \abs{\hatys_{t,j} - x_{t+1,j}}^2} \\
& = \sum_{t = 2}^{T-1} \sum_{j=1}^{d} \E \bracket*{\frac{1}{2} \paren*{ \hatys_{t,j}^*\hatys_{t,j} - 2\Re{x_{t+1,j}^* \hatys_{t,j}} + x_{t+1,j}^*x_{t+1,j} } } \\
&= \sum_{t = 2}^{T-1} \sum_{j=1}^{d} \E \bracket*{{ \frac{1}{2} \hatys_{t,j}^*\hatys_{t,j} - \Re{x_{t+1,j}^* \hatys_{t,j}} + \frac{1}{2} x_{t+1,j}^*x_{t+1,j} } }.
\end{align*}
Then, we can calculate the derivatives of $L_t(\theta)$ with respect to $\vB_j$ and $\vect(\vA)$ as
\begin{align*}
\nabla_{\vB_j} L_t(\vtheta) &= \sum_{j=1}^{d} \E \bracket*{{ \frac{1}{2} \nabla_{\vB_j} \hatys_{t,j}^*\hatys_{t,j} - \nabla_{\vB_j} \Re{x_{t+1,j}^* \hatys_{t,j}} } } \\
&= \E \bracket*{{ \frac{1}{2} \nabla_{\vB_j} \hatys_{t,j}^*\hatys_{t,j} - \nabla_{\vB_j} \Re{x_{t+1,j}^* \hatys_{t,j}} } } \\
&= { \frac{1}{2} \E\bracket*{\nabla_{\vB_j} \hatys_{t,j}^*\hatys_{t,j}} - \E\bracket*{\nabla_{\vB_j} \Re{x_{t+1,j}^* \hatys_{t,j}} } },
\end{align*}
and
\begin{align*}
\nabla_{\vect(\vA)} L_t(\vtheta) &= \sum_{j=1}^{d} \E \bracket*{{ \frac{1}{2} \nabla_{\vect(\vA)} \hatys_{t,j}^*\hatys_{t,j} - \nabla_{\vect(\vA)} \Re{x_{t+1,j}^* \hatys_{t,j}} } } \\
&= \frac{1}{2}  \sum_{j=1}^{d} \E\bracket*{\nabla_{\vect(\vA)} \hatys_{t,j}^*\hatys_{t,j}} - \sum_{j=1}^{d} \E\bracket*{\nabla_{\vect(\vA)} \Re{x_{t+1,j}^* \hatys_{t,j}} } .
\end{align*}

\paragraph{Step one: calculate $\E\bracket*{\nabla_{\vB_j} \Re{x_{t+1,j}^* \hatys_{t,j}}}$.}

Based on Lemma~\ref{lemma: quadratic form}, we have
\begin{align*}
\hatys_{t,j} 
= \vB_j^\top \cdot \ve_{t}^{\vx\top} \kron \frac{\vE^\vx_t \vE_t^{\vx*}}{\rho_t} \cdot \vect(\vA).
\end{align*}

Then, the $\E\bracket*{\nabla_{\vB_j} \Re{x_{t+1,j}^* \hatys_{t,j}}}$ can be derived as the following.
\begin{align*}
\E\bracket*{\nabla_{\vB_j} \Re{x_{t+1,j}^* \hatys_{t,j}}} &= \E\bracket*{\nabla_{\vB_j} \Re{x_{t+1,j}^* \vB_j^\top \cdot \ve_{t}^{\vx\top} \kron \frac{\vE^\vx_t \vE_t^{\vx*}}{\rho_t} \cdot \vect(\vA) }} \\
&= \E\bracket*{ \nabla_{\vB_j} \vB_j^\top \cdot \Re{x_{t+1,j}^* \cdot \ve_{t}^{\vx\top} \kron \frac{\vE^\vx_t \vE_t^{\vx*}}{\rho_t} \cdot \vect(\vA) }} \\
&= \E\bracket*{ \Re{x_{t+1,j}^* \cdot \ve_{t}^{\vx\top} \kron \frac{\vE^\vx_t \vE_t^{\vx*}}{\rho_t} \cdot \vect(\vA) }} \\
&= \Re{ \E\bracket*{x_{t+1,j}^* \cdot \ve_{t}^{\vx\top} \kron \frac{\vE^\vx_t \vE_t^{\vx*}}{\rho_t} \cdot \vect(\vA) }} \\
&= \Re{ \E\bracket*{\lambda_j^{-t} x_{1j} \cdot \vect(\frac{\vE^\vx_t \vE_t^{\vx*}}{\rho_t} \vA \ve_{t}^{\vx}) }} & (\text{use generating process})\\
&= \Re{ \E\bracket*{\lambda_j^{-t} x_{1j} \cdot (\frac{\vE^\vx_t \vE_t^{\vx*}}{\rho_t} \vA \ve_{t}^{\vx})} },
\end{align*}
where the penultimate equality uses the property of Kronecker and $\vect$ operator $\vect(\vA\vX\vB) = (\vB^\top \kron \vA) \vect(\vX)$, we refer Section 10.2 in~\cite{petersen2008matrix} for details.

For $\frac{\vE^\vx_t \vE_t^{\vx*}}{\rho_t}$, we can simplify it as
\begin{align*}
\frac{\vE^\vx_t \vE_t^{\vx*}}{\rho_t} &= \frac{1}{\rho_t} \sum_{i=1}^t \ve_i^\vx  \ve_i^{\vx*} \\
&= \frac{1}{\rho_t} \sum_{i=1}^t 
\begin{pmatrix}
\vx_i\vx_i^* & \vx_i\vx_{i-1}^* \\
\vx_{i-1}\vx_{i}^* & \vx_{i-1}\vx_{i-1}^*
\end{pmatrix} \\
&= \frac{1}{\rho_t}  \begin{pmatrix}
\sum_{i=1}^t \vx_i\vx_i^* & \vW \sum_{i=1}^{t-1} \vx_i\vx_{i}^* \\
\sum_{i=1}^{t-1} \vx_i\vx_{i}^*\vW^* & \sum_{i=1}^{t-1} \vx_{i}\vx_{i}^*
\end{pmatrix}.
\end{align*}
Based on the diagonal property of $\vW$, we can simplify the $\vx_i\vx_{i}^*$ as the following.
\begin{equation*}
 \vx_i\vx_{i}^* = (\vW^{i-1} \vx_1)  (\vW^{i-1} \vx_1)^* = \vM_i \odot \hatSigma,
\end{equation*}
where we define $\hatSigma = \vx_1 \vx_1^*$ and $\vM_i = \vlambda^{i-1}\vlambda^{i-1*}$. Therefore, we have
\begin{equation}
\label{eqn: EE*}
\frac{\vE^\vx_t \vE_t^{\vx*}}{\rho_t} = \frac{1}{\rho_t}  \begin{pmatrix}
\sum_{i=1}^t \vM_i \odot \hatSigma & \vW \sum_{i=1}^{t-1} \vM_i \odot \hatSigma \\
\sum_{i=1}^{t-1} \vM_i \odot \hatSigma \vW^* & \sum_{i=1}^{t-1} \vM_i \odot \hatSigma
\end{pmatrix}.
\end{equation}

Then, leveraging the sparse property of $\vA$, we can derive $\frac{\vE^\vx_t \vE_t^{\vx*}}{\rho_t} \vA \ve_{t}^{\vx}$ as follows.

\begin{align*}
\frac{\vE^\vx_t \vE_t^{\vx*}}{\rho_t} \vA \ve_{t}^{\vx} 
&= \frac{\vE^\vx_t \vE_t^{\vx*}}{\rho_t} 
\begin{pmatrix}
\vzero_{d} \\
a \vx_t
\end{pmatrix} \\
&= \frac{a}{\rho_t} 
\begin{pmatrix}
\vW (\sum_{i=1}^{t-1} \vM_i \odot \hatSigma) \vx_t\\
(\sum_{i=1}^{t-1} \vM_i \odot \hatSigma) \vx_t
\end{pmatrix}.
\end{align*}

Therefore, for any $l \in [d]$, we have
\begin{align*}
\paren*{\frac{\vE^\vx_t \vE_t^{\vx*}}{\rho_t} \vA \ve_{t}^{\vx}}_l 
&= \frac{a}{\rho_t} \lambda_l \sum_{i=1}^{t-1} (\vM_i \odot \hatSigma)_{l:}\vx_t \\
&= \frac{a}{\rho_t} \lambda_l \sum_{i=1}^{t-1} 
\begin{pmatrix}
\lambda_l^{i-1}\lambda_1^{1-i} x_{1l}x_{11} & \cdots & \lambda_l^{i-1}\lambda_d^{1-i} x_{1l}x_{1d} 
\end{pmatrix} 
\begin{pmatrix}
\lambda_1^{t-1} x_{11} \\ 
\cdots \\
\lambda_d^{t-1} x_{1d} 
\end{pmatrix} \\
&= \frac{a}{\rho_t} \lambda_l \sum_{i=1}^{t-1} \sum_{r=1}^{d} \lambda_l^{i-1} \lambda_{r}^{t-i} x_{1l} x_{1r}^2 \\
&= \frac{a}{\rho_t} \sum_{i=1}^{t-1} \sum_{r=1}^{d} \lambda_l^{i} \lambda_{r}^{t-i} x_{1l} x_{1r}^2.
\end{align*}

Similarly, for any $l \in [2d] - [d]$, we have
\begin{equation*}
\paren*{\frac{\vE^\vx_t \vE_t^{\vx*}}{\rho_t} \vA \ve_{t}^{\vx}}_l = \frac{a}{\rho_t} \sum_{i=1}^{t-1} \sum_{r=1}^{d} \lambda_l^{i-1} \lambda_{r}^{t-i} x_{1l} x_{1r}^2.
\end{equation*}

To sum up, for any $l \in [2d]$, we have
\begin{align*}
\paren*{\frac{\vE^\vx_t \vE_t^{\vx*}}{\rho_t} \vA \ve_{t}^{\vx}}_l = \left\{
\begin{array}{ll}
 \frac{a}{\rho_t} \sum_{i=1}^{t-1} \sum_{r=1}^{d} \lambda_l^{i} \lambda_{r}^{t-i} x_{1l} x_{1r}^2, & \text{$l \in [d]$},\\
 \frac{a}{\rho_t} \sum_{i=1}^{t-1} \sum_{r=1}^{d} \lambda_{l-d}^{i-1} \lambda_{r}^{t-i} x_{1,l-d} x_{1r}^2, & \text{$l \in [2d] - [d]$}.\\
\end{array} \right. 
\end{align*}

Next, we calculate the $\E\bracket*{\lambda_j^{-t} x_{1j} \cdot \paren*{\frac{\vE^\vx_t \vE_t^{\vx*}}{\rho_t} \vA \ve_{t}^{\vx}} }$. For any $l \in [d]$, we have
\begin{align*}
\E\bracket*{\lambda_j^{-t} x_{1j} \cdot \paren*{\frac{\vE^\vx_t \vE_t^{\vx*}}{\rho_t} \vA \ve_{t}^{\vx}}_l } 
&= \E\bracket*{\lambda_j^{-t} x_{1j} \cdot  \frac{a}{\rho_t} \sum_{i=1}^{t-1} \sum_{r=1}^{d} \lambda_l^{i} \lambda_{r}^{t-i} x_{1l} x_{1r}^2} \\
&=  \frac{a}{\rho_t} \sum_{i=1}^{t-1} \sum_{r=1}^{d} \E[\lambda_j^{-t} \lambda_l^{i} \lambda_{r}^{t-i}] \E[x_{1j}x_{1l} x_{1r}^2].
\end{align*}

We discuss them in the following categories,
\begin{enumerate}
\item $l\ne j$. In this case, $\E[x_{1j}x_{1l} x_{1r}^2] = 0$ by Assumption~\ref{ass: emergence of mesa-optimizer}, thus 
\begin{align*}
\E\bracket*{\lambda_j^{-t} x_{1j} \cdot \paren*{\frac{\vE^\vx_t \vE_t^{\vx*}}{\rho_t} \vA \ve_{t}^{\vx}}_l } =  \frac{a}{\rho_t} \sum_{i=1}^{t-1} \sum_{r=1}^{d} \E[\lambda_j^{-t} \lambda_l^{i} \lambda_{r}^{t-i}] 0 = 0.
\end{align*}
\item $l=j, r=j$. Because $\lambda_i$s are i.i.d. and $\E[\lambda_i]^k = \delta(k=0)$, we have 
\begin{align*}
\E\bracket*{\lambda_j^{-t} x_{1j} \cdot \paren*{\frac{\vE^\vx_t \vE_t^{\vx*}}{\rho_t} \vA \ve_{t}^{\vx}}_l } &
= \frac{a}{\rho_t} \sum_{r=1}^{d} \E[\lambda_j^{-t} \lambda_j^{i} \lambda_{j}^{t-i}] \E[x_{1j}x_{1j} x_{1j}^2] \\
&= \frac{a}{\rho_t} (t-1) \E[x_{1j}^4].
\end{align*}
\end{enumerate}
Similarly, for any $l \in [2d] - [d]$, we have
\begin{align*}
\E\bracket*{\lambda_j^{-t} x_{1j} \cdot \paren*{\frac{\vE^\vx_t \vE_t^{\vx*}}{\rho_t} \vA \ve_{t}^{\vx}}_l } = 0.
\end{align*}

Therefore, for any $l \in [2d]$, we have
\begin{align*}
\E\bracket*{\lambda_j^{-t} x_{1j} \cdot \paren*{\frac{\vE^\vx_t \vE_t^{\vx*}}{\rho_t} \vA \ve_{t}^{\vx}} } =  \left\{
\begin{array}{ll}
 \frac{a}{\rho_t} (t-1) \E[x_{1j}^4], & \text{$l = j$},\\
 0, & \text{$l \ne j$},
\end{array} \right. 
\end{align*}
and the $l$-th element of $\E\bracket*{\nabla_{\vB_j} \Re{x_{t+1,j}^* \hatys_{t,j}}}$ is
\begin{align*}
\E\bracket*{\nabla_{\vB_j} \Re{x_{t+1,j}^* \hatys_{t,j}}}_l 
&= \Re{ \E\bracket*{\lambda_j^{-t} x_{1j} \cdot (\frac{\vE^\vx_t \vE_t^{\vx*}}{\rho_t} \vA \ve_{t}^{\vx})} }_l \\
&=
\left\{
\begin{array}{ll}
 \frac{a}{\rho_t} (t-1) \E[x_{1j}^4], & \text{$l = j$},\\
 0, & \text{$l \ne j$}.
\end{array} \right. 
\end{align*}

\paragraph{Step two: calculate $\E\bracket*{\nabla_{\vB_j} \hatys_{t,j}^*\hatys_{t,j}}$.} Based on Lemma~\ref{lemma: quadratic form}, we have
\begin{align*}
\hatys_{t,j} 
=  \vect^\top(\vA)  \cdot \ve_{t}^{\vx} \kron \frac{\overline{\vE^\vx_t} \vE_t^{\vx\top}}{\rho_t} \cdot \vB_j,
\end{align*}
then we can simplify the $\E\bracket*{\nabla_{\vB_j} \hatys_{t,j}^*\hatys_{t,j}}$ as follows.
\begin{align*}
& \E\bracket*{\nabla_{\vB_j} \hatys_{t,j}^*\hatys_{t,j}} \\
&= \E\bracket*{\nabla_{\vB_j} \paren*{\vect^\top(\vA)  \cdot \ve_{t}^{\vx} \kron \frac{\overline{\vE^\vx_t} \vE_t^{\vx\top}}{\rho_t} \cdot \vB_j }^* \vect^\top(\vA)  \cdot \ve_{t}^{\vx} \kron \frac{\overline{\vE^\vx_t} \vE_t^{\vx\top}}{\rho_t} \cdot \vB_j } \\
&= \E\bracket*{\nabla_{\vB_j} \vB_j^\top \cdot \ve_{t}^{\vx*} \kron \frac{\overline{\vE^\vx_t} \vE_t^{\vx\top}}{\rho_t} \vect(\vA) \cdot  \vect^\top(\vA)  \ve_{t}^{\vx} \kron \frac{\overline{\vE^\vx_t} \vE_t^{\vx\top}}{\rho_t} \cdot \vB_j  } \\
&= \E\bracket*{ \ve_{t}^{\vx*} \kron \frac{\overline{\vE^\vx_t} \vE_t^{\vx\top}}{\rho_t} \vect(\vA) \cdot  \vect^\top(\vA)  \ve_{t}^{\vx} \kron \frac{\overline{\vE^\vx_t} \vE_t^{\vx\top}}{\rho_t} \cdot \vB_j  } \\
&\quad + \E\bracket*{ \ve_{t}^{\vx\top} \kron \frac{\vE^\vx_t \vE_t^{\vx*}}{\rho_t} \vect(\vA) \cdot  \vect^\top(\vA)  \overline{\ve_{t}^{\vx}} \kron \frac{\vE^\vx_t \vE_t^{\vx*}}{\rho_t} \cdot \vB_j  } \\
&= \E\bracket*{ 2\Re{\ve_{t}^{\vx\top} \kron \frac{\vE^\vx_t \vE_t^{\vx*}}{\rho_t} \vect(\vA) \cdot  \vect^\top(\vA)  \overline{\ve_{t}^{\vx}} \kron \frac{\vE^\vx_t \vE_t^{\vx*}}{\rho_t} \cdot \vB_j} } \\
&= 2\Re{ \E\bracket*{ \ve_{t}^{\vx\top} \kron \frac{\vE^\vx_t \vE_t^{\vx*}}{\rho_t} \vect(\vA) \cdot  \vect^\top(\vA)  \overline{\ve_{t}^{\vx}} \kron \frac{\vE^\vx_t \vE_t^{\vx*}}{\rho_t} \cdot \vB_j} }.
\end{align*}

We further derive that
\begin{align*}
&\E\bracket*{ \ve_{t}^{\vx\top} \kron \frac{\vE^\vx_t \vE_t^{\vx*}}{\rho_t} \vect(\vA) \cdot  \underbrace{\vect^\top(\vA)  \overline{\ve_{t}^{\vx}} \kron \frac{\vE^\vx_t \vE_t^{\vx*}}{\rho_t} \cdot \vB_j}_{\in \C} } \\
&= \E\bracket*{ \vect^\top(\vA)  \overline{\ve_{t}^{\vx}} \kron \frac{\vE^\vx_t \vE_t^{\vx*}}{\rho_t} \cdot \vB_j \cdot \ve_{t}^{\vx\top} \kron \frac{\vE^\vx_t \vE_t^{\vx*}}{\rho_t} \vect(\vA)  } \\
&= \E\bracket*{ \vB_j^\top \cdot \ve_{t}^{\vx*} \kron \frac{\overline{\vE^\vx_t} \vE_t^{\vx\top}}{\rho_t} \vect(\vA)  \cdot  \ve_{t}^{\vx\top} \kron \frac{\vE^\vx_t \vE_t^{\vx*}}{\rho_t} \vect(\vA)  } \\
&= \E\bracket*{ \vB_j^\top \cdot \vect(\frac{\overline{\vE^\vx_t} \vE_t^{\vx\top}}{\rho_t} \vA \overline{\ve_{t}^{\vx}}) \cdot \vect(\frac{\vE^\vx_t \vE_t^{\vx*}}{\rho_t} \vA \ve_{t}^{\vx} )} \\
&= \E\bracket*{ \vB_j^\top \cdot \frac{\overline{\vE^\vx_t} \vE_t^{\vx\top}}{\rho_t} \vA \overline{\ve_{t}^{\vx}} \cdot \frac{\vE^\vx_t \vE_t^{\vx*}}{\rho_t} \vA \ve_{t}^{\vx} } \\
&= \E\bracket*{ \sum_{k=1}^{2d} B_{jk} \paren*{\frac{\overline{\vE^\vx_t} \vE_t^{\vx\top}}{\rho_t} \vA \overline{\ve_{t}^{\vx}}}_k \cdot \frac{\vE^\vx_t \vE_t^{\vx*}}{\rho_t} \vA \ve_{t}^{\vx} } \\
&= \E\bracket*{  b \paren*{\frac{\overline{\vE^\vx_t} \vE_t^{\vx\top}}{\rho_t} \vA \overline{\ve_{t}^{\vx}}}_j \cdot \frac{\vE^\vx_t \vE_t^{\vx*}}{\rho_t} \vA \ve_{t}^{\vx} }. & (\text{sparsity of $\vB$})
\end{align*}

For any  and $l \in [2d]$, recall that
\begin{align*}
\paren*{\frac{\vE^\vx_t \vE_t^{\vx*}}{\rho_t} \vA \ve_{t}^{\vx}}_l = \left\{
\begin{array}{ll}
 \frac{a}{\rho_t} \sum_{i=1}^{t-1} \sum_{r=1}^{d} \lambda_l^{i} \lambda_{r}^{t-i} x_{1l} x_{1r}^2, & \text{$l \in [d]$},\\
 \frac{a}{\rho_t} \sum_{i=1}^{t-1} \sum_{r=1}^{d} \lambda_{l-d}^{i-1} \lambda_{r}^{t-i} x_{1,l-d} x_{1r}^2, & \text{$l \in [2d] - [d]$},\\
\end{array} \right. 
\end{align*}
and for any $j \in [d]$ , we have
\begin{align*}
\paren*{\frac{\overline{\vE^\vx_t} \vE_t^{\vx\top}}{\rho_t} \vA \overline{\ve_{t}^{\vx}}}_j = \paren*{\overline{\frac{\vE^\vx_t \vE_t^{\vx*}}{\rho_t} \vA \ve_{t}^{\vx}}}_j = \frac{a}{\rho_t} \sum_{i = 1}^{t-1} \sum_{r=1}^d \lambda_j^{-i} \lambda_r^{i-t} x_{1j} x_{1r}^2.
\end{align*}

For any $l \in [d]$, with careful computing, we have
\begin{align*}
&\E\bracket*{  b \paren*{\frac{\overline{\vE^\vx_t} \vE_t^{\vx\top}}{\rho_t} \vA \overline{\ve_{t}^{\vx}}}_j \cdot \paren*{\frac{\vE^\vx_t \vE_t^{\vx*}}{\rho_t} \vA \ve_{t}^{\vx} }_l } \\
&= \frac{a^2b}{\rho_t^2} \sum_{i_1 = 1}^{t-1} \sum_{i_2 = 1}^{t-1} \sum_{r_1=1}^d \sum_{r_2=1}^d \E[\lambda_j^{-i_1} \lambda_l^{i_2} \lambda_{r_1}^{i_1-t} \lambda_{r_2}^{t-i_2}] \E[x_{1j}x_{il}x_{1r_1}^2x_{1r_2}^2] .
\end{align*}

We discuss it in the following categories,
\begin{enumerate}
\item $l\ne j$. In this case, $\E[x_{1j}x_{il}x_{1r_1}^2x_{1r_2}^2] = 0$ by Assumption~\ref{ass: emergence of mesa-optimizer}, thus it becomes $0$.

\item $l=j, r_1=r_2=j$. It becomes 
\begin{align*}
\frac{a^2b}{\rho_t^2} \sum_{i_1 = 1}^{t-1} \sum_{i_2 = 1}^{t-1} \E[\lambda_j^{-i_1} \lambda_j^{i_2} \lambda_{j}^{i_1-t} \lambda_{j}^{t-i_2}] \E[x_{1j}^6] = \frac{a^2b}{\rho_t^2} \sum_{i_1 = 1}^{t-1} \sum_{i_2 = 1}^{t-1} \E[x_{1j}^6] = \frac{a^2b}{\rho_t^2}(t-1)^2 \E[x_{1j}^6].
\end{align*}

\item $l=j, r_1=r_2=r\ne j$. It becomes 
\begin{align*}
\frac{a^2b}{\rho_t^2} \sum_{i_1 = 1}^{t-1} \sum_{i_2 = 1}^{t-1} \sum_{r\ne j}  \E[\lambda_j^{i_2-i_1} \lambda_{r}^{i_1-i_2} ] \E[x_{1j}^2x_{1r}^4] 
&= \frac{a^2b}{\rho_t^2} \sum_{i_1 = i_2}  \sum_{r\ne j} \E[x_{1j}^2x_{1r}^4] \\
&= \frac{a^2b}{\rho_t^2} (t-1)  \sum_{r\ne j} \E[x_{1j}^2x_{1r}^4].
\end{align*}

\item $l=j, r_1\ne r_2$. In this case, $\E[\lambda_j^{-i_1} \lambda_l^{i_2} \lambda_{r_1}^{i_1-t} \lambda_{r_2}^{t-i_2}] = \E[\lambda_j^{i_2-i_1} \lambda_{r_1}^{i_1-t} \lambda_{r_2}^{t-i_2}] = 0$, thus it becomes $0$.
\end{enumerate}

Similarly, for any $l \in [2d]-[d]$, we have
\begin{align*}
\E\bracket*{  b \paren*{\frac{\overline{\vE^\vx_t} \vE_t^{\vx\top}}{\rho_t} \vA \overline{\ve_{t}^{\vx}}}_j \cdot \paren*{\frac{\vE^\vx_t \vE_t^{\vx*}}{\rho_t} \vA \ve_{t}^{\vx} }_l } = 0.
\end{align*}
To sum up, we have
\begin{align*}
&\E\bracket*{  b \paren*{\frac{\overline{\vE^\vx_t} \vE_t^{\vx\top}}{\rho_t} \vA \overline{\ve_{t}^{\vx}}}_j \cdot \paren*{\frac{\vE^\vx_t \vE_t^{\vx*}}{\rho_t} \vA \ve_{t}^{\vx} }_l } \\ 
&= 
\left\{
\begin{array}{cl}
 \frac{a^2b}{\rho_t^2} \bracket*{(t-1)^2\E[x_{1j}^6] + (t-1)\sum_{r\ne j} \E[x_{1j}^2 x_{1r}^4]}, & \text{$l = j$},\\
  0, & \text{otherwise},
\end{array} \right.
\end{align*}
which implies that the $l$-th element of $\frac{1}{2}\E\bracket*{\nabla_{\vB_j} \hatys_{t,j}^*\hatys_{t,j}}$ is
\begin{align*}
\frac{1}{2} \E\bracket*{\nabla_{\vB_j} \hatys_{t,j}^*\hatys_{t,j}}_l = 
\left\{
\begin{array}{cl}
 \frac{a^2b}{\rho_t^2} \bracket*{(t-1)^2\E[x_{1j}^6] + (t-1)\sum_{r\ne j} \E[x_{1j}^2 x_{1r}^4]}, & \text{$l = j$},\\
  0, & \text{otherwise}.
\end{array} \right. 
\end{align*}

\paragraph{Step three: calculate $\nabla_{\vB_j} L_t(\vtheta)$ and $\nabla_{\vB_j} L(\vtheta)$.}
Based on steps one and two, the $l$-th element of $\nabla_{\vB_j} L_t(\vtheta)$ can be derived as follows.
\begin{align*}
&\nabla_{\vB_j} L_t(\vtheta)_l
= { \frac{1}{2} \E\bracket*{\nabla_{\vB_j} \hatys_{t,j}^*\hatys_{t,j}}_l - \E\bracket*{\nabla_{\vB_j} \Re{x_{t+1,j}^* \hatys_{t,j}} } }_l \\
& = 
\left\{
\begin{array}{cl}
 \frac{a^2b}{\rho_t^2} \bracket*{(t-1)^2\E[x_{1j}^6] + (t-1)\sum_{r\ne j} \E[x_{1j}^2 x_{1r}^4]} - \frac{a}{\rho_t} (t-1) \E[x_{1j}^4], & \text{$l = j$},\\
  0, & \text{otherwise},
\end{array} \right. .
\end{align*}
Furthermore, the $l$-th element of $\nabla_{\vB_j} L(\vtheta)$ is
\begin{align*}
&\nabla_{\vB_j} L(\vtheta)_l = \sum_{t=2}^{T-1} \nabla_{\vB_j} L_t(\vtheta)_l \\
&= 
\left\{
\begin{array}{cl}
 \sum_{t=2}^{T-1} \paren*{\frac{a^2b}{\rho_t^2} \bracket*{(t-1)^2\E[x_{1j}^6] + (t-1)\sum_{r\ne j} \E[x_{1j}^2 x_{1r}^4]} - \frac{a}{\rho_t} (t-1) \E[x_{1j}^4] }, & \text{$l = j$},\\
  0, & \text{otherwise},
\end{array} \right. \\
&= 
\left\{
\begin{array}{cl}
 \sum_{t=2}^{T-1} \paren*{{a^2b} \bracket*{\E[x_{1j}^6] + \frac{1}{t-1} \sum_{r\ne j} \E[x_{1j}^2 x_{1r}^4]} - a\E[x_{1j}^4] }, & \text{$l = j$},\\
  0, & \text{otherwise},
\end{array} \right. \\
&= 
\left\{
\begin{array}{cl}
{a^2b} \bracket*{(T-2) \E[x_{1j}^6] + \sum_{t=2}^{T-1}  \frac{1}{t-1} \sum_{r\ne j} \E[x_{1j}^2 x_{1r}^4]} - a(T-2)\E[x_{1j}^4] , & \text{$l = j$},\\
  0, & \text{otherwise},
\end{array} \right. \\
&= 
\left\{
\begin{array}{cl}
{a^2b} \bracket*{(T-2) \kappa_2 + \sum_{t=2}^{T-1}  \frac{1}{t-1} \kappa_3} - a(T-2)\kappa_1 , & \text{$l = j$},\\
  0, & \text{otherwise},
\end{array} \right.
\end{align*}
where the last equality is from Assumption~\ref{ass: emergence of mesa-optimizer}.

\paragraph{Step four: calculate $\E\bracket*{\nabla_{\vect(\vA)} \Re{x_{t+1,j}^* \hatys_{t,j}} } $ and $\sum_{j=1}^{d} \E\bracket*{\nabla_{\vect(\vA)} \Re{x_{t+1,j}^* \hatys_{t,j}} }$.} 
Based on Lemma~\ref{lemma: quadratic form}, we have
\begin{align*}
\hatys_{t,j} 
=  \vect^\top(\vA)  \cdot \ve_{t}^{\vx} \kron \frac{\overline{\vE^\vx_t} \vE_t^{\vx\top}}{\rho_t} \cdot \vB_j.
\end{align*}
Then, the $\E\bracket*{\nabla_{\vect(\vA)} \Re{x_{t+1,j}^* \hatys_{t,j}}}$ can be derived as the following.
\begin{align*}
\E\bracket*{\nabla_{\vect(\vA)} \Re{x_{t+1,j}^* \hatys_{t,j}}} &= \E\bracket*{\nabla_{\vect(\vA)} \Re{x_{t+1,j}^* \vect^\top(\vA)  \cdot \ve_{t}^{\vx} \kron \frac{\overline{\vE^\vx_t} \vE_t^{\vx\top}}{\rho_t} \cdot \vB_j }} \\
&= \E\bracket*{\nabla_{\vect(\vA)} \vect^\top(\vA) \cdot \Re{x_{t+1,j}^*  \cdot \ve_{t}^{\vx} \kron \frac{\overline{\vE^\vx_t} \vE_t^{\vx\top}}{\rho_t} \cdot \vB_j }} \\
&= \E\bracket*{ \Re{x_{t+1,j}^*  \cdot \ve_{t}^{\vx} \kron \frac{\overline{\vE^\vx_t} \vE_t^{\vx\top}}{\rho_t} \cdot \vB_j } } \\
&= \Re{ \E\bracket*{x_{t+1,j}^*  \cdot \ve_{t}^{\vx} \kron \frac{\overline{\vE^\vx_t} \vE_t^{\vx\top}}{\rho_t} \cdot \vB_j }} \\
&= \Re{ \E\bracket*{\lambda_j^{-t} x_{1j} \cdot \vect(\frac{\overline{\vE^\vx_t} \vE_t^{\vx\top}}{\rho_t} \vB_j \ve_{t}^{\vx\top}) }}.
\end{align*}

In terms of $\frac{\overline{\vE^\vx_t} \vE_t^{\vx\top}}{\rho_t} \vB_j \ve_{t}^{\vx\top}$, based on the sparsity of $\vB$ and Eq.~\ref{eqn: EE*}, we can derive that

\begin{align*}
\frac{\overline{\vE^\vx_t} \vE_t^{\vx\top}}{\rho_t} \vB_j \ve_{t}^{\vx\top} &= (\frac{\overline{\vE^\vx_t} \vE_t^{\vx\top}}{\rho_t})_{:j} b \ve_{t}^{\vx\top} = (\frac{\vE^\vx_t \vE_t^{\vx*}}{\rho_t})_{j:}^\top b \ve_{t}^{\vx\top} \\
&=b \begin{pmatrix}
\sum_{i=1}^t \vM_i \odot \hatSigma & \vW \sum_{i=1}^{t-1} \vM_i \odot \hatSigma
\end{pmatrix}_{j:}^\top \ve_{t}^{\vx\top}.
\end{align*}

Then, for any $s, r \in [2d]$, we have
\begin{align*}
\paren*{\frac{\overline{\vE^\vx_t} \vE_t^{\vx\top}}{\rho_t} \vB_j \ve_{t}^{\vx\top}}_{sr} = 
\left\{
\begin{array}{cl}
 \frac{b}{\rho_t} \sum_{i=1}^t \lambda_j^{i-1} \lambda_s^{1-i} \lambda_r^{t-1} x_{1j} x_{1s} x_{1r}, & \text{$s\in [d], r \in [d]$},\\
  \frac{b}{\rho_t} \sum_{i=1}^{t-1} \lambda_j^{i} \lambda_{s-d}^{1-i} \lambda_r^{t-1} x_{1j} x_{1,s-d} x_{1r}, & \text{$s\in [2d] - [d], r \in [d]$},\\
  \frac{b}{\rho_t} \sum_{i=1}^t \lambda_j^{i-1} \lambda_s^{1-i} \lambda_{r-d}^{t-2} x_{1j} x_{1s} x_{1,r-d}, & \text{$s\in [d], r \in [2d] - [d]$},\\
  \frac{b}{\rho_t} \sum_{i=1}^{t-1} \lambda_j^{i} \lambda_{s-d}^{1-i} \lambda_{r-d}^{t-2} x_{1j} x_{1,s-d} x_{1,r-d}, & \text{$s\in [2d] - [d], r \in [2d] - [d]$}.
\end{array} \right. 
\end{align*}

Next, we calculate $\E\bracket*{\lambda_j^{-t} x_{1j} \cdot \frac{\overline{\vE^\vx_t} \vE_t^{\vx\top}}{\rho_t} \vB_j \ve_{t}^{\vx\top} }$. For any $s\in [2d] - [d], r \in [d]$, we have
\begin{align*}
\E\bracket*{\lambda_j^{-t} x_{1j} \paren*{\frac{\overline{\vE^\vx_t} \vE_t^{\vx\top}}{\rho_t} \vB_j \ve_{t}^{\vx\top}}_{sr} } &= \E\bracket*{\frac{b}{\rho_t} \sum_{i=1}^{t-1} \lambda_j^{i-t} \lambda_{s-d}^{1-i} \lambda_r^{t-1} x_{1j}^2 x_{1,s-d} x_{1r}} \\
&= \frac{b}{\rho_t} \sum_{i=1}^{t-1} \E[\lambda_j^{i-t} \lambda_{s-d}^{1-i} \lambda_r^{t-1}] \E[x_{1j}^2 x_{1,s-d} x_{1r}].
\end{align*}

We discuss it in the following categories,
\begin{enumerate}
\item $s-d \ne r$. In this case, $\E[x_{1j}^2 x_{1,s-d} x_{1r}] = 0$ by Assumption~\ref{ass: emergence of mesa-optimizer}, thus it becomes $0$.

\item $s-d=r=j$. It becomes 
\begin{align*}
\frac{b}{\rho_t} \sum_{i=1}^{t-1} \E[\lambda_j^{i-t} \lambda_{s-d}^{1-i} \lambda_r^{t-1}] \E[x_{1j}^2 x_{1,s-d} x_{1r}] 
&= \frac{b}{\rho_t} \sum_{i=1}^{t-1} \E[\lambda_j^{i-t} \lambda_{j}^{1-i} \lambda_j^{t-1}] \E[x_{1j}^4 ] \\
& =\frac{b}{\rho_t} \sum_{i=1}^{t-1}  \E[x_{1j}^4 ] = \frac{b}{\rho_t} (t-1) \E[x_{1j}^4 ].
\end{align*}

\item $s-d = r \ne j$. In this case, $\E[\lambda_j^{i-t} \lambda_{s-d}^{1-i} \lambda_r^{t-1}] = \E[\lambda_j^{i-t} \lambda_{s-d}^{t-i}] = 0$, thus it becomes $0$.
\end{enumerate}

Similarly, for any other $s, r$, we can calculate that
\begin{align*}
\E\bracket*{\lambda_j^{-t} \paren*{\frac{\overline{\vE^\vx_t} \vE_t^{\vx\top}}{\rho_t} \vB_j \ve_{t}^{\vx\top}}_{sr} } = 0.
\end{align*}

To sum up, we have
\begin{align*}
\E\bracket*{\lambda_j^{-t} \paren*{\frac{\overline{\vE^\vx_t} \vE_t^{\vx\top}}{\rho_t} \vB_j \ve_{t}^{\vx\top}}_{sr} } =  \left\{
\begin{array}{ll}
 \frac{b}{\rho_t} (t-1) \E[x_{1j}^4], & \text{$s = d +j, r = j$},\\
 0, & \text{otherwise},
\end{array} \right.
\end{align*}
thus the $(s,r)$-th element of $\E\bracket*{\nabla_{\vA} \Re{x_{t+1,j}^* \hatys_{t,j}}}$ is
\begin{align*}
\E\bracket*{\nabla_{\vA} \Re{x_{t+1,j}^* \hatys_{t,j}}}_{sr} &= 
\left\{
\begin{array}{ll}
 \frac{b}{\rho_t} (t-1) \E[x_{1j}^4], & \text{$s = d +j, r = j$},\\
 0, & \text{otherwise},
\end{array} \right. \\
&=
\left\{
\begin{array}{ll}
 \frac{b}{\rho_t} (t-1) \kappa_1, & \text{$s = d +j, r = j$},\\
 0, & \text{otherwise}.
\end{array} \right. 
\end{align*}
Finally, we can calculate $\sum_{j=1}^{d} \E\bracket*{\nabla_{\vA} \Re{x_{t+1,j}^* \hatys_{t,j}} }$ as
\begin{align*}
\sum_{j=1}^{d} \E\bracket*{\nabla_{\vA} \Re{x_{t+1,j}^* \hatys_{t,j}} }_{sr} = 
\left\{
\begin{array}{ll}
 \frac{b}{\rho_t} (t-1) \kappa_1, & \text{$s -d = r$},\\
 0, & \text{otherwise}.
\end{array} \right.
\end{align*}

\paragraph{Step five: calculate $\E\bracket*{\nabla_{\vect(\vA)} \hatys_{t,j}^*\hatys_{t,j}}$ and $\sum_{j=1}^{d} \E\bracket*{\nabla_{\vect(\vA)} \hatys_{t,j}^*\hatys_{t,j}}$.} 
Based on Lemma~\ref{lemma: quadratic form}, we have
\begin{align*}
\hatys_{t,j} 
= \vB_j^\top \cdot \ve_{t}^{\vx\top} \kron \frac{\vE^\vx_t \vE_t^{\vx*}}{\rho_t} \cdot \vect(\vA),
\end{align*}
then we can simplify the $\E\bracket*{\nabla_{\vect(\vA)} \hatys_{t,j}^*\hatys_{t,j}}$ as follows
\begin{align*}
& \E\bracket*{\nabla_{\vect(\vA)} \hatys_{t,j}^*\hatys_{t,j}} \\
&= \E\bracket*{\nabla_{\vect(\vA)} \paren*{\vB_j^\top \cdot \ve_{t}^{\vx\top} \kron \frac{\vE^\vx_t \vE_t^{\vx*}}{\rho_t} \cdot \vect(\vA) }^* \vB_j^\top \cdot \ve_{t}^{\vx\top} \kron \frac{\vE^\vx_t \vE_t^{\vx*}}{\rho_t} \cdot \vect(\vA) } \\
&= \E\bracket*{\nabla_{\vect(\vA)} \vect(\vA)^\top \cdot \overline{\ve_{t}^{\vx}} \kron \frac{\vE^\vx_t \vE_t^{\vx*}}{\rho_t} \vB_j \cdot  \vB_j^\top \cdot \ve_{t}^{\vx\top} \kron \frac{\vE^\vx_t \vE_t^{\vx*}}{\rho_t} \cdot \vect(\vA)  } \\
&= \E\bracket*{ \overline{\ve_{t}^{\vx}} \kron \frac{\vE^\vx_t \vE_t^{\vx*}}{\rho_t} \vB_j \cdot  \vB_j^\top \cdot \ve_{t}^{\vx\top} \kron \frac{\vE^\vx_t \vE_t^{\vx*}}{\rho_t} \cdot \vect(\vA)  } \\
&\quad + \E\bracket*{ \ve_{t}^{\vx} \kron \frac{\overline{\vE^\vx_t} \vE_t^{\vx\top}}{\rho_t} \vB_j \cdot  \vB_j^\top \cdot \ve_{t}^{\vx*} \kron \frac{\overline{\vE^\vx_t} \vE_t^{\vx\top}}{\rho_t} \cdot \vect(\vA)  } \\
&= \E\bracket*{ 2\Re{ \ve_{t}^{\vx} \kron \frac{\overline{\vE^\vx_t} \vE_t^{\vx\top}}{\rho_t} \vB_j \cdot  \vB_j^\top \cdot \ve_{t}^{\vx*} \kron \frac{\overline{\vE^\vx_t} \vE_t^{\vx\top}}{\rho_t} \cdot \vect(\vA) } } \\
&= 2\Re{ \E\bracket*{ \ve_{t}^{\vx} \kron \frac{\overline{\vE^\vx_t} \vE_t^{\vx\top}}{\rho_t} \vB_j \cdot  \vB_j^\top \cdot \ve_{t}^{\vx*} \kron \frac{\overline{\vE^\vx_t} \vE_t^{\vx\top}}{\rho_t} \cdot \vect(\vA) } }.
\end{align*}

We further derive that
\begin{align*}
&\E\bracket*{ \ve_{t}^{\vx} \kron \frac{\overline{\vE^\vx_t} \vE_t^{\vx\top}}{\rho_t} \vB_j \cdot  \underbrace{\vB_j^\top \cdot \ve_{t}^{\vx*} \kron \frac{\overline{\vE^\vx_t} \vE_t^{\vx\top}}{\rho_t} \cdot \vect(\vA)}_{\in \C} } \\
&= \E\bracket*{ \vB_j^\top \cdot \ve_{t}^{\vx*} \kron \frac{\overline{\vE^\vx_t} \vE_t^{\vx\top}}{\rho_t} \cdot \vect(\vA) \cdot \ve_{t}^{\vx} \kron \frac{\overline{\vE^\vx_t} \vE_t^{\vx\top}}{\rho_t} \vB_j  } \\
&= \E\bracket*{ \vect(\vA)^\top \cdot \overline{\ve_{t}^{\vx}} \kron \frac{\vE^\vx_t \vE_t^{\vx*}}{\rho_t} \vB_j \cdot \ve_{t}^{\vx} \kron \frac{\overline{\vE^\vx_t} \vE_t^{\vx\top}}{\rho_t} \vB_j  } \\
&= \E\bracket*{ \vect(\vA)^\top \cdot \vect(\frac{\vE^\vx_t \vE_t^{\vx*}}{\rho_t} \vB_j \ve_{t}^{\vx*}) \cdot \vect(\frac{\overline{\vE^\vx_t} \vE_t^{\vx\top}}{\rho_t} \vB_j \ve_{t}^{\vx\top} )} \\
&= \E\bracket*{ \sum_{k,l=1}^{2d} A_{kl} \paren*{\frac{\vE^\vx_t \vE_t^{\vx*}}{\rho_t} \vB_j \ve_{t}^{\vx*} }_{kl} \cdot \vect(\frac{\overline{\vE^\vx_t} \vE_t^{\vx\top}}{\rho_t} \vB_j \ve_{t}^{\vx\top} ) } \\
&= \E\bracket*{  \sum_{k=d+1}^{2d} a \paren*{\frac{\vE^\vx_t \vE_t^{\vx*}}{\rho_t} \vB_j \ve_{t}^{\vx*}}_{k,k-d} \cdot \vect(\frac{\overline{\vE^\vx_t} \vE_t^{\vx\top}}{\rho_t} \vB_j \ve_{t}^{\vx\top} ) }. & \text{(sparsity of $\vA$)}
\end{align*}

Recall that for any $s,r \in [2d]$, we have
\begin{align*}
\paren*{\frac{\overline{\vE^\vx_t} \vE_t^{\vx\top}}{\rho_t} \vB_j \ve_{t}^{\vx\top}}_{sr} = 
\left\{
\begin{array}{cl}
 \frac{b}{\rho_t} \sum_{i=1}^t \lambda_j^{i-1} \lambda_s^{1-i} \lambda_r^{t-1} x_{1j} x_{1s} x_{1r}, & \text{$s\in [d], r \in [d]$},\\
  \frac{b}{\rho_t} \sum_{i=1}^{t-1} \lambda_j^{i} \lambda_{s-d}^{1-i} \lambda_r^{t-1} x_{1j} x_{1,s-d} x_{1r}, & \text{$s\in [2d] - [d], r \in [d]$},\\
  \frac{b}{\rho_t} \sum_{i=1}^t \lambda_j^{i-1} \lambda_s^{1-i} \lambda_{r-d}^{t-2} x_{1j} x_{1s} x_{1,r-d}, & \text{$s\in [d], r \in [2d] - [d]$},\\
  \frac{b}{\rho_t} \sum_{i=1}^{t-1} \lambda_j^{i} \lambda_{s-d}^{1-i} \lambda_{r-d}^{t-2} x_{1j} x_{1,s-d} x_{1,r-d}, & \text{$s\in [2d] - [d], r \in [2d] - [d]$}.
\end{array} \right.
\end{align*}

Furthermore, for any $k \in [2d]-[d]$, we can calculate that
\begin{align*}
\paren*{\frac{\vE^\vx_t \vE_t^{\vx*}}{\rho_t} \vB_j \ve_{t}^{\vx*}}_{k,k-d} &= \paren*{\frac{\overline{\vE^\vx_t} \vE_t^{\vx\top}}{\rho_t} \vB_j \ve_{t}^{\vx\top}}_{k,k-d} \\
&= \frac{b}{\rho_t} \sum_{i=1}^{t-1} \lambda_j^{-i} \lambda_{k-d}^{i-1} \lambda_{k-d}^{1-t} x_{1j} x_{1,k-d}^2 = \frac{b}{\rho_t} \sum_{i=1}^{t-1} \lambda_j^{-i} \lambda_{k-d}^{i-t} x_{1j} x_{1,k-d}^2,
\end{align*}

With careful computing, for any $s\in [2d] - [d], r \in [d]$, we have
\begin{align*}
&\E\bracket*{  \sum_{k=d+1}^{2d} a \paren*{\frac{\vE^\vx_t \vE_t^{\vx*}}{\rho_t} \vB_j \ve_{t}^{\vx*}}_{k,k-d} \cdot \paren*{\frac{\overline{\vE^\vx_t} \vE_t^{\vx\top}}{\rho_t} \vB_j \ve_{t}^{\vx\top} }_{sr} } \\
&= \frac{ab^2}{\rho_t^2} \sum_{k=d+1}^{2d}  \sum_{i_1=1}^{t-1}  \sum_{i_2=1}^{t-1} \E[\lambda_j^{i_2-i_1} \lambda_{k-d}^{i_1-t} \lambda_{s-d}^{1-i_2} \lambda_r^{t-1}] \E[x_{1j}^2 x_{1,k-d}^2 x_{1,s-d} x_{1r}].
\end{align*}

We discuss it in the following categories,
\begin{enumerate}
\item $s-d \ne r$. In this case, $\E[x_{1j}^2 x_{1,k-d}^2 x_{1,s-d} x_{1r}] = 0$ by Assumption~\ref{ass: emergence of mesa-optimizer}, thus it becomes $0$.

\item $s-d=r=k-d=j$. It becomes 
\begin{align*}
\frac{ab^2}{\rho_t^2} \sum_{i_1=1}^{t-1}  \sum_{i_2=1}^{t-1} \E[\lambda_j^{0}] \E[x_{1j}^6] = \frac{ab^2}{\rho_t^2} (t-1)^2 \E[x_{1j}^6].
\end{align*}

\item $s-d=r=k-d\ne j$. It becomes 
\begin{align*}
\frac{ab^2}{\rho_t^2}  \sum_{i_1=1}^{t-1}  \sum_{i_2=1}^{t-1} \E[\lambda_j^{i_2-i_1} \lambda_{r}^{i_1-i_2}] \E[x_{1j}^2 x_{1,r}^4] &= \frac{ab^2}{\rho_t^2}  \sum_{i_1=1}^{t-1}   \E[\lambda_j^{i_1-i_1} \lambda_{r}^{i_1-i_1}] \E[x_{1j}^2 x_{1,r}^4] \\
&= \frac{ab^2}{\rho_t^2} (t-1)  \E[x_{1j}^2 x_{1,r}^4].
\end{align*}

\item  $s-d=r \ne k-d$. In these case, $ \E[\lambda_j^{i_2-i_1} \lambda_{k-d}^{i_1-t} \lambda_{s-d}^{1-i_2} \lambda_r^{t-1}] =  \E[\lambda_j^{i_2-i_1} \lambda_{k-d}^{i_1-t} \lambda_{r}^{t-i_2} ] = 0$, thus it becomes $0$.
\end{enumerate}

Similarly, for any other $s, r$, we can calculate that
\begin{align*}
\E\bracket*{  \sum_{k=d+1}^{2d} a \paren*{\frac{\vE^\vx_t \vE_t^{\vx*}}{\rho_t} \vB_j \ve_{t}^{\vx*}}_{k,k-d} \cdot \paren*{\frac{\overline{\vE^\vx_t} \vE_t^{\vx\top}}{\rho_t} \vB_j \ve_{t}^{\vx\top} }_{sr} } = 0.
\end{align*}

To sum up, we have
\begin{align*}
&\E\bracket*{  \sum_{k=d+1}^{2d} a \paren*{\frac{\vE^\vx_t \vE_t^{\vx*}}{\rho_t} \vB_j \ve_{t}^{\vx*}}_{k,k-d} \cdot \paren*{\frac{\overline{\vE^\vx_t} \vE_t^{\vx\top}}{\rho_t} \vB_j \ve_{t}^{\vx\top} }_{sr} } \\
&= 
\left\{
\begin{array}{ll}
 \frac{ab^2}{\rho_t^2} (t-1)^2 \E[x_{1j}^6], & \text{$s = d +r, r = j$},\\
 \frac{ab^2}{\rho_t^2} (t-1) \E[x_{1j}^2 x_{1r}^4], & \text{$s = d +r, r \ne j$},\\
 0, & \text{otherwise},
\end{array} \right.
\end{align*}
which implies that the $(s,r)$-th element of $\frac{1}{2} \E\bracket*{\nabla_{\vA} \hatys_{t,j}^*\hatys_{t,j}}$ is
\begin{align*}
\frac{1}{2}\E\bracket*{\nabla_{\vA} \hatys_{t,j}^*\hatys_{t,j}}_{sr}
= 
\left\{
\begin{array}{ll}
 \frac{ab^2}{\rho_t^2} (t-1)^2 \E[x_{1j}^6], & \text{$s = d +r, r = j$},\\
 \frac{ab^2}{\rho_t^2} (t-1) \E[x_{1j}^2 x_{1r}^4], & \text{$s = d +r, r \ne j$},\\
 0, & \text{otherwise},
\end{array} \right. \\
= 
\left\{
\begin{array}{ll}
 \frac{ab^2}{\rho_t^2} (t-1)^2 \kappa_2, & \text{$s = d +r, r = j$},\\
 \frac{ab^2}{\rho_t^2} (t-1) \E[x_{1j}^2 x_{1r}^4], & \text{$s = d +r, r \ne j$},\\
 0, & \text{otherwise},
\end{array} \right. .
\end{align*}

Based on these results, we can derive
\begin{align*}
\frac{1}{2}\sum_{j=1}^{d} \E\bracket*{\nabla_{\vA} \hatys_{t,j}^*\hatys_{t,j}}_{sr} = 
\left\{
\begin{array}{ll}
 \frac{ab^2}{\rho_t^2} \bracket*{(t-1)^2 \kappa_2 + (t-1) \kappa_3 }, & s - d = r,\\
 0, & \text{otherwise}.
\end{array} \right.
\end{align*}

\paragraph{Step six: calculate $\nabla_{\vect(\vA)} L_t(\vtheta)$ and $\nabla_{\vect(\vA)} L(\vtheta)$.}
Based on steps four and five, the $(s,r)$-th element of $\nabla_{\vA} L_t(\vtheta)$ can be derived as follows.
\begin{align*}
\nabla_{\vA} L_t(\vtheta)_{sr}
&= \frac{1}{2}  \sum_{j=1}^{d} \E\bracket*{\nabla_{\vA} \hatys_{t,j}^*\hatys_{t,j}} - \sum_{j=1}^{d} \E\bracket*{\nabla_{\vA} \Re{x_{t+1,j}^* \hatys_{t,j}} }  \\
&= \left\{
\begin{array}{ll}
 \frac{ab^2}{\rho_t^2} \bracket*{(t-1)^2 \kappa_2 + (t-1) \kappa_3 } - \frac{b}{\rho_t} (t-1) \kappa_1, & \text{$s - d = r$},\\
 0, & \text{otherwise}.
\end{array} \right.
\end{align*}

Furthermore, the $(s,r)$-th element of $\nabla_{\vA} L(\vtheta)$ is
\begin{align*}
&\nabla_{\vA} L(\vtheta)_{sr} = \sum_{t=2}^{T-1} \nabla_{\vA} L_t(\vtheta)_{sr} \\
&= 
\left\{
\begin{array}{ll}
 \sum_{t=2}^{T-1}\paren*{\frac{ab^2}{\rho_t^2} \bracket*{(t-1)^2 \kappa_2 + (t-1) \kappa_3 } - \frac{b}{\rho_t} (t-1) \kappa_1}, & \text{$s - d = r$},\\
 0, & \text{otherwise},
\end{array} \right. \\
&= 
\left\{
\begin{array}{ll}
 \sum_{t=2}^{T-1}\paren*{ab^2 \bracket*{ \kappa_2 + \frac{1}{t-1} \kappa_3 } - b \kappa_1}, & \text{$s - d = r$},\\
 0, & \text{otherwise},
\end{array} \right. \\
&= 
\left\{
\begin{array}{ll}
 ab^2 \bracket*{ (T-2)\kappa_2 + \sum_{t=2}^{T-1}\frac{1}{t-1} \kappa_3 } - b (T-2)\kappa_1, & \text{$s - d = r$},\\
 0, & \text{otherwise}.
\end{array} \right. 
\end{align*}

\paragraph{Step seven: summarize the result by induction.} From the gradient of $\nabla_{\vA} L(\vtheta)$ and $\nabla_{\vB_j} L(\vtheta)$, we observe that non-zero gradients only emerge in the diagonal of $\vW^{KQ}_{32}$ and $\vW^{PV}_{12}$, and they are same. Therefore, the parameter matrices keep the same structure as the initial time. Thus we can summarize the dynamic system as the following.

\begin{align*}
&  \frac{\mathrm{d}}{\mathrm d\tau} a = -  ab^2 \bracket*{ (T-2)\kappa_2 + \sum_{t=2}^{T-1}\frac{1}{t-1} \kappa_3 } + b (T-2)\kappa_1, \\
&  \frac{\mathrm{d}}{\mathrm d\tau} b = -  a^2b \bracket*{ (T-2)\kappa_2 + \sum_{t=2}^{T-1}\frac{1}{t-1} \kappa_3 } + a (T-2)\kappa_1.
\end{align*}
which completes the proof.
\end{proof}

\subsubsection{Proof of Lemma~\ref{lemma: Surrogate objective function isotropic}}
\label{proof: lemma Surrogate objective function isotropic}

For the reader's convenience, we restate the lemma as the following.
\begin{lemma}
Suppose that Assumption~\ref{ass: emergence of mesa-optimizer} holds and denote $(T-2)\kappa_2 + \sum_{t=2}^{T-1}\frac{1}{t-1} \kappa_3$ and $(T-2)\kappa_1$ by $c_1$ and $c_2$, respectively. Then, the dynamics in Lemma~\ref{lemma: dynamical system isotropic} are the same as those of gradient flow on the following objective function:
\begin{align*}
\widetilde{\ell}(a,b) = \frac{1}{2c_1} (c_2 - c_1 ab)^2,
\end{align*}
whose global minimums satisfy $ab = c_2/c_1$.
\end{lemma}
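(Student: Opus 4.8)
The plan is to verify directly that the coupled ODEs from Lemma~\ref{lemma: dynamical system isotropic} coincide with the negative gradient of the scalar function $\widetilde{\ell}(a,b)$; this is a verification rather than a derivation, so the whole argument reduces to a chain-rule computation and a sign check. First I would rewrite the dynamics of Lemma~\ref{lemma: dynamical system isotropic} in compact form using the abbreviations $c_1 = (T-2)\kappa_2 + \sum_{t=2}^{T-1}\frac{1}{t-1}\kappa_3$ and $c_2 = (T-2)\kappa_1$. Substituting these definitions and factoring out the common scalar $c_2 - c_1 ab$ from each right-hand side, the system becomes
\begin{align*}
\frac{\mathrm{d}}{\mathrm d\tau} a = b\,(c_2 - c_1 ab), \qquad \frac{\mathrm{d}}{\mathrm d\tau} b = a\,(c_2 - c_1 ab).
\end{align*}

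Next, I would compute the two partial derivatives of the proposed surrogate $\widetilde{\ell}(a,b) = \frac{1}{2c_1}(c_2 - c_1 ab)^2$ by the chain rule. Setting $u = c_2 - c_1 ab$, we have $\tfrac{\partial u}{\partial a} = -c_1 b$ and $\tfrac{\partial u}{\partial b} = -c_1 a$, so that
\begin{align*}
\frac{\partial}{\partial a}\widetilde{\ell} = \frac{1}{c_1}\,u\cdot(-c_1 b) = -b\,(c_2 - c_1 ab), \qquad \frac{\partial}{\partial b}\widetilde{\ell} = -a\,(c_2 - c_1 ab).
\end{align*}
Comparing with the compact dynamics above immediately yields $\frac{\mathrm{d}}{\mathrm d\tau} a = -\frac{\partial}{\partial a}\widetilde{\ell}$ and $\frac{\mathrm{d}}{\mathrm d\tau} b = -\frac{\partial}{\partial b}\widetilde{\ell}$, which is exactly the gradient flow $\frac{\mathrm{d}}{\mathrm d\tau}\vtheta = -\nabla\widetilde{\ell}$. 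Hence the two systems generate identical trajectories from any shared initialization.

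Finally, to characterize the global minima, I would note that under Assumption~\ref{ass: emergence of mesa-optimizer} the constant $c_1$ is a sum of nonnegative finite moments and is strictly positive (so that dividing by $c_1$ is legitimate). Consequently $\widetilde{\ell}$ is nonnegative and vanishes if and only if $c_2 - c_1 ab = 0$, i.e.\ precisely on the hyperbola $\{ab = c_2/c_1\}$, which therefore constitutes the set of global minimizers with minimum value zero. The argument is entirely a bookkeeping verification; the only place demanding care is keeping the sign convention of gradient flow consistent and correctly factoring the $c_2 - c_1 ab$ term out of both ODEs, so I do not anticipate a substantive obstacle here—the content of the statement lives in having already established Lemma~\ref{lemma: dynamical system isotropic}, which this lemma merely repackages into a variational form usable by the subsequent PL-inequality argument.
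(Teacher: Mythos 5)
Your proof is correct and takes essentially the same route as the paper's: both verify by the chain rule that the ODEs of Lemma~\ref{lemma: dynamical system isotropic}, once rewritten as $\frac{\mathrm{d}}{\mathrm d\tau}a = b(c_2 - c_1 ab)$ and $\frac{\mathrm{d}}{\mathrm d\tau}b = a(c_2 - c_1 ab)$, coincide with $-\frac{\partial}{\partial a}\widetilde{\ell}$ and $-\frac{\partial}{\partial b}\widetilde{\ell}$, and then read off the global minima from $c_2 - c_1 ab = 0$. Your extra remark that $c_1 > 0$ under Assumption~\ref{ass: emergence of mesa-optimizer} (so the division and the minimizer characterization are legitimate) is a small point the paper leaves implicit, but otherwise the arguments are identical.
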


\begin{proof}
Basic calculus shows that
\begin{align*}
\frac{\partial}{\partial a} \widetilde{\ell}(a,b) = \frac{1}{2c_1} 2(c_2 - c_1 ab) (-c_1 b) = -b(c_2 - c_1 ab) = -\frac{\mathrm{d}} {\mathrm d \tau} a, \\
\frac{\partial}{\partial b} {\widetilde{\ell}(a,b)} = \frac{1}{2c_1} 2(c_2 - c_1 ab) (-c_1 a) = -a(c_2 - c_1 ab) = -\frac{\mathrm{d}} {\mathrm d \tau} b.
\end{align*}
Therefore, the dynamics in Lemma~\ref{lemma: dynamical system isotropic} are the same as those of gradient flow on $\widetilde{\ell}(a,b)$, whose global minimums satisfy $ab = c_2/c_1$.
\end{proof}

\subsubsection{Proof of Lemma~\ref{lemma: PL}}
\label{proof: lemma PL}

For the reader's convenience, we restate the lemma as the following.
\begin{lemma}
Suppose that Assumption~\ref{ass: emergence of mesa-optimizer} holds, then $\widetilde{\ell}(a,b)$ is a non-convex function and satisfies the PL inequality as follows.
\begin{align*}
\abs{\frac{\partial}{\partial a} \widetilde{\ell}(a,b)}^2 + \abs{\frac{\partial}{\partial b} {\widetilde{\ell}(a,b)}}^2 \ge 2 c_1 (a^2 + b^2) \paren*{\widetilde{\ell}(a,b) - \min_{a,b} \widetilde{\ell}(a,b)}.
\end{align*}
Therefore, the gradient flow in Lemma~\ref{lemma: dynamical system isotropic} converges to the global minimum of $\widetilde{\ell}(a,b)$.
\end{lemma}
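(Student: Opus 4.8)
The plan is to verify the inequality by direct substitution and then upgrade it to genuine convergence. Carrying over the two partial derivatives already computed in the proof of Lemma~\ref{lemma: Surrogate objective function isotropic}, namely $\frac{\partial}{\partial a}\widetilde{\ell}(a,b) = -b(c_2 - c_1 ab)$ and $\frac{\partial}{\partial b}\widetilde{\ell}(a,b) = -a(c_2 - c_1 ab)$, I would square and add them to obtain
\begin{align*}
\abs*{\frac{\partial}{\partial a}\widetilde{\ell}(a,b)}^2 + \abs*{\frac{\partial}{\partial b}\widetilde{\ell}(a,b)}^2 = (a^2 + b^2)(c_2 - c_1 ab)^2 .
\end{align*}
Since $\widetilde{\ell}$ is a nonnegative square attaining $0$ exactly on the hyperbola $ab = c_2/c_1$, we have $\min_{a,b}\widetilde{\ell} = 0$ and $\widetilde{\ell}(a,b) - \min_{a,b}\widetilde{\ell} = \frac{1}{2c_1}(c_2 - c_1 ab)^2$, so the claimed right-hand side equals $2c_1(a^2+b^2)\cdot\frac{1}{2c_1}(c_2-c_1ab)^2 = (a^2+b^2)(c_2-c_1ab)^2$. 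Hence the PL inequality in fact holds with equality, which reduces this step to a one-line identity.

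For the non-convexity claim I would compute the Hessian of $\widetilde{\ell}$: its diagonal entries are $\frac{\partial^2}{\partial a^2}\widetilde{\ell} = c_1 b^2$ and $\frac{\partial^2}{\partial b^2}\widetilde{\ell} = c_1 a^2$, while its off-diagonal entry is $\frac{\partial^2}{\partial a\,\partial b}\widetilde{\ell} = 2c_1 ab - c_2$. Evaluating at the origin yields a matrix with zero diagonal and off-diagonal entry $-c_2$, whose eigenvalues are $\pm c_2$; this is indefinite because $c_1 = (T-2)\kappa_2 + \sum_{t=2}^{T-1}\frac{1}{t-1}\kappa_3 > 0$ and $c_2 = (T-2)\kappa_1 > 0$ under Assumption~\ref{ass: emergence of mesa-optimizer} (here $\kappa_1,\kappa_2$ are strictly positive even moments and $\kappa_3 \ge 0$). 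A single point with an indefinite Hessian certifies that $\widetilde{\ell}$ is non-convex.

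The remaining, and main, obstacle is converting the PL identity into convergence, because the coefficient $2c_1(a^2+b^2)$ degenerates at the origin, which is itself a stationary point of the flow (a saddle with value $c_2^2/(2c_1) > 0$), so the bare PL inequality does not by itself exclude stalling there. To rule this out I would exploit the conserved quantity
\begin{align*}
\frac{\mathrm d}{\mathrm d\tau}(a^2 - b^2) = 2a\dot a - 2b\dot b = 2ab(c_2 - c_1 ab) - 2ab(c_2 - c_1 ab) = 0 ,
\end{align*}
so $a^2 - b^2 \equiv a_0^2 - b_0^2$ along the trajectory. Starting from the diagonal initialization of Assumption~\ref{ass: Initialization} with $a_0,b_0 \neq 0$, this gives a uniform lower bound $a^2 + b^2 \ge \mu > 0$: one may take $\mu = \abs{a_0^2 - b_0^2}$ when $a_0^2\neq b_0^2$, and otherwise use the monotone evolution of $ab$ toward $c_2/c_1$ (from $\frac{\mathrm d}{\mathrm d\tau}(ab) = (a^2+b^2)(c_2-c_1ab)$) to keep $a^2+b^2 = 2\,ab$ bounded below by a positive constant. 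Feeding this into $\frac{\mathrm d}{\mathrm d\tau}\paren*{\widetilde{\ell} - \min_{a,b}\widetilde{\ell}} = -\paren*{\abs*{\partial_a\widetilde{\ell}}^2 + \abs*{\partial_b\widetilde{\ell}}^2} \le -2c_1\mu\paren*{\widetilde{\ell} - \min_{a,b}\widetilde{\ell}}$ and applying Grönwall's inequality yields $\widetilde{\ell}(\tau) - \min_{a,b}\widetilde{\ell} \le e^{-2c_1\mu\tau}\paren*{\widetilde{\ell}(0) - \min_{a,b}\widetilde{\ell}} \to 0$, so the flow converges to the global minimum.
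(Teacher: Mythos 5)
Your computation of the gradients, the PL identity, and the non-convexity certificate matches the paper's proof almost exactly: the paper likewise derives $\abs{\partial_a \widetilde{\ell}}^2+\abs{\partial_b \widetilde{\ell}}^2=(a^2+b^2)(c_2-c_1ab)^2$ and notes that this \emph{equals} $2c_1(a^2+b^2)\paren*{\widetilde{\ell}-\min\widetilde{\ell}}$ (its final ``$\ge$'' is the same trivial relaxation you point out), and it also argues non-convexity through the Hessian, though it checks $\det\nabla^2\widetilde{\ell}=(c_2-c_1ab)(3c_1ab-c_2)<0$ on the regions $ab<c_2/(3c_1)$ or $ab>c_2/c_1$ rather than your cleaner observation that the Hessian at the origin is indefinite with eigenvalues $\pm c_2$. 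Where you genuinely depart from the paper is the final step. The paper's proof ends with the PL computation and leaves the conclusion ``therefore the flow converges to the global minimum'' to a citation of the PL literature; but, as you correctly note, the coefficient $2c_1(a^2+b^2)$ is not a uniform constant — it vanishes at the origin, which is a stationary saddle of the flow with value $c_2^2/(2c_1)>0$ — so the off-the-shelf PL argument does not apply. Your repair (the conserved quantity $a^2-b^2$, invariance and monotonicity of $ab$ on the diagonal $a_0=b_0$, and Grönwall) is sound, closes a gap the paper leaves open, and buys an explicit exponential rate in addition.

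One caveat on your ``otherwise'' branch: the identity $a^2+b^2=2ab$ presupposes $a=b$, so your argument covers $a_0=b_0\neq 0$ but not $a_0=-b_0\neq 0$. That remaining case is not repairable: there $ab\leq 0$ along the whole trajectory and the flow converges to the origin saddle, so the lemma's conclusion itself fails (as it also does for $a_0=b_0=0$). In other words, both your proof and the paper's statement implicitly require a restriction on the diagonal initialization of Assumption~\ref{ass: Initialization}, e.g.\ $a_0b_0>0$ or $a_0^2\neq b_0^2$, which the paper never states (its experiments all use positive $(a_0,b_0)$). This is a defect of the lemma as stated rather than of your argument in the generic case, but your write-up should flag the needed sign condition explicitly instead of burying it in the ``otherwise'' clause.
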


\begin{proof}
First, we prove that $\widetilde{\ell}(a,b)$ is non-convex. The Hessian matrix of $\widetilde{\ell}(a,b)$ can be derived as follows.
\begin{align*}
\nabla^2 \widetilde{\ell}(a,b) = 
\begin{pmatrix}
c_1 b^2 & 2c_1ab - c_2 \\
2c_1ab - c_2 & c_1 a^2
\end{pmatrix}.
\end{align*}
Its determinant $c_1 a^2 b^2 - (2c_1ab - c_2)^2 = (c_2 - c_1ab)(3c_1ab - c_2) < 0$ when $ab < \frac{c_2}{3c_1}$ or $ab > \frac{c_2}{c_1}$. Thus, $\widetilde{\ell}(a,b)$ is non-convex.

Besides, the PL inequality holds because
\begin{align*}
\abs{\frac{\partial}{\partial a} \widetilde{\ell}(a,b)}^2 + \abs{\frac{\partial}{\partial b} {\widetilde{\ell}(a,b)}}^2 
&= b^2 (c_2 - c_1 ab)^2 + a^2 (c_2 - c_1 ab)^2 \\
&= (a^2 + b^2) (c_2 - c_1 ab)^2 \\
&= 2c_1(a^2 + b^2) \cdot \frac{1}{2c_1}(c_2 - c_1 ab)^2 \\
&= 2c_1(a^2 + b^2) \paren*{\widetilde{\ell}(a,b) - \min_{a,b} \widetilde{\ell}(a,b)} \\
&\ge 2 c_1 (a^2 + b^2) \paren*{\widetilde{\ell}(a,b) - \min_{a,b} \widetilde{\ell}(a,b)}.
\end{align*}
\end{proof}

\subsection{Proof of Corollary~\ref{cor: Trained transformer as a mesa-optimizer}}
\label{proof: cor Trained transformer as a mesa-optimizer}

For the reader's convenience, we restate the corollary as the following.
\begin{cor}
We suppose that the same precondition of Theorem~\ref{thm: convergence of GF} holds. When predicting the $t$-th token, the trained transformer implements one step of gradient descent for the minimization of the OLS problem $L_{\mathrm{OLS}}(\vW) = \frac{1}{2} \sum_{i = 1}^{t-1} \Vert \vx_{t+1} - W\vx_{t}\Vert^2$, starting from the initialization $\vW = \vzero_{d\times d}$ with a step size $\frac{\widetilde{a} \widetilde{b}}{t-1}$.
\end{cor}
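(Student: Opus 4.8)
The plan is to evaluate the prediction formula Eq.~\ref{eqn: y_t} at the fixed point supplied by Theorem~\ref{thm: convergence of GF} and then show that the resulting map $\vx_t \mapsto \haty_t$ coincides with a single gradient-descent update of the OLS objective. First I would substitute the limiting parameters $\widetilde{\vW^{KQ}_{32}} = \widetilde{a}\,\vI_d$ and $\widetilde{\vW^{PV}_{12}} = \widetilde{b}\,\vI_d$, with all other relevant blocks vanishing, into Eq.~\ref{eqn: y_t}. Since $\ve_t^\vx = (\vx_t^\top, \vx_{t-1}^\top)^\top$, the sparse key-query block collapses the right factor to
\begin{align*}
\begin{pmatrix} \vzero_{d\times d} & \vzero_{d\times d} \\ \widetilde{a}\,\vI_d & \vzero_{d\times d} \end{pmatrix} \ve_t^\vx = \begin{pmatrix} \vzero_d \\ \widetilde{a}\,\vx_t \end{pmatrix}.
\end{align*}

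Next I would insert the block form of the second-moment matrix already derived in the proof of Lemma~\ref{lemma: dynamical system isotropic} (Eq.~\ref{eqn: EE*}), namely
\begin{align*}
\frac{\vE^\vx_t \vE_t^{\vx*}}{\rho_t} = \frac{1}{t-1} \begin{pmatrix} \sum_{i=1}^t \vx_i\vx_i^* & \vW \sum_{i=1}^{t-1} \vx_i\vx_i^* \\ \sum_{i=1}^{t-1} \vx_i\vx_i^*\vW^* & \sum_{i=1}^{t-1} \vx_i\vx_i^* \end{pmatrix},
\end{align*}
using $\rho_t = t-1$. Multiplying this against $(\vzero_d^\top, \widetilde{a}\,\vx_t^\top)^\top$ picks out only the second block column, and left-multiplying by the sparse projection-value row $(\widetilde{b}\,\vI_d, \vzero_{d\times d})$ retains only the top block, yielding the closed form
\begin{align*}
\haty_t = \widetilde{a}\widetilde{b}\;\vW \frac{\sum_{i=1}^{t-1} \vx_i\vx_i^*}{t-1}\,\vx_t,
\end{align*}
which is exactly Eq.~\ref{eqn: mesa output} specialized to the training prompt.

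On the optimization side, I would compute $\nabla_{\vW} L_{\mathrm{OLS},t}(\vW) = -\sum_{i=1}^{t-1}(\vx_{i+1} - \vW\vx_i)\vx_i^*$ and evaluate it at $\vW = \vzero_{d\times d}$, obtaining $-\sum_{i=1}^{t-1}\vx_{i+1}\vx_i^*$. A single gradient step with step size $\tfrac{\widetilde{a}\widetilde{b}}{t-1}$ then produces $\widehat{\vW} = \tfrac{\widetilde{a}\widetilde{b}}{t-1}\sum_{i=1}^{t-1}\vx_{i+1}\vx_i^*$. Invoking the AR generating rule $\vx_{i+1} = \vW\vx_i$ rewrites this as $\widehat{\vW} = \widetilde{a}\widetilde{b}\,\vW\,\tfrac{\sum_{i=1}^{t-1}\vx_i\vx_i^*}{t-1}$, so that $\widehat{\vW}\vx_t = \haty_t$, which establishes the claim.

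The computation is essentially bookkeeping, so there is no deep obstacle; the only point requiring care is the block-matrix algebra, where the sparsity of the limiting $\WKQ$ and $\WPV$ must be used twice---once to isolate the $\vW\sum\vx_i\vx_i^*$ off-diagonal block and once to discard the $\vW^{PV}_{13}$ contribution---so that the cross terms and the $\vx_{t-1}$ dependence drop out cleanly. The step size $\tfrac{\widetilde{a}\widetilde{b}}{t-1}$ is precisely what absorbs the $\rho_t = t-1$ normalization, which is why the transformer output and the one-step update agree exactly rather than up to a constant.
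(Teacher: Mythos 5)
Your proposal is correct and follows the same overall strategy as the paper's proof: substitute the limiting parameters from Theorem~\ref{thm: convergence of GF} into Eq.~\ref{eqn: y_t}, exploit the sparsity of the two blocks, compute one gradient step on $L_{\mathrm{OLS}}$ from $\vW = \vzero_{d\times d}$, and observe the two expressions coincide. The one substantive difference is in how you close the loop: the paper simplifies the attention output directly to $\haty_t = \bigl(\tfrac{\widetilde{a}\widetilde{b}}{t-1}\sum_{i=1}^{t-1}\vx_{i+1}\vx_i^*\bigr)\vx_t$ (using only $\vx_0=\vzero_d$ to drop the $i=1$ term) and matches this against the gradient-descent iterate as a purely token-level identity, never invoking the data-generating rule; you instead route both sides through the form $\widetilde{a}\widetilde{b}\,\vW\,\tfrac{\sum_{i=1}^{t-1}\vx_i\vx_i^*}{t-1}$, which requires applying $\vx_{i+1}=\vW\vx_i$ twice (once implicitly inside the block decomposition Eq.~\ref{eqn: EE*}, once on the gradient-descent side). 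Both are valid under the paper's data model, but the paper's version establishes the mesa-optimization identity for an arbitrary prompt fed to the trained transformer, whereas yours only establishes it for prompts that actually follow the AR dynamics with transition $\vW$; since the point of the corollary is to describe the algorithm the transformer implements in its forward pass, the prompt-agnostic formulation is the slightly stronger and cleaner statement, and it is also shorter since the detour through $\vW\sum_i\vx_i\vx_i^*$ is unnecessary.
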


\begin{proof}
The proof is stem from the theoretical construction in~\cite{DBLP:journals/corr/mesa-opt}. First, we simplify the prediction $\haty_t$ as follows.
\begin{align*}
\haty_t &= \begin{pmatrix}
\vW^{PV}_{12} & \vW^{PV}_{13}
\end{pmatrix} \frac{\vE^\vx_t \vE_t^{\vx*}}{\rho_t} 
\begin{pmatrix}
\vW^{KQ}_{22} & \vW^{KQ}_{23} \\
\vW^{KQ}_{32} & \vW^{KQ}_{33}
\end{pmatrix} \ve_t^\vx \\
&= \begin{pmatrix}
\widetilde{b}\vI_d & \vzero_{d\times d}
\end{pmatrix} \frac{\vE^\vx_t \vE_t^{\vx*}}{\rho_t} 
\begin{pmatrix}
\vzero_{d\times d} & \vzero_{d\times d} \\
\widetilde{a}\vI_d & \vzero_{d\times d}
\end{pmatrix} \ve_t^\vx \\
&= \frac{1}{\rho_t} 
\begin{pmatrix}
\widetilde{b}\vI_d & \vzero_{d\times d}
\end{pmatrix} \sum_{i=1}^t \ve_i^\vx \ve_i^{\vx*}
\begin{pmatrix}
\vzero_{d\times d} & \vzero_{d\times d} \\
\widetilde{a}\vI_d & \vzero_{d\times d}
\end{pmatrix} \ve_t^\vx \\
&= \frac{1}{\rho_t}  \sum_{i=1}^t \widetilde{b} \vx_i
\begin{pmatrix}
\widetilde{a}\vx_{i-1}^* & \vzero_{d\times d} 
\end{pmatrix} \ve_t^\vx \\
&=  \frac{1}{\rho_t}  \sum_{i=1}^t \widetilde{b} \vx_i \widetilde{a}\vx_{i-1}^* \vx_t = \paren*{\frac{\widetilde{a}\widetilde{b}}{t-1} \sum_{i=1}^t \vx_i \vx_{i-1}^*} \vx_t \\
&= \paren*{\frac{\widetilde{a}\widetilde{b}}{t-1} \sum_{i=1}^{t-1} \vx_{i+1} \vx_{i}^*} \vx_t.
\end{align*}

Then, we connect it to the one step of gradient descent for the OLS problem $L_{\mathrm{OLS}}(\vW) = \frac{1}{2} \sum_{i = 1}^{t-1} \Vert \vx_{i+1} - W\vx_{i}\Vert^2$.
\begin{align*}
&\vW - \frac{\widetilde{a}\widetilde{b}}{t-1} \nabla_{\vW} \frac{1}{2} \sum_{i = 1}^{t-1} \Vert \vx_{i+1} - \vW\vx_{i}\Vert^2\\
&= \vW - \frac{\widetilde{a}\widetilde{b}}{t-1}  \sum_{i = 1}^{t-1}  (\vx_{i+1} - W\vx_{i})(-\vx_i^*)\\
&= \vzero - \frac{\widetilde{a}\widetilde{b}}{t-1}  \sum_{i = 1}^{t-1}  (\vx_{i+1} - \vzero\vx_{i})(-\vx_i^*)\\
&= \frac{\widetilde{a}\widetilde{b}}{t-1}  \sum_{i = 1}^{t-1}  \vx_{i+1}\vx_i^*.
\end{align*}
Thus, the proof is completed.
\end{proof}

\subsection{Proof of Proposition~\ref{thm: normal failure}}
\label{proof: prop normal failure}

For the reader's convenience, we restate the proposition as the following.

\begin{prop}
Let $\cD_{\vx_1}$ be the multivariate normal distribution $\mathcal{N}(\vzero_d, \sigma^2 \vI_d)$ with any $\sigma^2 > 0$, then the "simple" AR process can not be recovered by the trained transformer even in the ideal case with long training context. Formally, when the training sequence length $T_{tr}$ is large enough, for any test context length $T_{te}$ and dimension $j\in [d]$, the prediction from the trained transformer satisfies
\begin{align*}
E_{x_1,W}[\frac{(\widehat{y}_{T_{te}})_j}{(Wx_{T_{te}})_j}] \to \frac{1}{5}.
\end{align*}
Therefore, the prediction $\widehat{\vy}_{T_{te} }$ will not converges to the true next token $\vW \vx_{T_{te}}$.
\end{prop}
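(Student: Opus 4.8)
The plan is to start from the closed-form prediction of the trained transformer and reduce the claimed limit to an elementary moment computation. By Corollary~\ref{cor: Trained transformer as a mesa-optimizer} (equivalently Eq.~\ref{eqn: mesa output}), the precondition of Theorem~\ref{thm: convergence of GF} gives $\widehat{\vy}_{T_{te}} = \vW\vQ\vx_{T_{te}}$ with $\vQ := \widetilde{a}\widetilde{b}\,\frac{1}{T_{te}-1}\sum_{i=1}^{T_{te}-1}\vx_i\vx_i^*$. For the Gaussian case I would first evaluate the moments $\kappa_1 = \E[x_{1j}^4] = 3\sigma^4$, $\kappa_2 = \E[x_{1j}^6] = 15\sigma^6$, and $\kappa_3 = \sum_{r\ne j}\E[x_{1j}^2]\E[x_{1r}^4] = 3(d-1)\sigma^6$, and then observe that in the formula for $\widetilde{a}\widetilde{b}$ from Theorem~\ref{thm: convergence of GF} the correction term obeys $\frac{1}{T_{tr}-2}\sum_{t=2}^{T_{tr}-1}\frac{1}{t-1} = \frac{1}{T_{tr}-2}\sum_{s=1}^{T_{tr}-2}\frac{1}{s}\to 0$ as $T_{tr}\to\infty$. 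Hence $\widetilde{a}\widetilde{b}\to\kappa_1/\kappa_2 = 1/(5\sigma^2)$.

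The next step is to simplify the ratio itself. Since $\vW=\mathrm{diag}(\vlambda)$ is diagonal, both $(\widehat{\vy}_{T_{te}})_j$ and $(\vW\vx_{T_{te}})_j$ carry a common factor $\lambda_j$, so the ratio collapses to $(\vQ\vx_{T_{te}})_j/(\vx_{T_{te}})_j$. I would then expand everything through the generating rule $\vx_i=\vW^{i-1}\vx_1$, i.e. the $j$-th coordinate of $\vx_i$ equals $\lambda_j^{i-1}x_{1j}$, so the $(j,k)$ entry of $\vx_i\vx_i^*$ is $\lambda_j^{i-1}\lambda_k^{1-i}x_{1j}x_{1k}$. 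Substituting and cancelling the common factor $x_{1j}$ (which is nonzero almost surely, as is $\lambda_j$, so the ratio is well defined), the ratio becomes $\widetilde{a}\widetilde{b}\,\frac{1}{T_{te}-1}\sum_{k=1}^d\sum_{i=1}^{T_{te}-1}\lambda_j^{i-T_{te}}\lambda_k^{T_{te}-i}x_{1k}^2$.

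Finally I would take the expectation, using that $\vx_1$ and $\vW$ are independent. Over $\vW$, the $\lambda_m$ are i.i.d. on the unit circle with $\E[\lambda_m^p]=\delta(p=0)$. For $k\ne j$ the two phases factor and carry the nonzero exponent $i-T_{te}\in\{-(T_{te}-1),\dots,-1\}$, so every off-diagonal term vanishes; only the $k=j$ terms survive, each having total exponent $0$ and thus contributing $\E[x_{1j}^2]=\sigma^2$. Summing the $T_{te}-1$ surviving terms yields $\E[(\widehat{\vy}_{T_{te}})_j/(\vW\vx_{T_{te}})_j]=\widetilde{a}\widetilde{b}\,\sigma^2$, which is independent of $T_{te}$ and converges to $\frac{1}{5\sigma^2}\cdot\sigma^2=\frac15$ as $T_{tr}\to\infty$. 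Since the true ratio should equal $1$, this proves that $\widehat{\vy}_{T_{te}}$ does not converge to $\vW\vx_{T_{te}}$.

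The main obstacle is the off-diagonal cancellation: it is precisely the phase averaging over the random unitary $\vW$, together with the constraint $i\le T_{te}-1$ that keeps the exponent of $\lambda_j$ strictly negative, that eliminates the cross-coordinate contributions $k\ne j$. Everything else is bookkeeping; the conceptual content is that the data scale $\sigma^2$ cancels and leaves the dimension-independent constant $1/5\ne1$, which is the signature of the mesa learning rate $\kappa_1/\kappa_2$ being mismatched to the second moment of the data.
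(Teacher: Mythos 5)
Your proposal is correct and follows essentially the same route as the paper's proof: take the limit $\widetilde{a}\widetilde{b}\to\kappa_1/\kappa_2=1/(5\sigma^2)$ as $T_{tr}\to\infty$, expand the prediction via the generating rule $\vx_i=\vW^{i-1}\vx_1$, and use independence plus $\E[\lambda_m^p]=\delta(p=0)$ to kill every cross term $k\ne j$, leaving $\E[(\widehat{y}_{T_{te}})_j/(\vW\vx_{T_{te}})_j]=\widetilde{a}\widetilde{b}\,\sigma^2\to 1/5$. The only differences are cosmetic — you additionally compute $\kappa_3$ explicitly and note the almost-sure well-definedness of the ratio, neither of which the paper needs or includes.
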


\begin{proof}
First, built upon the results in Theorem~\ref{thm: convergence of GF}, when $T_{tr}$ is large enough, we have
\begin{align*}
\widetilde{a} \widetilde{b} = \frac{\kappa_1 }{\kappa_2 + \frac{\kappa_3}{T_{tr}-2}\sum_{t=2}^{T_{tr}-1} \frac{1}{t-1}} \to \frac{\kappa_1 }{\kappa_2} = \frac{\E[x_{1j}^4]}{\E[x_{1j}^6]} = \frac{3\sigma^4}{15\sigma^6} = \frac{1}{5\sigma^2}.
\end{align*}
Second, by directly calculating, we have
$$
(Wx_{T_{te}})_j = (W^{T_{te}}x_1)_j = \lambda_j^{T_{te}} x_{1j},
$$
and 
$$
(\widehat{y}_{T_{te}})_j = \frac{\widetilde{a}\widetilde{b}}{T_{te}-1} \sum_{i=1}^{T_{te}-1} \sum_{k=1}^d \lambda_j^i \lambda_k^{T_{te}-i} x_{1j}x_{1k}^2.
$$
Therefore, we have
\begin{align*}
E_{x_1,W}[\frac{(\widehat{y}_{T_{te}})_j}{(Wx_{T_{te}})_j}] 
&= E_{x_1,W}[\frac{\widetilde{a}\widetilde{b}}{T_{te}-1} \sum_{i=1}^{T_{te}-1} \sum_{k=1}^d \lambda_j^{i-T_{te}} \lambda_k^{T_{te}-i} x_{1k}^2]\\
&= E_{x_1}[\frac{\widetilde{a}\widetilde{b}}{T_{te}-1} \sum_{i=1}^{T_{te}-1}  x_{1j}^2]= \widetilde{a}\widetilde{b} \sigma^2.
\end{align*}
Since $\widetilde{a}\widetilde{b} < \frac{1}{5\sigma^2}$ and converges to $\frac{1}{5\sigma^2}$ when $T_{tr}$ is large enough, the proof is completed.
\end{proof}

\subsection{Derivation of Example~\ref{example}}
\label{proof: example}
\begin{proof}
We first prove that the example satisfies Assumption~\ref{ass: emergence of mesa-optimizer}. Because only one element of $\vx_1$ sampled from Example~\ref{example} will be non-zero, we have $\E_{\vx_1 \sim \cD_{\vx_1}} [x_{1i_1}x_{1i_2}^{r_2} \cdots x_{1i_n}^{r_n}] = \E_{\vx_1 \sim \cD_{\vx_1}}[0] = 0$ for any subset $\{i_1, \dots, i_n \mid n \leq 4\}$ of $[d]$, and $r_2, \dots r_n \in \N$. In addition, for any $j \in [d]$, we can derive that
\begin{align*}
&\kappa_1 = \E[x_{1j}^4] = \frac{1}{d} \cdot c^4 +  \frac{d-1}{d} \cdot 0 = \frac{c^4}{d} \cdot , \\
&\kappa_2 = \E[x_{1j}^6] = \frac{1}{d} \cdot c^6 +  \frac{d-1}{d} \cdot 0 = \frac{c^6}{d} \cdot , \\
&\kappa_3 = \sum_{r \ne j} \E[x_{1j}^2 x_{1r}^4] = 0.
\end{align*}

Second, we prove that it satisfies Assumption~\ref{ass: mesa optimizer succeed} as follows. Without loss of general, we assume that the first coordinate of $\vx_1$ is $c$.
\begin{align*}
\frac{\kappa_1}{\kappa_2} \frac{\sum_{i=1}^{T_{te} - 1} \vx_{i} \vx^*_i}{T_{te} - 1} \vx_{T_{te}} &= \frac{1}{c^2} \mathrm{diag}(c^2, 0, \dots, 0) (\lambda_1^{T_{te} - 1} c, 0, \dots, 0)^\top \\
&= (\lambda_1^{T_{te} - 1} c, 0, \dots, 0)^\top = \vx_{T_{te}}.
\end{align*}
The proof is finished.
\end{proof}

\subsection{Proof of Theorem~\ref{thm: mesa optimizer succeed}}
\label{proof: thm mesa optimizer succeed}

For the reader's convenience, we restate the theorem as the following.

\begin{theorem}
Suppose that Assumption~\ref{ass: emergence of mesa-optimizer} holds, then Assumption~\ref{ass: mesa optimizer succeed} is the sufficient and necessary condition for the trained transformer to learn the AR process. Formally, when the training sequence length $T_{tr}$ and test context length $T_{te}$ are large enough, the prediction from the trained transformer satisfies
\begin{align*}
\widehat{\vy}_{T_{te} } \to  \vW \vx_{T_{te}},\quad T_{tr}, T_{te} \to +\infty.
\end{align*}
\end{theorem}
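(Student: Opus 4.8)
The plan is to chain together three ingredients already in hand: the large-$T_{tr}$ limit of the scalar $\widetilde{a}\widetilde{b}$ from Theorem~\ref{thm: convergence of GF}, the closed-form mesa prediction in Eq.~\ref{eqn: mesa output}, and the unitarity of $\vW$, so that the entire equivalence collapses exactly onto Assumption~\ref{ass: mesa optimizer succeed}. A useful structural observation at the outset is that the two limits decouple: the training-length limit $T_{tr}\to\infty$ only affects the deterministic scalar $\widetilde{a}\widetilde{b}$, while the test-length limit $T_{te}\to\infty$ only affects the empirical second moment $\frac{1}{T_{te}-1}\sum_{i=1}^{T_{te}-1}\vx_i\vx_i^*$. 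This lets me dispatch them one at a time.

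First I would take $T_{tr}\to\infty$ in $\widetilde{a}\widetilde{b} = \kappa_1/\big(\kappa_2 + \tfrac{\kappa_3}{T_{tr}-2}\sum_{t=2}^{T_{tr}-1}\tfrac{1}{t-1}\big)$. Since $\sum_{t=2}^{T_{tr}-1}\tfrac{1}{t-1} = \sum_{k=1}^{T_{tr}-2}\tfrac{1}{k} = O(\log T_{tr})$, the correction term $\tfrac{\kappa_3}{T_{tr}-2}\sum_{t=2}^{T_{tr}-1}\tfrac{1}{t-1}$ tends to $0$, so $\widetilde{a}\widetilde{b}\to\kappa_1/\kappa_2$. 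Substituting this limiting scalar into Eq.~\ref{eqn: mesa output} gives, for each fixed test realization,
\[
\widehat{\vy}_{T_{te}} \longrightarrow \vW\Big(\frac{\kappa_1}{\kappa_2}\,\frac{\sum_{i=1}^{T_{te}-1}\vx_i\vx_i^*}{T_{te}-1}\Big)\vx_{T_{te}} \qquad (T_{tr}\to\infty).
\]

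Writing $\vu_{T_{te}} = \frac{\kappa_1}{\kappa_2}\frac{\sum_{i=1}^{T_{te}-1}\vx_i\vx_i^*}{T_{te}-1}\vx_{T_{te}}$, the target assertion $\widehat{\vy}_{T_{te}}\to\vW\vx_{T_{te}}$ becomes $\vW(\vu_{T_{te}}-\vx_{T_{te}})\to\vzero$. Because $\vW$ is unitary, $\|\vW\bm{v}\|_2 = \|\bm{v}\|_2$ for every vector $\bm{v}$, so left-multiplication by $\vW$ is an isometry and $\vW(\vu_{T_{te}}-\vx_{T_{te}})\to\vzero$ holds \emph{if and only if} $\vu_{T_{te}}-\vx_{T_{te}}\to\vzero$. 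This last condition, required to hold for all $\vx_1$ and $\vW$ as $T_{te}\to\infty$, is precisely Assumption~\ref{ass: mesa optimizer succeed}. Since the equivalence runs in both directions, it yields sufficiency and necessity at once, establishing the claimed $\widehat{\vy}_{T_{te}}\to\vW\vx_{T_{te}}$.

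Given the earlier lemmas, the argument is largely bookkeeping; I expect the only point requiring genuine care to be the interpretation of the double limit. Because the ``target'' $\vx_{T_{te}}$ itself varies with $T_{te}$, each convergence above must be read as the \emph{error vector} tending to $\vzero$, not as pointwise convergence to a fixed vector. The clean way to handle this is exactly the decoupling noted above: take $T_{tr}\to\infty$ first, which replaces $\widetilde{a}\widetilde{b}$ by $\kappa_1/\kappa_2$ uniformly in the test realization (it is a deterministic scalar independent of $T_{te}$), and only then invoke Assumption~\ref{ass: mesa optimizer succeed} for the $T_{te}\to\infty$ limit. The unitarity of $\vW$, which also guarantees that $\vW^{-1}$ is bounded, is what keeps the error invariant under the change of variable and so lets the single moment condition control both directions simultaneously.
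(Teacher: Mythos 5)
Your proposal follows essentially the same route as the paper's own proof: decouple the two limits, take $T_{tr}\to\infty$ first so that $\widetilde{a}\widetilde{b}\to\kappa_1/\kappa_2$, substitute this into Eq.~\ref{eqn: mesa output}, and then invoke Assumption~\ref{ass: mesa optimizer succeed} for the $T_{te}\to\infty$ limit. The only place you go beyond the paper is the explicit isometry argument from unitarity of $\vW$, which cleanly delivers the necessity direction that the paper's proof asserts in the theorem statement but does not spell out; this is a small but genuine tightening of the same argument.
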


\begin{proof}
First, built upon the results in Theorem~\ref{thm: convergence of GF}, when $T_{tr}$ is large enough, we have
\begin{align*}
\widetilde{a} \widetilde{b} = \frac{\kappa_1 }{\kappa_2 + \frac{\kappa_3}{T_{tr}-2}\sum_{t=2}^{T_{tr}-1} \frac{1}{t-1}} \to \frac{\kappa_1 }{\kappa_2}.
\end{align*}
Second, when $T_{te}$ is large enough, by Assumption~\ref{ass: mesa optimizer succeed}
\begin{align*}
\frac{\kappa_1}{\kappa_2} \frac{\sum_{i=1}^{T_{te} - 1} \vx_{i} \vx^*_i}{T_{te} - 1} \vx_{T_{te}} \to \vx_{T_{te}}.
\end{align*}
Therefore, we have
\begin{align*}
\widehat{\vy}_{T_{te} } = \vW \paren*{\widetilde{a} \widetilde{b} \frac{\sum_{i=1}^{T_{te} - 1} \vx_{i} \vx^*_i}{T_{te} - 1}} \vx_{T_{te}} \to_{T_{tr}} \vW \paren*{\frac{\kappa_1}{\kappa_2}  \frac{\sum_{i=1}^{T_{te} - 1} \vx_{i} \vx^*_i}{T_{te} - 1}} \vx_{T_{te}} \to_{T_{te}} \vW \vx_{T_{te}}
\end{align*}
which finishes the proof.
\end{proof}

\subsection{Proof of Theorem~\ref{thm: convergence with masking}}
\label{proof: thm convergence with masking}

For the reader's convenience, we restate the theorem as the following.

\begin{theorem}

Suppose the initialization satisfies Assumption~\ref{ass: Initialization}, the initial token is fixed as $\vone_d$, and we clip non-diagonal gradients of $\vW^{KQ}_{32}$ and $\vW^{PV}_{12}$ during the training, then the gradient flow of the one-layer linear transformer over the population AR loss converges to
the same structure as the result in Theorem~\ref{thm: convergence of GF}, with
\begin{align*}
\widetilde{a} \widetilde{b} = \frac{1}{1 + \frac{d-1}{T-2} \sum_{t=2}^{T-1} \frac{1}{t-1} }.
\end{align*}
Therefore, the obtained transformer performs one step of gradient descent in this case.
\end{theorem}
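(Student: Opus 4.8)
The plan is to reuse the entire proof pipeline behind Theorem~\ref{thm: convergence of GF} (Lemmas~\ref{lemma: quadratic form}--\ref{lemma: PL}), replacing the expectation over the random initial token by evaluation at the fixed point $\vx_1=\vone_d$. Once $\vx_1$ is deterministic the only surviving randomness is over $\vlambda$, for which $\E[\lambda_j^{k}]=\delta(k=0)$ and the coordinates are independent. Because Lemma~\ref{lemma: quadratic form} is purely algebraic and never uses the distribution of $\vx_1$, the quadratic representation of $\hatys_{t,j}$ carries over unchanged, so I can keep the same split of $\nabla_{\vB_j}L_t$ and $\nabla_{\vect(\vA)}L_t$ into a linear part $\E[\nabla\Re\{x_{t+1,j}^{*}\hatys_{t,j}\}]$ and a quadratic part $\tfrac12\E[\nabla\,\hatys_{t,j}^{*}\hatys_{t,j}]$ exactly as in Appendix~\ref{proof: lemma dynamical system isotropic}.

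First I would recompute the diagonal-block gradients (the diagonals of $\vW^{KQ}_{32}$ and $\vW^{PV}_{12}$) with $x_{1j}=1$. The key observation is that in Lemma~\ref{lemma: dynamical system isotropic} every cancellation producing the \emph{diagonal} terms was driven by the $\vlambda$-averages (e.g. $\E[\lambda_j^{i_2-i_1}\lambda_{r_1}^{i_1-t}\lambda_{r_2}^{t-i_2}]=0$ unless the indices collapse), not by Assumption~\ref{ass: emergence of mesa-optimizer}; only the off-diagonal ($l\ne j$, resp. $s-d\ne r$) terms were annihilated by the moment conditions. Hence the diagonal contributions retain their form, with $\kappa_1,\kappa_2,\kappa_3$ replaced by their values at $\vx_1=\vone_d$, namely $\kappa_1=\kappa_2=1$ and $\kappa_3=\sum_{r\ne j}1=d-1$. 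This yields precisely the ODE of Lemma~\ref{lemma: dynamical system isotropic} with $c_1=(T-2)+(d-1)\sum_{t=2}^{T-1}\frac{1}{t-1}$ and $c_2=T-2$.

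The delicate part is the off-diagonal blocks that are no longer killed by vanishing moments. By the same exponent bookkeeping I would show that with $\vx_1=\vone_d$ the off-diagonal gradient of $\vW^{KQ}_{32}$ acquires a genuine nonzero contribution (in $\E[\nabla_{\vect(\vA)}\Re\{\cdots\}]$ the case $s-d\ne r$, $r=j$ survives through the $i=1$ term), and the off-diagonal gradient of $\vW^{PV}_{12}$ survives in the quadratic part (the pattern $r_1=l$, $r_2=j$, $i_1+i_2=t$ contributes $\frac{a^2 b}{\rho_t^2}(t-1)$). These are exactly the two families that the hypothesis clips, so after masking they stay at zero. In parallel I must verify that the remaining blocks $\vW^{PV}_{13}$ and $\vW^{KQ}_{22},\vW^{KQ}_{23},\vW^{KQ}_{33}$ still have identically zero gradient under $\vlambda$-averaging alone (no clipping assumed there); each reduces to sums of $\E[\lambda_j^{-t}\cdots]$-type factors carrying a net nonzero exponent on some $\lambda$, hence vanishes. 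This case-by-case replacement of the ``moment-kills-it'' arguments by ``$\lambda$-average-kills-it'' arguments is the main obstacle and the only place where fixing $\vx_1=\vone_d$ genuinely alters the analysis.

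Finally, with the off-diagonals of $\vW^{KQ}_{32},\vW^{PV}_{12}$ held at zero by clipping and all other blocks having vanishing gradient, the induction of Step seven in Appendix~\ref{proof: lemma dynamical system isotropic} shows the diagonal structure of Assumption~\ref{ass: Initialization} is preserved along the flow, and the scalars $a,b$ obey the Lemma~\ref{lemma: dynamical system isotropic} ODE with the constants above. Applying Lemma~\ref{lemma: Surrogate objective function isotropic} and the PL bound of Lemma~\ref{lemma: PL} verbatim, the flow converges to the global minimum $ab=c_2/c_1$, i.e. $\widetilde a\,\widetilde b=\frac{T-2}{(T-2)+(d-1)\sum_{t=2}^{T-1}\frac1{t-1}}=\frac{1}{1+\frac{d-1}{T-2}\sum_{t=2}^{T-1}\frac1{t-1}}$, which is exactly the claim.
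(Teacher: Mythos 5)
Your proposal is correct and follows essentially the same route as the paper's proof: recompute the gradient expectations with $\vx_1=\vone_d$ (equivalently, evaluate the diagonal terms at $\kappa_1=\kappa_2=1$, $\kappa_3=d-1$), observe that the off-diagonal gradients of $\vW^{KQ}_{32}$ and $\vW^{PV}_{12}$ are now genuinely nonzero but are removed by the clipping hypothesis while all other blocks retain zero gradient by $\vlambda$-averaging, and then apply Lemmas~\ref{lemma: Surrogate objective function isotropic} and~\ref{lemma: PL} to get $\widetilde a\widetilde b = c_2/c_1$. Your identification of the surviving off-diagonal patterns (the $i=1$ term in the linear part for $\vW^{KQ}_{32}$, and the $r_1=l$, $r_2=j$, $i_1+i_2=t$ collapse in the quadratic part for $\vW^{PV}_{12}$) matches the paper's explicit case analysis, so the argument is sound.
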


The proof is similar to the proof of Theorem~\ref{thm: convergence of GF} in Appendix~\ref{proof: thm convergence of GF}. But the calculating for the gradients is more difficult than that of Theorem~\ref{thm: convergence of GF}. Similarly, we first present and prove the following lemmas.

\begin{lemma}[dynamical system of gradient flow]
Under the same assumption as in Theorem~\ref{thm: convergence with masking}, the dynamical process of the parameters in the diagonal of $\vW^{KQ}_{32}$ and $\vW^{PV}_{12}$ satisfies
\begin{align*}
&  \frac{\mathrm{d}}{\mathrm d\tau} a = -  ab^2 \bracket*{ (T-2) + \sum_{t=2}^{T-1}\frac{d-1}{t-1} } + b (T-2), \\
&  \frac{\mathrm{d}}{\mathrm d\tau} b = -  a^2b \bracket*{ (T-2) + \sum_{t=2}^{T-1}\frac{d-1}{t-1} } + a (T-2),
\end{align*}
while the gradients for all other parameters were kept at zero during the training process.

\end{lemma}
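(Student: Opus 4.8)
The plan is to replay the six-step computation from the proof of Lemma~\ref{lemma: dynamical system isotropic}, now specialized to the deterministic initial token $\vx_1=\vone_d$. The crucial structural change is that every data moment collapses to $1$, so the only randomness left is in $\vW$, i.e. in the i.i.d. phases $\lambda_1,\dots,\lambda_d$ with $\E[\lambda_i^k]=\mathds{1}\{k=0\}$. First I would invoke Lemma~\ref{lemma: quadratic form} to write each $\hatys_{t,j}$ as the quadratic form in $\vA$ and $\vB_j$, and reuse the simplification of $\vE^\vx_t\vE_t^{\vx*}/\rho_t$ from Eq.~\ref{eqn: EE*}, reducing the whole problem to evaluating phase-expectations of the scalar entries of $(\vE^\vx_t\vE_t^{\vx*}/\rho_t)\vA\ve_t^\vx$ and $(\overline{\vE^\vx_t}\vE_t^{\vx\top}/\rho_t)\vB_j\ve_t^{\vx\top}$.

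Next I would compute the diagonal gradient of $\vW^{PV}_{12}$, namely $\nabla_{\vB_j}L(\vtheta)$ at coordinate $l=j$. The linear (target) term reduces, after the $\lambda$-integration, to the single surviving index pattern $l=r=j$, contributing $\frac{a}{\rho_t}(t-1)$. The quadratic term $\tfrac12\E[\nabla_{\vB_j}\hatys_{t,j}^*\hatys_{t,j}]$ at $l=j$ is the four-phase expectation $\E[\lambda_j^{i_2-i_1}\lambda_{r_1}^{i_1-t}\lambda_{r_2}^{t-i_2}]$ summed over $r_1,r_2\in[d]$ and $i_1,i_2\in[t-1]$; here the pattern $r_1=r_2=j$ gives $(t-1)^2$ and the pattern $r_1=r_2=r\neq j$ (which forces $i_1=i_2$) gives $(d-1)(t-1)$, so the diagonal entry equals $\frac{a^2b}{\rho_t^2}[(t-1)^2+(d-1)(t-1)]-\frac{a}{\rho_t}(t-1)$. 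Setting $\rho_t=t-1$ and summing over $t=2,\dots,T-1$ yields the $b$-equation, and by the symmetry between $\vA$ and $\vB_j$ already used in Lemma~\ref{lemma: dynamical system isotropic}, the analogous computation of $\sum_j\nabla_{\vect(\vA)}L(\vtheta)$ on the diagonal $s-d=r$ of the $\vW^{KQ}_{32}$ block yields the $a$-equation.

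Then I would verify that all remaining gradients vanish. For the blocks $\vW^{PV}_{13}$ and $\vW^{KQ}_{22},\vW^{KQ}_{23},\vW^{KQ}_{33}$, the relevant entries carry a shifted phase exponent (arising from the $[2d]-[d]$ coordinates) together with the $\lambda_j^{-t}$ factor from the target $x_{t+1,j}=\lambda_j^t x_{1j}$; a short case split on which of the indices $\{j,s-d,r-d,\dots\}$ coincide shows that the net exponent on at least one phase can never be zero for $t\ge 2$, so these expectations are identically $0$ irrespective of the (now all $1$) moments. The off-diagonal entries of $\vW^{KQ}_{32}$ and $\vW^{PV}_{12}$ do receive nonzero gradients in general — precisely the phenomenon of Proposition~\ref{prop: non-diag gradient} — but they are set to zero by the clipping hypothesis. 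Hence the diagonal structure of Assumption~\ref{ass: Initialization} is preserved, and an induction over training time collapses the dynamics onto the scalars $a,b$ satisfying the stated ODE.

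The main obstacle is the phase bookkeeping in the quadratic terms. In the isotropic case the vanishing cross-moments of Assumption~\ref{ass: emergence of mesa-optimizer} annihilated every off-diagonal index pattern, so only $r_1=r_2=j$ survived; with $\vx_1=\vone_d$ no moment helps and the cancellations must come entirely from $\E[\lambda_i^k]=\mathds{1}\{k=0\}$. The delicate point is to see that the extra pattern $r_1=r_2=r\neq j$ now survives, and only under the constraint $i_1=i_2$, because this is exactly what replaces $\kappa_3=\sum_{r\neq j}\E[x_{1j}^2x_{1r}^4]$ by its $\vone_d$-value $d-1$ and thereby produces the $\frac{d-1}{t-1}$ term in the final dynamics. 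Equal care is needed to confirm that no comparable new surviving pattern appears in the supposedly-zero blocks, which is what guarantees that off-diagonal clipping is the only modification the masked setting requires.
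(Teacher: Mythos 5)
Your proposal is correct and follows essentially the same route as the paper's own proof: replaying the step-by-step gradient computation of Lemma~\ref{lemma: dynamical system isotropic} with all moments collapsed to $1$, isolating the two surviving phase patterns ($r_1=r_2=j$ giving $(t-1)^2$ and $r_1=r_2=r\neq j$ with $i_1=i_2$ giving $(d-1)(t-1)$), observing that the off-diagonal entries of $\vW^{KQ}_{32}$ and $\vW^{PV}_{12}$ acquire genuinely nonzero gradients that are removed only by the clipping hypothesis, and closing with induction to preserve the diagonal structure. The only cosmetic difference is that you invoke the $\vA$/$\vB_j$ symmetry for the $a$-equation where the paper carries out the full computation in steps four through six, but the content is identical.
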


\begin{proof}
Recall that in Appendix~\ref{proof: lemma dynamical system isotropic}, we have already known that the population loss $L(\vtheta)$ in Eq.~\ref{eqn: population loss} can be rewritten as
\begin{align*}
L(\vtheta) 
= \sum_{t = 2}^{T-1} L_t(\vtheta)= \sum_{t = 2}^{T-1} \sum_{j=1}^{d} \E \bracket*{{ \frac{1}{2} \hatys_{t,j}^*\hatys_{t,j} - \Re{x_{t+1,j}^* \hatys_{t,j}} + \frac{1}{2} x_{t+1,j}^*x_{t+1,j} } }.
\end{align*}
Besides, the derivatives of $L_t(\theta)$ with respect to $\vect(\vA)$ and $\vB_j$ are
\begin{align*}
\nabla_{\vB_j} L_t(\vtheta) 
= { \frac{1}{2} \E\bracket*{\nabla_{\vB_j} \hatys_{t,j}^*\hatys_{t,j}} - \E\bracket*{\nabla_{\vB_j} \Re{x_{t+1,j}^* \hatys_{t,j}} } },
\end{align*}
and
\begin{align*}
\nabla_{\vect(\vA)} L_t(\vtheta) 
= \frac{1}{2}  \sum_{j=1}^{d} \E\bracket*{\nabla_{\vect(\vA)} \hatys_{t,j}^*\hatys_{t,j}} - \sum_{j=1}^{d} \E\bracket*{\nabla_{\vect(\vA)} \Re{x_{t+1,j}^* \hatys_{t,j}} } .
\end{align*}

\paragraph{Step one: calculate $\E\bracket*{\nabla_{\vB_j} \Re{x_{t+1,j}^* \hatys_{t,j}}}$.}

Similarly to the step one in Appendix~\ref{proof: lemma dynamical system isotropic}, the $\E\bracket*{\nabla_{\vB_j} \Re{x_{t+1,j}^* \hatys_{t,j}}}$ can be derived as the following.
\begin{align*}
\E\bracket*{\nabla_{\vB_j} \Re{x_{t+1,j}^* \hatys_{t,j}}}
&= \Re{ \E\bracket*{x_{t+1,j}^* \cdot \ve_{t}^{\vx\top} \kron \frac{\vE^\vx_t \vE_t^{\vx*}}{\rho_t} \cdot \vect(\vA) }} \\
&= \Re{ \E\bracket*{\lambda_j^{-t} \cdot \vect(\frac{\vE^\vx_t \vE_t^{\vx*}}{\rho_t} \vA \ve_{t}^{\vx}) }} & {\text{(use generating process)}}\\
&= \Re{ \E\bracket*{\lambda_j^{-t} \cdot (\frac{\vE^\vx_t \vE_t^{\vx*}}{\rho_t} \vA \ve_{t}^{\vx})} }.
\end{align*}

We note that for any $l \in [2d]$, based on the sparsity of $\vA$, we have
\begin{align*}
\paren*{\frac{\vE^\vx_t \vE_t^{\vx*}}{\rho_t} \vA \ve_{t}^{\vx}}_l = \left\{
\begin{array}{ll}
 \frac{a}{\rho_t} \sum_{i=1}^{t-1} \sum_{r=1}^{d} \lambda_l^{i} \lambda_{r}^{t-i}, & \text{$l \in [d]$},\\
 \frac{a}{\rho_t} \sum_{i=1}^{t-1} \sum_{r=1}^{d} \lambda_{l-d}^{i-1} \lambda_{r}^{t-i}, & \text{$l \in [2d] - [d]$},
\end{array} \right.
\end{align*}
which implies that
\begin{align*}
\E\bracket*{\lambda_j^{-t} \cdot \paren*{\frac{\vE^\vx_t \vE_t^{\vx*}}{\rho_t} \vA \ve_{t}^{\vx}} } =  \left\{
\begin{array}{ll}
 \frac{a}{\rho_t} (t-1), & \text{$l = j$},\\
 0, & \text{$l \ne j$},
\end{array} \right.
\end{align*}
and the $l$-th element of $\E\bracket*{\nabla_{\vB_j} \Re{x_{t+1,j}^* \hatys_{t,j}}}$ is
\begin{align*}
\E\bracket*{\nabla_{\vB_j} \Re{x_{t+1,j}^* \hatys_{t,j}}}_l =  \left\{
\begin{array}{ll}
 \frac{a}{\rho_t} (t-1), & \text{$l = j$},\\
 0, & \text{$l \ne j$}.
\end{array} \right.
\end{align*}

\paragraph{Step two: calculate $\E\bracket*{\nabla_{\vB_j} \hatys_{t,j}^*\hatys_{t,j}}$.} 
Similarly to the step two in Appendix~\ref{proof: lemma dynamical system isotropic}, we can simplify the $\E\bracket*{\nabla_{\vB_j} \hatys_{t,j}^*\hatys_{t,j}}$ as follows.
\begin{align*}
\E\bracket*{\nabla_{\vB_j} \hatys_{t,j}^*\hatys_{t,j}} 
&= 2\Re{ \E\bracket*{ \ve_{t}^{\vx\top} \kron \frac{\vE^\vx_t \vE_t^{\vx*}}{\rho_t} \vect(\vA) \cdot  \vect^\top(\vA)  \overline{\ve_{t}^{\vx}} \kron \frac{\vE^\vx_t \vE_t^{\vx*}}{\rho_t} \cdot \vB_j} } \\
&= 2\Re{\E\bracket*{  b \paren*{\frac{\overline{\vE^\vx_t} \vE_t^{\vx\top}}{\rho_t} \vA \overline{\ve_{t}^{\vx}}}_j \cdot \frac{\vE^\vx_t \vE_t^{\vx*}}{\rho_t} \vA \ve_{t}^{\vx} }}. & \text{(sparsity of $\vB$)}
\end{align*}

For any $j \in [d]$ and $l \in [2d]$, we can calculate that
\begin{align*}
\paren*{\frac{\overline{\vE^\vx_t} \vE_t^{\vx\top}}{\rho_t} \vA \overline{\ve_{t}^{\vx}}}_j = \frac{a}{\rho_t} \sum_{i = 1}^{t-1} \sum_{r=1}^d \lambda_j^{-i} \lambda_r^{i-t},
\end{align*}
and recall that
\begin{align*}
\paren*{\frac{\vE^\vx_t \vE_t^{\vx*}}{\rho_t} \vA \ve_{t}^{\vx}}_l =
\left\{
\begin{array}{ll}
 \frac{a}{\rho_t} \sum_{i = 1}^{t-1} \sum_{r=1}^d \lambda_l^{i} \lambda_r^{t-i}, & \text{$l \in [d]$},\\
 \frac{a}{\rho_t} \sum_{i = 1}^{t-1} \sum_{r=1}^d \lambda_{l-d}^{i-1} \lambda_r^{t-i}, & \text{$l \in [2d] - [d]$}.
\end{array} \right.
\end{align*}
With careful computing, we have
\begin{align*}
\E\bracket*{  b \paren*{\frac{\overline{\vE^\vx_t} \vE_t^{\vx\top}}{\rho_t} \vA \overline{\ve_{t}^{\vx}}}_j \cdot \paren*{\frac{\vE^\vx_t \vE_t^{\vx*}}{\rho_t} \vA \ve_{t}^{\vx} }_l } = 
\left\{
\begin{array}{cl}
 \frac{a^2b}{\rho_t^2} \bracket*{(t-1)^2 + (d-1)(t-1)}, & \text{$l = j$},\\
  \frac{a^2b}{\rho_t^2} (t-1), & \text{$l \in [d] - j$},\\
  0, & \text{otherwise},
\end{array} \right. 
\end{align*}
which implies that the $l$-th element of $\frac{1}{2} \E\bracket*{\nabla_{\vB_j} \hatys_{t,j}^*\hatys_{t,j}}$ is
\begin{align*}
\frac{1}{2} \E\bracket*{\nabla_{\vB_j} \hatys_{t,j}^*\hatys_{t,j}}_l = 
\left\{
\begin{array}{cl}
 \frac{a^2b}{\rho_t^2} \bracket*{(t-1)^2 + (d-1)(t-1)}, & \text{$l = j$},\\
\frac{a^2b}{\rho_t^2} (t-1), & \text{$l \in [d] - j$},\\
  0, & \text{otherwise}.
\end{array} \right. 
\end{align*}

\paragraph{Step three: calculate $\nabla_{\vB_j} L_t(\vtheta)$ and $\nabla_{\vB_j} L(\vtheta)$.}
Based on steps one and two, the $l$-th element of $\nabla_{\vB_j} L_t(\vtheta)$ can be derived as follows.
\begin{align*}
\nabla_{\vB_j} L_t(\vtheta)_l
&= { \frac{1}{2} \E\bracket*{\nabla_{\vB_j} \hatys_{t,j}^*\hatys_{t,j}}_l - \E\bracket*{\nabla_{\vB_j} \Re{x_{t+1,j}^* \hatys_{t,j}} } }_l \\
&= \left\{
\begin{array}{cl}
 \frac{a^2b}{\rho_t^2} \bracket*{(t-1)^2 + (d-1)(t-1)} - \frac{a}{\rho_t} (t-1), & \text{$l = j$},\\
  \frac{a^2b}{\rho_t^2} (t-1), & \text{$l \in [d] - j$},\\
  0, & \text{otherwise}.
\end{array} \right.
\end{align*}
Furthermore, the $l$-th element of $\nabla_{\vB_j} L(\vtheta)$ is
\begin{align}
\nabla_{\vB_j} L(\vtheta)_l &= \sum_{t=2}^{T-1} \nabla_{\vB_j} L_t(\vtheta)_l \nonumber \\
&= 
\left\{
\begin{array}{cl}
 \sum_{t=2}^{T-1} \paren*{\frac{a^2b}{\rho_t^2} \bracket*{(t-1)^2 + (d-1)(t-1)} - \frac{a}{\rho_t} (t-1)}, & \text{$l = j$},\\
  \sum_{t=2}^{T-1} \frac{a^2b}{\rho_t^2} (t-1), & \text{$l \in [d] - j$},\\
  0, & \text{otherwise}.
\end{array} \right. \label{eqn: B_j gradient without clip}
\end{align}

If we clip the non-diagonal gradient of $\vW^{PV}_{12}$, we have 
\begin{align*}
\nabla_{\vB_j} L(\vtheta)_l 
&=
\left\{
\begin{array}{cl}
 \sum_{t=2}^{T-1} \paren*{\frac{a^2b}{\rho_t^2} \bracket*{(t-1)^2 + (d-1)(t-1)} - \frac{a}{\rho_t} (t-1)}, & \text{$l = j$},\\
  0, & \text{otherwise},
\end{array} \right. \\
&= 
\left\{
\begin{array}{cl}
 {a^2b} \bracket*{(T-2) + \sum_{t=2}^{T-1}  \frac{d-1}{t-1} } - a (T-2), & \text{$l = j$},\\
  0, & \text{otherwise}.
\end{array} \right.
\end{align*}

\paragraph{Step four: calculate $\E\bracket*{\nabla_{\vect(\vA)} \Re{x_{t+1,j}^* \hatys_{t,j}} } $ and $\sum_{j=1}^{d} \E\bracket*{\nabla_{\vect(\vA)} \Re{x_{t+1,j}^* \hatys_{t,j}} }$.} 
Similarly to the step four in Appendix~\ref{proof: lemma dynamical system isotropic}, the $\E\bracket*{\nabla_{\vect(\vA)} \Re{x_{t+1,j}^* \hatys_{t,j}}}$ can be derived as the following.
\begin{align*}
\E\bracket*{\nabla_{\vect(\vA)} \Re{x_{t+1,j}^* \hatys_{t,j}}}
&= \Re{ \E\bracket*{x_{t+1,j}^*  \cdot \ve_{t}^{\vx} \kron \frac{\overline{\vE^\vx_t} \vE_t^{\vx\top}}{\rho_t} \cdot \vB_j }} \\
&= \Re{ \E\bracket*{\lambda_j^{-t} \cdot \vect(\frac{\overline{\vE^\vx_t} \vE_t^{\vx\top}}{\rho_t} \vB_j \ve_{t}^{\vx\top}) }}. 
\end{align*}

For any $s, r \in [2d]$, we have
\begin{align*}
\paren*{\frac{\overline{\vE^\vx_t} \vE_t^{\vx\top}}{\rho_t} \vB_j \ve_{t}^{\vx\top}}_{sr} = 
\left\{
\begin{array}{cl}
 \frac{b}{\rho_t} (\sum_{i=1}^t \lambda_j^{i-1} \lambda_s^{1-i}) \lambda_r^{t-1}, & \text{$s\in [d], r \in [d]$},\\
  \frac{b}{\rho_t} (\sum_{i=1}^{t-1} \lambda_j^{i} \lambda_{s-d}^{1-i}) \lambda_r^{t-1}, & \text{$s\in [2d] - [d], r \in [d]$},\\
  \frac{b}{\rho_t} (\sum_{i=1}^t \lambda_j^{i-1} \lambda_s^{1-i}) \lambda_{r-d}^{t-2}, & \text{$s\in [d], r \in [2d] - [d]$},\\
  \frac{b}{\rho_t} (\sum_{i=1}^{t-1} \lambda_j^{i} \lambda_{s-d}^{1-i}) \lambda_{r-d}^{t-2}, & \text{$s\in [2d] - [d], r \in [2d] - [d]$},
\end{array} \right.
\end{align*}
which implies that
\begin{align*}
\E\bracket*{\lambda_j^{-t} \paren*{\frac{\overline{\vE^\vx_t} \vE_t^{\vx\top}}{\rho_t} \vB_j \ve_{t}^{\vx\top}}_{sr} } =  \left\{
\begin{array}{ll}
 \frac{b}{\rho_t} (t-1), & \text{$s = d +j, r = j$},\\
 \frac{b}{\rho_t}, & \text{$s \ne d +j, r = j$},\\
 0, & \text{otherwise}.
\end{array} \right.
\end{align*}
and the $(s,r)$-th element of $\E\bracket*{\nabla_{\vA} \Re{x_{t+1,j}^* \hatys_{t,j}}}$ is
\begin{align*}
\E\bracket*{\nabla_{\vA} \Re{x_{t+1,j}^* \hatys_{t,j}}}_{sr} = 
\left\{
\begin{array}{ll}
 \frac{b}{\rho_t} (t-1), & \text{$s = d +j, r = j$},\\
 \frac{b}{\rho_t}, & \text{$s \ne d +j, r = j$},\\
 0, & \text{otherwise}.
\end{array} \right. 
\end{align*}
Finally, we can calculate $\sum_{j=1}^{d} \E\bracket*{\nabla_{\vA} \Re{x_{t+1,j}^* \hatys_{t,j}} }$ as
\begin{align*}
\sum_{j=1}^{d} \E\bracket*{\nabla_{\vA} \Re{x_{t+1,j}^* \hatys_{t,j}} }_{sr} = 
\left\{
\begin{array}{ll}
 \frac{b}{\rho_t} (t-1), & \text{$s -d = r, A_{sr} \in \vW^{KQ}_{32}$},\\
 \frac{b}{\rho_t}, & \text{$s -d \ne r, A_{sr} \in \vW^{KQ}_{32}$},\\
 0, & \text{otherwise}.
\end{array} \right. 
\end{align*}

\paragraph{Step five: calculate $\E\bracket*{\nabla_{\vect(\vA)} \hatys_{t,j}^*\hatys_{t,j}}$ and $\sum_{j=1}^{d} \E\bracket*{\nabla_{\vect(\vA)} \hatys_{t,j}^*\hatys_{t,j}}$.} 
Similarly to the step five in Appendix~\ref{proof: lemma dynamical system isotropic}, the $\E\bracket*{\nabla_{\vect(\vA)} \hatys_{t,j}^*\hatys_{t,j}}$ is simplified as follows.
\begin{align*}
\E\bracket*{\nabla_{\vect(\vA)} \hatys_{t,j}^*\hatys_{t,j}} 
&= 2\Re{ \E\bracket*{ \ve_{t}^{\vx} \kron \frac{\overline{\vE^\vx_t} \vE_t^{\vx\top}}{\rho_t} \vB_j \cdot  \vB_j^\top \cdot \ve_{t}^{\vx*} \kron \frac{\overline{\vE^\vx_t} \vE_t^{\vx\top}}{\rho_t} \cdot \vect(\vA) } } \\
&= 2\Re{\E\bracket*{  \sum_{k=d+1}^{2d} a \paren*{\frac{\vE^\vx_t \vE_t^{\vx*}}{\rho_t} \vB_j \ve_{t}^{\vx*}}_{k,k-d} \cdot \vect(\frac{\overline{\vE^\vx_t} \vE_t^{\vx\top}}{\rho_t} \vB_j \ve_{t}^{\vx\top} ) }} .
\end{align*}

For any $k \in [2d]-[d]$, we can calculate that
\begin{align*}
\paren*{\frac{\vE^\vx_t \vE_t^{\vx*}}{\rho_t} \vB_j \ve_{t}^{\vx*}}_{k,k-d}= \frac{b}{\rho_t} (\sum_{i=1}^{t-1} \lambda_j^{-i} \lambda_{k-d}^{i-1}) \lambda_{k-d}^{1-t} =  \frac{b}{\rho_t} \sum_{i=1}^{t-1} \lambda_j^{-i} \lambda_{k-d}^{i-t},
\end{align*}
and recall that for any $s,r \in [2d]$, we have
\begin{align*}
\paren*{\frac{\overline{\vE^\vx_t} \vE_t^{\vx\top}}{\rho_t} \vB_j \ve_{t}^{\vx\top}}_{sr} = 
\left\{
\begin{array}{cl}
 \frac{b}{\rho_t} (\sum_{i=1}^t \lambda_j^{i-1} \lambda_s^{1-i}) \lambda_r^{t-1}, & \text{$s\in [d], r \in [d]$},\\
  \frac{b}{\rho_t} (\sum_{i=1}^{t-1} \lambda_j^{i} \lambda_{s-d}^{1-i}) \lambda_r^{t-1}, & \text{$s\in [2d] - [d], r \in [d]$},\\
  \frac{b}{\rho_t} (\sum_{i=1}^t \lambda_j^{i-1} \lambda_s^{1-i}) \lambda_{r-d}^{t-2}, & \text{$s\in [d], r \in [2d] - [d]$},\\
  \frac{b}{\rho_t} (\sum_{i=1}^{t-1} \lambda_j^{i} \lambda_{s-d}^{1-i}) \lambda_{r-d}^{t-2}, & \text{$s\in [2d] - [d], r \in [2d] - [d]$}.
\end{array} \right. 
\end{align*}

With careful computing, we have
\begin{align*}
&\E\bracket*{  \sum_{k=d+1}^{2d} a \paren*{\frac{\vE^\vx_t \vE_t^{\vx*}}{\rho_t} \vB_j \ve_{t}^{\vx*}}_{k,k-d} \cdot \paren*{\frac{\overline{\vE^\vx_t} \vE_t^{\vx\top}}{\rho_t} \vB_j \ve_{t}^{\vx\top} }_{sr} } \\
&= 
\left\{
\begin{array}{ll}
 \frac{ab^2}{\rho_t^2} (t-1)^2, & \text{$s = d +j, r = j$},\\
 \frac{ab^2}{\rho_t^2} (t-1), & \text{$s \ne d +j, r = j$},\\
 \frac{ab^2}{\rho_t^2} (t-1), & \text{$s = d +r, r \ne j$},\\
 \frac{ab^2}{\rho_t^2} (t-1), & \text{$s = d +j, r \ne j$},\\
 \frac{ab^2}{\rho_t^2}, & \text{remains in } \vW^{KQ}_{32}, \\
 0, & \text{otherwise},
\end{array} \right.
\end{align*}
which implies that the $l$-th element of $\frac{1}{2} \E\bracket*{\nabla_{\vA} \hatys_{t,j}^*\hatys_{t,j}}$ is
\begin{align*}
\frac{1}{2} \E\bracket*{\nabla_{\vA} \hatys_{t,j}^*\hatys_{t,j}}
= 
\left\{
\begin{array}{ll}
 \frac{ab^2}{\rho_t^2} (t-1)^2, & \text{$s = d +j, r = j$},\\
 \frac{ab^2}{\rho_t^2} (t-1), & \text{$s \ne d +j, r = j$},\\
 \frac{ab^2}{\rho_t^2} (t-1), & \text{$s = d +r, r \ne j$},\\
 \frac{ab^2}{\rho_t^2} (t-1), & \text{$s = d +j, r \ne j$},\\
 \frac{ab^2}{\rho_t^2}, & \text{remains in } \vW^{KQ}_{32}, \\
 0, & \text{otherwise}.
\end{array} \right.
\end{align*}

Based on these results, we can derive
\begin{align*}
\frac{1}{2}\sum_{j=1}^{d} \E\bracket*{\nabla_{\vA} \hatys_{t,j}^*\hatys_{t,j}} = 
\left\{
\begin{array}{ll}
 \frac{ab^2}{\rho_t^2} \bracket*{(t-1)^2 + (d-1)(t-1)}, & \text{$s - d = r, A_{sr} \in \vW^{KQ}_{32}$},\\
 \frac{ab^2}{\rho_t^2} \bracket*{2(t-1) + d-2}, & \text{$s - d \ne r, A_{sr} \in \vW^{KQ}_{32}$},\\
 0, & \text{otherwise}.
\end{array} \right.
\end{align*}

\paragraph{Step six: calculate $\nabla_{\vect(\vA)} L_t(\vtheta)$ and $\nabla_{\vect(\vA)} L(\vtheta)$.}
Based on steps four and five, the $(s,r)$-th element of $\nabla_{\vA} L_t(\vtheta)$ can be derived as follows.
\begin{align*}
\nabla_{\vA} L_t(\vtheta)_{sr}
&= \frac{1}{2}  \sum_{j=1}^{d} \E\bracket*{\nabla_{\vA} \hatys_{t,j}^*\hatys_{t,j}} - \sum_{j=1}^{d} \E\bracket*{\nabla_{\vA} \Re{x_{t+1,j}^* \hatys_{t,j}} }  \\
&= \left\{
\begin{array}{ll}
 \frac{ab^2}{\rho_t^2} \bracket*{(t-1)^2 + (d-1)(t-1)} - \frac{b}{\rho_t} (t-1), & \text{$s - d = r, A_{sr} \in \vW^{KQ}_{32}$},\\
 \frac{ab^2}{\rho_t^2} \bracket*{2(t-1) + d-2} - \frac{b}{\rho_t}, & \text{$s - d \ne r, A_{sr} \in \vW^{KQ}_{32}$},\\
 0, & \text{otherwise}.
\end{array} \right. 
\end{align*}
Furthermore, the $(s,r)$-th element of $\nabla_{\vA} L(\vtheta)$ is
\begin{align}
&\nabla_{\vA} L(\vtheta)_{sr} = \sum_{t=2}^{T-1} \nabla_{\vA} L_t(\vtheta)_{sr} \nonumber \\
&= 
\left\{
\begin{array}{cl}
 \sum_{t=2}^{T-1} \paren*{\frac{ab^2}{\rho_t^2} \bracket*{(t-1)^2 + (d-1)(t-1)} - \frac{b}{\rho_t} (t-1)}, & \text{$s - d = r, A_{sr} \in \vW^{KQ}_{32}$},\\
  \sum_{t=2}^{T-1} \bracket*{\frac{ab^2}{\rho_t^2} \bracket*{2(t-1) + d-2} - \frac{b}{\rho_t}}, & \text{$s - d \ne r, A_{sr} \in \vW^{KQ}_{32}$},\\
  0, & \text{otherwise}.
\end{array} \right. \label{eqn: A gradient without clip}
\end{align}

If we clip the non-diagonal gradient of $\vW^{KQ}_{32}$, we have 
\begin{align*}
&\nabla_{\vA} L(\vtheta)_{sr} = \sum_{t=2}^{T-1} \nabla_{\vA} L_t(\vtheta)_{sr} \\
&= 
\left\{
\begin{array}{cl}
 \sum_{t=2}^{T-1} \paren*{\frac{ab^2}{\rho_t^2} \bracket*{(t-1)^2 + (d-1)(t-1)} - \frac{b}{\rho_t} (t-1)}, & \text{$s - d = r, A_{sr} \in \vW^{KQ}_{32}$},\\
  0, & \text{otherwise},
\end{array} \right.  \\
&=
\left\{
\begin{array}{cl}
  {ab^2} \bracket*{(T-2) + \sum_{t=2}^{T-1} \frac{d-1}{t-1}} - b (T-2), & \text{$s - d = r, A_{sr} \in \vW^{KQ}_{32}$},\\
  0, & \text{otherwise}.
\end{array} \right.
\end{align*}

\paragraph{Step seven: summarize the result by induction.} From the gradient of $\nabla_{\vA} L(\vtheta)$ and $\nabla_{\vB_j} L(\vtheta)$, we observe that non-zero gradients only emerge in the diagonal of $\vW^{KQ}_{32}$ and $\vW^{PV}_{12}$, and they are same. Therefore, the parameter matrices keep the same structure as the initial time. Thus we can summarize the dynamic system as the following.
\begin{align*}
&  \frac{\mathrm{d}}{\mathrm d\tau} a = -  ab^2 \bracket*{ (T-2) + \sum_{t=2}^{T-1}\frac{d-1}{t-1} } + b (T-2), \\
&  \frac{\mathrm{d}}{\mathrm d\tau} b = -  a^2b \bracket*{ (T-2) + \sum_{t=2}^{T-1}\frac{d-1}{t-1} } + a (T-2),
\end{align*}
which completes the proof.

\end{proof}

\begin{lemma}
Suppose that the precondtions of Theorem~\ref{thm: convergence with masking} hold, and denote $(T-2) + \sum_{t=2}^{T-1}\frac{d-1}{t-1}$ and $T-2$ by $c_1$ and $c_2$, respectively. Then, the dynamics are the same as those of gradient flow on the following objective function:
\begin{align*}
\widetilde{\ell}(a,b) = \frac{1}{2c_1} (c_2 - c_1 ab)^2,
\end{align*}
whose global minimums satisfy $ab = c_2/c_1$.
\end{lemma}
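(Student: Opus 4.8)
The plan is to verify directly that the two-dimensional system produced by the preceding lemma is precisely the negative gradient field of the candidate potential $\widetilde{\ell}$. First I would substitute the abbreviations $c_1 = (T-2) + \sum_{t=2}^{T-1}\frac{d-1}{t-1}$ and $c_2 = T-2$ into the dynamics, rewriting them in the compact form
\begin{align*}
\frac{\mathrm{d}}{\mathrm d\tau} a = b\,(c_2 - c_1 ab), \qquad \frac{\mathrm{d}}{\mathrm d\tau} b = a\,(c_2 - c_1 ab),
\end{align*}
which exposes the common scalar factor $(c_2 - c_1 ab)$ shared by both equations and makes the intended potential structure transparent.

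Next I would compute the partial derivatives of $\widetilde{\ell}$ by the chain rule, using $\frac{\partial}{\partial a}(c_2 - c_1 ab) = -c_1 b$ and $\frac{\partial}{\partial b}(c_2 - c_1 ab) = -c_1 a$, which gives
\begin{align*}
\frac{\partial}{\partial a}\widetilde{\ell}(a,b) = \frac{1}{2c_1}\cdot 2(c_2 - c_1 ab)(-c_1 b) = -b\,(c_2 - c_1 ab), \\
\frac{\partial}{\partial b}\widetilde{\ell}(a,b) = \frac{1}{2c_1}\cdot 2(c_2 - c_1 ab)(-c_1 a) = -a\,(c_2 - c_1 ab).
\end{align*}
Comparing these two expressions with the compact dynamics above shows that $\frac{\mathrm{d}}{\mathrm d\tau}a = -\frac{\partial}{\partial a}\widetilde{\ell}$ and $\frac{\mathrm{d}}{\mathrm d\tau}b = -\frac{\partial}{\partial b}\widetilde{\ell}$, so the trajectory of the masked gradient flow coincides with gradient flow on $\widetilde{\ell}$, as claimed.

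Finally, for the global minimizers I would note that $c_1 > 0$ (it is a sum of strictly positive terms), so $\widetilde{\ell}$ is a nonnegative multiple of the perfect square $(c_2 - c_1 ab)^2$ and is therefore bounded below by $0$, with equality attained exactly on the hyperbola $ab = c_2/c_1$; this identifies the set of global minima.

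This argument is structurally identical to the proof of Lemma~\ref{lemma: Surrogate objective function isotropic}, with only the constants $c_1$ and $c_2$ changed, so there is no genuine obstacle. The only point requiring any care is the symmetric cancellation of the $c_1$ factor coming from the chain rule, together with the observation that $c_1 > 0$ guarantees the minimum value equals zero and is attained.
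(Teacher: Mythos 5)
Your proposal is correct and follows essentially the same route as the paper: the paper proves this lemma by citing the proof of Lemma~\ref{lemma: Surrogate objective function isotropic}, which is exactly your computation — rewrite the dynamics as $\frac{\mathrm{d}}{\mathrm d\tau}a = b(c_2 - c_1 ab)$, $\frac{\mathrm{d}}{\mathrm d\tau}b = a(c_2 - c_1 ab)$, match them against the chain-rule partials $-\frac{\partial}{\partial a}\widetilde{\ell}$ and $-\frac{\partial}{\partial b}\widetilde{\ell}$, and read off the minimizers from the perfect square. Your explicit remark that $c_1 > 0$ guarantees the minimum value $0$ is attained exactly on $ab = c_2/c_1$ is a minor addition the paper leaves implicit, but it does not change the argument.
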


\begin{proof}
The proof is the same as that of Lemma~\ref{lemma: Surrogate objective function isotropic} in Appendix~\ref{proof: lemma Surrogate objective function isotropic}.
\end{proof}

Using the results from the above lemmas and Lemma~\ref{lemma: PL}, we can conclude Theorem~\ref{thm: convergence with masking}.

\subsection{Proof of Proposition~\ref{prop: non-diag gradient}}
\label{proof: prop non-diag gradient}

For the reader's convenience, we restate the proposition as the following.

\begin{prop}
The limiting point found by the gradient does not share the same structure as that in Theorem~\ref{thm: convergence of GF}, thus the trained transformer will not implement one step of gradient descent for minimizing  $\frac{1}{2} \sum_{i = 1}^{t-1} \Vert \vx_{i+1} - W\vx_{i}\Vert^2$.
\end{prop}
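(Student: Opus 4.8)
The plan is to exploit the fact that the exact, \emph{unclipped} gradients of $\vW^{PV}_{12}$ and $\vW^{KQ}_{32}$ were already computed en route to Theorem~\ref{thm: convergence with masking} (in the same $\vx_1 = \vone_d$ setting), and recorded in Eq.~\ref{eqn: B_j gradient without clip} and Eq.~\ref{eqn: A gradient without clip}. Rather than masking the off-diagonal components, I would keep them and show they do not vanish on the diagonal manifold, so that manifold cannot contain the limiting point. I would evaluate those formulas at an arbitrary configuration sharing the Theorem~\ref{thm: convergence of GF} structure ($\vW^{KQ}_{32}=a\vI_d$, $\vW^{PV}_{12}=b\vI_d$, with the remaining relevant blocks zero) and substitute $\rho_t = t-1$; importantly, this argument uses no property of Assumption~\ref{ass: Initialization}.

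First I would read off the off-diagonal entry of $\nabla_{\vB_j}L(\vtheta)$: for $l\in[d]\setminus\{j\}$, Eq.~\ref{eqn: B_j gradient without clip} reduces to $a^2 b\sum_{t=2}^{T-1}\frac{1}{t-1}$, which is strictly nonzero whenever $ab\neq 0$ and carries no competing term to cancel it. This single component already witnesses that no point of the Theorem~\ref{thm: convergence of GF} form with $ab\neq0$ can be a critical point of $L$. I would then invoke that a convergent gradient flow approaches a zero of $\nabla L$, so the limit must acquire a nonzero off-diagonal entry in $\vW^{PV}_{12}$ (equivalently, by Eq.~\ref{eqn: A gradient without clip}, in $\vW^{KQ}_{32}$) and hence cannot share the structure of Theorem~\ref{thm: convergence of GF}. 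Finally, since Corollary~\ref{cor: Trained transformer as a mesa-optimizer} shows that one step of vanilla gradient descent for $\tfrac{1}{2}\sum_{i=1}^{t-1}\Vert\vx_{i+1}-\vW\vx_i\Vert^2$ is realized \emph{only} by that diagonal structure, its absence in the limit immediately yields the claim.

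The hard part will be the logical step from ``the diagonal form is not stationary'' to ``the limiting point is not of that form.'' I would handle it by (i) using that a convergent gradient-flow trajectory can only limit to a zero of $\nabla L$, and (ii) separately discarding the lone structured critical candidate $a=b=0$, which produces the trivial predictor $\haty_t\equiv\vzero_d$ and is not a genuine descent step either. I would also flag that actually identifying the \emph{dense} limiting matrices is strictly harder: it would require the gradient at general non-diagonal $\vW^{KQ}_{32}$ and $\vW^{PV}_{12}$, for which the diagonal shortcut no longer simplifies the Kronecker/expectation terms. This is precisely why only the negative statement is established here, with the full characterization of the dense limit left open.
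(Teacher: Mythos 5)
Your proposal is correct and follows essentially the same route as the paper: the paper's own proof simply observes, citing Eq.~\ref{eqn: B_j gradient without clip} and Eq.~\ref{eqn: A gradient without clip}, that at any point with the Theorem~\ref{thm: convergence of GF} structure the off-diagonal gradients of $\vW^{PV}_{12}$ and $\vW^{KQ}_{32}$ are nonzero, hence such a point cannot be the limit of the flow. Your write-up just makes explicit two steps the paper leaves implicit (that a convergent gradient-flow limit must be a critical point, and the degenerate $a=b=0$ case), which is a welcome tightening rather than a different argument.
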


\begin{proof}
From Eq.~\ref{eqn: A gradient without clip} and Eq.~\ref{eqn: B_j gradient without clip}, we know that when the parameters matrices share the same structure as the result in Theorem~\ref{thm: convergence of GF}, the non-zero gradients of the non-diagonal elements of $\vW^{KQ}_{32}$ and $\vW^{PV}_{12}$ will occur, which implies the result.
\end{proof}

\section{Experimental details and additional results}
\label{sec: Additional experimental details and results}

\subsection{GPU and random seed}
The random seed in the experiments is fixed as 1. All experiments are done on a single GeForce RTX 3090 GPU in one hour.

\subsection{Step size in simulations}

The step size of the gradient descent in different simulations is summarized in Table~\ref{tab: step size}.

\begin{table}[t]
\centering
\caption{Step size in different simulations.}
\vskip 0.15in
\begin{tblr}{
  cells = {c},
  cell{2}{1} = {r=3}{},
  cell{5}{1} = {r=3}{},
  hline{1,9} = {-}{0.08em},
  hline{2,5,8} = {-}{},
}
$\vx_1$ & $\sigma/c$ & step size \\
Gaussian     & 0.5     & 0.001     \\
             & 1       & 0.0001    \\
             & 2       & 0.000002  \\
Example~\ref{example}      & 0.5     & 0.03      \\
             & 1       & 0.001     \\
             & 2       & 0.0001    \\
$\vone_d$            & -       & 0.0005    
\end{tblr}
\label{tab: step size}
\end{table}

\subsection{Additional results for Gaussian initial token}
\label{sec: Additional results for Gaussian initial token}

In practice, we estimate the ratio of two vectors by calculating the mean of the element-wise divide between two vectors. 
The results for $\sigma = 1, 2$ and $T=100$ with diagonal initialization are presented in Figure~\ref{figures: simulations addition gaussian}. The results for $\sigma = 1$ and $T=5$ (small-context scenarios) with diagonal initialization are presented in Figure~\ref{figures: T=5}. The results for $\sigma = 1$ and $T=100$ with Gaussian initialization ($\sigma_w=0.001,0.01,0.1$) are presented in Figure~\ref{figures: gaussian start gaussian init}.

\begin{figure}[t]
\centering

\subfloat[Gaussian with $\sigma=1$, dynamics of $ab$.]{
\includegraphics[width=0.45\columnwidth,height=0.3\columnwidth]{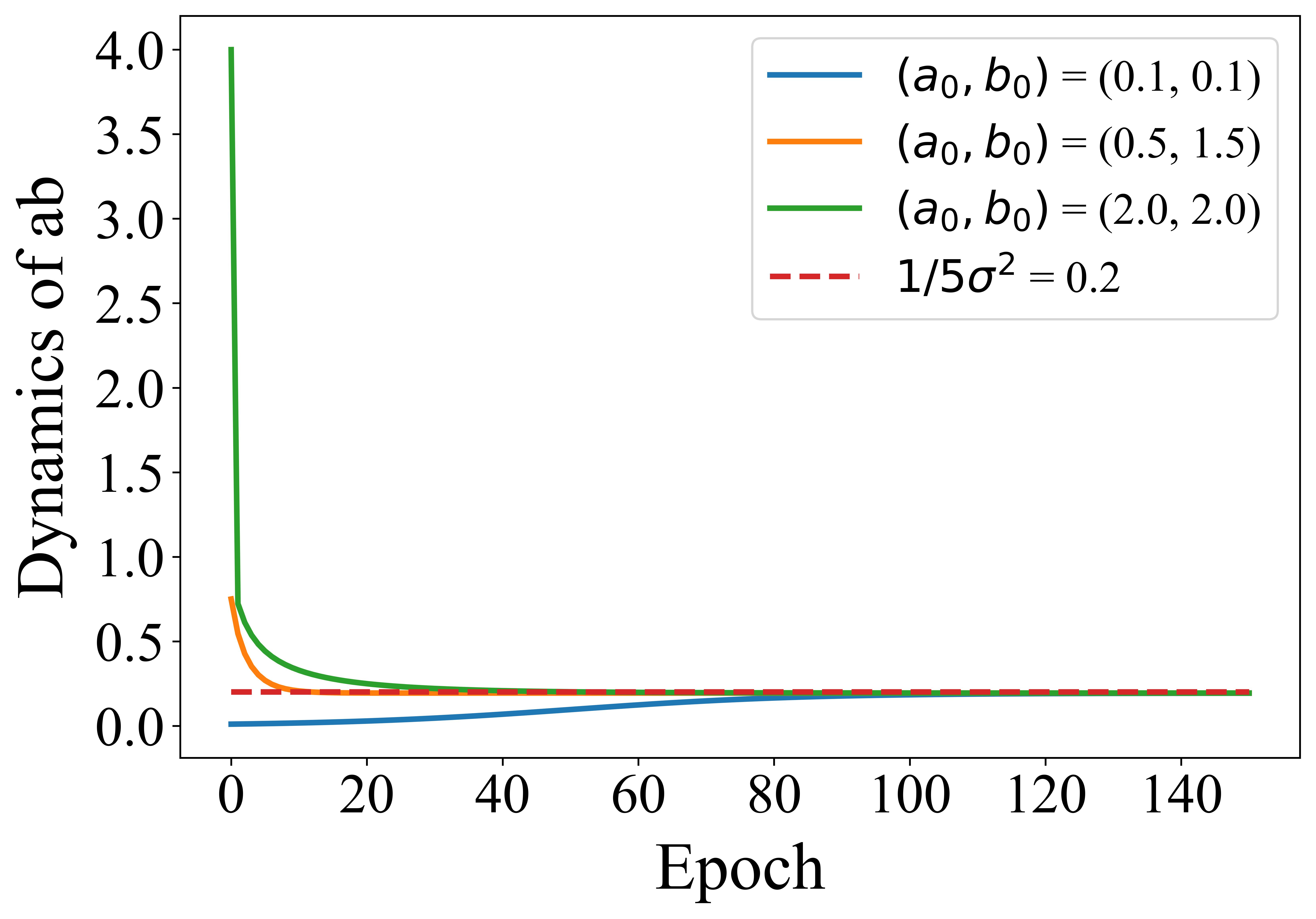}
}%
\subfloat[Gaussian with $\sigma=1$, ratio of $\haty_{T_{te}-1} / \vx_{T_{te}}$.]{
\includegraphics[width=0.45\columnwidth,height=0.3\columnwidth]{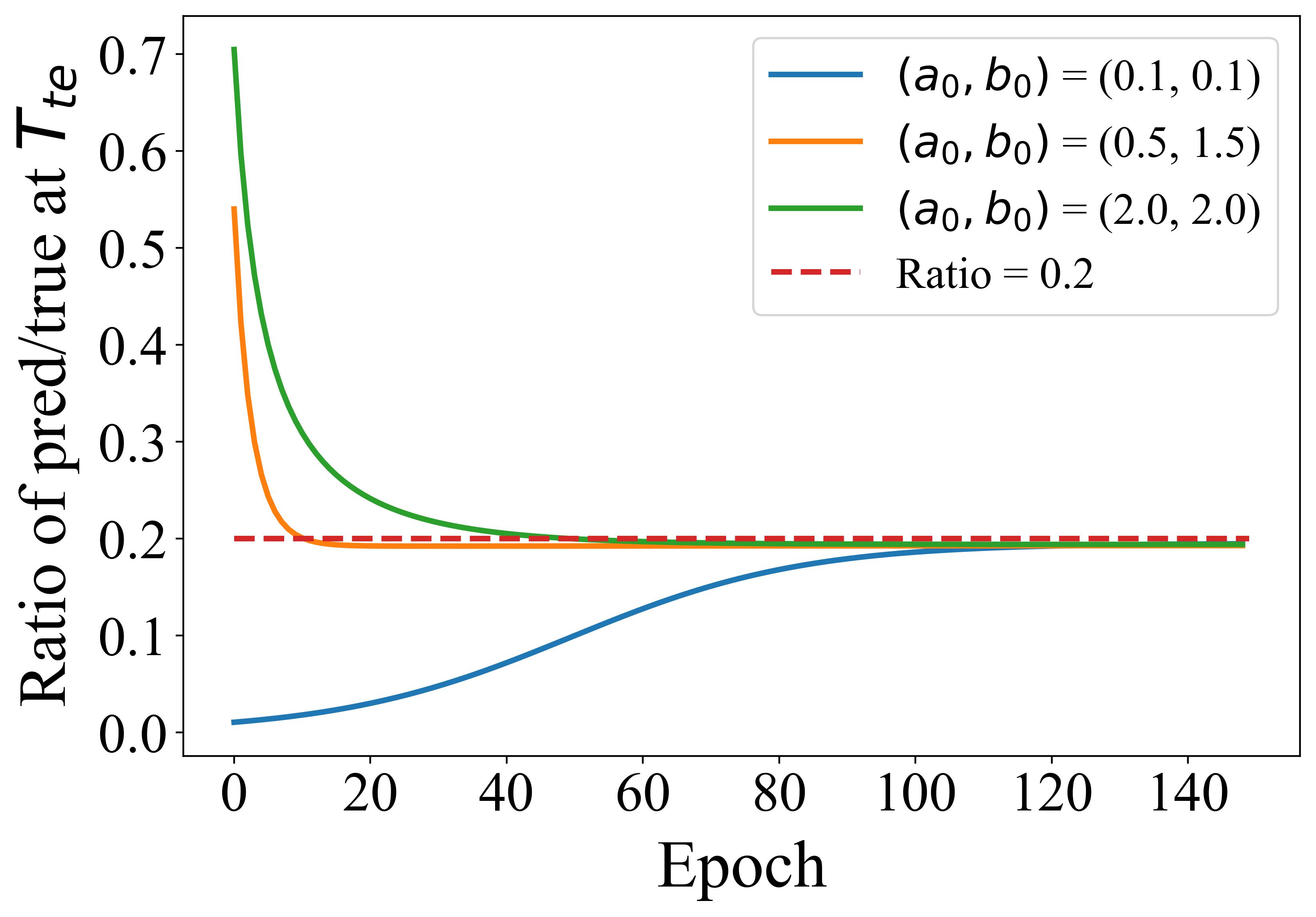}
}%

\vskip 0.5ex

\subfloat[Gaussian with $\sigma=2$, dynamics of $ab$.]{
\includegraphics[width=0.45\columnwidth,height=0.3\columnwidth]{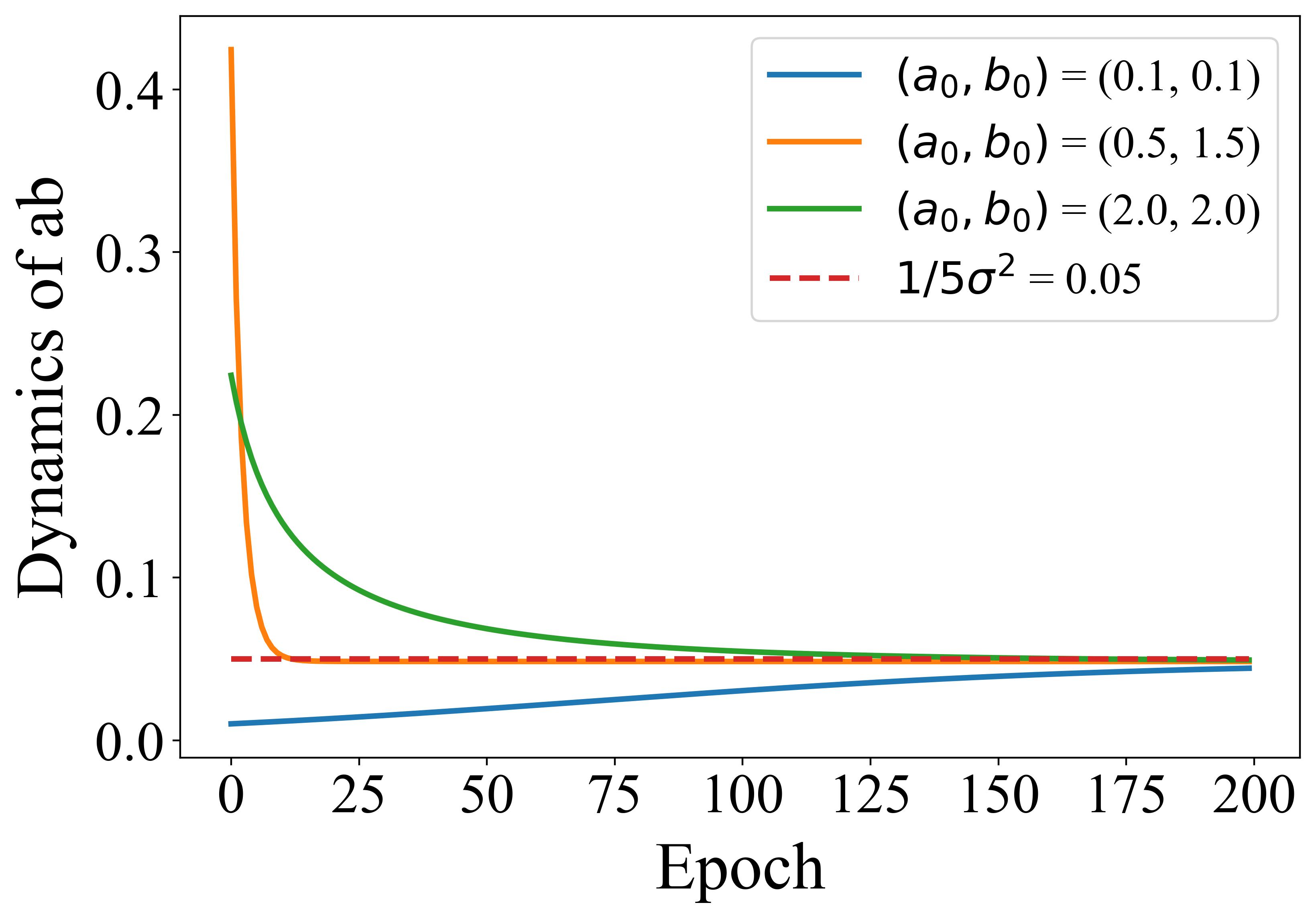}
}%
\subfloat[Gaussian with $\sigma=2$, ratio of $\haty_{T_{te}-1} / \vx_{T_{te}}$.]{
\includegraphics[width=0.45\columnwidth,height=0.3\columnwidth]{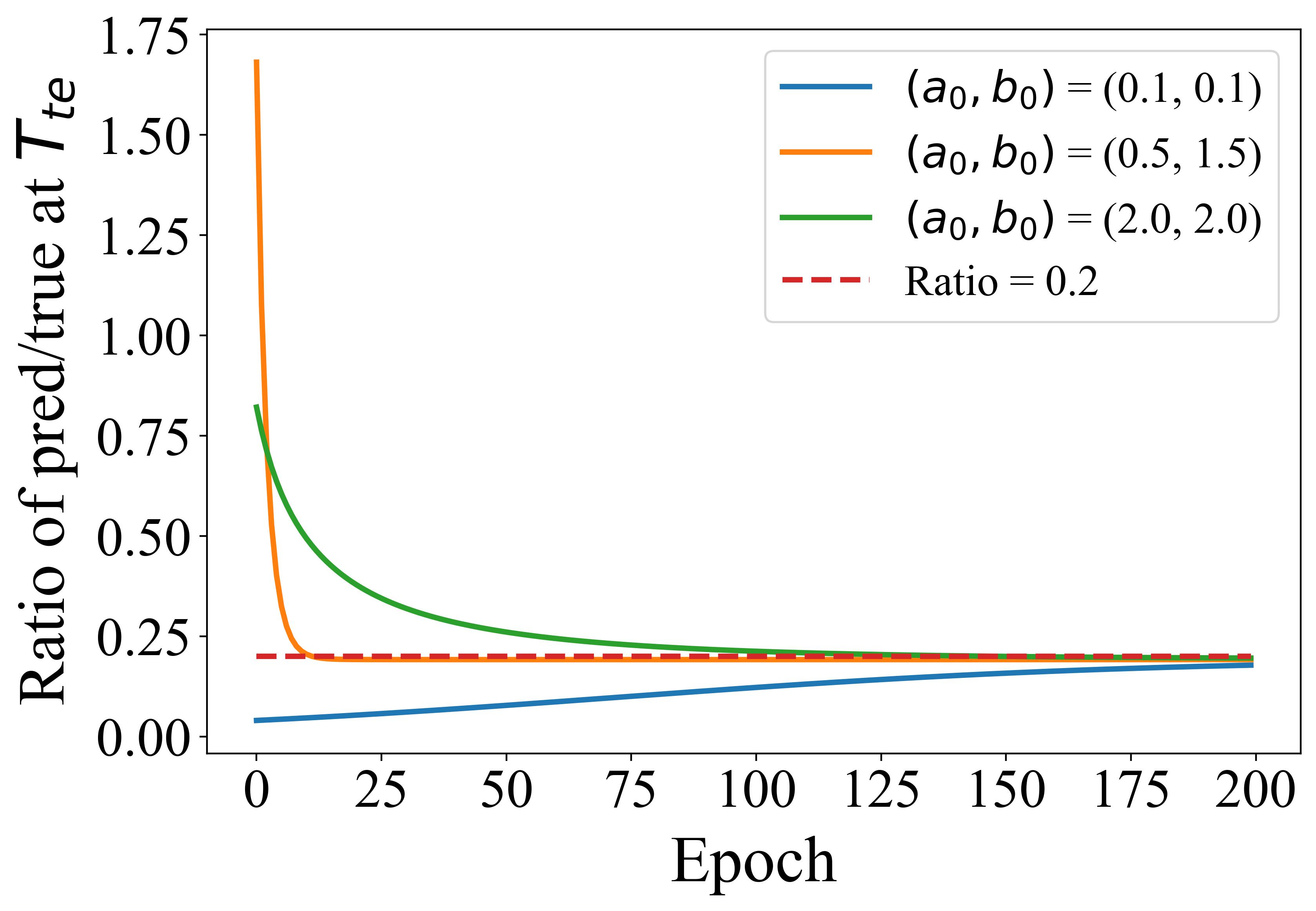}
}%


\caption{Simulations results in Gaussian initial token with $\sigma = 1, 2$. The results show that the convergence of $ab$ satisfies Theorem~\ref{thm: convergence of GF}. In addition, the trained transformer can not recover the Gaussian initial token, which verifies Proposition~\ref{thm: normal failure}.}
\vskip -1.2em

\label{figures: simulations addition gaussian}
\end{figure}

\subsection{Additional results for Sparse initial token (Example~\ref{example})}
\label{sec: Additional results for Sparse initial token}

The results for $c = 1, 2$ and $T=100$ with diagonal initialization are presented in Figure~\ref{figures: simulations additional sparse}. The results for $c = 1, 2$ and $T=100$ with Gaussian initialization ($\sigma_w=0.001,0.01,0.1$) are presented in Figure~\ref{figures: ex4 gaussian init}.

\begin{figure}[t]
\centering

\subfloat[Example~\ref{example} with $c=1$, dynamics of $ab$.]{
\includegraphics[width=0.45\columnwidth,height=0.3\columnwidth]{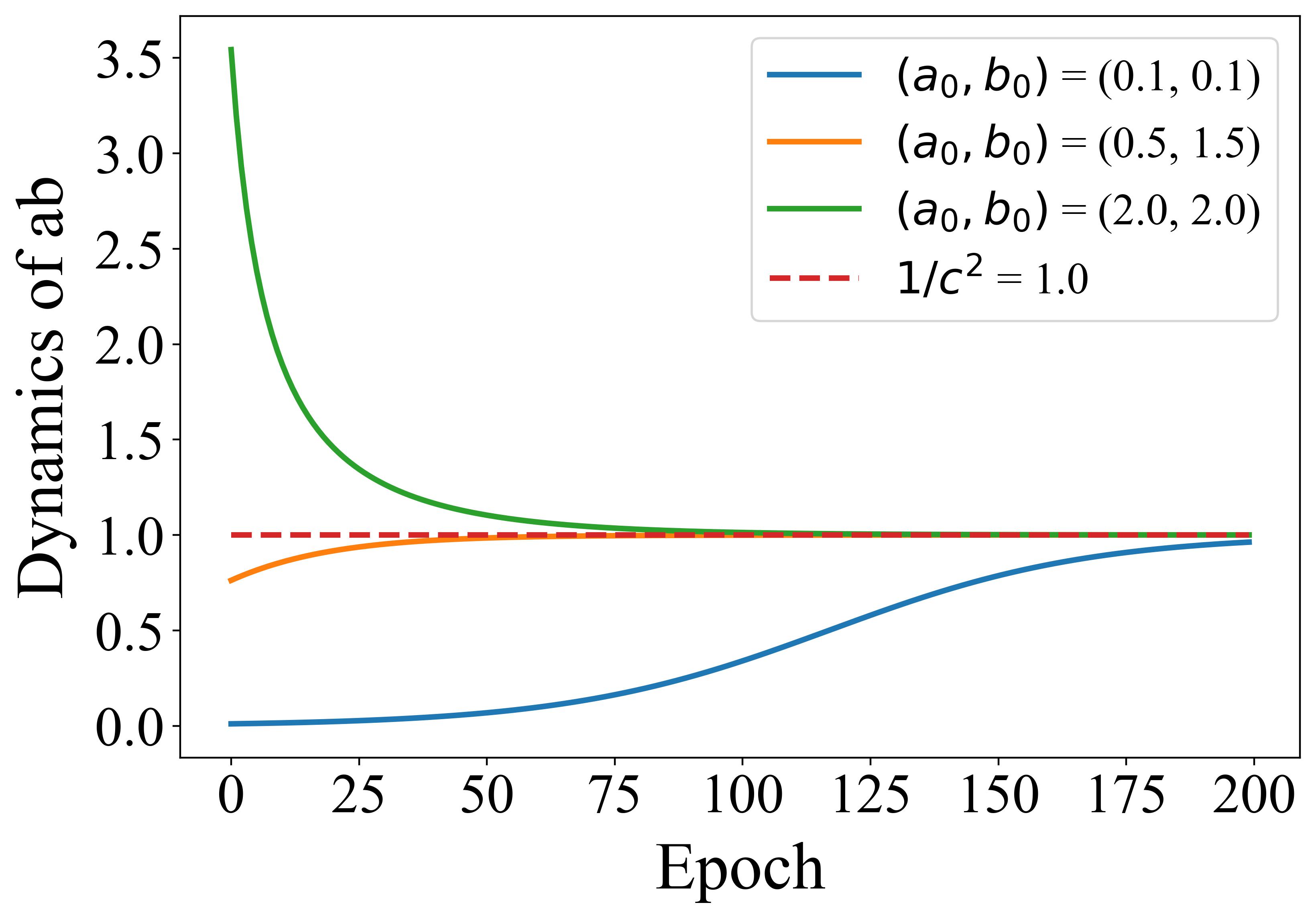}
}%
\subfloat[Example~\ref{example} with $c=1$, $\Vert \haty_{T_{te}-1} - \vx_{T_{te}}\Vert_2^2$.]{
\includegraphics[width=0.45\columnwidth,height=0.3\columnwidth]{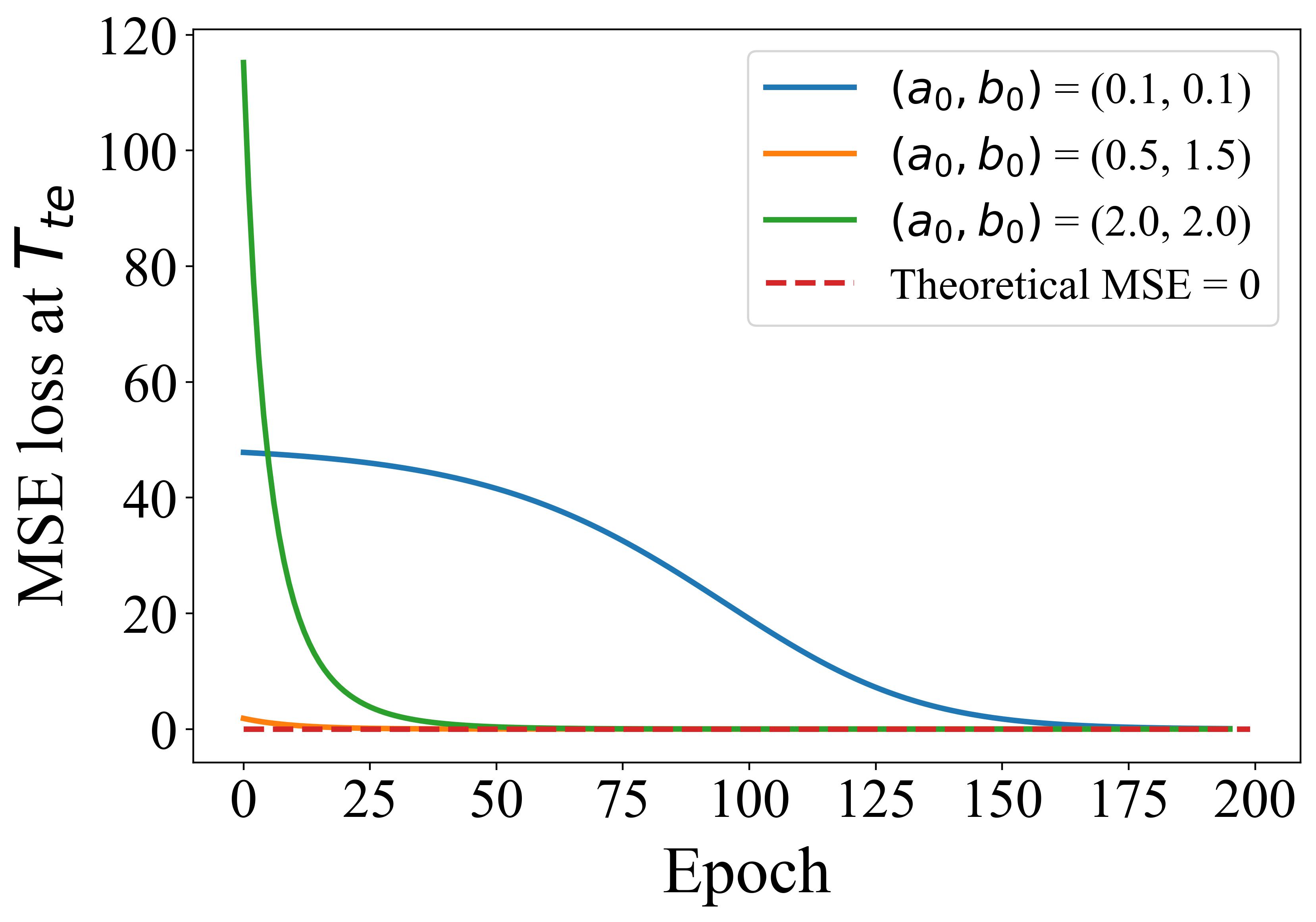}
}%

\vskip 0.5ex

\subfloat[Example~\ref{example} with $c=2$, dynamics of $ab$.]{
\includegraphics[width=0.45\columnwidth,height=0.3\columnwidth]{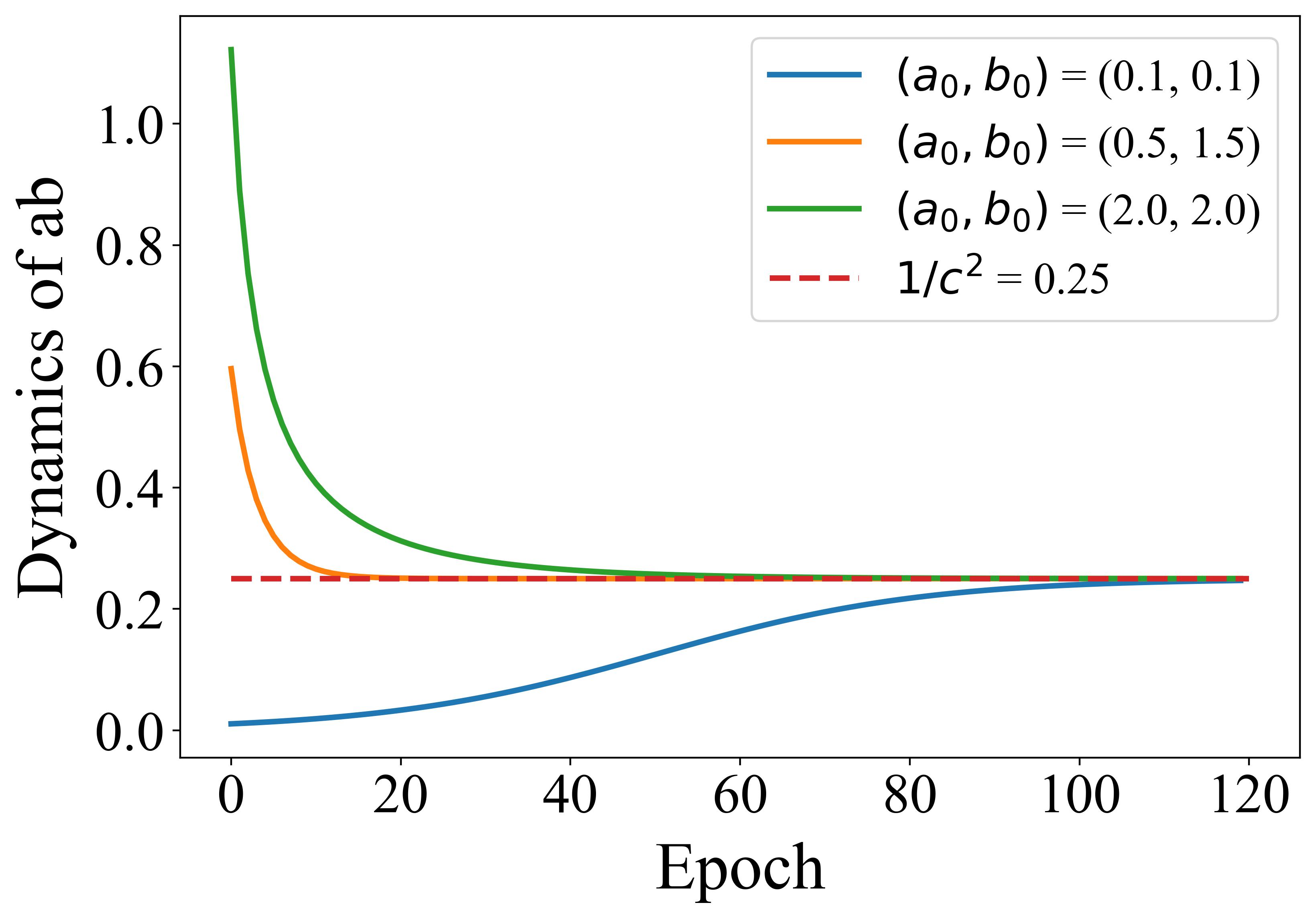}
}%
\subfloat[Example~\ref{example} with $c=2$, $\Vert \haty_{T_{te}-1} - \vx_{T_{te}}\Vert_2^2$.]{
\includegraphics[width=0.45\columnwidth,height=0.3\columnwidth]{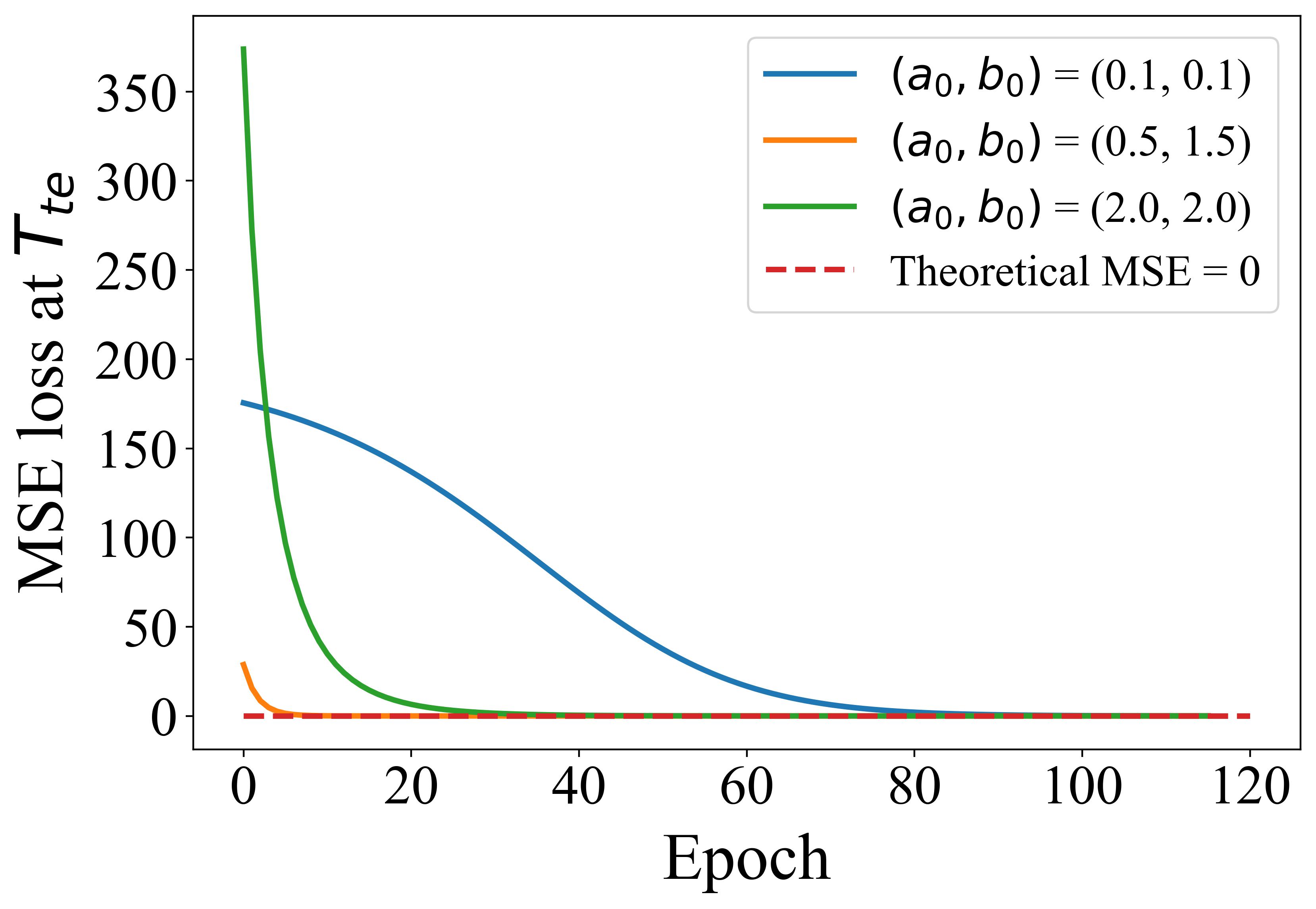}
}%


\caption{Simulations results on Example~\ref{example}. Results show that the convergence of $ab$ satisfies Theorem~\ref{thm: convergence of GF}. In addition, the trained transformer can recover the sequence with the initial token from Example~\ref{example}, which verifies Theorem~\ref{thm: mesa optimizer succeed}.}
\vskip -1.2em

\label{figures: simulations additional sparse}
\end{figure}

\subsection{Additional results for full-one initial token}
\label{sec: Additional results for full-one initial token}

$\vW^{KQ}$ and $\vW^{PV}$ with different diagonal initializations are presented in Figure~\ref{figures: simulations additional one}. From the results, we know that the gradient descent converges to
\begin{align*}
\begin{pmatrix}
\widetilde{\vW^{KQ}_{22}} & \widetilde{\vW^{KQ}_{23}} \\
\widetilde{\vW^{KQ}_{32}} & \widetilde{\vW^{KQ}_{33}}
\end{pmatrix} = 
\begin{pmatrix}
  \vzero_{d\times d} & \vzero_{d\times d} \\
 \vW_1 & \vzero_{d\times d}
\end{pmatrix},
\begin{pmatrix}
\widetilde{\vW^{PV}_{12}} & \widetilde{\vW^{PV}_{13}}
\end{pmatrix} = 
\begin{pmatrix}
 \vW_2 & \vzero_{d\times d}
\end{pmatrix},
\end{align*}
where $\vW_1$ and $\vW_2$ are some dense matrices. Similarly to the proof of Corollary~\ref{cor: Trained transformer as a mesa-optimizer} in Appendix~\ref{cor: Trained transformer as a mesa-optimizer}, we have

\begin{align*}
\haty_t &= \begin{pmatrix}
\vW^{PV}_{12} & \vW^{PV}_{13}
\end{pmatrix} \frac{\vE^\vx_t \vE_t^{\vx*}}{\rho_t} 
\begin{pmatrix}
\vW^{KQ}_{22} & \vW^{KQ}_{23} \\
\vW^{KQ}_{32} & \vW^{KQ}_{33}
\end{pmatrix} \ve_t^\vx \\
&= \begin{pmatrix}
\vW_2 & \vzero_{d\times d}
\end{pmatrix} \frac{\vE^\vx_t \vE_t^{\vx*}}{\rho_t} 
\begin{pmatrix}
\vzero_{d\times d} & \vzero_{d\times d} \\
\vW_1 & \vzero_{d\times d}
\end{pmatrix} \ve_t^\vx \\
&= \frac{1}{\rho_t} 
\begin{pmatrix}
\vW_2 & \vzero_{d\times d}
\end{pmatrix} \sum_{i=1}^t \ve_i^\vx \ve_i^{\vx*}
\begin{pmatrix}
\vzero_{d\times d} & \vzero_{d\times d} \\
\vW_1 & \vzero_{d\times d}
\end{pmatrix} \ve_t^\vx \\
&= \frac{1}{\rho_t}  \sum_{i=1}^t \vW_2 \vx_i
\begin{pmatrix}
\vx_{i-1}^*\vW_1 & \vzero_{d\times d} 
\end{pmatrix} \ve_t^\vx \\
&=  \frac{1}{\rho_t}  \sum_{i=1}^t \vW_2 \vx_i \vx_{i-1}^*\vW_1 \vx_t.
\end{align*}
which can be seen as a somewhat preconditioned gradient descent on the OLS problem.

\begin{figure}[b]
\centering

\subfloat[$\vW^{KQ}, a = b=0.1$.]{
\includegraphics[width=0.45\columnwidth,height=0.32\columnwidth]{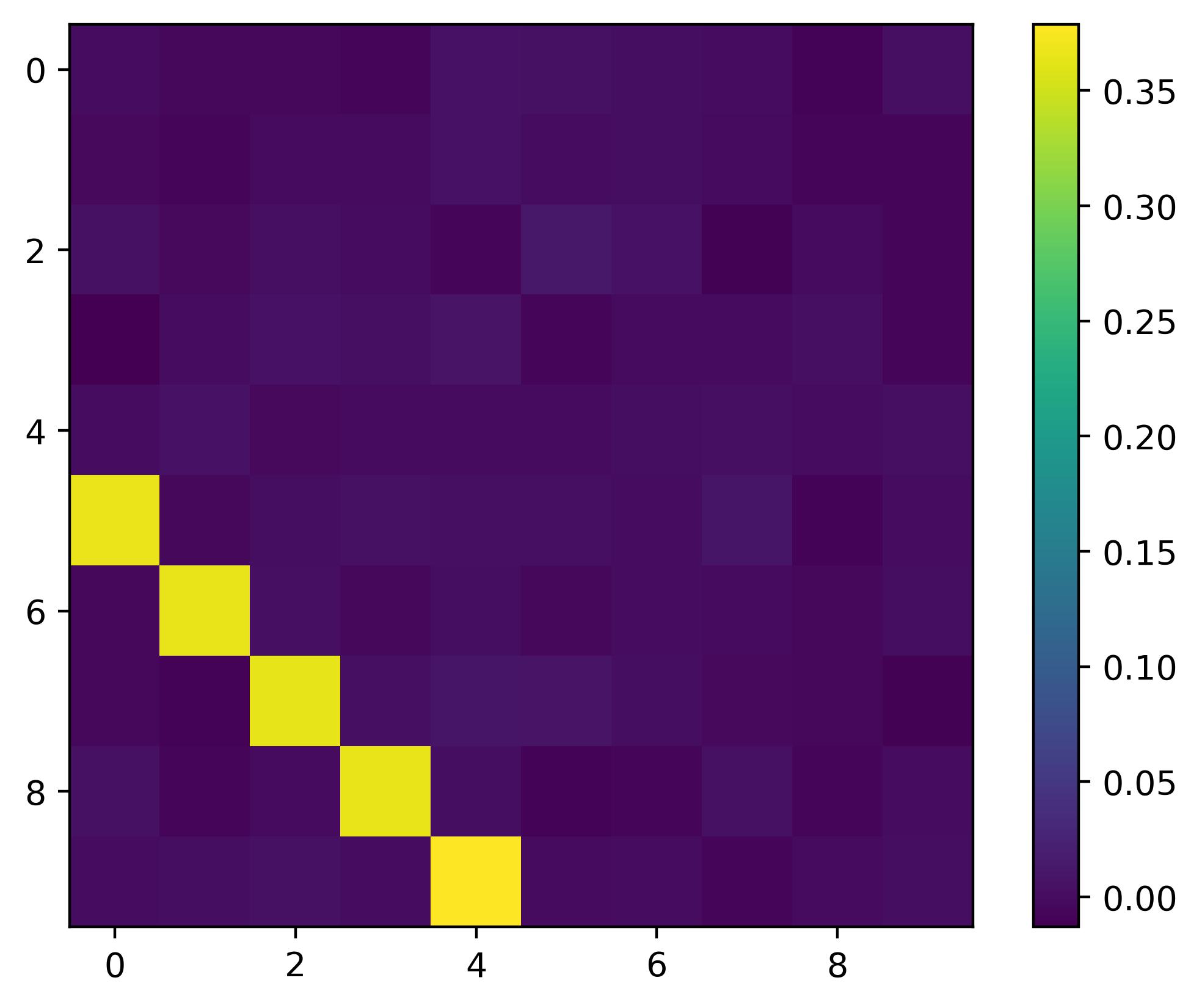}
}%
\subfloat[$\vW^{PV}, a = b=0.1$.]{
\includegraphics[width=0.45\columnwidth,height=0.32\columnwidth]{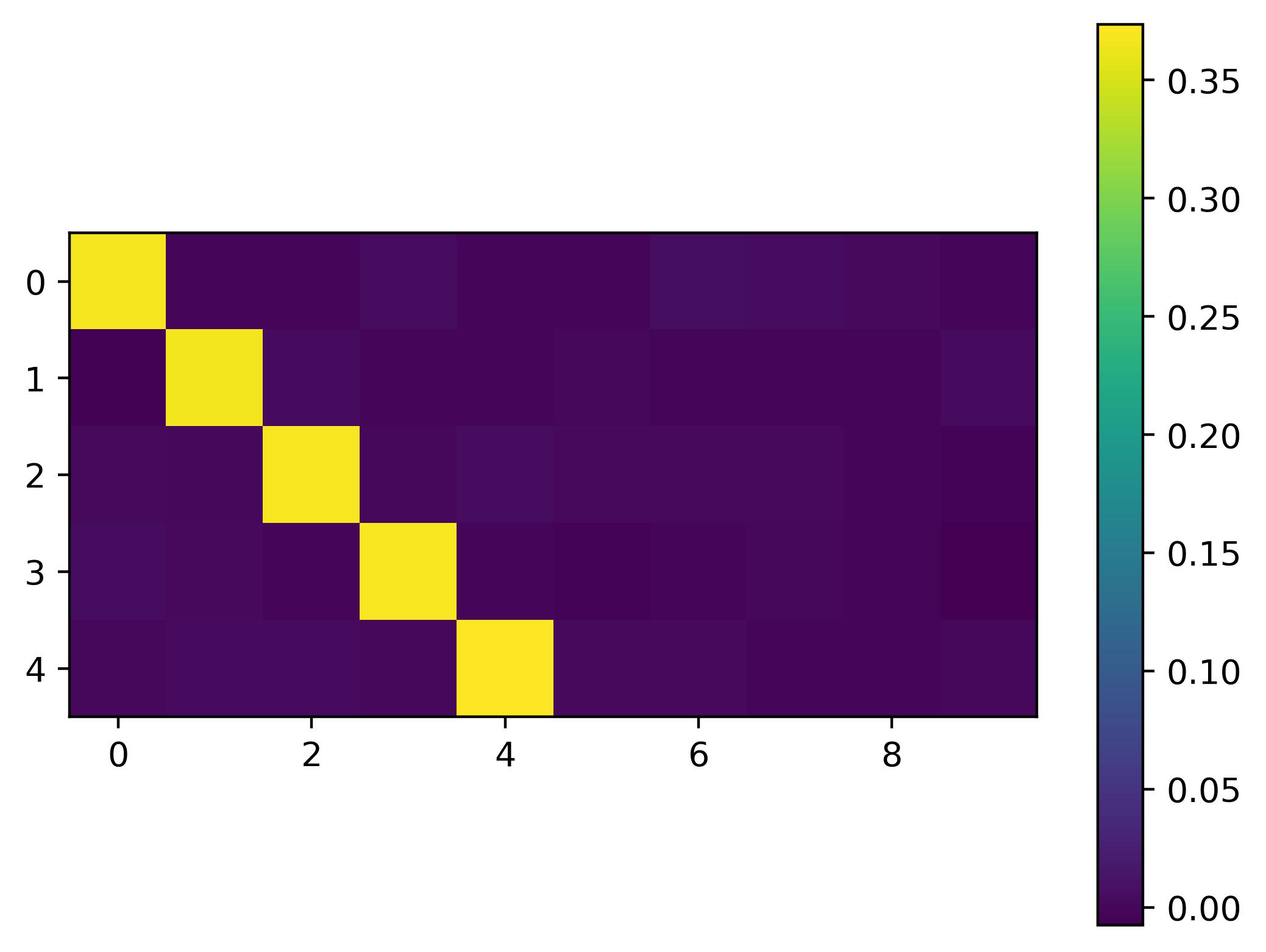}
}%

\vskip 0.5ex

\subfloat[$\vW^{KQ}, a=0.5, b=1.5$.]{
\includegraphics[width=0.45\columnwidth,height=0.32\columnwidth]{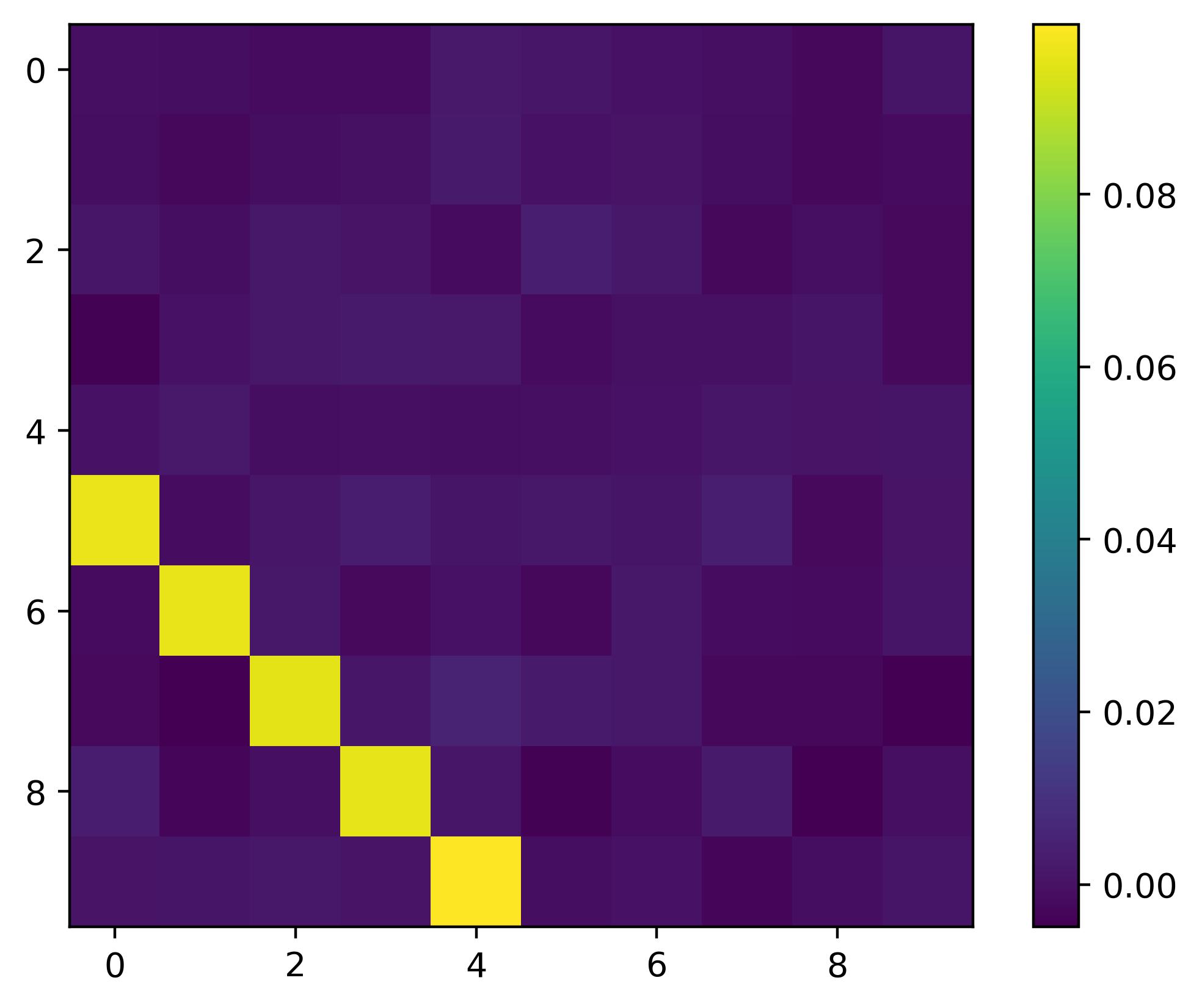}
}%
\subfloat[$\vW^{PV}, a=0.5, b=1.5$.]{
\includegraphics[width=0.45\columnwidth,height=0.32\columnwidth]{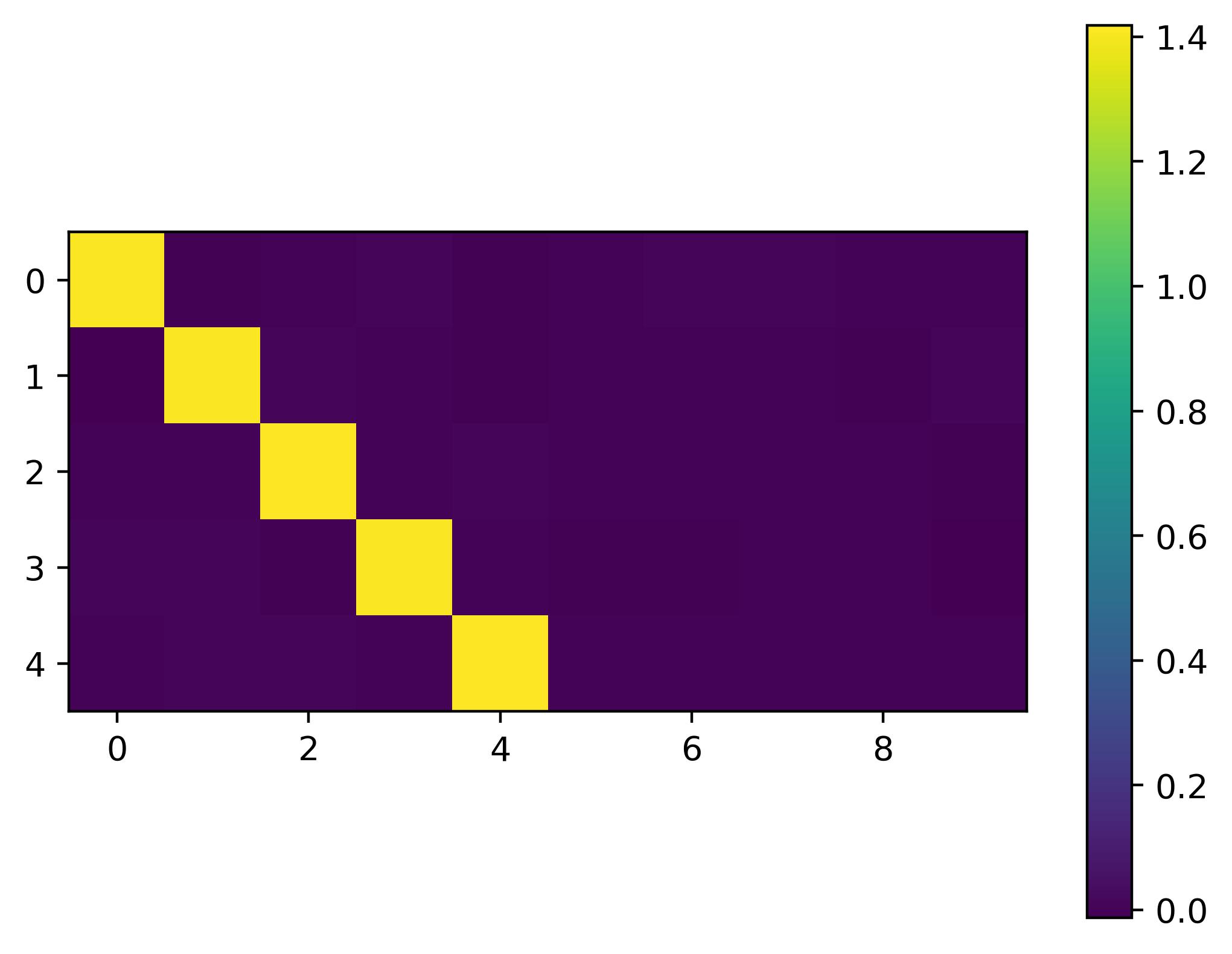}
}%

\vskip 0.5ex

\subfloat[$\vW^{KQ}, a = b=2$.]{
\includegraphics[width=0.45\columnwidth,height=0.32\columnwidth]{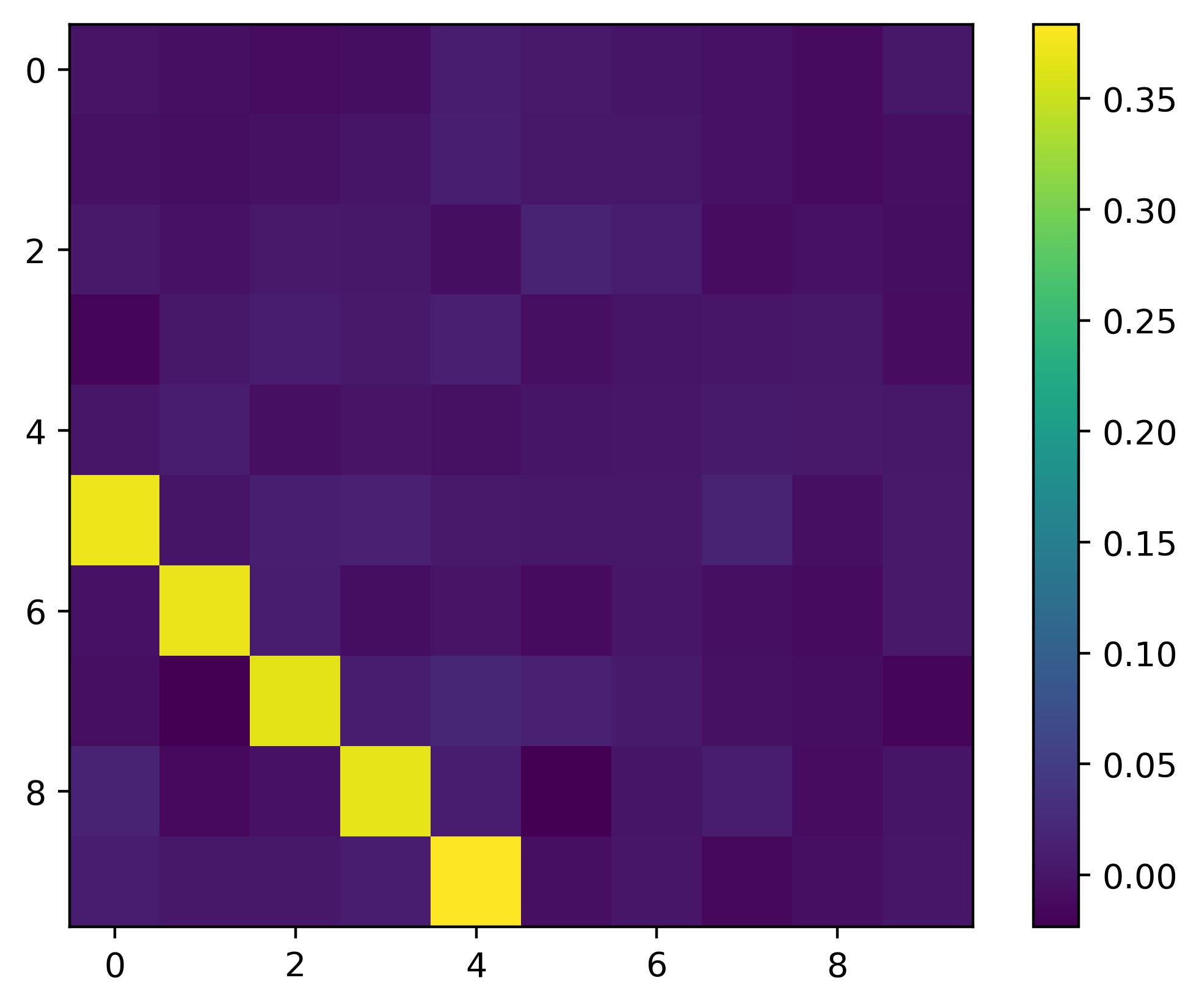}
}%
\subfloat[$\vW^{PV}, a = b=2$.]{
\includegraphics[width=0.45\columnwidth,height=0.32\columnwidth]{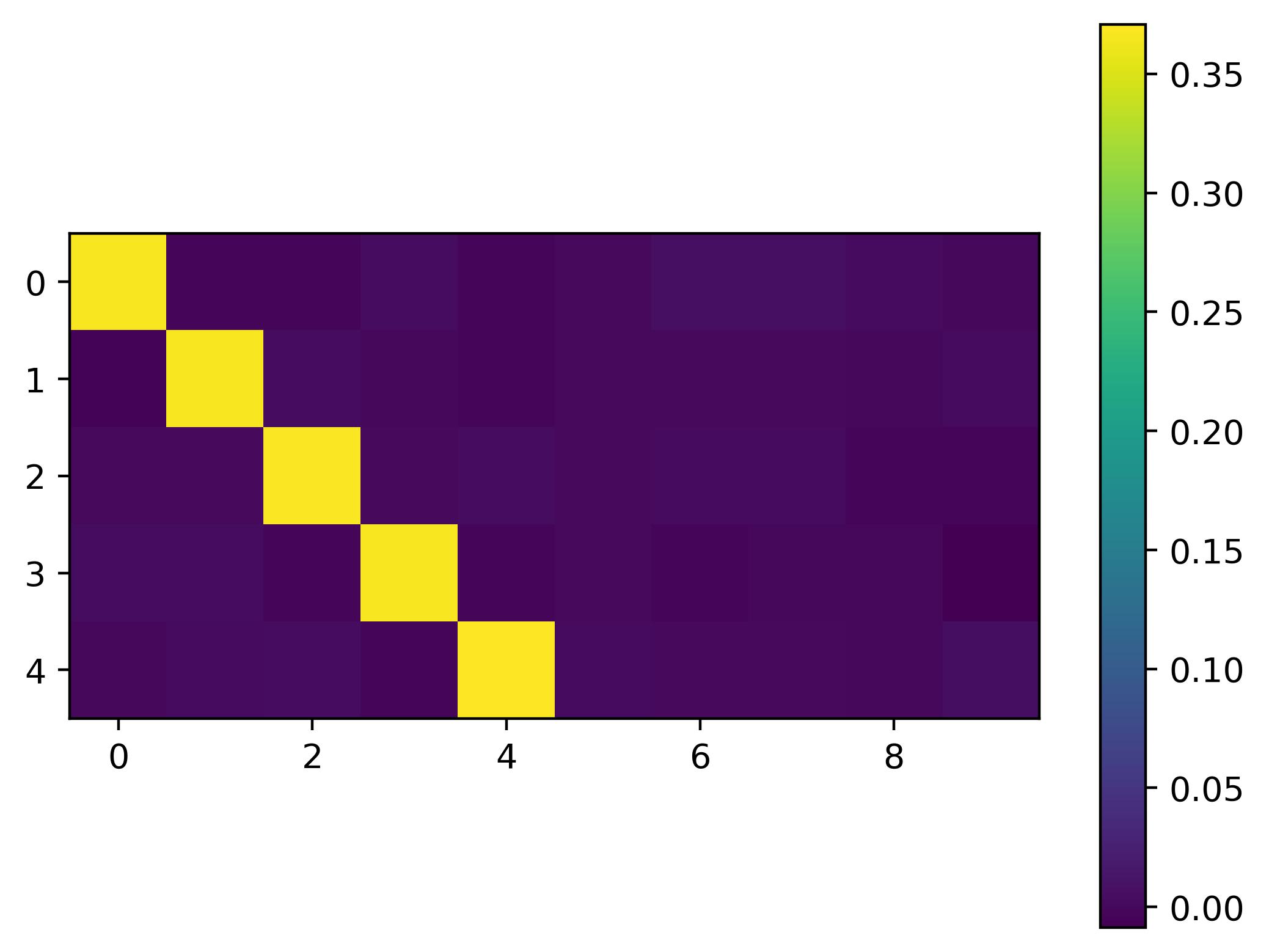}
}%

\vskip 0.5ex

\subfloat[dynamics of $ab$, $\sigma=1$]{
\includegraphics[width=0.45\columnwidth,height=0.32\columnwidth]{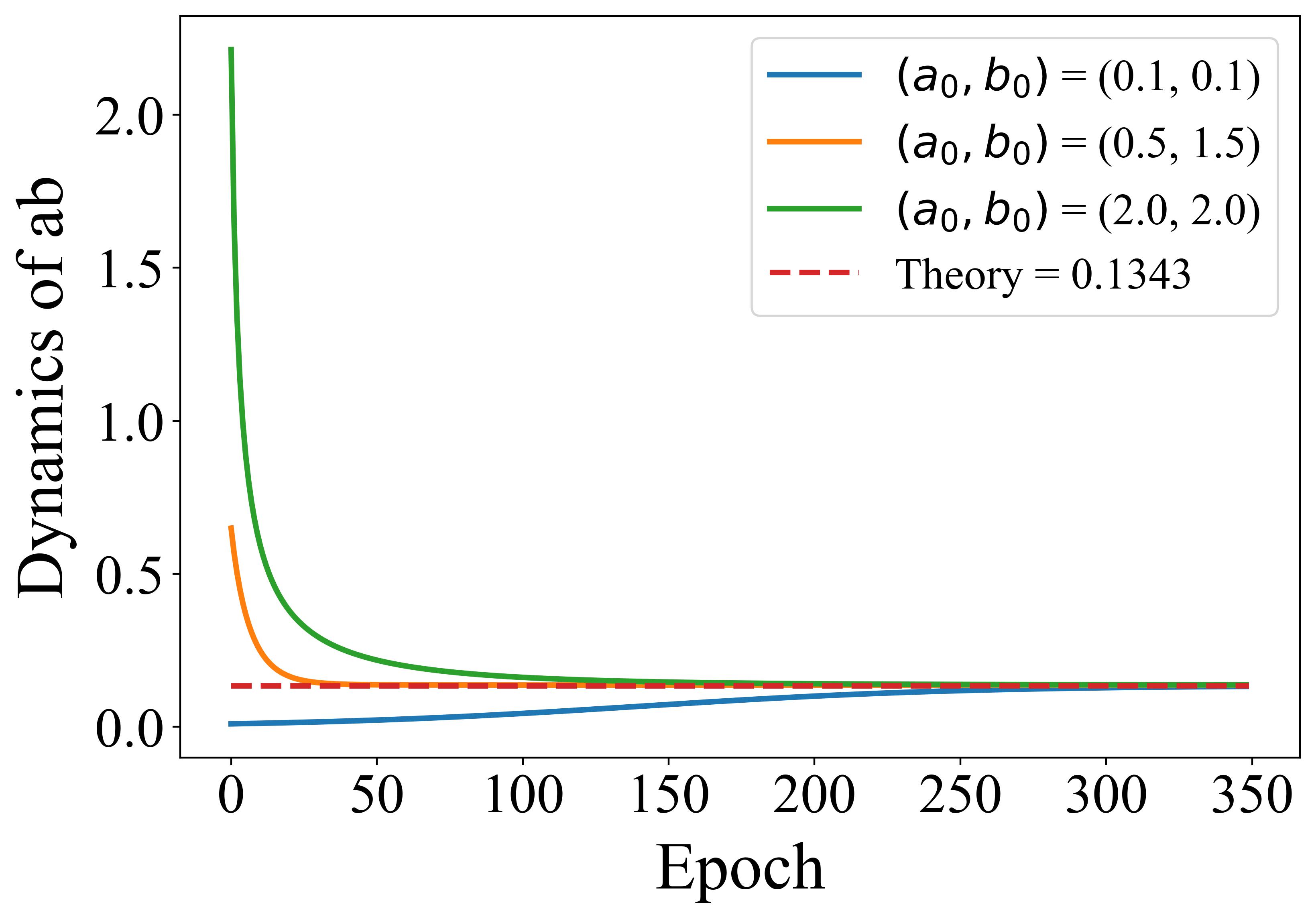}
}%

\caption{Results of small-context scenarios ($d=T=5$) with Gaussian start point ($\sigma=1$) and diagonal initialization, and the theoretical limit is $1/(5\sigma^2 + \frac{d-1}{T-2}\sigma^2\sum_{t=2}^{T-1}\frac{1}{t-1})$ by Theorem 4.1. The \textcolor{red}{read blocks} in Assumption 3.1 are presented, which are related to the final prediction. The product $ab$ does converge to the results in Theorem~\ref{thm: convergence of GF}.}

\vskip -1.2em

\label{figures: T=5}
\end{figure}

\begin{figure}[b]
\centering

\subfloat[$\vW^{KQ}, \sigma_w = 0.001$.]{
\includegraphics[width=0.45\columnwidth,height=0.32\columnwidth]{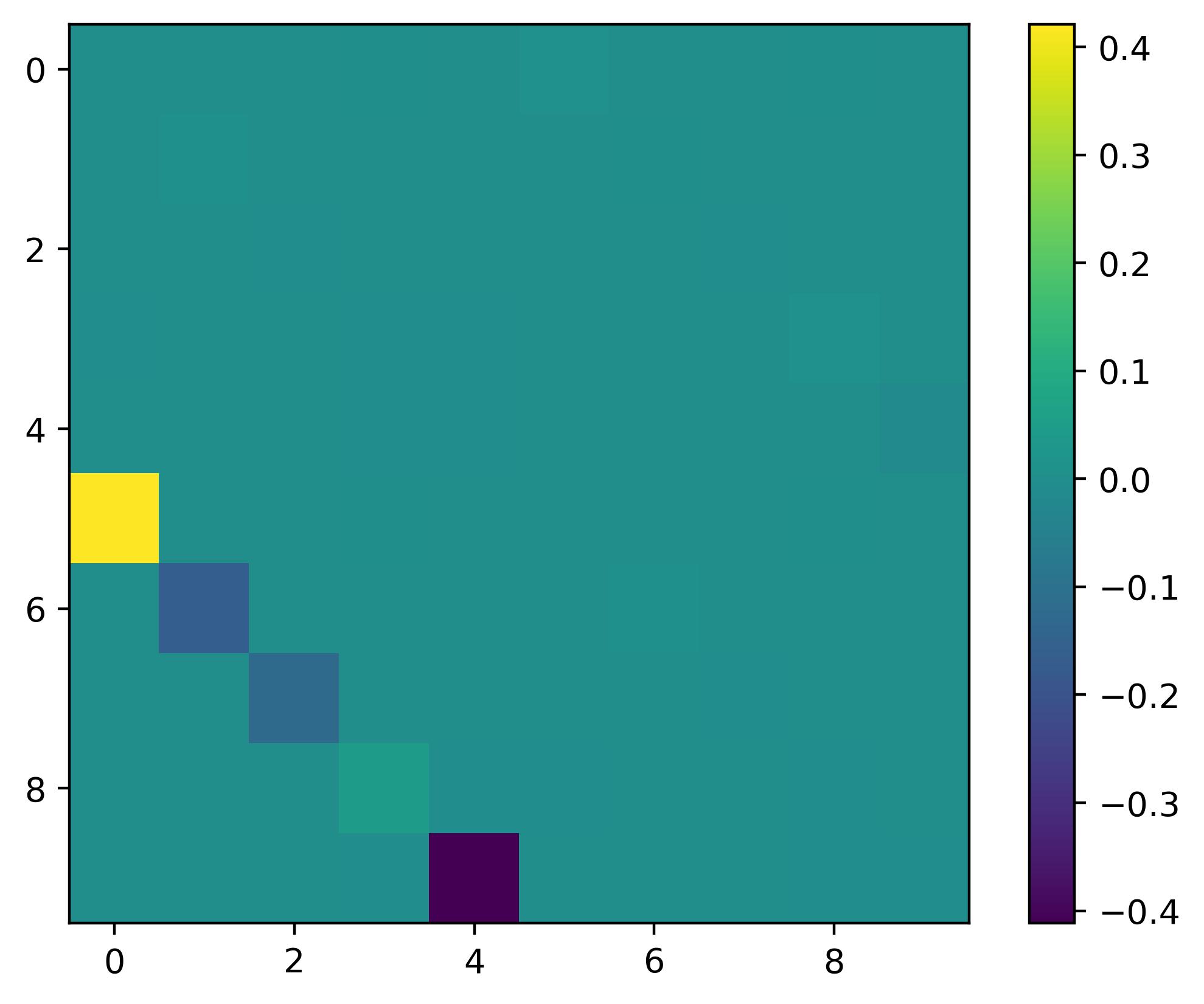}
}%
\subfloat[$\vW^{PV}, \sigma_w = 0.001$.]{
\includegraphics[width=0.45\columnwidth,height=0.32\columnwidth]{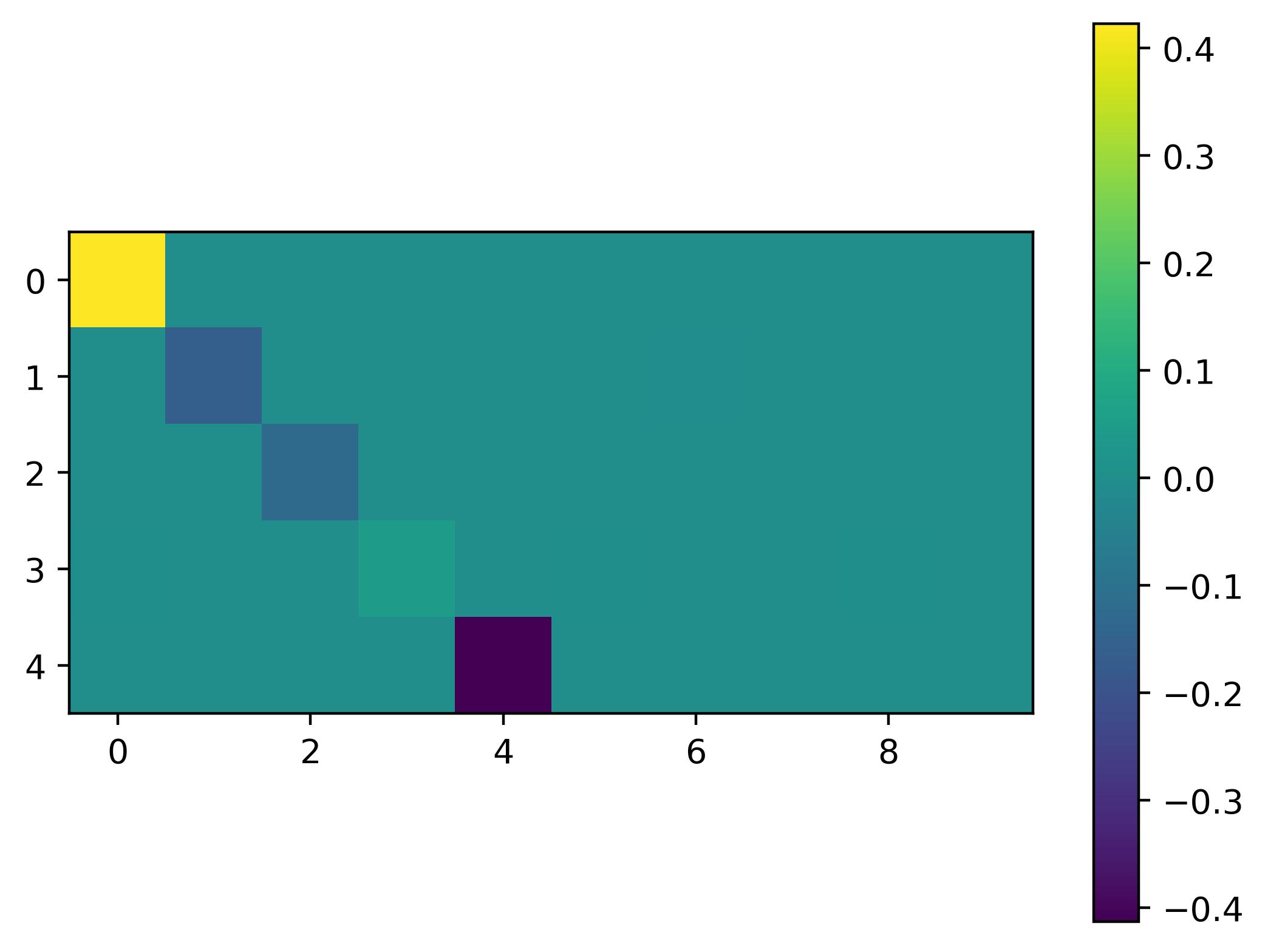}
}%

\vskip 0.5ex

\subfloat[$\vW^{KQ}, \sigma_w = 0.01$.]{
\includegraphics[width=0.45\columnwidth,height=0.32\columnwidth]{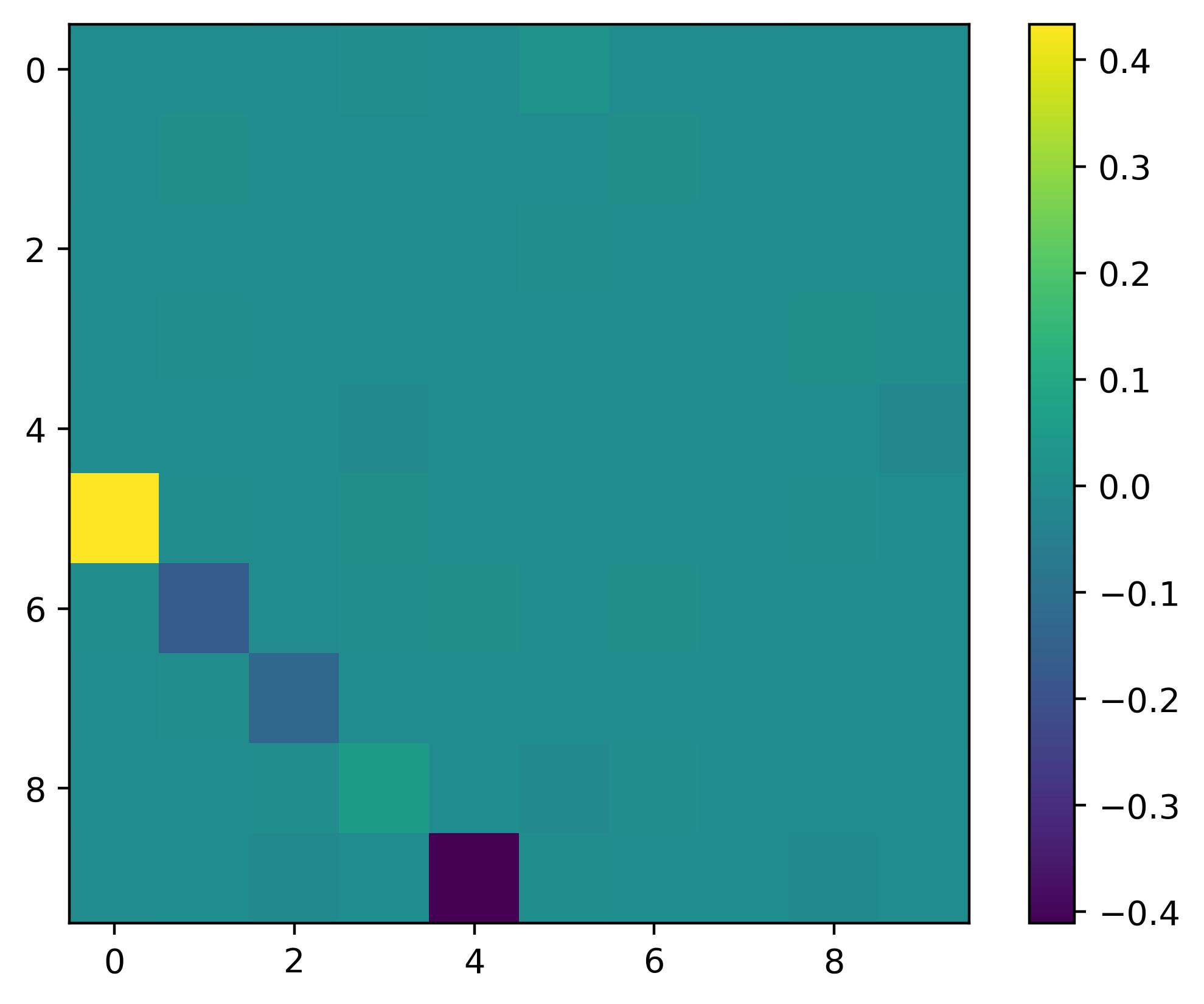}
}%
\subfloat[$\vW^{PV}, \sigma_w = 0.01$.]{
\includegraphics[width=0.45\columnwidth,height=0.32\columnwidth]{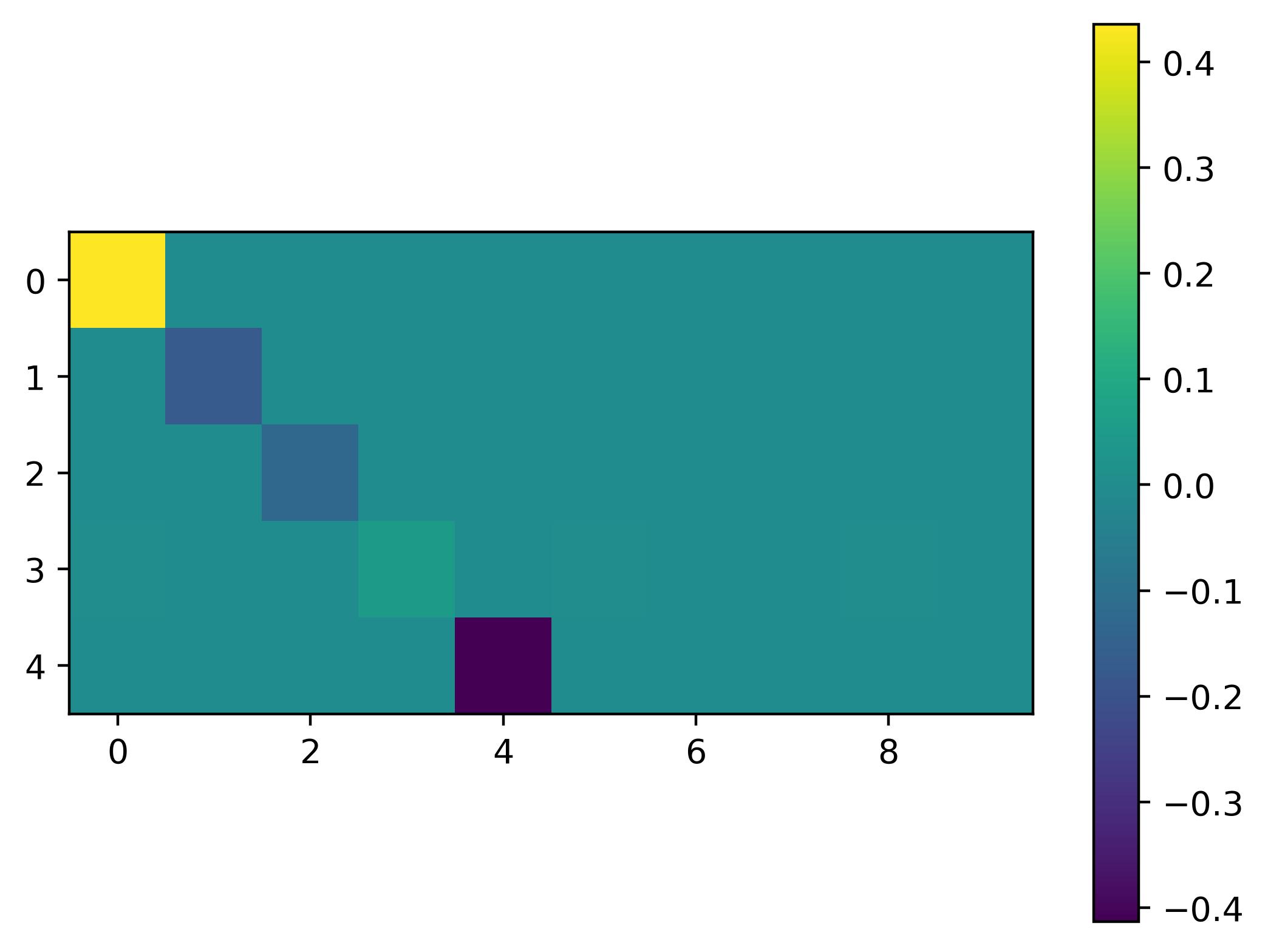}
}%

\vskip 0.5ex

\subfloat[$\vW^{KQ}, \sigma_w = 0.1$.]{
\includegraphics[width=0.45\columnwidth,height=0.32\columnwidth]{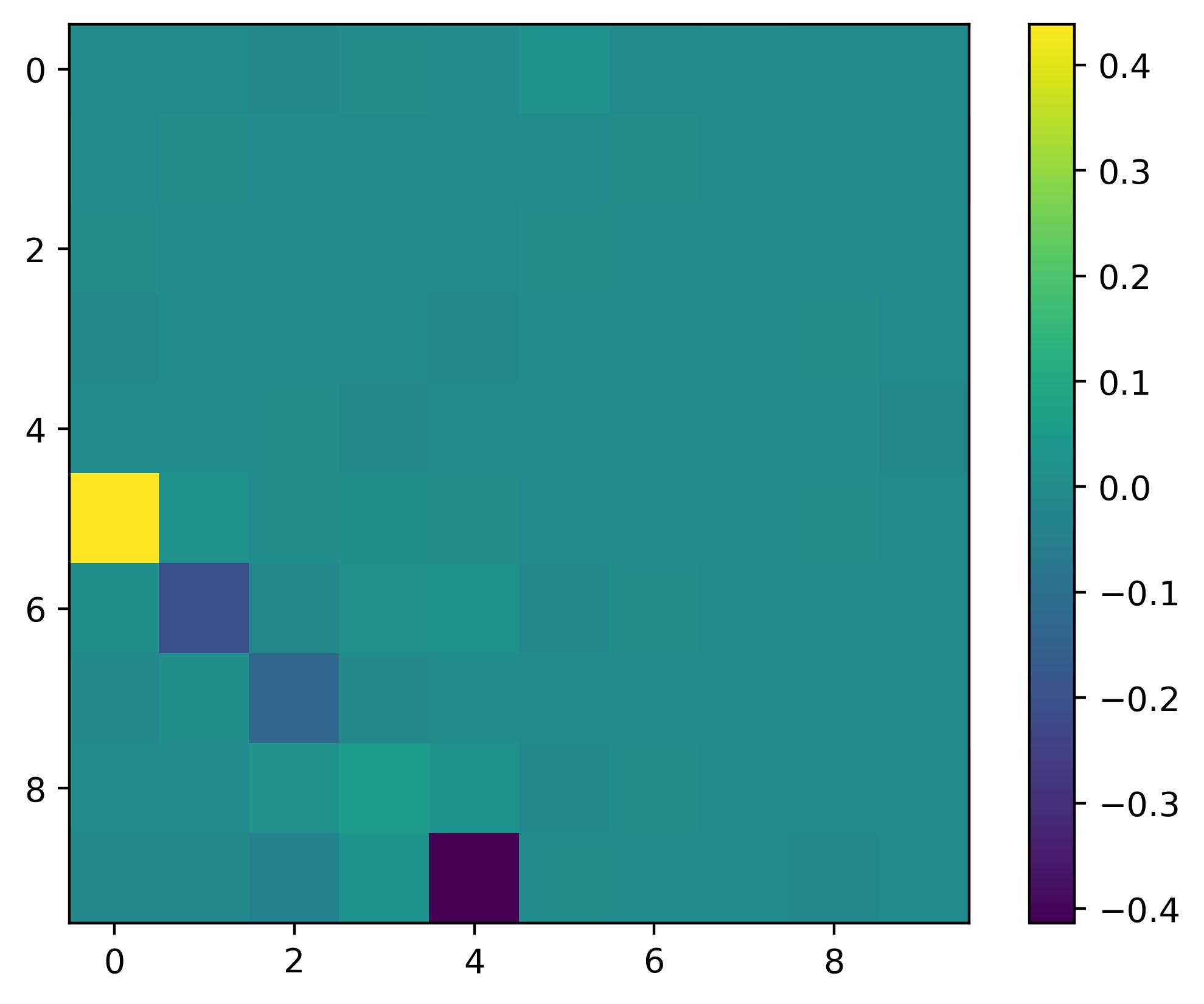}
}%
\subfloat[$\vW^{PV}, \sigma_w = 0.1$.]{
\includegraphics[width=0.45\columnwidth,height=0.32\columnwidth]{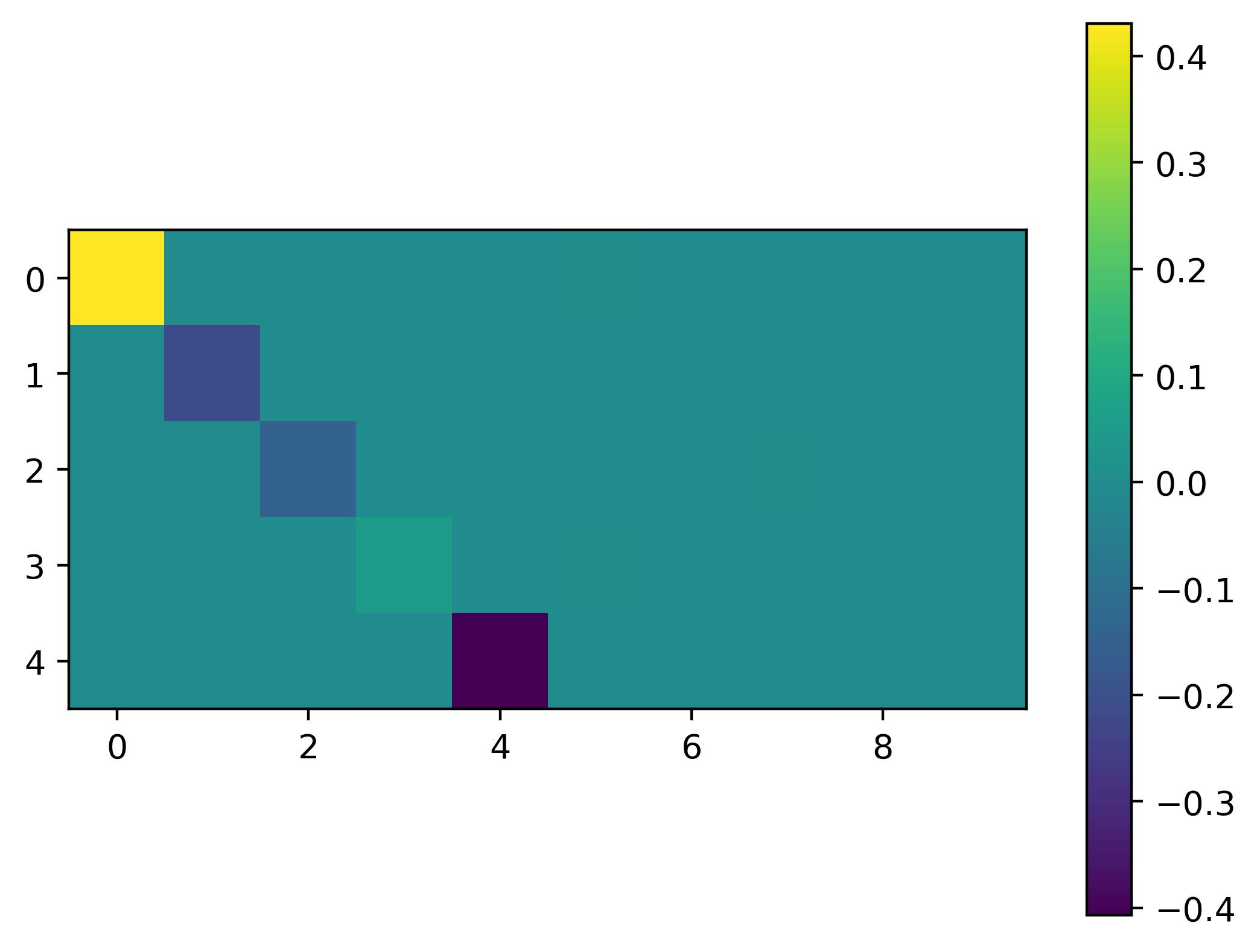}
}%

\vskip 0.5ex

\subfloat[Ratio of $\haty_{T_{te}-1} / \vx_{T_{te}}$.]{
\includegraphics[width=0.45\columnwidth,height=0.32\columnwidth]{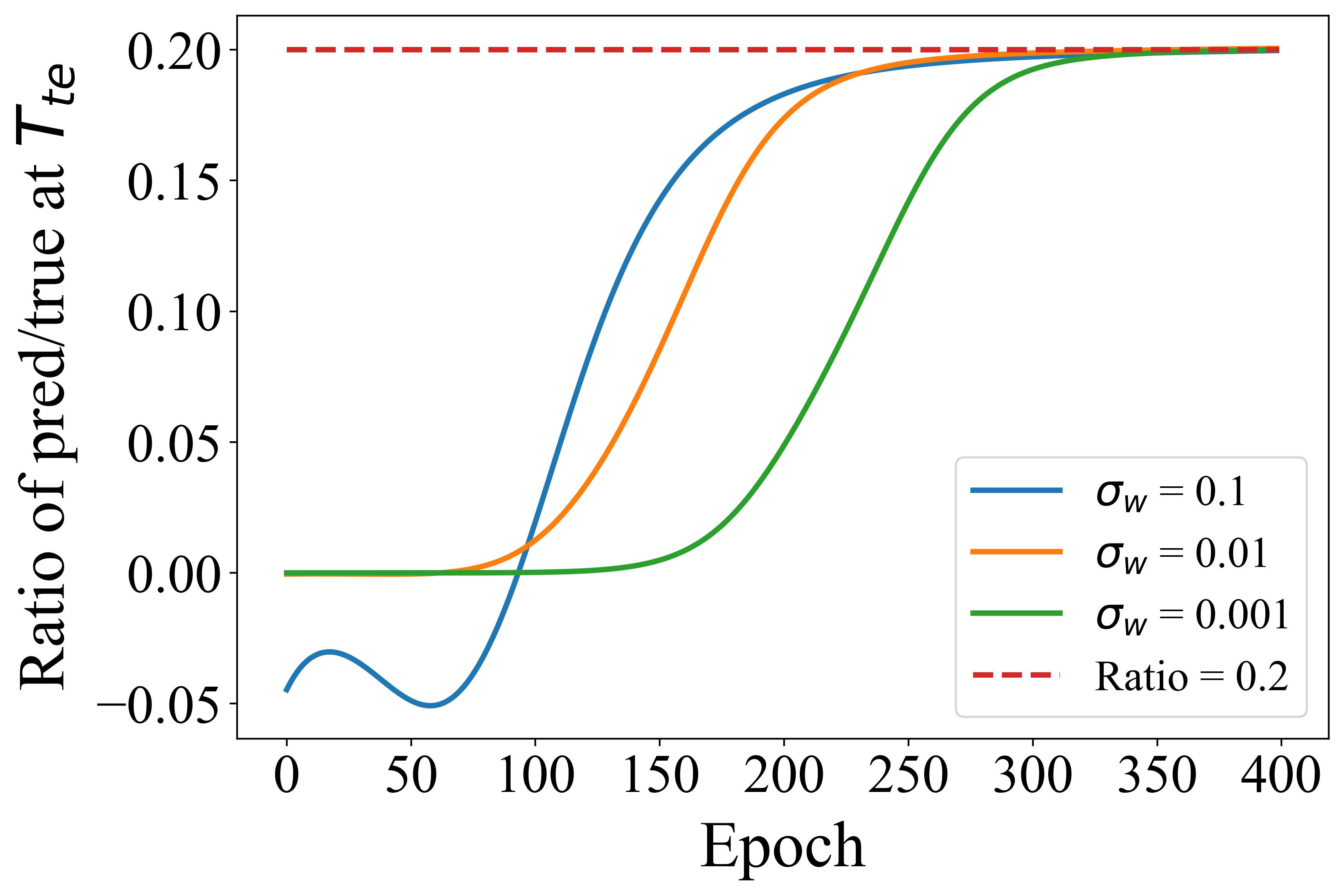}
}%

\caption{Results of Gaussian start point ($\sigma = 1$) and standard Gaussian initialization with different variance $\sigma_w$. The \textcolor{red}{read blocks} in Assumption 3.1 are presented, which are related to the final prediction. The parameter matrices retain the same strong diagonal structure and test performance as those of the diagonal initialization.}

\vskip -1.2em

\label{figures: gaussian start gaussian init}
\end{figure}

\begin{figure}[b]
\centering

\subfloat[$\vW^{KQ}, \sigma_w = 0.001$.]{
\includegraphics[width=0.45\columnwidth,height=0.32\columnwidth]{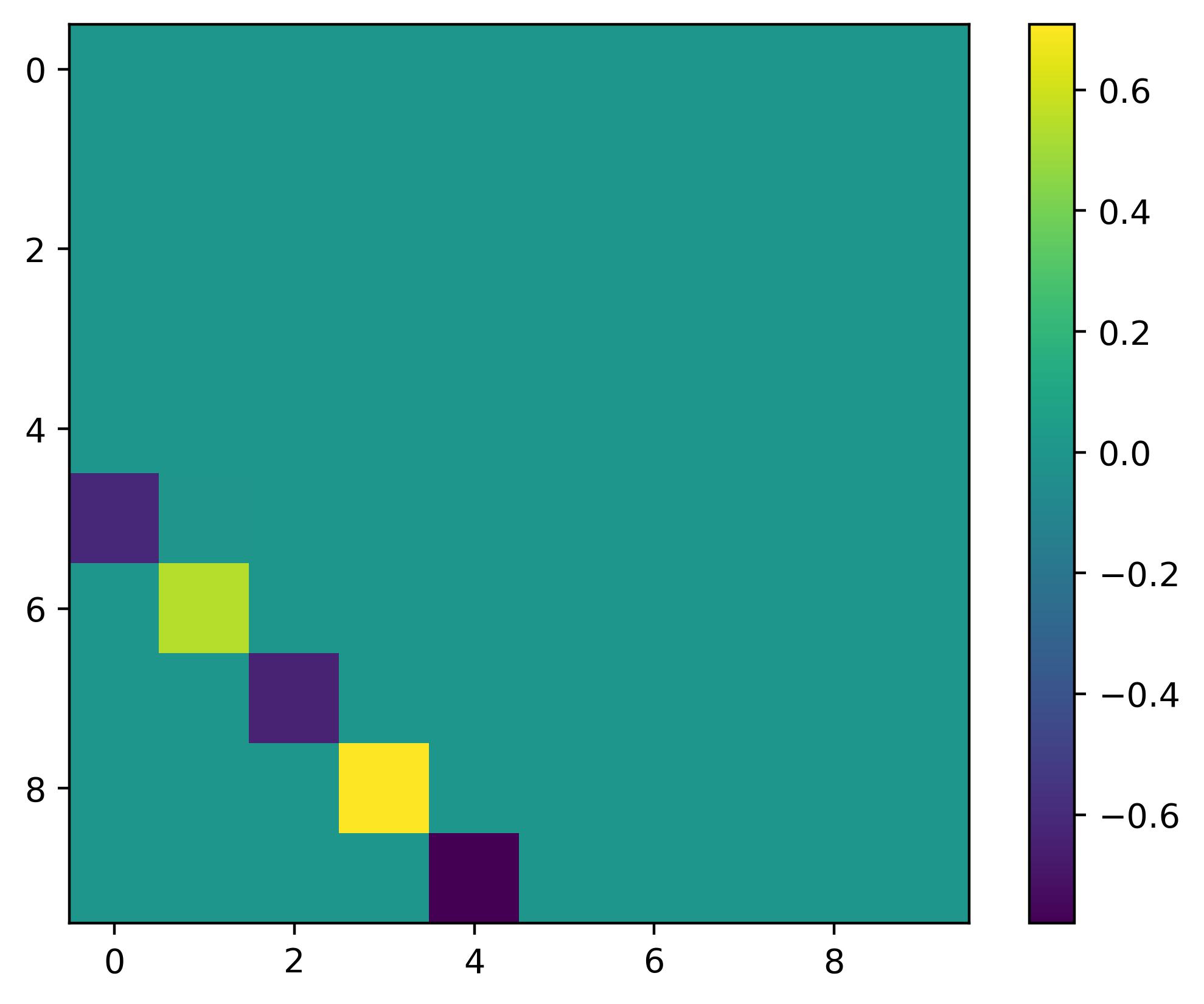}
}%
\subfloat[$\vW^{PV}, \sigma_w = 0.001$.]{
\includegraphics[width=0.45\columnwidth,height=0.32\columnwidth]{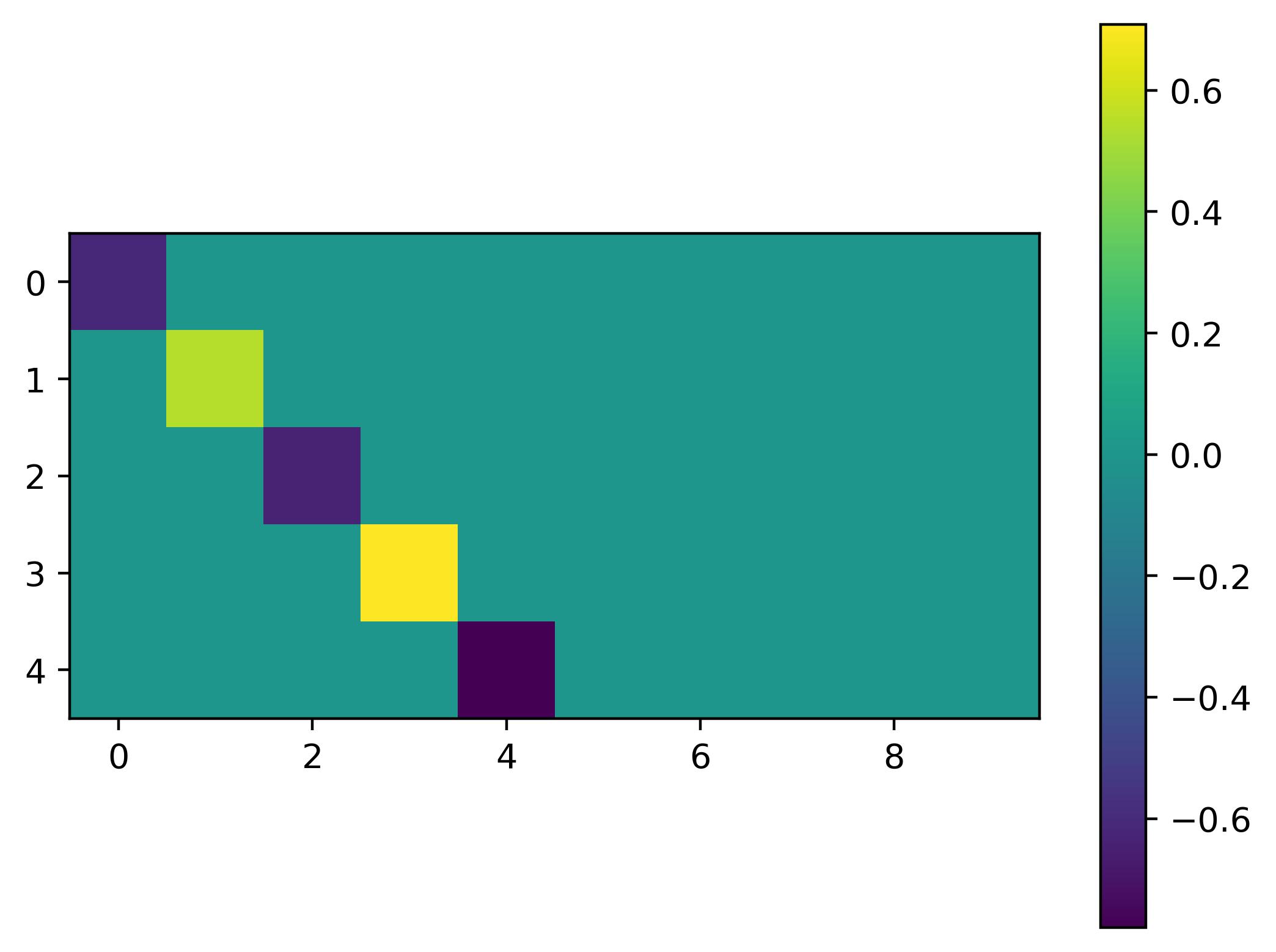}
}%

\vskip 0.5ex

\subfloat[$\vW^{KQ}, \sigma_w = 0.01$.]{
\includegraphics[width=0.45\columnwidth,height=0.32\columnwidth]{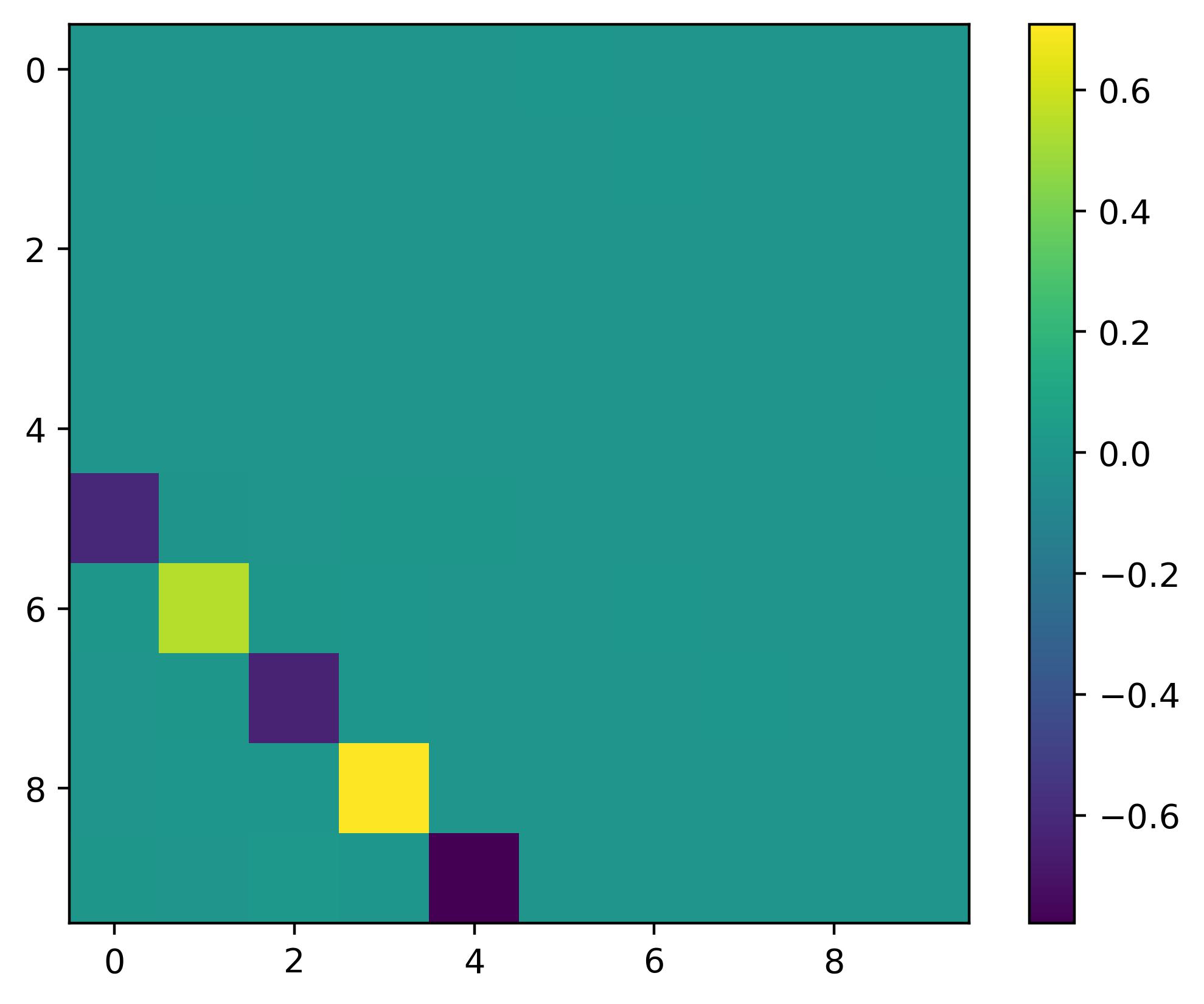}
}%
\subfloat[$\vW^{PV}, \sigma_w = 0.01$.]{
\includegraphics[width=0.45\columnwidth,height=0.32\columnwidth]{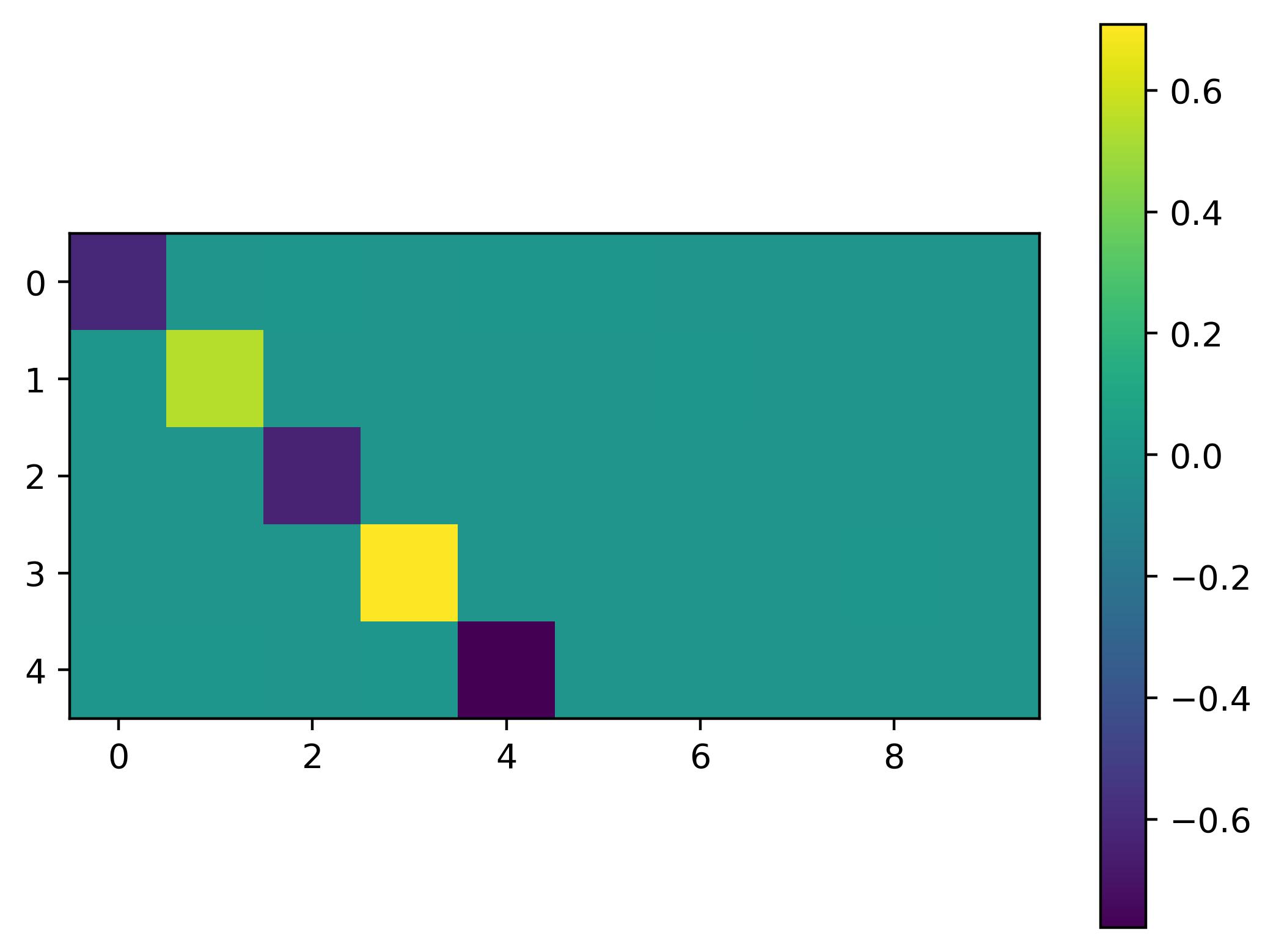}
}%

\vskip 0.5ex

\subfloat[$\vW^{KQ}, \sigma_w = 0.1$.]{
\includegraphics[width=0.45\columnwidth,height=0.32\columnwidth]{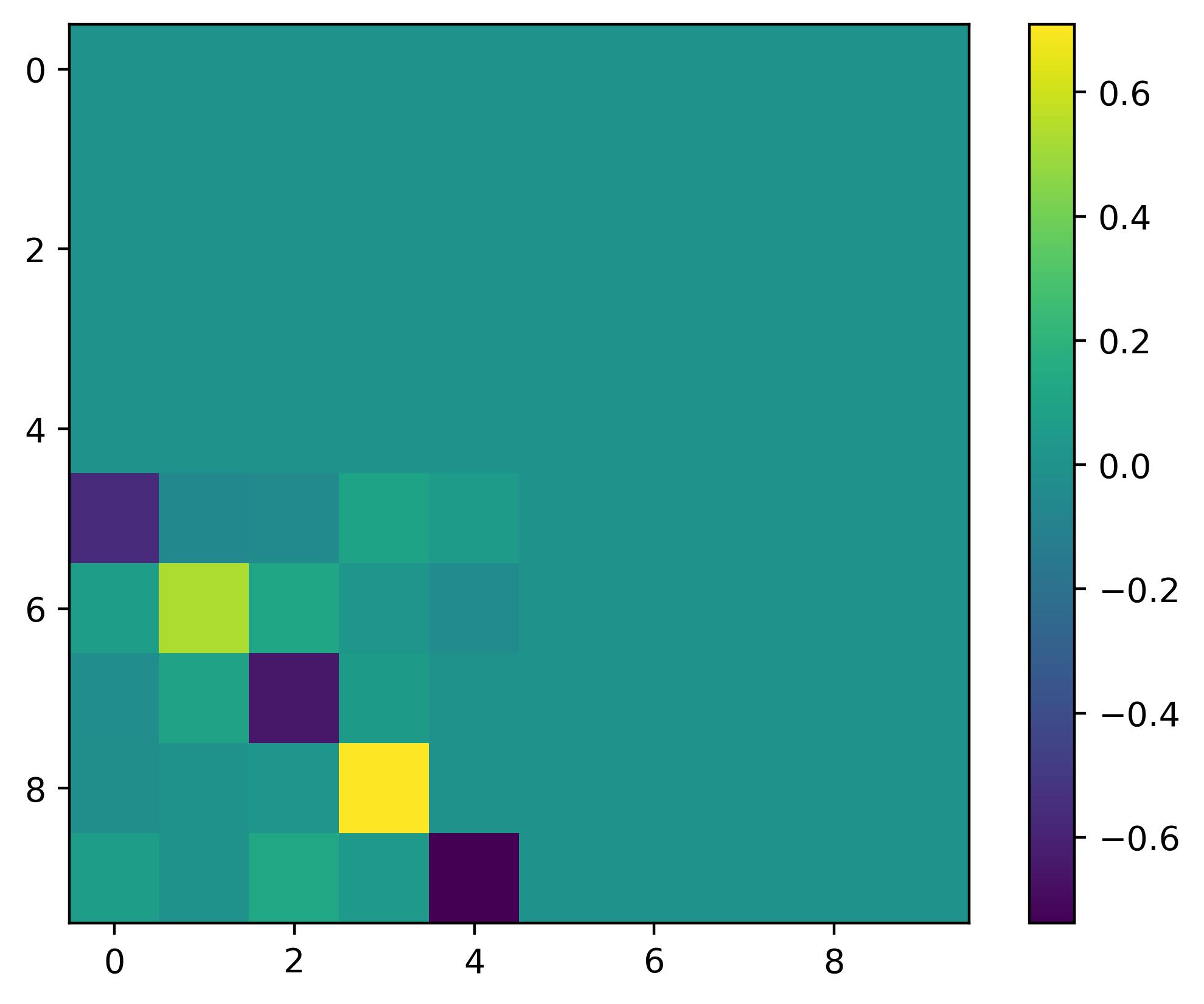}
}%
\subfloat[$\vW^{PV}, \sigma_w = 0.1$.]{
\includegraphics[width=0.45\columnwidth,height=0.32\columnwidth]{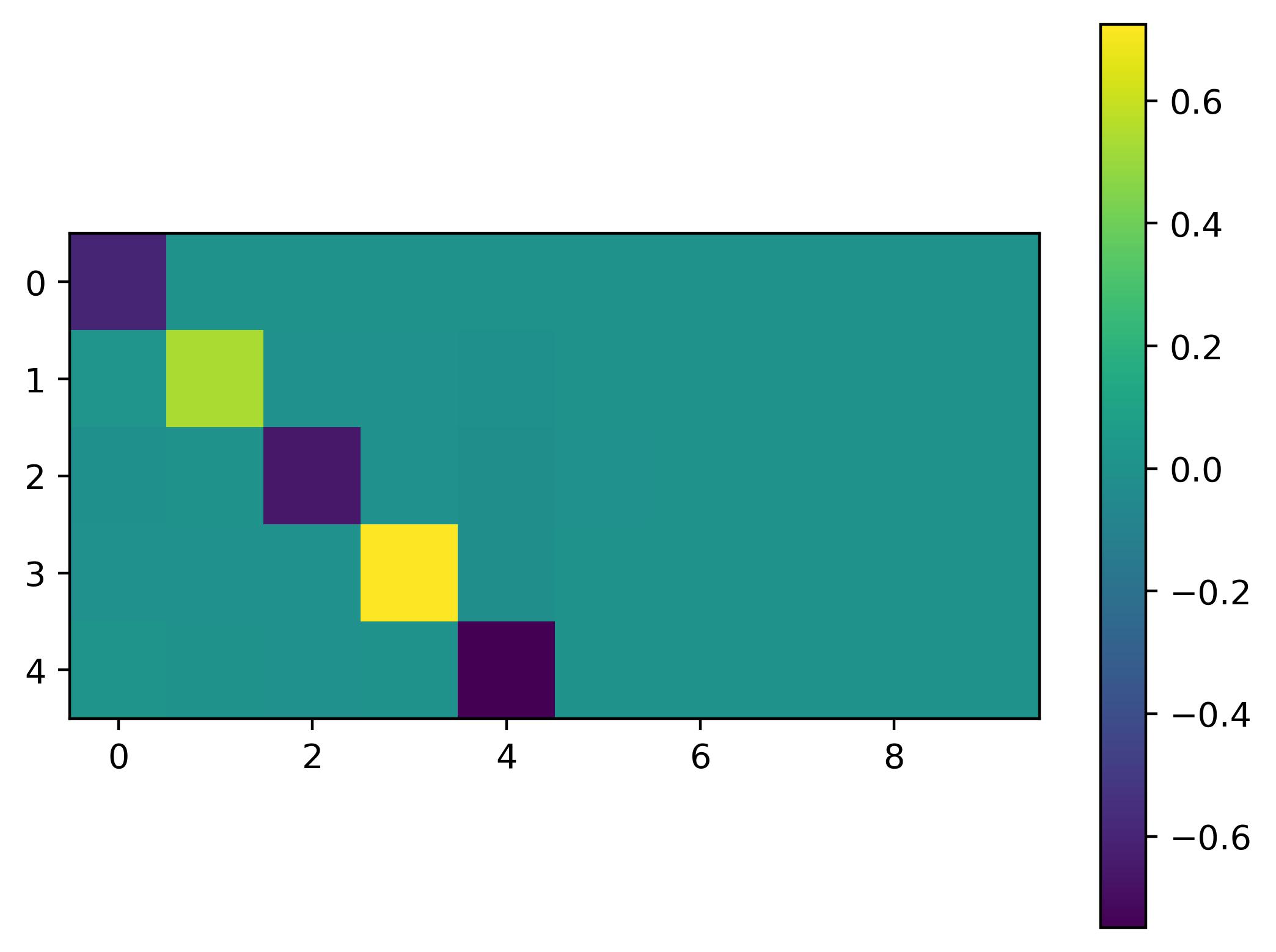}
}%

\vskip 0.5ex

\subfloat[MSE $\Vert \haty_{T_{te}-1} - \vx_{T_{te}}\Vert_2^2$.]{
\includegraphics[width=0.45\columnwidth,height=0.32\columnwidth]{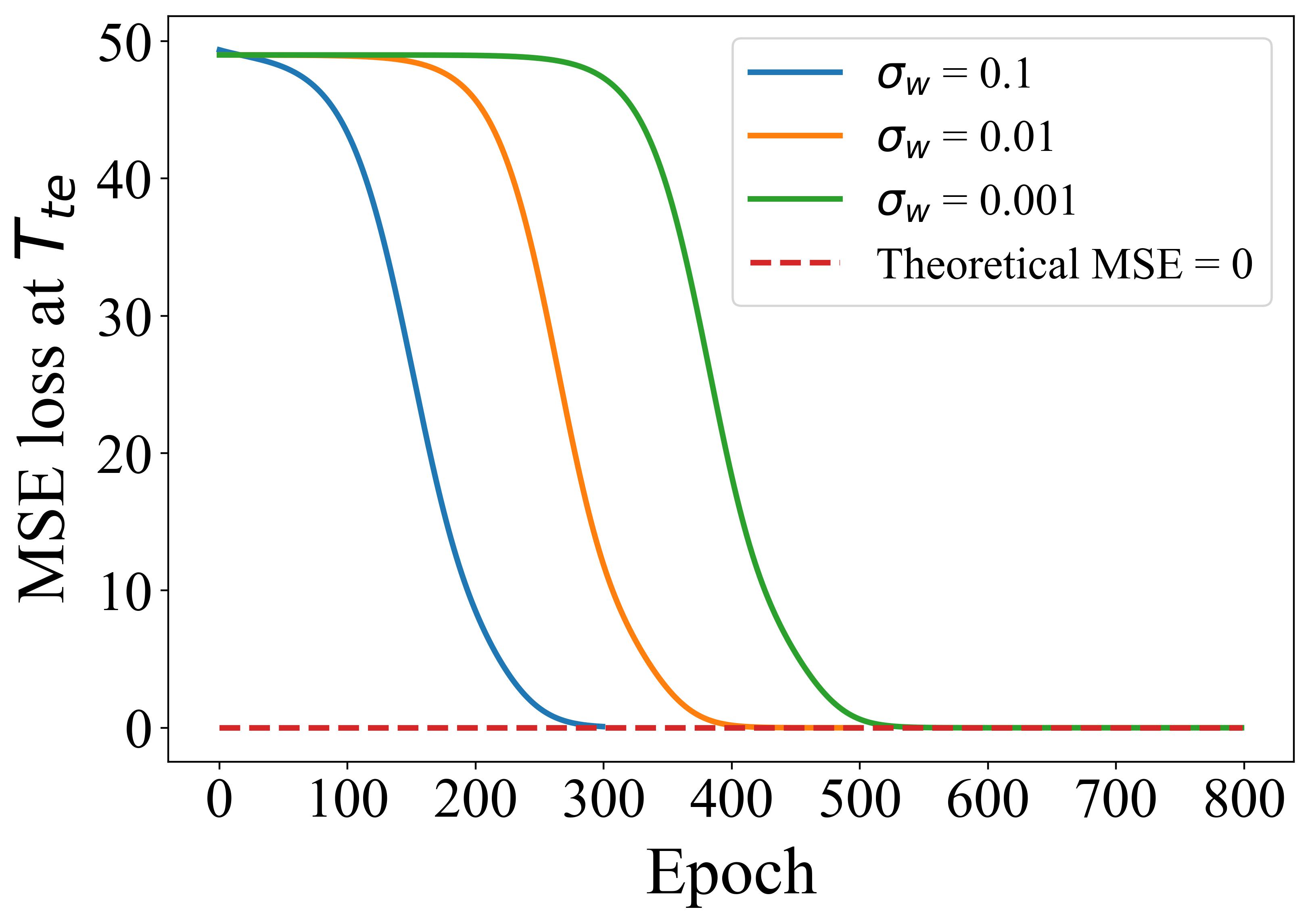}
}%

\caption{Results of Example~\ref{example} ($c = 1$) and standard Gaussian initialization with different variance $\sigma_w$. The \textcolor{red}{read blocks} in Assumption 3.1 are presented, which are related to the final prediction. The parameter matrices retain the same strong diagonal structure and test performance as those of the diagonal initialization.}

\vskip -1.2em

\label{figures: ex4 gaussian init}
\end{figure}

\begin{figure}[b]
\centering

\subfloat[$\vW^{KQ}, a = 0.1, b = 0.1$.]{
\includegraphics[width=0.45\columnwidth,height=0.3\columnwidth]{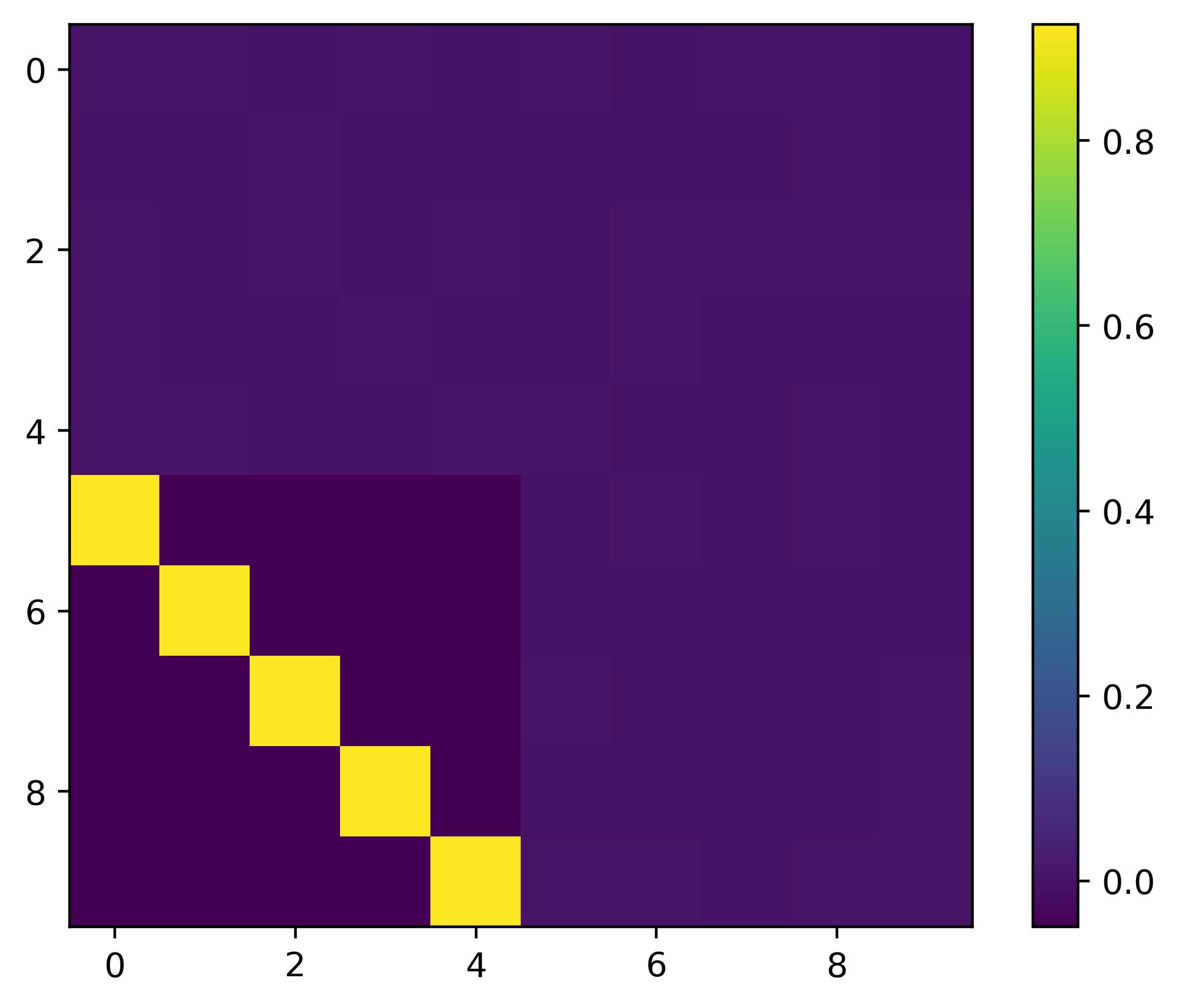}
}%
\subfloat[$\vW^{PV}, a = 0.1, b = 0.1$.]{
\includegraphics[width=0.45\columnwidth,height=0.3\columnwidth]{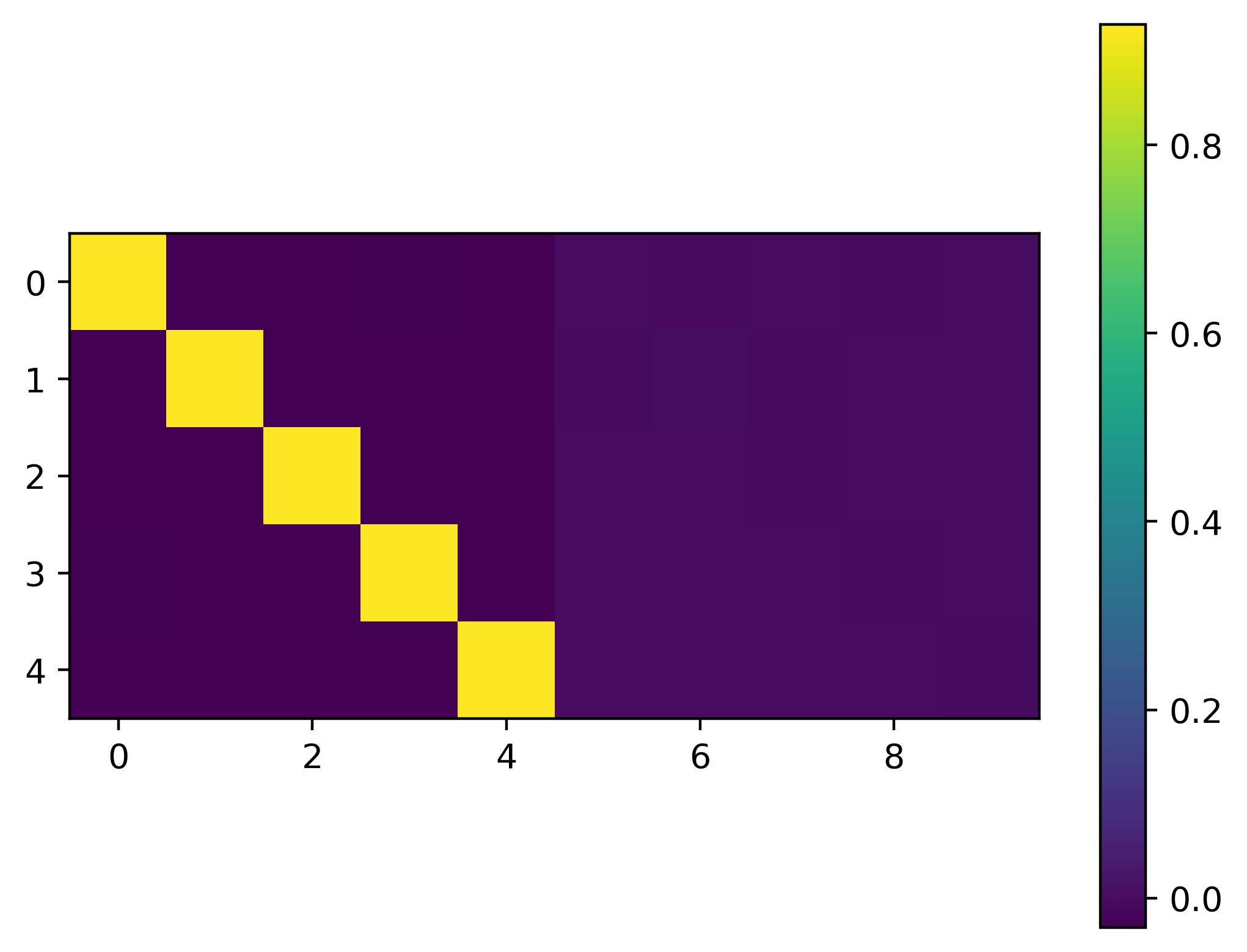}
}%

\vskip 0.5ex

\subfloat[$\vW^{KQ}, a = 0.5, b = 1.5$.]{
\includegraphics[width=0.45\columnwidth,height=0.3\columnwidth]{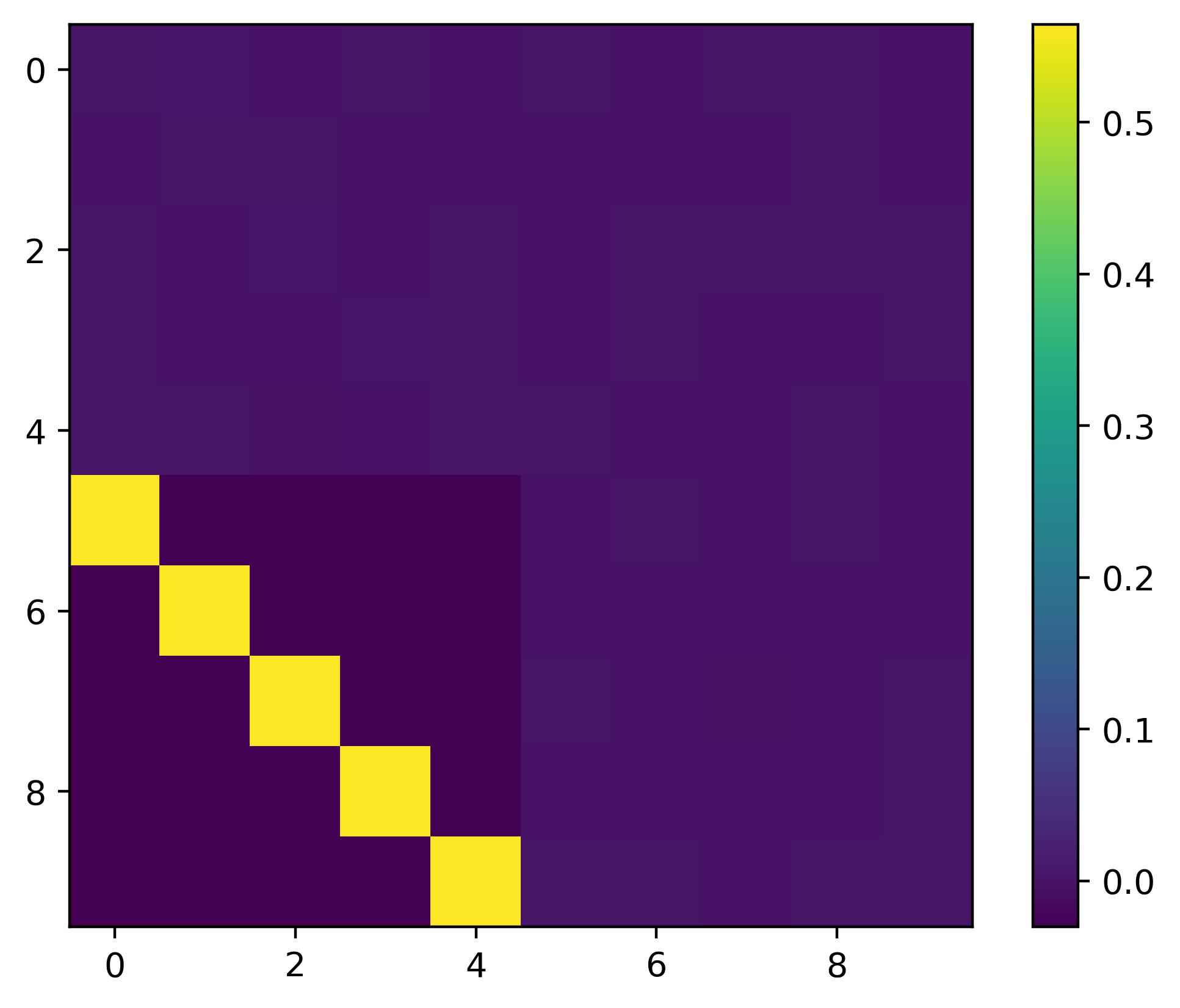}
}%
\subfloat[$\vW^{PV}, a = 0.5, b = 1.5$.]{
\includegraphics[width=0.45\columnwidth,height=0.3\columnwidth]{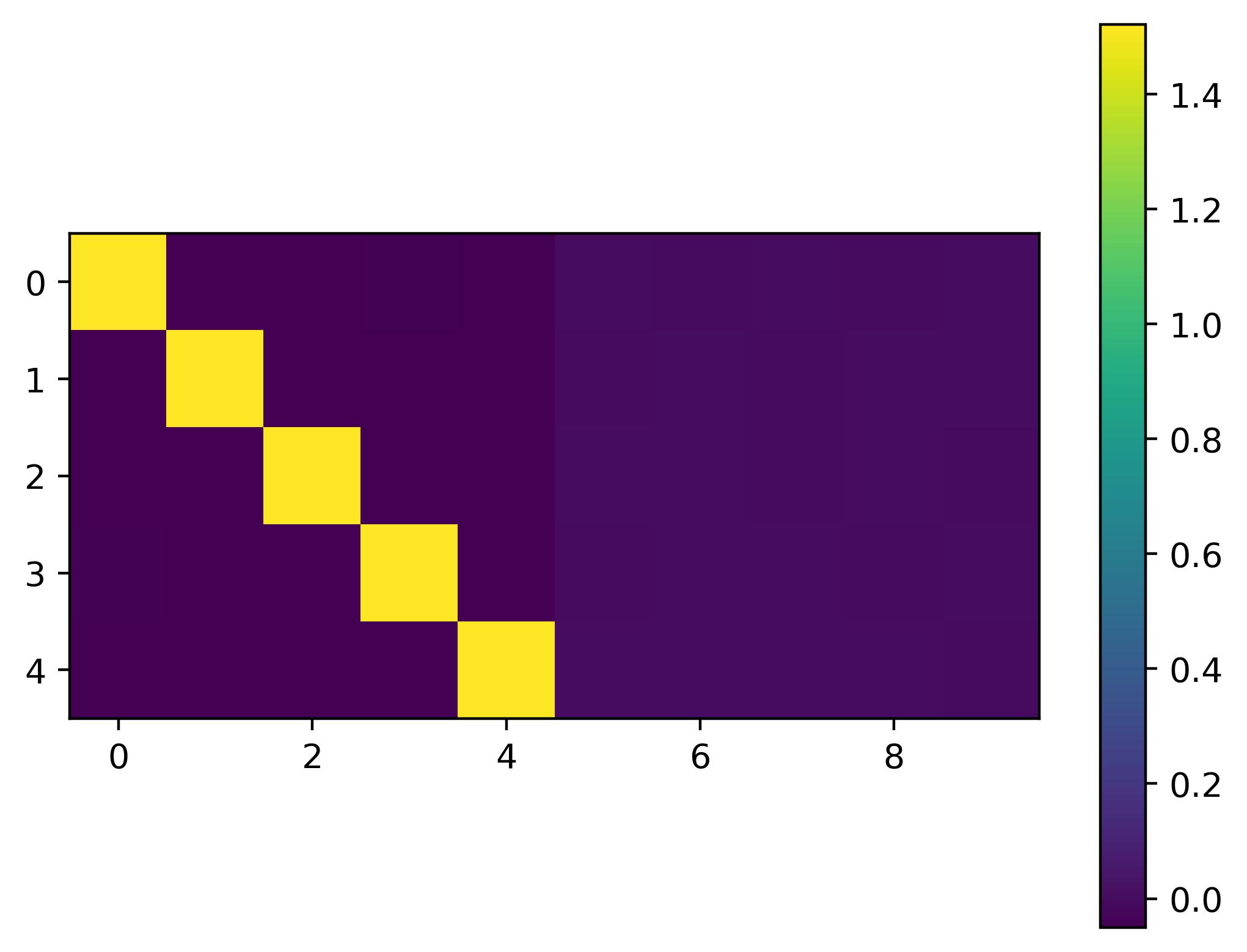}
}%


\vskip 0.5ex

\subfloat[$\vW^{KQ}, a = 2, b = 2$.]{
\includegraphics[width=0.45\columnwidth,height=0.3\columnwidth]{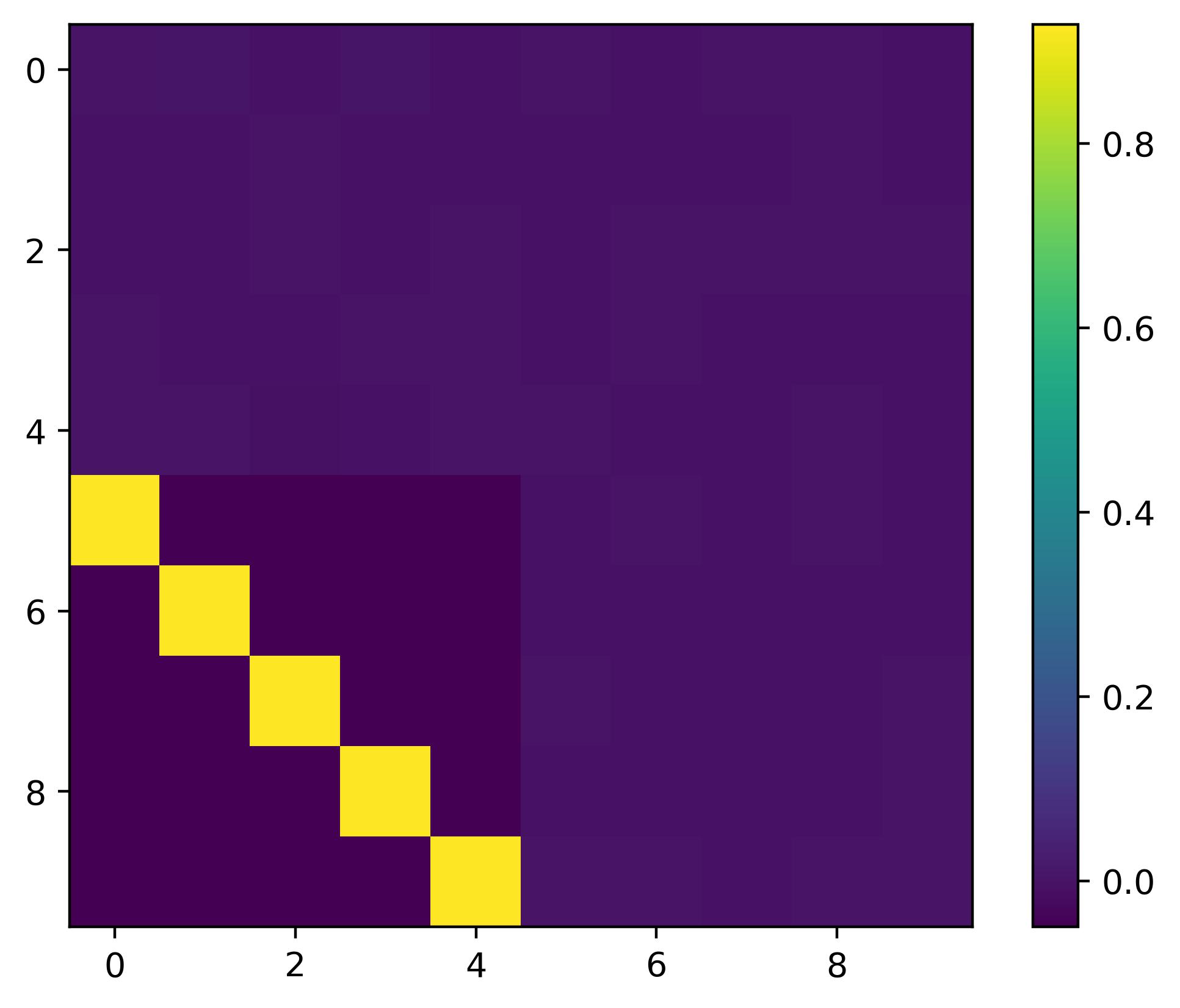}
}%
\subfloat[$\vW^{PV}, a = 2, b = 2$.]{
\includegraphics[width=0.45\columnwidth,height=0.3\columnwidth]{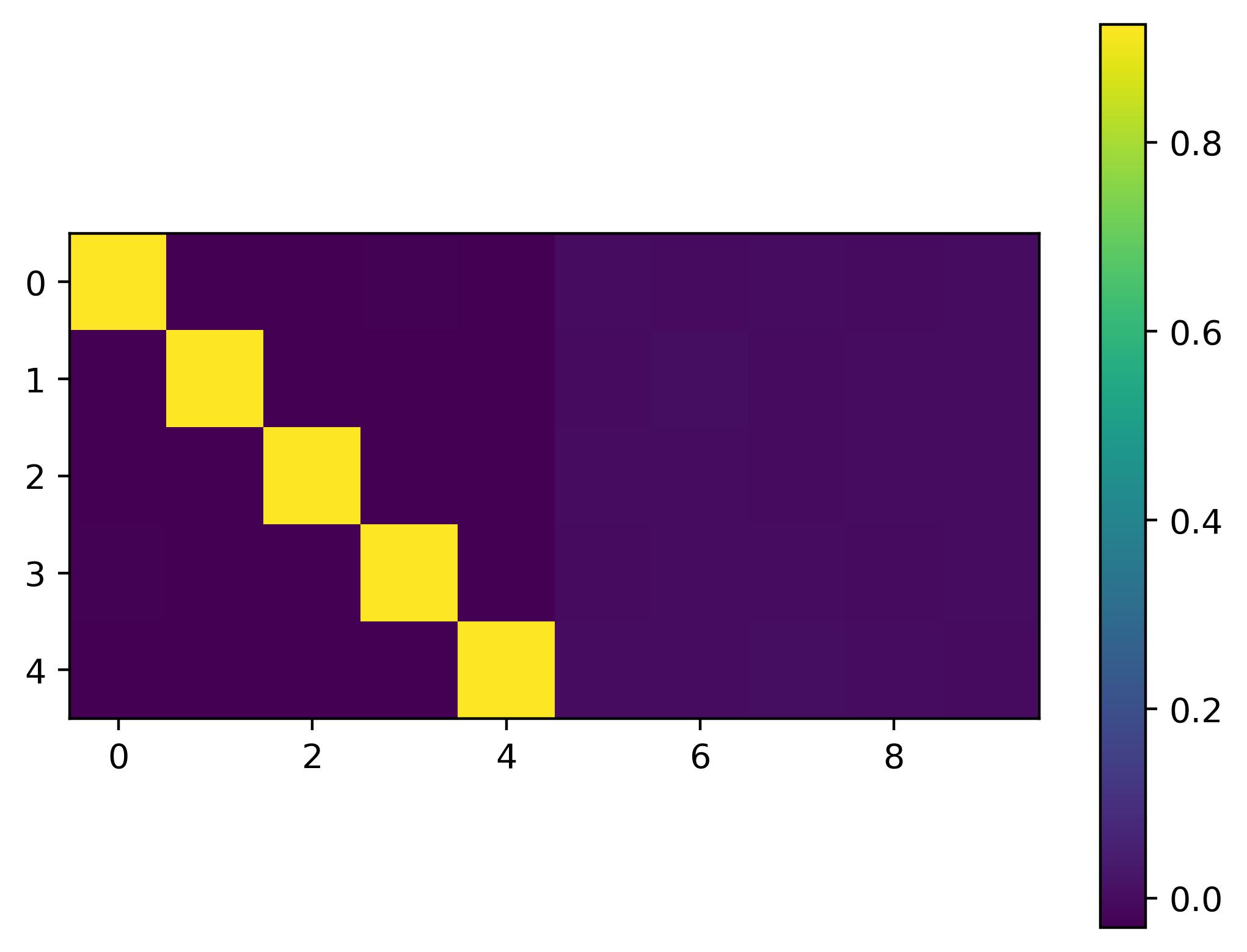}
}%

\caption{$\vW^{KQ}$ and $\vW^{PV}$ of full-one start points with different diagonal initialization. The \textcolor{red}{read blocks} in Assumption 3.1 are presented, which are related to the final prediction.}
\vskip -1.2em

\label{figures: simulations additional one}
\end{figure}

\end{appendices}

\end{document}